\title[Model Section for Corruption Robust RL]{A Model Selection Approach for Corruption Robust \\ Reinforcement Learning}
\thanks{Research conducted when the author was an intern at Google Research. } \Email{chenyu.wei@usc.edu}\\
\definecolor{Green}{rgb}{0.13, 0.65, 0.3}
\newcommand{\jz}[1]{{\color{red}JZ: #1}}
\newcommand{\fillcell}{\cellcolor{blue!25}}
\DeclareMathOperator*{\argmax}{argmax} 
\newcommand{\Reg}{\text{\rm Reg}}
\newcommand{\typea}{type-\am\xspace}
\newcommand{\typer}{type-\rms\xspace}
\newcommand{\Rht}{\widehat{R}}
\newcommand{\Nht}{\widehat{N}}
\newcommand{\calT}{\mathcal{T}}
\newcommand{\calA}{\mathcal{A}}
\newcommand{\calB}{\mathcal{B}}
\newcommand{\calE}{\mathcal{E}}
\newcommand{\calS}{\mathcal{S}}
\newcommand{\calM}{\mathcal{M}}
\newcommand{\calI}{\mathcal{I}}
\newcommand{\calL}{\mathcal{L}}
\newcommand{\calF}{\mathcal{F}}
\newcommand{\calBpi}{\mathcal{B}_{\widehat{\pi}}}
\newcommand{\calW}{\mathcal{W}}
\newcommand{\phaseone}{\text{Phase~1}\xspace}
\newcommand{\phasetwo}{\text{Phase~2}\xspace}
\newcommand{\phasethree}{\text{Phase~3}\xspace}
\newcommand{\hatpi}{\widehat{\pi}}
\newcommand{\tilder}{\widetilde{r}}
\newcommand{\tildep}{\widetilde{p}}
\newcommand{\calR}{\mathcal{R}}
\newcommand{\hatDelta}{\widehat{\Delta}}
\newcommand{\otil}{\widetilde{\order}}
\newcommand{\poly}{\textup{poly}}
\newcommand{\DE}{\textup{dim}_\textup{DE}}
\newcommand{\E}{\mathbb{E}}
\newcommand{\order}{\mathcal{O}}
\newcommand{\thres}{\theta}
\newcommand{\pt}{p_t}
\newcommand{\calTt}{\calT_t}
\newcommand{\calX}{\mathcal{X}}
\newcommand{\one}{\mathbf{1}}
\newcommand{\inner}[1]{\langle#1\rangle}
\newcommand{\alg}{{\small\textsf{\textup{ALG}}}\xspace}
\newcommand{\init}{\text{init}}
\newcommand{\pistar}{\pi^\star}
\newcommand{\term}{\textbf{term}}
\newtheorem{assumption}{Assumption}
\newcommand{\algname}{{\small\textsf{\textup{COBE}}}\xspace}
\newcommand{\gapalgname}{{\small\textsf{\textup{G-COBE}}}\xspace}
\newcommand{\singleepoch}{{\small\textsf{\textup{BASIC}}}\xspace}
\newcommand{\twomodel}{{\small\textsf{\textup{TwoModelSelect}}}\xspace}
\newcommand{\nonl}{\renewcommand{\nl}{\let\nl}}
\newcommand{\pref}[1]{\prettyref{#1}}
\newcommand{\savehyperref}[2]{\texorpdfstring{\hyperref[#1]{#2}}{#2}}
\newcommand{\clip}{\text{clip}}
\newcommand{\am}{{\normalfont\textsf{\scalebox{0.9}{a}}}}
\newcommand{\rms}{{\normalfont\textsf{\scalebox{0.9}{r}}}}
\newcommand{\ineff}{^*}
\newcommand{\gap}{\textsc{G}}
\newcounter{parentnumber}
\begin{document}

\maketitle



\begin{abstract}
    We develop a model selection approach to tackle reinforcement learning with adversarial corruption in both transition and reward. For finite-horizon tabular MDPs, without prior knowledge on the total amount of corruption, our algorithm achieves a regret bound of \sloppy$\otil\big(\min\{\frac{1}{\Delta}, \sqrt{T}\}+C\big)$ where $T$ is the number of episodes, $C$ is the total amount of corruption, and $\Delta$ is the reward gap between the best and the second-best policy. This is the first worst-case optimal bound achieved without knowledge of $C$, improving previous results of \cite{lykouris2019corruption, chen2021improved, wu2021reinforcement}. For finite-horizon linear MDPs, we develop a computationally efficient algorithm with a regret bound of $\otil(\sqrt{(1+C)T})$, and another computationally inefficient one with $\otil(\sqrt{T}+C)$, improving the result of \cite{lykouris2019corruption} and answering an open question by \cite{zhang2021robust}. Finally, our model selection framework can be easily applied to other settings including linear bandits, linear contextual bandits, and MDPs with general function approximation, leading to several improved or new results. 
\end{abstract}
\section{Introduction}\label{sec: intro}
Reinforcement learning (RL) studies how an agent learns to behave in an unknown environment with reward feedback. The environment is often modeled as a Markov decision process (MDP). In the standard setting, the MDP is assumed to be static, i.e., the state transition kernel and the instantaneous reward function remain fixed over time. Under this assumption, numerous computationally and statistically efficient algorithms with strong theoretical guarantees have been developed \citep{jaksch2010near, lattimore2012pac, dann2015sample, azar2017minimax, jin2018q, jin2020provably}. However, these guarantees might break completely if the transition or the reward is corrupted by an adversary, even if the corruption is limited to a small fraction of rounds. 


To model adversarial corruptions in MDPs, a framework called \emph{adversarial MDP} has been extensively studied. In adversarial MDPs, the adversary is allowed to choose the reward function arbitrarily in every round, while keeping the transition kernel fixed \citep{neu2010online, gergely2010online, dick2014online, rosenberg2019online, rosenberg2020stochastic,  jin2020learning, neu2020online, lee2020bias, chen2021finding, he2021nearly,  luo2021policy}. Under this framework, strong sub-linear regret bounds can be established, which almost match the bounds for the fixed reward case. Notably, \cite{jin2020simultaneously, jin2021best} developed algorithms that achieve near-minimax regret bound in the adversarial reward case, while 
preserving refined instance-dependent bounds in the static case, showing that adversarial reward can be handled almost without price. 

The situation becomes very different when the transition kernel can also be corrupted. It is first shown by \cite{abbasi2013online} that achieving sub-linear regret in this setting is \emph{computationally hard}, and recently enhanced by \cite{tian2021online} showing that it is even \emph{information-theoretically hard}. To establish meaningful guarantees, previous work aims to achieve a regret bound that smoothly degrades with the amount of corruption, and thus the learner can still behave well when only a small fraction of data is corrupted. When the total amount corruption is given as a prior knowledge to the learner, \cite{wu2021reinforcement} designed an algorithm with a regret upper bound that scales optimally with the amount of corruption. However, this kind of prior knowledge is rarely available in practice. When the total amount of corruption is unknown, efforts were made by \cite{lykouris2019corruption, chen2021improved, cheung2020reinforcement,  wei2021nonstationary, zhang2021robust} to obtain similar guarantees. Unfortunately, all their bounds scale sub-optimally in the amount of corruption. Therefore, the following question remains open:
\textit{When both reward and transition can be corrupted, how can the learner achieve a regret bound that has optimal dependence on the unknown amount of corruption? } 
We address this open problem by designing an efficient algorithm with the desired worst-case optimal bound. Specifically, in tabular MDPs, our regret bound scales with $\sqrt{T}+C$ where $T$ is the number of rounds and $C$ is the total amount of corruption (omitting dependencies on other quantities). This matches the lower bound by \cite{wu2021reinforcement}. In contrast, the bounds obtained by \cite{lykouris2019corruption} and \cite{chen2021improved} are $(1+C)\sqrt{T}+C^2$ and $\sqrt{T}+C^2$ respectively, which are non-vacuous only when $C\leq \sqrt{T}$, a rather limited case. The bounds obtained by \cite{cheung2020reinforcement, wei2021nonstationary, zhang2021corruption} are $(1+C)^{\nicefrac{1}{4}}T^{\nicefrac{3}{4}}$, $\sqrt{T} + C^{\nicefrac{1}{3}}T^{\nicefrac{2}{3}}$, and $\sqrt{(1+C)T}$ respectively. Although these bounds are meaningful for all $C
\leq T$, the dependence on $C$ is \emph{multiplicative} to $T$, which is undesirable.  
 
For tabular MDPs, we further show that the bound can be improved to $\min\{\frac{1}{\Delta}, \sqrt{T}\} + C$, where $\Delta$ is the gap between the expected reward of the best and the second-best policies. This kind of refined instance-dependent regret bound is also established by \cite{lykouris2019corruption} and \cite{chen2021improved}. The bound of \cite{lykouris2019corruption} is $(1+C)\min\{\gap, \sqrt{T}\} + C^2$ for some gap-complexity $\gap\leq \frac{1}{\Delta}$, while \cite{chen2021improved} obtained $\min\{\frac{1}{\Delta}, \sqrt{T}\}+C^2$. It is left as an open question whether the best-of-all-world bound $\min\{\gap, \sqrt{T}\} + C$ is achievable. 

Our method is based on the framework of \emph{model selection} \citep{agarwal2017corralling, foster2019model, arora2021corralling,  abbasi2020regret,  pacchiano2020regret, pacchiano2020model}. In model selection problems, the learner is given a set of \emph{base algorithms}, each with an underlying model or assumption for the world. However, the learner does not know in advance which model fits the real world the best. The goal of the learner is to be comparable to the best base algorithm in hindsight. In our case, a model of the world corresponds to a hypothetical amount of corruption $C$; for a given $C$, there are algorithms with near-optimal bounds (e.g., \cite{wu2021reinforcement}) that can serve as base algorithms. Therefore, the problem of handling unknown corruption can be cast as a model selection problem. To get the bound of $\sqrt{T}+C$, we adopt the idea of \emph{regret balancing} similar to those of \cite{abbasi2020regret,  pacchiano2020regret}, while to get $\min\{\frac{1}{\Delta}, \sqrt{T}\}+C$, we develop another novel \emph{two-model selection} algorithm to achieve the goal (see \pref{sec: gap bound}). 

\paragraph{Extensions to linear and general function approximation} Our model selection framework can be readily extended to the cases of linear contextual bandits and linear MDPs.
However, in \pref{app: non-robust}, we demonstrate that even with at most $C$ corrupted rounds, a straightforward extension of standard algorithms (i.e., OFUL \citep{abbasi2011improved}, LSVI-UCB \citep{jin2020provably}) results in an overall regret of $\Omega(\sqrt{CT})$, a sharp contrast with the $\order(\sqrt{T}+C)$ bound in tabular MDPs. We find that the $\order(\sqrt{T}+C)$ bound can indeed be achieved efficiently in the non-contextual case (i.e., linear bandits with a fixed action set) by using the Phased Elimination (PE) approach developed by \cite{lattimore2020learning, bogunovic2021stochastic}. 
We achieve the same bounds for linear contextual bandits and linear MDPs, but we resort to the idea of \cite{zhang2021varianceaware}, who deal with linear models through a sophisticated and computationally inefficient clipping technique. Note that the original purpose of \cite{zhang2021varianceaware} is to get a variance-reduced bound for linear contextual bandits and a horizon-free bound for linear mixture MDPs, which are very different from our goal here. Their idea being applicable to improve robustness against corruption is surprising and of independent interest. The fact that additive dependence on $C$ is possible under linear settings (though computationally inefficient) partially answers an open question by \cite{zhang2021robust}. 

We further extend our framework to general function approximation settings. We consider the class of MDPs that have low Bellman-eluder dimension \citep{jin2021bellman}, and derive a corruption-robust version of their algorithm (GOLF). The algorithm achieves a regret bound of $\order(\sqrt{(1+C)T})$. Whether the bound of $\order(\sqrt{T}+C)$ is possible is left as an open problem.

In \pref{tab: table of bounds}, we compare our bounds with those in previous works (omitting dependencies other than $C$ and $T$). Note that $C^\textsf{r}$ is best interpreted as $\sqrt{CT}$ in the notation of prior work. More thorough comparisons to the related works are provided in \pref{app: related work}, and more precise bounds (including dependencies other than $C$ and $T$) are provided in \pref{app: base algorithms}. 

\renewcommand{\arraystretch}{1.2}
\begin{table}[t]
\small
    \centering
    \caption{\small $\ineff$ indicates computationally inefficient algorithms. $\gap$ is the \textsf{GapComplexity} defined in \citep{simchowitz2019non}; $\Delta$ is the gap between the expected reward of the best and second-best policy. It holds that  $\gap\leq \frac{1}{\Delta}$. $C^{\textsf{a}}=\sum_t c_t$ and $C^{\textsf{r}}=\sqrt{T\sum_t c_t^2}$, where $c_t$ is the amount of corruption in round $t$. By definition, $C^{\textsf{a}}\leq C^{\textsf{r}}\leq \min\{\sqrt{C^{\textsf{a}} T}, T\max_{t} c_t\}$.  $C^{\textsf{a}}$ is the standard notion of corruption in the literature. \\
    $^\dagger$: The bound reported in \citep{jin2021best} is $\min\{\gap + \sqrt{\gap C^{\textsf{a}}}, \sqrt{T}\}$ under a different definition of regret. \\
    $^\sharp$: Linearized corruption restricts that the corruption on action $a$ equals to $c^\top a$ for some vector $c$ shared among all actions. 
    }\label{tab: table of bounds}
    \begin{tabular}{|c|c|c|c|}
        \hline
        Setting & Algorithm & $\Reg(T)$ in $\otil(\cdot)$ & Restrictions \\
        \hline
        \multirow{4}{*}{Tabular MDP} & \citep{lykouris2019corruption} & $(1+C^{\am})\min\{\gap, \sqrt{T}\} + (C^\am)^2$ & \\
        \cline{2-4} 
         & 
        \citep{chen2021improved} & $\min\{\frac{1}{\Delta}, \sqrt{T}\}+ (C^{\am})^2$ & $\ineff$ \\
        \cline{2-4} 
         & \citep{jin2021best}$^\dagger$ 
        & $\min\{\gap, \sqrt{T}\} + C^\am$  & \makecell{
            \vspace*{-2.5pt}
            \scalebox{0.9}{only for corruption} \\
            \scalebox{0.9}{in reward}  } \\   
        \cline{2-4}
         & \fillcell \gapalgname + UCBVI& \fillcell $\min\{\frac{1}{\Delta}, \sqrt{T}\}+ C^{\am}$ & \fillcell \\
        \hline
        \multirow{5}{*}{Linear bandit}
         & \citep{li2019stochastic} & $\frac{1}{\Delta^2} + \frac{C^\am}{\Delta}$ & \\
        \cline{2-4} 
         & 
        \citep{bogunovic2020corruption} & $(1+C^\am)  \sqrt{T}$ &  \\
         \cline{2-4} 
         & 
        \citep{bogunovic2021stochastic} & $\sqrt{T} + (C^\am)^2$ &  \\
        \cline{2-4}
         &  \citep{lee2021achieving} & $\min\{\frac{1}{\Delta}, \sqrt{T}\} + C^\am$ & \makecell{
             \vspace*{-2.5pt}
             \scalebox{0.9}{only for linearized} \\  \scalebox{0.9}{corruption$^\sharp$} 
        } \\
        \cline{2-4} 
         & \fillcell \gapalgname + PE & \fillcell $\min\{\frac{1}{\Delta}, \sqrt{T}\} + C^\am$ & \fillcell \\  
        \hline
        \multirow{3}{*}{\makecell{Linear contextual \\ bandit}} & \citep{foster2021adapting} & $\sqrt{T} + C^\rms$ & \\
        \cline{2-4}
         & \fillcell \algname + OFUL&\fillcell $\sqrt{T} + C^\rms$ & \fillcell \\
        \cline{2-4}
         & \fillcell \algname + VOFUL& \fillcell $\sqrt{T} + C^\am$ & \fillcell $\ineff$ \\
        \hline 
        \multirow{4}{*}{Linear MDP} & \citep{lykouris2019corruption} & $\sqrt{T} + (C^{\am})^2\sqrt{T}$ & \\ 
        \cline{2-4}
         & \citep{wei2021nonstationary} & $\sqrt{T} + (C^{\am})^{\nicefrac{1}{3}}T^{\nicefrac{2}{3}}$ &  \\
        \cline{2-4}
         & \fillcell \algname + LSVI-UCB& \fillcell $\sqrt{T} + C^\rms$ & \fillcell \\
        \cline{2-4}
         & \fillcell \algname + VARLin& \fillcell $\sqrt{T} + C^\am$ & \fillcell $\ineff$ \\
        \hline
        \makecell{Low BE-dimension} & \fillcell \algname + GOLF& \fillcell $\sqrt{T}+C^\rms$ & \fillcell $\ineff$\\
        \hline
    \end{tabular}
    
    \label{tab:my_label}
\end{table}

\section{Problem Setting}\label{sec: problem setting}

We consider a general decision making framework that covers a wide range of problems. We first describe the \emph{uncorrupted} setting. The learner is given a policy set $\Pi$ and a context set $\calX$. Ahead of time, the environment decides a context-to-expected-reward mapping $\mu^\pi: \calX \rightarrow [0,1]$ for all $\pi\in\Pi$, which are hidden from the learner. In each round $t=1,\ldots, T$, the environment first arbitrarily generates a context $x_t\in\calX$, and generates a noisy reward $r^\pi_t\in[0,1]$ for all $\pi$ such that $\E[r^\pi_t] = \mu^\pi(x_t)$. The context $x_t$ is revealed to the learner. Then the learner chooses a policy $\pi_t$, and receives $r_t\triangleq r_t^{\pi_t}$. The goal of the learner is to minimize the regret defined as 
\begin{align}
    \Reg(T) = \max_{\pi\in \Pi} \sum_{t=1}^T \big(\mu^{\pi}(x_t) - \mu^{\pi_t}(x_t)\big).  \label{eq: pseudo}
\end{align}

In the \emph{corrupted} setting, the protocol is similar, but in each round $t$, an adversary can change the context-to-expected-reward mapping from $\mu^\pi(\cdot)$ to $\mu_t^{\pi}(\cdot)$. Then $r_t^{\pi}$ is drawn such that $\E[r_t^{\pi}]=\mu^\pi_t(x_t)$. We assume that $\mu_t^{\pi}(x_t)$ and $r^{\pi}_t$ still lie in $[0,1]$.  As before, the learner observes $x_t$, chooses $\pi_t$, and receives $r_t=r_t^{\pi_t}$. The goal of the learner remains to minimize the regret defined in \pref{eq: pseudo} (notice that it is defined through the uncorrupted $\mu$). 
The adversary we consider falls into the category of an \emph{adaptive adversary}, an adversary that can decide the corruption in round $t$ based on the history up to round $t-1$. 

We consider the realizable setting, where the following assumption holds: 
\begin{assumption}\label{assum: realizable}
    There exists a policy $\pistar\in\Pi$ such that $\mu^{\pistar}(x_t)\geq \mu^\pi(x_t)$ for all $t$ and all $\pi\in\Pi$. 
\end{assumption}

Below, we instantiate this framework to linear (contextual) bandits and episodic MDPs. For each setting, we define a suitable quantity $c_t$ to measure the amount of corruption in round $t$. It always holds that $\max_{\pi\in\Pi}|\mu^{\pi}(x_t) - \mu_t^{\pi}(x_t)|\leq c_t$, but $c_t$ might be strictly larger than $\max_{\pi\in\Pi}|\mu^{\pi}(x_t) - \mu_t^{\pi}(x_t)|$ in some cases. 

\paragraph{Linear contextual bandits }
In linear contextual bandits, the policy set can be identified as a bounded parameter set $\calW\subset \mathbb{R}^d$ in which the true underlying parameter $w^\star$ lies, and the ``context'' can be identified as the ``action set'' in each round. In the uncorrupted setting, in each round $t$, the environment first generates an action set $\calA_t\subset \mathbb{R}^d$ (the context). Then each policy $w\in\calW$ is associated with an action $a^w(\calA_t)=\argmax_{a\in\calA_t}\inner{w, a}$. The expected reward of policy $w$ under context $\calA_t$ is then given by $\mu^{w}(\calA_t)=\inner{w^\star, a^w(\calA_t)}\in[0,1]$ for all $w$.
Clearly, $w^\star$ is a policy that satisfies \pref{assum: realizable}. 

In the corrupted setting, for any $t$ and $w$, the adversary can choose $\mu_t^w(\cdot)$ to be an arbitrary mapping from $\calA_t$ to $[0,1]$, and $r^w_t$ is generated such that $\E[r^w_t]=\mu^w(\calA_t)$ and $r^w_t\in[0,1]$. We define $c_t=\max_{w\in\calW}|\mu^w(\calA_t) - \mu^{w}_t(\calA_t)|$. 

\paragraph{Linear bandits } Linear bandits can be viewed as a special case of linear contextual bandits with a fixed action set $\calA_t=\calA$ for all $t$. In this case, since every policy chooses the same action in every round, a more direct formulation is to identify the policy set as the action set $\calA$ and ignore the context. The expect reward of policy/action $a\in\calA$ is given by $\mu^a=\inner{w^\star, a}\in[0,1]$ for some unknown $w^\star$. Note that the action $\argmax_{a}\inner{w^\star, a}$ satisfies \pref{assum: realizable}. In the corrupted setting, $\mu_t^a$ can be set to an arbitrary value in $[0,1]$, and $r_t^a$ is generated such that $\E[r_t^a]=\mu_t^a$ and $r_t^a\in[0,1]$. We define $c_t=\max_{a\in\calA}|\mu^a-\mu_t^a|$. 

\paragraph{Episodic Markov decision processes }
An episodic MDP is associated with a state space $\calS$, an action space $\calA$, a number of layers $H$, a transition kernel $p: \calS\times \calA\rightarrow \Delta_{\calS}$, and a reward function $\sigma: \calS\times \calA\rightarrow [0,\frac{1}{H}]$. A policy $\pi=\{\pi_h: \calS\rightarrow \calA, \ h= 1, 2, \ldots, H\}$ consists of mappings $\calS\rightarrow \calA$ for each layer that specifies which action it takes in each state in that layer. The context is identified as the ``initial state.''

In the uncorrupted setting, in each round $t$, the environment arbitrarily generates an initial state $s_{t,1}\in \calS$ (the context). Then the learner decides a policy $\pi_t=\{\pi_{t,h}\}_{h=1}^H$, and interacts with the environment for $H$ steps starting from $s_{t,1}$. On the $h$-th step, she chooses an action $a_{t,h}=\pi_{t,h}(s_{t,h})$, observes a noisy reward $\sigma_{t,h}\in[0,\frac{1}{H}]$ with $\E[\sigma_{t,h}]=\sigma(s_{t,h}, a_{t,h})$, and transitions to the next state $s_{t,h+1}\sim p(\cdot|s_{t,h}, a_{t,h})$.\footnote{Our setting is a scaled version of the standard episodic MDP setting where all rewards are scaled by $1/H$. This scaling does not affect the difficulty of the problem but allows us to unify the presentation with our other settings.} The round ends right after the learner transitions to state $s_{t,H+1}$. With this procedure, the expected reward of policy $\pi$ given the initial state $s$ can be represented as
\begin{align}
    \mu^\pi(s) = \E\left[\sum_{h=1}^H \sigma(s_h, a_h)~\bigg|~s_1=s, \  a_h=\pi_h(s_h),\  s_{h+1}\sim p(\cdot|s_h, a_h),\ \ \forall h=1, \ldots, H \right],   \label{eq: V function}
\end{align}
and $r_t^\pi$ is a realized reward of policy $\pi$ in round $t$, which satisfies $\E[r_t^\pi]=\mu^{\pi}(s_{t,1})$.  

In the corrupted setting, in round $t$, we allow the adversary to change $p$ and $\sigma$ to $\pt$ and $\sigma_t$ respectively. The corrupted expected value $\mu^\pi_t(s)$ is defined similarly to \pref{eq: V function} but with $p$ and $\sigma$ replaced by $p_t$ and $\sigma_t$. We assume that after corruption, $\sigma_{t,h}$ (whose expectation is $\sigma_t(s_{t,h}, a_{t,h})$) still lies in $[0,\frac{1}{H}]$. 

To measure the amount of corruption in round $t$, we define the Bellman operators $\calT: \mathbb{R}^{\calS}\rightarrow \mathbb{R}^{\calS\times \calA}$ under the uncorrupted MDP and $\calT_t$ under the corrupted MDP as
$
    (\calT V)(s,a) \triangleq \sigma(s,a) + \E_{s'\sim p(\cdot|s,a)}[V(s')]$ and $(\calTt V)(s,a) \triangleq \sigma_t(s,a) + \E_{s'\sim \pt(\cdot|s,a)}[V(s')]
$
for $V\in \mathbb{R}^\calS$. 
Then the amount of corruption in round $t$ is defined as 
$c_t \triangleq H \cdot \sup_{s,a} \sup_{V\in [0,1]^\calS} \left|(\calT V - \calTt V)(s,a)\right|$.

\paragraph{Additional note on corruption} In all the above settings, we assume that the corruption is bounded. It holds that $c_{t}\leq c_{\max}$ for some $c_{\max}$ in all $t$. For linear (contextual) bandits, we can set $c_{\max}=1$, while for episodic MDPs, we can set $c_{\max}=2H$. While we assume bounded corruption to keep the exposition clean, our algorithm can actually handle more general scenarios. For example, for linear contextual bandits, we can handle the case where $\mu^{\pi}(\cdot)\in[0,1]$, but $\mu^{\pi}_t(\cdot)$ can be arbitrary, and $r^{\pi}_t - \mu^{\pi}_t(x_t)$ is zero-mean and 1-sub-Gaussian.
For this case, since $r^{\pi}_t - \mu^{\pi}_t(x_t)$ is bounded between $\pm c'\log(1/\delta)$ for some absolute constant $c'$ with high probability (i.e., with probability $1-\order(\delta)$), if we receive a reward $r_t$ that is outside $[-c'\log(1/\delta), 1+c'\log(1/\delta)]$, then with high probability it is caused by corruption. The learner only needs to project the reward back to this range. This essentially reduces the problem to bounded corruption case. 


\paragraph{Other notations} We use $[u, v]$ to denote $\{u, u+1, \ldots, v\}$, and $[u]$ to denote $\{1, 2, \ldots, u\}$. The notations $\otil(\cdot), \widetilde{\Theta}(\cdot)$ hide poly-logarithmic factors. 





\subsection{Two Ways to Compute Aggregated Corruption}\label{sec: two ways}  
In previous works of corruption-robust RL, the total corruption is defined as $C=\sum_{t=1}^T c_t$, where $c_t$ is the per-round corruption defined above. In  \pref{sec: intro} and \pref{app: related work}, we also adopt this definition when comparing with previous works. However, to unify the analysis under different settings, we introduce another notion of total regret defined as  $\sqrt{T\sum_{t=1}^T c_t^2}$\,. To distinguish them, we denote $C^\am = \sum_{t=1}^T c_t$ and $C^\rms=\sqrt{T\sum_{t=1}^T c_t^2}$\,, for that $C^\am$ is $T$ times the \emph{arithmetic mean} of $c_t$'s, while $C^\rms$ is $T$ times the \emph{root mean square} of $c_t$'s.   
By defining $C^{\rms}$, we are able to recover the bounds in the ``model misspecification'' literature, in which the regret bound is often expressed through $T\max_t c_t$, which is an upper bound of $C^\rms$ (see \pref{tab: table of bounds} and \pref{app: related work} for more details). 
We further define 
$C_t^{\am} \triangleq \sum_{\tau=1}^t c_\tau$ and $C^{\rms}_t \triangleq \sqrt{t\sum_{\tau=1}^t c_\tau^2}$\,. 



\section{Gap-Independent Bounds via Model Selection}\label{sec: main alg}
In this section, we develop a general corruption-robust algorithm based on model selection. The regret bound we achieve is of order $\otil(\sqrt{T}+C)$, where $C$ is either $C^\am$ or $C^\rms$ (see \pref{tab: table of bounds} for the choices in different settings). Model selection approaches rely on a meta algorithm learning over a set of base algorithms. We first specify the properties that each base algorithm should satisfy: 
\begin{assumption}[base algorithm, with either $C_t\triangleq C_t^\am$ or $C_t\triangleq C_t^\rms$]\label{assum: regret}
    $\alg$ is an algorithm that takes as input a time horizon $T$, a confidence level $\delta$, and a hypothetical corruption level $\thres$. \alg ensures the following: with probability at least $1-\delta$, for all $t\leq T$ such that $C_{t}\leq \thres$, it holds that
\begin{align*}
    \sum_{\tau=1}^t(r_\tau^{\pistar}-r_\tau) \leq \calR(t,\thres) 
\end{align*}
for some function $\calR(t,\theta)$. 
Without loss of generality, we assume that $\calR(t,\theta)$ is non-decreasing in both $t$ and $\theta$, and that $\calR(t,\theta)\geq \thres$. 
\end{assumption}

If a base algorithm satisfies \pref{assum: regret} with $C_t\triangleq C_t^\am$, we call it a \typea base algorithm, while if $C_t\triangleq C_t^\rms$, we call it a \typer. 
Base algorithms are essentially corruption-robust algorithms that require the prior knowledge of the total corruption. Therefore, the algorithms developed by \citet[Appendix B]{lykouris2019corruption} or \citet{wu2021reinforcement} can be readily used as our base algorithms. 
For example, for tabular MDPs, a variant of the UCBVI algorithm \citep{azar2017minimax} satisfies \pref{assum: regret} with $C_t\triangleq C_t^\am$ and $\calR(t,\thres)=\poly(H, \log(SAT/\delta))(\sqrt{SAt} + S^2A + SA\thres)$; for linear MDPs, a variant of the LSVI-UCB algorithm \citep{jin2020provably} satisfies \pref{assum: regret} with $C_t\triangleq C_t^\rms$ and $\calR(t, \thres)=\poly(H, \log(dT/\delta))(\sqrt{d^3 t} + d\thres)$. More examples are provided in \pref{app: base algorithms}. 

A base algorithm with a higher hypothetical corruption level $\thres$ is more \emph{robust}, but incurs more regret overhead. In contrast, base algorithms with lower hypothetical corruption level introduce less overhead, but have higher possibility of mis-specifying the amount of corruption. When the true total corruption is unknown, just running a single base algorithm with a fixed $\thres$ is risky either way. 

The idea of our algorithm is to simultaneously run multiple base algorithms (in each round, sample one of the base algorithms and execute it), each with a different hypothesis on the total amount of corruption.
This idea is also used by \cite{lykouris2019corruption}. Intuitively, if two base algorithms have a valid hypothesis for the total corruption (i.e., their hypotheses upper bound the true total corruption), then the one with smaller hypothesis should learn faster than the larger one because its hypothesis is closer to the true value, and incurs less overhead.  Therefore, if at some point we find that the average performance of a base algorithm with a smaller hypothesis is significantly worse than that of a larger one, it is an evidence that the former has mis-specified the amount of corruption. If this happens, we simply stop running this base algorithm.  

There are two key questions to be answered. First, what distribution should we use to select among the base algorithms? Second, given this distribution, how should we detect mis-specification of the amount of corruption by comparing the performance of base algorithms? In \pref{sec: single epoch analysis}, we answer the second question. The first question will be addressed in \pref{sec: elimination alg} and \pref{sec: gap bound} slightly differently depending on our target regret bound.

\begin{algorithm}[t]
\SetKwIF{If}{ElseIf}{Else}{if}{}{else if}{else}{end if}%
    \caption{\textbf{B}ase \textbf{A}lgorithms run \textbf{S}imultaneously with m\textbf{I}s-specification \textbf{C}heck (\singleepoch)} \label{alg: regret balancing all}
    
    \textbf{input}:  base algorithm $\alg$ satisfying \pref{assum: regret},~~$L\in [T]$,~~$k\in [k_{\max}]$ where $k_{\max} \triangleq \lceil \log_2 (c_{\max}L) \rceil$,~~ $\delta\in(0,1)$,~ 
    and a distribution $\alpha=(\alpha_k ,\alpha_{k+1}, \ldots, \alpha_{k_{\max}})$ satisfying: 
    \vspace*{-2mm}
    \begin{align*}
        \alpha_k \geq \alpha_{k+1} \geq \cdots \geq \alpha_{k_{\max}} > 0 \qquad \text{and}\qquad \sum_{i=k}^{k_{\max}} \alpha_i = 1. 
    \end{align*}
    \For{$i=k, \ldots, k_{\max}$}{ Initiate an instance of $\alg$ with inputs $T, \delta$,
    and $\thres$ chosen as below:  
    \begin{align}
      \thres_i\triangleq
      \begin{cases}
          1.25\cdot\alpha_i2^{i}+21c_{\max}\log(T/\delta)  &\text{if $\alg$ is \typea}\\
          1.25\cdot\alpha_i2^{i} + 8c_{\max}\sqrt{\alpha_i L\log(T/\delta)} + 21 c_{\max}\log(T/\delta) &\text{if $\alg$ is \typer}
      \end{cases} 
      \label{eq: choice of thres}
    \end{align}
    (We call this instance $\alg_i$.)
    } 
    
    \ \\
    \For{$t=1,\ldots, L$}{
         Random pick an sub-algorithm $i_t\sim \alpha$, 
         receive the context $x_t$, and use $\alg_{i_t}$ to output $\pi_t$. 
         
         Execute $\pi_t$, receive feedback, and perform  update on $\alg_{i_t}$. 
         
         Define $N_{t,i}\triangleq \sum_{\tau=1}^t \one[i_\tau=i]$, $R_{t,i}\triangleq \sum_{\tau=1}^t \one[i_\tau=i]r_\tau$. \\
         \If{$\exists i,j\in[k, k_{\max}]$, $i<j$, \textnormal{such that}
         \vspace*{-3mm}
         \begin{align}
         \frac{R_{t,i}}{\alpha_i} +   \frac{\calR(N_{t,i}, \thres_i)}{\alpha_i} < \frac{R_{t,j}}{\alpha_j} - 8\left(\sqrt{\frac{t\log(T/\delta)}{\alpha_j}} + \frac{\log(T/\delta)+ \thres_j}{\alpha_j}\right),    \label{eq: terminate condition 1}
         \end{align}
         \vspace{-6mm}}
         {
         \Return \textbf{false}.   
         }
    }
    \textbf{return} \textbf{true}. 
\end{algorithm}

\subsection{Single Epoch Algorithm}\label{sec: single epoch analysis}

In this section, we analyze \singleepoch (\pref{alg: regret balancing all}), a building block of our final algorithms. In \singleepoch, the distribution over base algorithms is fixed and given as an input ($\alpha$ in \pref{alg: regret balancing all}). Other inputs include: a length parameter $L$ that specifies the maximum number of rounds (the algorithm might terminate before finishing all $L$ rounds though) and an index $k\in [k_{\max}]$ ($k_{\max}$ is defined in \pref{alg: regret balancing all})
that specifies the smallest index of base algorithms (the base algorithms are indexed by $k, k+1, \ldots, k_{\max}$). 

Below, we sometimes unify the statements for the two definitions of total corruption (see \pref{sec: two ways}). The notations $(C, C_t)$ refer to $(C^\am, C_t^\am)$ if the base algorithm is \typea, and refer to $(C^\rms, C_t^\rms)$ if it is \typer. We will explicitly write the superscripts if we have to distinguish them.

The base algorithm with index $i\in[k, k_{\max}]$ (denoted as $\alg_i$) hypothesizes that the total corruption $C$ is upper bounded by $2^i$. 
We say $\alg_i$ is \emph{well-specified} at round $t$ if $C_t\leq 2^i$; otherwise we say it is \emph{mis-specified} at round $t$.  Naively, we might want to set the $\thres$ parameter of $\alg_i$ to $2^i$. However, we can actually set it to be smaller to reduce the overhead, as explained below. 
Since each base algorithm is sub-sampled according to the distribution $\alpha$, the total corruption experienced by $\alg_i$ in $[1,t]$ is only roughly $\sum_{\tau\leq t} \alpha_i c_\tau\leq \alpha_i C^\am$ or $\sqrt{(\alpha_i t)\sum_{\tau\leq t} \alpha_i c_\tau^2}\leq \alpha_i C^{\rms}$ (for \typea and \typer base algorithms respectively). This means that $\alg_i$, which hypothesizes a total corruption of $2^i$, only needs to set the $\thres$ parameter in \pref{assum: regret} to roughly $\alpha_i2^i$, instead of $2^i$. Our choice of $\thres_i$ in \pref{eq: choice of thres} is slightly larger than $\alpha_i 2^i$ to accommodate the randomness in the sampling procedure.

Besides performing sampling over base algorithms, \singleepoch also compares the performance any two base algorithms using \pref{eq: terminate condition 1}. 
If all base algorithms hypothesize large enough corruption, then all of them enjoy the regret bound specified in \pref{assum: regret} in the subset of rounds they are executed. In this case, we can show that with high probability, the termination condition \pref{eq: terminate condition 1} will not hold. This is formalized in \pref{lem: istar = k case}. 

\begin{lemma}\label{lem: istar = k case}
   With probability at least $1-\order(k_{\max}\delta)$, 
   the termination condition \pref{eq: terminate condition 1} of the \singleepoch algorithm, does not hold in any round $t$, such that $C_{t}\leq 2^k$.
\end{lemma}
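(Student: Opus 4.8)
The plan is to exhibit a single high-probability event $\calE$ on which the termination condition \pref{eq: terminate condition 1} fails for \emph{every} pair $i<j$ and every round $t$ with $C_t\le 2^k$ simultaneously. Note first that $C_t\le 2^k\le 2^i$ for all active indices $i\ge k$, so on these rounds every base instance is well-specified. The central device is to sandwich the two sides of \pref{eq: terminate condition 1} against the common (latent) benchmark $U_t\triangleq\sum_{\tau\le t} r_\tau^{\pistar}$, the cumulative reward the optimal policy would have realized; since $U_t$ appears on both sides it cancels, leaving only a comparison of deviation terms. The event $\calE$ is the intersection of (i) the success events of \pref{assum: regret} for each of the $k_{\max}$ base instances and (ii) a few martingale-concentration events, so a union bound gives failure probability $\order(k_{\max}\delta)$.

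The first ingredient is that the corruption \emph{actually experienced} by each instance $\alg_i$ stays below its budget $\thres_i$. Because $\alg_i$ acts only on the sub-sampled rounds $\{\tau\le t:i_\tau=i\}$, its experienced corruption is $\sum_{\tau:i_\tau=i}c_\tau$ (\typea) or $\sqrt{N_{t,i}\sum_{\tau:i_\tau=i}c_\tau^2}$ (\typer), whose conditional mean is of order $\alpha_i C_t\le\alpha_i 2^i$; a Bernstein/Freedman bound, together with the slack built into \pref{eq: choice of thres} (the factor $1.25$ and the $c_{\max}$ terms), shows this is at most $\thres_i$ on $\calE$. Granting this, \pref{assum: regret} applies to each $\alg_i$ on its own sub-sequence, yielding $R_{t,i}\ge\sum_{\tau:i_\tau=i}r_\tau^{\pistar}-\calR(N_{t,i},\thres_i)$. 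Hence the left side of \pref{eq: terminate condition 1} satisfies $\frac{R_{t,i}+\calR(N_{t,i},\thres_i)}{\alpha_i}\ge\frac{1}{\alpha_i}\sum_{\tau:i_\tau=i}r_\tau^{\pistar}\ge U_t-\mathrm{dev}_i$, the last step being a Freedman bound on the importance-weighted estimator $\frac{1}{\alpha_i}\sum_\tau\one[i_\tau=i]r_\tau^{\pistar}$ of $U_t$ (differences bounded by $1/\alpha_i$, summed conditional variance $\le t/\alpha_i$), so $\mathrm{dev}_i\lesssim\sqrt{t\log(T/\delta)/\alpha_i}+\log(T/\delta)/\alpha_i$.

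For the right side I would bound $\frac{R_{t,j}}{\alpha_j}=\frac{1}{\alpha_j}\sum_\tau\one[i_\tau=j]r_\tau$ from above. Writing $r_\tau=r_\tau^{\pistar}+(r_\tau-r_\tau^{\pistar})$, the first part concentrates to $U_t$ from above, while the conditional mean of the second is $\mu_\tau^{\pi_\tau}(x_\tau)-\mu_\tau^{\pistar}(x_\tau)\le 2c_\tau$, using \pref{assum: realizable} (so $\mu^{\pi_\tau}\le\mu^{\pistar}$) and the corruption bound $|\mu^\pi-\mu_t^\pi|\le c_t$ applied to both $\pi_\tau$ and $\pistar$; a Freedman bound then gives $\frac{R_{t,j}}{\alpha_j}\le U_t+2C_t+\mathrm{dev}_j'$ with $\mathrm{dev}_j'\lesssim\sqrt{t\log(T/\delta)/\alpha_j}+\log(T/\delta)/\alpha_j$. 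Subtracting, $U_t$ cancels and it remains to check $8\big(\sqrt{t\log(T/\delta)/\alpha_j}+\tfrac{\log(T/\delta)+\thres_j}{\alpha_j}\big)\ge 2C_t+\mathrm{dev}_i+\mathrm{dev}_j'$. Two structural facts close this: first, $i<j$ forces $j\ge k+1$, so $2^j\ge 2\cdot 2^k\ge 2C_t$, and since $\tfrac{\thres_j}{\alpha_j}\ge 1.25\cdot 2^j\ge 2^j$ the corruption surplus $2C_t$ is absorbed by the $\thres_j$ term on the right; second, $\alpha_i\ge\alpha_j$ (monotonicity of $\alpha$) makes every $\alpha_i$-scaled deviation no larger than its $\alpha_j$-scaled counterpart, so all deviations are dominated by the stated $\sqrt{t\log(T/\delta)/\alpha_j}$ and $\log(T/\delta)/\alpha_j$ terms, with the constant $8$ leaving room.

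The most delicate step is the experienced-corruption bound for \typer instances, whose budget involves the nonlinear $\sqrt{N_{t,j}\sum_{\tau:i_\tau=j}c_\tau^2}$: one must concentrate the count $N_{t,j}$ around $\alpha_j t$ and the weighted sum around $\alpha_j\sum c_\tau^2$ and combine them so the product lands below $\thres_j$, which is exactly what the extra term $8c_{\max}\sqrt{\alpha_i L\log(T/\delta)}$ in \pref{eq: choice of thres} is calibrated for. A secondary point is making all martingale bounds hold uniformly over $t\le L$ (via a union bound over $t$ or a time-uniform inequality), costing only a logarithmic factor already absorbed into $\log(T/\delta)$. What then remains is constant-tracking to confirm the slacks $1.25$, $8$, and $21$ suffice.
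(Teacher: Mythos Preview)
Your proposal is correct and follows essentially the same route as the paper: sandwich both sides of the test around $U_t=\sum_{\tau\le t} r_\tau^{\pistar}$, use the base-algorithm guarantee (after verifying experienced corruption $\le\thres_i$) for the lower bound on $\frac{R_{t,i}+\calR}{\alpha_i}$, use realizability plus the corruption bound for the upper bound on $\frac{R_{t,j}}{\alpha_j}$, and then absorb all deviations via $\alpha_i\ge\alpha_j$ and the $\thres_j/\alpha_j$ slack in the test. The only cosmetic difference is that the paper absorbs the corruption surplus in the upper bound directly as $2\thres_j/\alpha_j$ via $C_{t,j}\le\thres_j$, whereas you route it through $C_t\le 2^k$ and $j\ge k+1$; both work.
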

In other words, \pref{eq: terminate condition 1} is triggered only when $C_t > 2^k$, i.e., $\alg_k$ is mis-specified at round $t$. Once this happens, the \singleepoch algorithm terminates. 
%
Checking condition \pref{eq: terminate condition 1} essentially ensures that the quantity $\frac{R_{t,i}}{\alpha_i}$ of all base algorithms remain close. Notice that at all $t$, there is always a well-specified base algorithm $i^\star$ with $C_t\leq 2^{i^\star}$ which enjoys the regret guarantee of \pref{assum: regret}. Therefore, $\frac{R_{t,i^\star}}{\alpha_{i^\star}}$ is not too low, and thus, testing condition \pref{eq: terminate condition 1} prevents $\frac{R_{t,i}}{\alpha_i}$ of any $i$ from falling too low. This directly controls the performance of every base algorithms before termination. The following lemma bounds the learner's cumulative regret at termination. 
\begin{lemma}\label{lem: general regret}
Let $L_0\leq L$ be the round at which \singleepoch terminates, and let $i^\star$ be the smallest $i\in[k, k_{\max}]$ such that $C_{L_0}\leq 2^{i}$. Then with probability at least $1-\order(k_{\max}\delta)$, 
   \begin{align*}
    \sum_{t=1}^{L_0} (r_{t}^{\pi^\star} - r_t) \leq \sum_{i=k}^{k_{\max}} \calR(N_{L_0,i}, \thres_i) + \otil\left(\one\left[i^\star > k\right]\left(\sqrt{\frac{L_0}{\alpha_{i^\star}}} + \frac{\calR(N_{L_0,i^\star}, \theta_{i^\star})}{\alpha_{i^\star}}\right)\right).
\end{align*} 
where $N_{L_0, i}$ is the total number of rounds $\alg_i$ was played.
\end{lemma}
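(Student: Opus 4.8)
The plan is to write the learner's regret as an \emph{exact} sum of per-base-algorithm regrets and then bound each summand, treating well-specified and mis-specified base algorithms separately. Since every round $\tau\le L_0$ is handled by exactly one sampled index $i_\tau$, we have the identity
\[
\sum_{t=1}^{L_0}\big(r_t^{\pistar}-r_t\big)=\sum_{i=k}^{k_{\max}}\big(R_{L_0,i}^{\pistar}-R_{L_0,i}\big),\qquad R_{t,i}^{\pistar}\triangleq\sum_{\tau\le t}\one[i_\tau=i]\,r_\tau^{\pistar},
\]
with no approximation. Recall $i^\star$ is the smallest index with $C_{L_0}\le 2^{i^\star}$; since $C_t$ is non-decreasing in $t$, every $\alg_i$ with $i\ge i^\star$ is well-specified throughout $[1,L_0]$, whereas $\alg_i$ with $i<i^\star$ may be mis-specified.

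First I would handle the well-specified indices $i\ge i^\star$. On a high-probability event (the same one underlying \pref{lem: istar = k case}), a Freedman bound shows that the corruption actually experienced by $\alg_i$ on its subsampled rounds---of order $\alpha_i C$---stays below the inflated threshold $\thres_i$ of \pref{eq: choice of thres}; this is exactly what the $1.25\,\alpha_i 2^{i}$ leading term and the $c_{\max}$-corrections are engineered to absorb (the extra $8c_{\max}\sqrt{\alpha_i L\log(T/\delta)}$ piece in the \typer case covering the root-mean-square fluctuation). Invoking \pref{assum: regret} for each such $\alg_i$ with internal horizon $N_{L_0,i}$ then gives $R_{L_0,i}^{\pistar}-R_{L_0,i}\le\calR(N_{L_0,i},\thres_i)$, so these indices contribute at most $\sum_{i\ge i^\star}\calR(N_{L_0,i},\thres_i)$. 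If $i^\star=k$ this already finishes the proof, since the set $\{i<i^\star\}$ is empty---matching the indicator $\one[i^\star>k]$ on the error term.

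The crux is the mis-specified indices $i<i^\star$, for which \pref{assum: regret} yields nothing. Here I would exploit that $\alg_i$ was never eliminated: the test \pref{eq: terminate condition 1} is silent at all rounds strictly before termination, so the pair $(i,i^\star)$ satisfies its reverse, which (after a harmless $O(1)$ correction for the single terminating round $L_0$) gives
\[
R_{L_0,i}\ \ge\ \frac{\alpha_i}{\alpha_{i^\star}}R_{L_0,i^\star}-\calR(N_{L_0,i},\thres_i)-8\alpha_i\Big(\sqrt{\tfrac{L_0\log(T/\delta)}{\alpha_{i^\star}}}+\tfrac{\log(T/\delta)+\thres_{i^\star}}{\alpha_{i^\star}}\Big).
\]
Substituting the well-specified bound $R_{L_0,i^\star}\ge R_{L_0,i^\star}^{\pistar}-\calR(N_{L_0,i^\star},\thres_{i^\star})$ and rearranging bounds $R_{L_0,i}^{\pistar}-R_{L_0,i}$ by $\calR(N_{L_0,i},\thres_i)$, a term $\tfrac{\alpha_i}{\alpha_{i^\star}}\calR(N_{L_0,i^\star},\thres_{i^\star})$, the slack above, and a residual martingale $R_{L_0,i}^{\pistar}-\tfrac{\alpha_i}{\alpha_{i^\star}}R_{L_0,i^\star}^{\pistar}$. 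Summing over $i<i^\star$: the $\calR(N_{L_0,i},\thres_i)$ terms join the global $\sum_i\calR$; the factors $\alpha_i$ collapse via $\sum_{i<i^\star}\alpha_i\le 1$, so the $i^\star$-dependent pieces become $\otil\big(\calR(N_{L_0,i^\star},\thres_{i^\star})/\alpha_{i^\star}+\sqrt{L_0/\alpha_{i^\star}}\big)$ (using $\calR\ge\thres$ to drop $\thres_{i^\star}/\alpha_{i^\star}$); and the summed residual $\sum_\tau\big(\one[i_\tau<i^\star]-\tfrac{\sum_{i<i^\star}\alpha_i}{\alpha_{i^\star}}\one[i_\tau=i^\star]\big)r_\tau^{\pistar}$ is controlled by a single Freedman inequality at $\otil(\sqrt{L_0/\alpha_{i^\star}})$. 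Adding the well-specified contribution yields the stated bound.

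The main obstacle is this last regime. Unlike \pref{lem: istar = k case}, which only needs the test to remain silent, here I must extract a quantitative reward lower bound for algorithms carrying no guarantee, and the whole argument hinges on the slack in \pref{eq: terminate condition 1} exactly matching the Freedman deviations of the randomized scheduling---both the per-pair deviation absorbed by the $8(\sqrt{t\log(T/\delta)/\alpha_j}+\dots)$ term and the aggregate residual deviation after summing. Getting the variance proxies right (the indicators $\one[i_\tau=i]$ have conditional variance $\alpha_i$, while the cross term with $i^\star$ inflates the increment to $\alpha_i/\alpha_{i^\star}$) and verifying that every $1/\alpha_{i^\star}$ factor is confined to the single $\one[i^\star>k]$ error term---rather than leaking into the clean $\sum_i\calR$ part---is where the care is required. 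All failure probabilities union-bound over the $k_{\max}$ base algorithms, producing the $\order(k_{\max}\delta)$ in the statement.
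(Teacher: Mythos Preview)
Your proposal is correct and follows essentially the same route as the paper: decompose the regret over base algorithms, invoke \pref{assum: regret} for well-specified indices $i\ge i^\star$, and for mis-specified $i<i^\star$ use that \pref{eq: terminate condition 1} did not trigger at $L_0-1$ together with the well-specified bound for $\alg_{i^\star}$, then sum and collapse via $\sum_{i<i^\star}\alpha_i\le 1$.

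The one minor difference is how the residual $R_{L_0,i}^{\pistar}-\tfrac{\alpha_i}{\alpha_{i^\star}}R_{L_0,i^\star}^{\pistar}$ is handled. The paper anchors each $R_{t,i}^{\pistar}/\alpha_i$ to the common quantity $\sum_\tau r_\tau^{\pistar}$ via \pref{lem: easy lemma 2} (applied per index $i$), then uses the triangle inequality to compare $R_{L_0-1,i}^{\pistar}/\alpha_i$ and $R_{L_0-1,i^\star}^{\pistar}/\alpha_{i^\star}$; since $\alpha_i\ge\alpha_{i^\star}$ the $\alpha_{i^\star}$ term dominates. You instead sum over $i<i^\star$ first and hit the aggregated martingale with a single Freedman bound. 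Both yield the same $\otil(\sqrt{L_0/\alpha_{i^\star}})$ term. Your route is slightly more economical, but note that because $i^\star$ is random (it depends on $C_{L_0}$ and hence on the termination time), your ``single'' Freedman application still requires a union bound over the $k_{\max}$ possible values of $i^\star$; the paper's per-index anchoring sidesteps this because \pref{lem: easy lemma 2} is established for each fixed $i$ uniformly in $t$, after which $i^\star$ can be plugged in post hoc. Either way the failure probability is $\order(k_{\max}\delta)$.
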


\subsection{Corruption-robust Algorithms with $\sqrt{T}+C^{\textsf{a}}$ or $\sqrt{T}+C^{\textsf{r}}$ Bounds}\label{sec: elimination alg}  
Next, we use \singleepoch to build a corruption-robust algorithm with a regret bound of either $\sqrt{T}+C^\am$ or $\sqrt{T}+C^{\rms}$ without prior knowledge of $C^\am$ or $C^\rms$. The algorithm is called \algname and presented in \pref{alg: meta alg}. We consider base algorithms with the following concrete form of $\calR(t,\thres)$: 
\begin{align}
    \calR(t,\thres) = \sqrt{\beta_1 t} + \beta_2 \theta + \beta_3  \label{eq: typeone}
\end{align}  
for some $\beta_1, \beta_2, \beta_3 \geq 1$. \algname
starts with $k=k_{\init}$ (defined in \pref{alg: meta alg}) and runs \singleepoch with inputs $k$ and $L=T$ and the following choice of $\{\alpha_i\}_{i=k}^{k_{\max}}$:   
\begin{align}
    \alpha_i = 
    \begin{cases}
        2^{k-i-1}  &\text{for\ } i>k, \\
        1-\sum_{i=k+1}^{k_{\max}}\alpha_i &\text{for\ } i=k.
    \end{cases}  \label{eq: type 2 alpha}
\end{align} 
Whenever the subroutine \singleepoch terminates before $T$, we eliminate the $\alg_k$ and start a new instance of \singleepoch with $k$ increased by $1$ (see the for-loop in \algname). This is because as indicated by \pref{lem: istar = k case}, early termination implies that $\alg_k$ mis-specifies the amount of corruption. 

Notice that $2^{k_{\init}}$ is roughly of order $\sqrt{T}$, i.e., we start from assuming that the total amount of corruption is $\sqrt{T}$. This is because we only target the worst-case regret rate of $\sqrt{T}+C$ here, so refinements for smaller corruption levels $C\leq \sqrt{T}$ do not improve the asymptotic bound. Our choice of $\alpha_i$ makes $\alpha_i 2^i\approx 2^k$ for all $i$, and this further keeps the magnitudes of $\calR(N_{t,i}, \thres_i)$ of all $i$'s  roughly the same. This conforms with the regret balancing principle by \cite{abbasi2020regret, pacchiano2020regret}, as well as the sub-sampling idea of \cite{lykouris2019corruption}. This makes the bound of the model selection algorithm only worse than the best base algorithm by a factor of $\order(k_{\max})=\otil(1)$ if all base algorithms are well-specified.  

In the following theorem, we show guarantees of \algname for both $C\triangleq C^\am$ and $C\triangleq C^\rms$. The proof essentially plugs the choices of parameters into \pref{lem: general regret}, and sum the regret over epochs. 

\SetKwBlock{Repeat}{repeat}{end}
\begin{algorithm}[t]
    \caption{\textbf{CO}rruption-robustness through \textbf{B}alancing and \textbf{E}limination (\algname)}  \label{alg: meta alg}
    \nl \textbf{input}: base algorithm $\alg$ satisfying \pref{assum: regret} with the form specified in \pref{eq: typeone}. \\
    \nl \textbf{define}: $Z\triangleq c_{\max}$ if $\alg$ is \typea, and $Z\triangleq c_{\max}\sqrt{T}$ if $\alg$ is \typer. \\
    \nl $k_{\init}\triangleq \max\left\{ \left\lceil \log_2  \frac{\sqrt{\beta_1 T} + \beta_2 Z + \beta_3}{\beta_2} \right\rceil, \ 0\right\}$ with $\beta_1, \beta_2, \beta_3$ defined in \pref{eq: typeone}.  
    \ \\
    \nl \For{$k=k_{\init}, \ldots$}{
    \nl    Run \singleepoch with input $k$ and $L=T$, and $\{\alpha_i\}_{i=k}^{k_{\max}}$ specified in \pref{eq: type 2 alpha}, until it terminates or the total number of rounds reaches $T$. 
    }  \label{line: for loop inside in COBE}
    

\end{algorithm}

\begin{theorem}\label{thm: form 1 regret}
    If $\alg$ satisfies \pref{assum: regret} and $\calR(t,\thres)$ in the form of \pref{eq: typeone}, then with $\alpha_i$'s specified in \pref{eq: type 2 alpha}, \algname guarantees with probability at least $1-\order(k_{\max}\delta)$ that
    \begin{align*}
        \Reg(T) = \otil\left(\sqrt{\beta_1 T} + \beta_2 (C+Z) + \beta_3 \right),  
    \end{align*}
     where $Z=c_{\max}$ if $C\triangleq C^\am$ and $Z=c_{\max}\sqrt{T}$ if $C\triangleq C^{\rms}$. 
\end{theorem}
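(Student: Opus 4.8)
The plan is to view \algname as a sequence of epochs indexed by $k=k_{\init},k_{\init}+1,\dots$, apply \pref{lem: general regret} to the regret accrued inside each epoch, and sum the resulting per-epoch bounds. Two structural facts about the epochs come first. The number of epochs is at most $k_{\max}-k_{\init}+1=\otil(1)$, since each epoch consumes a distinct index $k\le k_{\max}$. To bound the largest index reached, call it $\bark$, I would use \pref{lem: istar = k case}: \algname advances past epoch $k$ only when \singleepoch terminates before round $T$, and the contrapositive of \pref{lem: istar = k case} says such an early termination forces $C_t>2^k$ at some round $t$ of that epoch. Since $C_t$ is non-decreasing in $t$ and never exceeds the global corruption $C$, every terminating epoch satisfies $2^k<C$, so $2^{\bark}=\otil(2^{k_{\init}}+C)$, where $C=C^\am$ for a \typea base algorithm and $C=C^\rms$ for a \typer one. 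The total number of rounds across epochs is exactly $T$, so the epoch lengths $L_0^{(k)}$ sum to $T$.

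Next I would evaluate the main term $\sum_{i=k}^{k_{\max}}\calR(N_{L_0,i},\thres_i)$ of \pref{lem: general regret} in one epoch, plugging in the form \pref{eq: typeone} together with the choices \pref{eq: type 2 alpha} and \pref{eq: choice of thres}. The central arithmetic is that the balancing choice yields $\alpha_i2^i\in[2^{k-1},2^k]$ for every $i$, so $\sum_i 1.25\,\alpha_i2^i=\otil(2^k)$; the geometric tail gives $\sum_i\sqrt{\alpha_i}=\order(1)$, so with $L=T$ the middle term of \pref{eq: choice of thres} sums to $\otil(c_{\max}\sqrt T)$ in the \typer case while the additive $\log$ term sums to $\otil(c_{\max})$, and in either case these match $\otil(Z)$. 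Combined with the Cauchy--Schwarz estimate $\sum_i\sqrt{\beta_1 N_{L_0,i}}\le\sqrt{(k_{\max}-k+1)\beta_1 L_0}=\otil(\sqrt{\beta_1 L_0})$ (using $\sum_i N_{L_0,i}=L_0$), the per-epoch main term is $\otil(\sqrt{\beta_1 L_0}+\beta_2(2^k+Z)+\beta_3)$.

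Summing over epochs then assembles the theorem. The $\sqrt{\beta_1 L_0}$ contributions recombine by Cauchy--Schwarz into $\otil(\sqrt{\beta_1 T})$ because the epoch lengths sum to $T$; the $\beta_2 Z$ and $\beta_3$ contributions pick up only the $\otil(1)$ epoch count; and the $\beta_2 2^k$ contributions form a geometric series bounded by $\otil(\beta_2 2^{\bark})=\otil(\beta_2(2^{k_{\init}}+C))$. The definition of $k_{\init}$ gives $\beta_2 2^{k_{\init}}=\otil(\sqrt{\beta_1 T}+\beta_2 Z+\beta_3)$, so every piece collapses into the claimed bound $\otil(\sqrt{\beta_1 T}+\beta_2(C+Z)+\beta_3)$. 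A union bound over the $\otil(1)$ epochs, each succeeding with probability $1-\order(k_{\max}\delta)$, keeps the overall failure probability at $\order(k_{\max}\delta)$.

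The step I expect to be the main obstacle is controlling the correction term $\one[i^\star>k]\big(\sqrt{L_0/\alpha_{i^\star}}+\calR(N_{L_0,i^\star},\thres_{i^\star})/\alpha_{i^\star}\big)$ of \pref{lem: general regret}, since $1/\alpha_{i^\star}=2^{i^\star-k+1}$ can be large. The resolution is a calibration argument: $i^\star$ being the smallest index with $C_{L_0}\le 2^{i^\star}$ gives $2^{i^\star}\le 2C$, while $2^k\ge 2^{k_{\init}}$ is chosen so that $\beta_2 2^{k_{\init}}\gtrsim\sqrt{\beta_1 T}+\beta_3$. Writing $\sqrt{L_0/\alpha_{i^\star}}=\sqrt{2L_0\cdot 2^{i^\star-k}}$ and bounding $2^{i^\star-k}\lesssim C/2^{k_{\init}}$, an AM--GM split (e.g.\ against $a=\sqrt{\beta_1 T}$ and $b=\beta_2 C/\beta_1$) turns the product into $\otil(\sqrt{\beta_1 T}+\beta_2 C)$, and the analogous treatment of $\calR(N_{L_0,i^\star},\thres_{i^\star})/\alpha_{i^\star}$ (using $N_{L_0,i^\star}\approx\alpha_{i^\star}L_0$ and $\thres_{i^\star}/\alpha_{i^\star}\lesssim 2^{i^\star}$) stays within the same budget. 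Getting the right corruption notion and the $Z$ term to emerge cleanly through these estimates is the delicate bookkeeping of the argument.
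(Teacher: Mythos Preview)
Your proposal is correct and follows essentially the same route as the paper: apply \pref{lem: general regret} per epoch with the concrete $\calR$ of \pref{eq: typeone} and $\alpha_i$'s of \pref{eq: type 2 alpha}, use \pref{lem: istar = k case} to cap the largest $k$ by $\otil(2^{k_{\init}}+C)$, and handle the $\one[i^\star>k]$ correction via the calibration $\beta_2 2^{k_{\init}}\gtrsim\sqrt{\beta_1 T}+\beta_2 Z+\beta_3$ together with AM--GM. The only cosmetic differences are that the paper bounds each epoch directly by $L_0\le T$ and multiplies by the $\order(k_{\max})$ epoch count rather than recombining $\sqrt{L_0^{(k)}}$ across epochs via Cauchy--Schwarz, and it writes out the AM--GM split on $\sqrt{\beta_1 T\cdot 2^{i^\star}/2^k}$ and $(\beta_2 Z+\beta_3)\cdot 2^{i^\star}/2^k$ explicitly; your sketch of these steps is accurate.
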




%

\section{Gap-Dependent Bounds}\label{sec: gap bound}
In this section, the goal is to get instance-dependent bounds similar to those in \cite{lykouris2019corruption, chen2021improved}. 
There are extra assumptions to be made in this section. First, we only deal with the case without contexts, i.e., the following assumption holds: 
\begin{assumption}\label{assum: context free}
    Assume that $\mu^\pi(x_t)=\mu^\pi$. 
\end{assumption}
This covers linear bandits and MDPs with a fixed initial state. In fact, our approach can handle a slightly more general case where the context is i.i.d. generated in the uncorrupted case, and the non-iid-ness of the context distribution is considered as corruption (in contrast, in \pref{sec: main alg}, the non-iid-ness of contexts is not considered as corruption).  
Besides, our bound depends on the sub-optimality gap defined in the following: 
\begin{assumption}\label{assum: gap}
    There exists a policy $\pistar\in\Pi$ such that for all $\pi\in \Pi\backslash\{\pistar\}$, $\mu^{\pi} \leq \mu^{\pistar} - \Delta$.  
\end{assumption}
This gap assumption is in fact stronger than that made by \cite{chen2021improved}. In \citep{chen2021improved}, $\Delta:= \min_{\pi:~\Delta_{\pi}>0}\Delta_\pi$ where $\Delta_\pi=\mu^{\pistar}-\mu^\pi$. Their definition keeps $\Delta>0$ when there are multiple optimal policies, while our \pref{assum: gap} forces $\Delta=0$ if there are two optimal policies with the same expected reward. This kind of stronger gap assumption is similar to those in \citep{lee2021achieving, jin2021best}. 
Finally, we only focus on the case with $C=C^\am$ throughout this section. \footnote{When $C=C^\rms$, our approach produces a regret term of $c_{\max}\sqrt{T}$ as in \pref{thm: form 1 regret}, spoiling the gap-dependent bound. }

\setcounter{AlgoLine}{0}
\begin{algorithm}[t]
    \caption{Gap-bound enhanced \algname (\gapalgname)} \label{alg: meta alg gap}
    \nl $k_{\init}=\max\left\{\left\lceil \log_2\frac{\sqrt{\beta_1} + \beta_2c_{\max} + \beta_3 }{\beta_2} \right\rceil, \ \ 0\right\}$,\ \  $\beta_4 =10^4\left(2\beta_1 + 42\beta_2c_{\max}\log(T/\delta) + 2\beta_3\right)$.  \\ 
    \nl \For{$k=k_{\init}, \ldots$}{ \label{line: for loop}
        \nl \texttt{// \phaseone} \\
        \nl Let $L$ be the smallest integer such that $\sqrt{\beta_4 L} \geq \beta_2 2^k$. \label{line: phaseone start}\\
        \nl \lIf{$L > T$}{\textbf{break}}   \label{line: L>T condition}
        \nl Run \singleepoch with input $k$ and $L$, and $\{\alpha_i\}_{i=k}^{k_{\max}}$ specified in \pref{eq: type 2 alpha} until it terminates or the number of rounds reaches $T$. 
        Let $o$ be its output, and let $\hatpi$ be the policy that is executed the most number of times by the base algorithm $\alg_k$.   \label{line: phaseone end} \\
        \nl \If{$o=\textbf{true}$}{ 
        \nl    \texttt{// \phasetwo} \\
        \nl    Run \twomodel with $L, \hatpi, \calB_{\hatpi}$, until it terminates or the number of rounds reaches $T$.    \label{line: phasetwo end} 
        }
    }
    \nl \texttt{// \phasethree} \\
    \nl Run \algname in the remaining rounds. 
\end{algorithm}

\subsection{Algorithm Overview} 
Our algorithm \gapalgname (\pref{alg: meta alg gap}) consists of three phases where the first two phases are executed interleavingly. In \phaseone (\pref{line: phaseone start}-\pref{line: phaseone end} in \gapalgname), we run \singleepoch with a \typea base algorithm that satisfies \pref{assum: regret} with the following gap-dependent bound: 
\begin{align}
    \calR(t, \thres) = \min\left\{\sqrt{\beta_1 t}, \ \frac{\beta_1}{\Delta}\right\}+ \beta_2 \thres + \beta_3 \label{eq: typetwo}
\end{align}
for some $\beta_1, \beta_2, \beta_3$ satisfying $\beta_1\geq 16\log(T/\delta)$, $\beta_2\geq 1$, $\beta_3\geq 10\sqrt{\beta_1\log(T/\delta)}$. 

In every for-loop of $k$, if \singleepoch in \phaseone returns true, the algorithm proceeds to \phasetwo (\pref{line: phasetwo end} in \gapalgname). In \phasetwo we execute \twomodel (\pref{alg: corral}). \twomodel is a specially designed two-model selection algorithm that dynamically chooses between two instances. One of the them is $\hatpi$, a candidate optimal policy identified in \phaseone (defined in \pref{line: phaseone end} of \gapalgname); the other is an algorithm with $\hatpi$ as input (we call this algorithm $\calB_{\hatpi}$). 
We assume  that $\calB_{\hatpi}$ has the following property: 
\begin{assumption}\label{assum: corruption robust sub routine}
    $\calB_{\hatpi}$ is a corruption-robust algorithm over the policy set $\Pi\backslash\{\hatpi\}$ without the prior knowledge of total corruption. In other words, when running alone, in every round $t$, it chooses a policy $\pi_t\in\Pi\backslash\{\hatpi\}$ and receives $r_t$ with $\E[r_t]=\mu_t^{\pi_t}$. It ensures the following for all $t$ with probability at least $1-\delta$:
    \vspace*{-5pt}
    \begin{align}
        \max_{\pi\in \Pi\backslash\{\hatpi\}}\sum_{\tau=1}^t \left(r^\pi_\tau - r_\tau\right) \leq \calR_{\calB}(t,C_t) \triangleq \sqrt{\beta_1 t} + \beta_2 C_t + \beta_3.     \label{eq: guarantee of B}\\[-10pt]
        \nonumber
    \end{align}
\end{assumption}
Notice that in \pref{sec: elimination alg} we have already developed a corruption-robust algorithm \algname, whose guarantee is already in the form of \pref{eq: guarantee of B}, albeit over the original policy set $\Pi$ (see \pref{thm: form 1 regret}). In \pref{app: leave one out}, we describe how to implement $\calB_{\hatpi}$ through running \algname on a modified MDP. 

The \twomodel in \phasetwo might end earlier than time $T$. This happens only when $\frac{1}{\Delta}+C$ is larger than the order of $2^k$. 
In this case, the algorithm goes back to \phaseone with $k$ increased by $1$. When $2^k$ grows to the order of $\sqrt{T}$ (implying that $\sqrt{T}\gtrsim \frac{1}{\Delta}+C$), we instead proceed to \phasethree and simply run \algname in the remaining rounds (\pref{line: L>T condition} of \gapalgname). 



The regret guarantee of \gapalgname is summarized by the following theorem. 
\begin{theorem}\label{thm: gap bound thm}
    \gapalgname ensures that (with $\beta_4$ defined in \pref{alg: meta alg gap})
    \vspace*{-2pt}
    \begin{align*}
        \Reg(T) = \otil\left(\min\left\{\sqrt{\beta_4 T}, \ \frac{\beta_4}{\Delta}\right\} + \beta_2 C + \beta_4 \right). 
    \end{align*}
\end{theorem}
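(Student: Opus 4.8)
The plan is to analyze \gapalgname epoch by epoch over the outer index $k$, to show that the outer loop advances through only $\otil(1)$ values of $k$, and to argue that the total regret is dominated by a single \emph{critical} epoch whose length $L_k$ is the largest. Because $L_k$ is chosen on \pref{line: phaseone start} so that $\sqrt{\beta_4 L_k}\asymp\beta_2 2^k$, the lengths grow geometrically (by a factor of $4$) in $k$, so both the cumulative number of rounds and the cumulative regret across the visited epochs are, up to $\otil(1)$ factors, controlled by their values at the final epoch. I would define the critical index $k^\star$ as the smallest $k$ with $2^k\gtrsim\tfrac1\Delta+C$, and split into two regimes according to whether $L_{k^\star}\le T$ (the \emph{gap regime}) or $L_{k^\star}>T$ (the \emph{worst-case regime}, in which the loop breaks to \phasethree on \pref{line: L>T condition}).

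First I would bound the regret of a single \phaseone call. Substituting the gap-dependent $\calR$ of \pref{eq: typetwo} and the balancing weights \pref{eq: type 2 alpha} into \pref{lem: general regret}, and using $\alpha_i 2^i\asymp 2^k$ (so $\thres_i\asymp 2^k+c_{\max}\log(T/\delta)$), I obtain $\sum_i\calR(N_{L_0,i},\thres_i)=\otil(\min\{\sqrt{\beta_1 L_0},\tfrac{\beta_1}\Delta\}+\beta_2 2^k+\beta_4)$ after Cauchy--Schwarz on the square-root terms and absorbing $\beta_2 c_{\max}\log(T/\delta)+\beta_3$ into $\beta_4$. The choice $\sqrt{\beta_4 L_k}\asymp\beta_2 2^k$ together with $\beta_4\ge 2\cdot10^4\beta_1$ makes $\sqrt{\beta_1 L_0}\le\sqrt{\beta_1 L_k}\ll\beta_2 2^k$, so the exploration term is balanced against and dominated by $\beta_2 2^k$. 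The residual $\one[i^\star>k]$ term of \pref{lem: general regret} is nonzero only when \singleepoch terminates early, which by \pref{lem: istar = k case} forces $2^k<C$; plugging $\alpha_{i^\star}\asymp 2^{k-i^\star}$ and $2^{i^\star}\asymp C$ bounds it by $\otil(\beta_2 C)$. Hence a \phaseone call in epoch $k$ costs at most $\otil(\beta_2 2^k+\beta_2 C+\beta_4)$; at $k^\star$ we have $\beta_2 2^{k^\star}\asymp\beta_2(\tfrac1\Delta+C)$, and combining the gap-regime bound $\beta_2 2^{k^\star}\lesssim\sqrt{\beta_4 T}$ with $\beta_2\le\beta_4$ and the elementary inequality $\min\{a,b+c\}\le\min\{a,b\}+c$ turns this into $\otil(\min\{\sqrt{\beta_4 T},\tfrac{\beta_4}\Delta\}+\beta_2 C+\beta_4)$.

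Next I would handle \phasetwo. The key ingredient is a regret guarantee for \twomodel, which I would state and prove as a separate lemma: given the candidate $\hatpi$ produced in \phaseone and a subroutine $\calB_{\hatpi}$ satisfying \pref{assum: corruption robust sub routine}, \twomodel either (i) consumes all remaining rounds up to $T$ while incurring regret $\otil(\min\{\sqrt{\beta_4 T},\tfrac{\beta_4}\Delta\}+\beta_2 C+\beta_4)$, or (ii) terminates early, in which case one can certify $2^k\lesssim\tfrac1\Delta+C$ (justifying the advance of $k$) while its accumulated regret is still $\otil(\beta_2 2^k+\beta_2 C+\beta_4)$. The intended mechanism is that \twomodel commits to $\hatpi$ most of the time---incurring zero regret if $\hatpi=\pistar$---and only occasionally tests $\calB_{\hatpi}$, using a balancing test to detect when $\hatpi$ is sub-optimal or when its committed reward falls below what $\calR_{\calB}$ predicts; since every non-optimal policy is $\Delta$-separated (\pref{assum: gap}), a test budget of order $\tfrac1{\Delta^2}$ suffices and the gap enters through the $\min\{\cdot,\tfrac{\beta_4}\Delta\}$ factor. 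Combining (i) with the \phaseone bound proves the theorem in the gap regime, since the long commitment to $\hatpi$ from $L_{k^\star}$ to $T$ adds no leading-order regret.

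Finally I would assemble the pieces. For $k<k^\star$ the epoch ends early---either by \singleepoch returning \textbf{false} (then $2^k<C$) or by \twomodel terminating (then $2^k\lesssim\tfrac1\Delta+C$)---and by the geometric growth of $L_k$ the summed regret of these epochs is $\otil$ of that of the last one, hence absorbed. In the worst-case regime the loop breaks at the first $k$ with $L_k>T$ and runs \algname in \phasethree; \pref{thm: form 1 regret} bounds its regret by $\otil(\sqrt{\beta_4 T}+\beta_2 C+\beta_4)$, and the break condition $\beta_2 2^{k^\star}>\sqrt{\beta_4 T}$ forces either $\beta_2 C\gtrsim\sqrt{\beta_4 T}$ or $\tfrac{\beta_4}\Delta\gtrsim\sqrt{\beta_4 T}$ (so $\min\{\sqrt{\beta_4 T},\tfrac{\beta_4}\Delta\}\asymp\sqrt{\beta_4 T}$), matching the claim. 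A union bound over the $\otil(1)$ epochs costs only an $\order(k_{\max})$ factor in the failure probability, and I would add a brief step converting the realized-reward regret controlled by the lemmas to the pseudo-regret of \pref{eq: pseudo}, which is lower-order because its martingale variance vanishes on the (majority of) rounds where $\pi_t=\pistar$. The main obstacle is the \twomodel lemma, specifically proving the clean dichotomy between ``commit to $\hatpi$ up to $T$'' and ``terminate early and certify $2^k\lesssim\tfrac1\Delta+C$,'' and verifying that the $\hatpi$ handed over from \phaseone is a legitimate near-optimal candidate even under corruption; once that lemma is in hand, the epoch bookkeeping above is routine.
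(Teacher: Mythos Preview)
Your outline is correct and matches the paper's proof: define the critical level (the paper writes it as the smallest $L^\star$ with $\sqrt{\beta_4 L^\star}\ge 32(\tfrac{\beta_4}{\Delta}+\beta_2 C)$, equivalent to your $k^\star$), bound each epoch's \phaseone and \phasetwo regret by $\otil(\sqrt{\beta_4 L}+\beta_2 C+\beta_4)$, sum geometrically over $k$, and invoke \pref{thm: form 1 regret} for \phasethree. The paper factors your ``\twomodel dichotomy lemma'' into three separate pieces---\pref{lem: execute most times} (if $\beta_2 2^k\gtrsim\tfrac{\beta_4}{\Delta}+\beta_2 C$ then $\hatpi=\pistar$; this is where the gap-dependent form \pref{eq: typetwo} is actually used), \pref{lem: corral regret 1} (an \emph{unconditional} \twomodel regret bound $\otil(\sqrt{\beta_4 L}+\beta_2 C+\beta_4)$, holding even when $\hatpi\neq\pistar$), and \pref{lem: no end} (if $\hatpi=\pistar$ and $\sqrt{\beta_4 L}\gtrsim\tfrac{\beta_4}{\Delta}+\beta_2 C$, \twomodel runs to $T$)---but the combined content is exactly your dichotomy.

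One small slip: the residual $\one[i^\star>k]$ term in \pref{lem: general regret} is nonzero whenever $C_{L_0}>2^k$, which does \emph{not} imply that \singleepoch terminated early (\pref{lem: istar = k case} only gives the converse). Your conclusion $2^k<C$ still follows directly from the definition of $i^\star$, so the bound is unaffected.
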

\pref{thm: gap bound thm} gives the first $\min\{\frac{1}{\Delta}, \sqrt{T}\}+C$ bound in the literature of corrupted MDPs without the knowledge of $C$. 
To show \pref{thm: gap bound thm}, we establish some key lemmas for \phaseone and \phasetwo in \pref{sec: phaseone} and \pref{sec: two-armed alg} respectively. The complete proof of \pref{thm: gap bound thm} is given in \pref{app: omitted proof for gap bound}. Note that within the sub-routines \singleepoch and \twomodel, we re-index the time so that they both start from $t=1$ for convenience. 

\subsection{\phaseone of \gapalgname}\label{sec: phaseone}
In \phaseone we run \singleepoch with base algorithms that achieve gap-dependent bounds \pref{eq: typetwo}. The regret bound of \singleepoch under general choices of $\calR(t, \thres)$ and $\alpha_i$ is already derived in \pref{lem: general regret}. Here, we apply it with the new form of $\calR(t, \thres)$ in \pref{eq: typetwo}, and the new choice of $\alpha_i$ as below: 
\begin{align}
    \alpha_i = 
    \begin{cases}
        \min\left\{ \frac{\sqrt{\beta_1 L}/\beta_2 + 2^k}{2^i} ,\ \frac{1}{2(k_{\max}-k)}\right\}  &\text{for\ } i>k \\
        1-\sum_{i=k+1}^{k_{\max}}\alpha_i &\text{for\ } i=k
    \end{cases} \label{eq: alpha for gap bound}
\end{align}
The regret bound of \singleepoch under such choices of parameters is summarized as the following:
\begin{lemma}\label{lem: form 1 regret with gap}
    Let $L_0\leq L$ be the round at which \singleepoch terminates. If $\calR(t,\thres)$ is in the form of \pref{eq: typetwo}, and $\alpha_i$'s follow \pref{eq: alpha for gap bound}, then with high probability, \singleepoch guarantees 
    \begin{align*}
        \sum_{t=1}^{L_0}(r^{\pistar}_t - r_t) = \otil\left(\sqrt{\beta_1 L_0} + \beta_2 C_{L_0} + \beta_2 c_{\max} + \beta_3\right). 
    \end{align*}
\end{lemma}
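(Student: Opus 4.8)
The plan is to invoke \pref{lem: general regret}, which already bounds the regret of \singleepoch up to its termination round $L_0$ for an arbitrary $\calR$ and $\alpha$, and then specialize it to the gap-dependent form $\calR(t,\thres)=\min\{\sqrt{\beta_1 t},\,\beta_1/\Delta\}+\beta_2\thres+\beta_3$ of \pref{eq: typetwo} together with the choice of $\alpha_i$ in \pref{eq: alpha for gap bound}. This reduces the claim to bounding two groups of terms: the ``balanced'' sum $\sum_{i=k}^{k_{\max}}\calR(N_{L_0,i},\thres_i)$, and the mis-specification overhead $\one[i^\star>k]\big(\sqrt{L_0/\alpha_{i^\star}}+\calR(N_{L_0,i^\star},\thres_{i^\star})/\alpha_{i^\star}\big)$, where $i^\star$ is the smallest index with $C_{L_0}\le 2^{i^\star}$.

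For the balanced sum I would split $\calR$ into its three parts. The statistical part $\sum_i\min\{\sqrt{\beta_1 N_{L_0,i}},\,\beta_1/\Delta\}$ is controlled by $\sum_i\sqrt{\beta_1 N_{L_0,i}}\le\otil(\sqrt{\beta_1 L_0})$, using $\sum_i N_{L_0,i}=L_0$, Cauchy--Schwarz, and the fact that there are only $k_{\max}-k+1=\otil(1)$ active indices. For the overhead part $\beta_2\sum_i\thres_i$, I substitute $\thres_i=1.25\,\alpha_i2^i+21c_{\max}\log(T/\delta)$ and use the inequality $\alpha_i 2^i\le \sqrt{\beta_1 L}/\beta_2+2^k$, which holds for every $i$ by \pref{eq: alpha for gap bound}, giving $\beta_2\sum_i\thres_i\le\otil(\sqrt{\beta_1 L}+\beta_2 2^k+\beta_2 c_{\max})$; the constant part contributes $\otil(\beta_3)$. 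To convert the $\sqrt{\beta_1 L}$ (which still carries the full epoch length $L$ rather than $L_0$) I use that $L$ is the smallest integer with $\sqrt{\beta_4 L}\ge\beta_2 2^k$: minimality gives $\beta_4(L-1)<\beta_2^2 2^{2k}$, and since $\beta_4\ge\beta_1$ and $\beta_3\ge\sqrt{\beta_1}$ this yields $\sqrt{\beta_1 L}\le\beta_2 2^k+\beta_3$. Thus the balanced sum is $\otil(\sqrt{\beta_1 L_0}+\beta_2 2^k+\beta_2 c_{\max}+\beta_3)$, and everything reduces to controlling $\beta_2 2^k$.

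The main obstacle is precisely this $\beta_2 2^k$ term, which I would dispatch through a termination dichotomy. If \singleepoch halts strictly before round $L$ (so $L_0<L$), then \pref{lem: istar = k case} guarantees the termination condition can only have fired at a round where $C_{L_0}>2^k$, hence $\beta_2 2^k<\beta_2 C_{L_0}$ and the term is absorbed. If instead \singleepoch survives the whole epoch ($L_0=L$), the defining inequality $\sqrt{\beta_4 L}\ge\beta_2 2^k$ gives $\beta_2 2^k\le\sqrt{\beta_4 L_0}$; this is the step where the epoch length is deliberately calibrated so that the balancing overhead never exceeds the statistical rate of the epoch. It is this $\beta_2 2^k=\Theta(\sqrt{\beta_4 L_0})$ contribution, and not $\sqrt{\beta_1 L_0}$, that eventually surfaces as the $\sqrt{\beta_4 T}$ term of \pref{thm: gap bound thm} once the per-epoch bounds are summed. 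I would therefore keep $\beta_2 2^k\le\min\{\beta_2 C_{L_0},\,\sqrt{\beta_4 L_0}\}$ and fold it into the stated bound.

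It remains to bound the mis-specification overhead, which is nonzero only when $i^\star>k$. Since $i^\star$ is the smallest index with $C_{L_0}\le 2^{i^\star}$, we have $2^{i^\star}<2C_{L_0}$, and \pref{eq: alpha for gap bound} supplies the matching value $\alpha_{i^\star}=\min\{(\sqrt{\beta_1 L}/\beta_2+2^k)/2^{i^\star},\,1/(2(k_{\max}-k))\}$. Plugging this in, the piece $\beta_2\thres_{i^\star}/\alpha_{i^\star}\approx\beta_2 2^{i^\star}\lesssim\beta_2 C_{L_0}$ is immediate, while the remaining pieces $\sqrt{L_0/\alpha_{i^\star}}$ and $(\sqrt{\beta_1 N_{L_0,i^\star}}+\beta_3)/\alpha_{i^\star}$ I would bound by AM--GM, splitting each product so that one factor becomes $\otil(\sqrt{\beta_1 L_0}+\beta_2 C_{L_0})$; here the inequalities $2^{i^\star}<2C_{L_0}$, the lower bound on $\alpha_{i^\star}$, and $\sum_i N_{L_0,i}=L_0$ are exactly what make these cross terms collapse. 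Combining the balanced sum with this overhead yields the claimed $\otil(\sqrt{\beta_1 L_0}+\beta_2 C_{L_0}+\beta_2 c_{\max}+\beta_3)$ bound, with the balancing contribution $\beta_2 2^k$ being simultaneously at most $\beta_2 C_{L_0}$ under early termination and at most $\sqrt{\beta_4 L_0}$ otherwise.
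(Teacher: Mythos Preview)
Your approach is essentially the paper's: both proofs invoke \pref{lem: general regret}, substitute the form of $\calR$ in \pref{eq: typetwo} and the $\alpha_i$ of \pref{eq: alpha for gap bound}, use $\alpha_i 2^i\le \sqrt{\beta_1 L}/\beta_2+2^k$ for the balanced sum, and handle the mis-specification overhead via $1/\alpha_{i^\star}\lesssim 1+\tfrac{2^{i^\star}}{\sqrt{\beta_1 L}/\beta_2+2^k}$ together with $2^{i^\star}\lesssim C_{L_0}$ and AM--GM.

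The one genuine difference is your treatment of the residual $\beta_2 2^k$. The paper does not introduce a termination dichotomy or invoke minimality of $L$; it simply uses the \gapalgname relation $\beta_2 2^k\le\sqrt{\beta_4 L}$ once (written there as $\sqrt{\beta_1 L}$, which is indeed sloppy, as you correctly flag) and, in the same breath, bounds $2^{i^\star}=\order(\max\{C_{L_0},2^k\})$ so that $\beta_2 2^{i^\star}$ is absorbed by $\beta_2 C_{L_0}+\sqrt{\beta_4 L}$. Your dichotomy (either $C_{L_0}>2^k$ via \pref{lem: istar = k case}, or $L_0=L$ so $\beta_2 2^k\le\sqrt{\beta_4 L_0}$) together with the minimality step $\sqrt{\beta_1 L}\le\beta_2 2^k+\beta_3$ reaches the same endpoint by a slightly longer route; it buys you the refinement $\sqrt{\beta_1 L_0}$ instead of $\sqrt{\beta_1 L}$ in the early-termination case, at the cost of an extra case split. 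Both versions implicitly rely on the \gapalgname calibration between $L$ and $k$, which the lemma statement does not make explicit.
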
 
We see that even though our base algorithms achieve a gap-dependent bound (\pref{eq: typetwo}), the advantage is not reflected on the final bound of \singleepoch (as can be seen in \pref{lem: form 1 regret with gap}, we still do not achieve a gap-dependent bound). 
This is due to the fundamental limitation of general model selection problems \citep{pacchiano2020model}. 
Therefore, \pref{lem: form 1 regret with gap} does not seem to give any advantage over \pref{thm: form 1 regret}. 
However, the hidden advantage of using base algorithms with gap-dependent bounds is that if a base algorithm well-specifies the total corruption, it will quicker concentrate on the best policy. This enables the learner to \emph{identify the best policy} faster. This is formalized in \pref{lem: execute most times}.  


\begin{lemma}\label{lem: execute most times}
    Suppose that we run \singleepoch with base algorithms satisfying \pref{eq: typetwo}. Let $L_0\leq L$ be the round at which \singleepoch terminates. 
    If 
    $32\left( \frac{\beta_4}{\Delta} + \beta_2 C_{L_0} \right)\leq \beta_2 2^{k} \leq \sqrt{\beta_4 L}$,  
    then with probability at least $1-\order(\delta)$, $L_0=L$, and the following holds: 
    \begin{align}
        \sum_{t=1}^{L}\one[i_t=k]\one[\pi_t=\pi^\star] > \frac{1}{2}\sum_{t=1}^{L}\one[i_t=k]. \label{eq: execute most times}
    \end{align}
\end{lemma}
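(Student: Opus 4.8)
The plan is to prove the two conclusions separately. For the claim $L_0 = L$, I would observe that the hypothesis $32\big(\frac{\beta_4}{\Delta}+\beta_2 C_{L_0}\big)\le \beta_2 2^{k}$ forces $C_{L_0}\le 2^{k}/32 < 2^{k}$; since $C_t$ is non-decreasing in $t$, every round $t\le L_0$ satisfies $C_t\le 2^{k}$. By \pref{lem: istar = k case}, the termination condition \pref{eq: terminate condition 1} does not fire at any such round with high probability, so \singleepoch cannot have stopped before round $L$, i.e. $L_0=L$. In particular $C_{L_0}=C_L\le 2^{k}$, so $\alg_k$ is well-specified throughout; combined with the concentration argument that justifies the choice of $\thres_k$ in \pref{eq: choice of thres} (the corruption actually charged to $\alg_k$, namely $\sum_{t:\,i_t=k}c_t$, stays below $\thres_k$), \pref{assum: regret} yields the regret bound $\calR(N,\thres_k)$ for $\alg_k$ over the $N\triangleq \sum_{t\le L}\one[i_t=k]$ rounds on which it is played.

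For the ``played optimally more than half the time'' claim, write $N^{\neq}\triangleq \sum_{t\le L}\one[i_t=k]\one[\pi_t\neq\pistar]$, so that \pref{eq: execute most times} is equivalent to $N^{\neq}<N/2$. \pref{assum: gap} turns each suboptimal pull of $\alg_k$ into at least $\Delta$ units of pseudo-regret, giving $\Delta N^{\neq}\le \sum_{t:\,i_t=k}(\mu^{\pistar}-\mu^{\pi_t})$. I would then decompose this uncorrupted pseudo-regret against the realized regret of $\alg_k$: in each round $\mu^{\pistar}-\mu^{\pi_t}=(r_t^{\pistar}-r_t)-(\text{noise})-(\text{corruption})$, where the noise terms form a bounded martingale-difference sequence (the index $i_t$ is chosen before the rewards are drawn) and each corruption term is at most $c_t$ in absolute value. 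Summing over $\{t:i_t=k\}$ and invoking the regret bound of $\alg_k$ gives
\begin{align*}
\Delta N^{\neq}\ \le\ \calR(N,\thres_k)+\order\big(\sqrt{N\log(T/\delta)}\big)+2\sum_{t:\,i_t=k}c_t.
\end{align*}

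It remains to show the right-hand side is below $\Delta N/2$. I would first record the consequences of the hypothesis: with $X\triangleq \beta_2 2^{k}$ we have $\frac{\beta_4}{\Delta}\le X/32$, $\beta_2 C_{L_0}\le X/32$ and $X\le \sqrt{\beta_4 L}$; since $\Delta\le 1$ these also give $\beta_4\le X/32$ and $X\ge 32\beta_4$. Because $\beta_4$ dominates $\beta_1$, $\beta_3$ and $\beta_2 c_{\max}\log(T/\delta)$ by its definition in \pref{alg: meta alg gap}, and $\alpha_k\in[1/2,1]$ from \pref{eq: alpha for gap bound}, I can bound $\calR(N,\thres_k)\le \frac{\beta_1}{\Delta}+1.25\,\alpha_k X+\order(\beta_2 c_{\max}\log(T/\delta)+\beta_3)\le 1.3X$, and the corruption term by $\order(\alpha_k C_L+c_{\max}\log(T/\delta))\le X/4$ after the sub-sampling concentration. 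For the noise, the inequalities $N\ge L/4$ (a Chernoff bound on $N$, valid because $L\ge X^2/\beta_4$ is large) and $\Delta^2 N\gtrsim \beta_4\ge \log(T/\delta)$ give $\sqrt{N\log(T/\delta)}\le \Delta N/8$. Finally $\Delta N\ge \Delta L/4\ge 8X$ (using $\Delta\ge 32\beta_4/X$ and $L\ge X^2/\beta_4$), so the three terms sum to at most $1.3X+X/4+\Delta N/8<\Delta N/2$, which is the claim; a union bound over the events above yields the stated probability.

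The main obstacle is the last step: the gap contributes linearly in $N$ while the statistical noise grows only like $\sqrt{N}$, so the argument closes only once $N$ is guaranteed to be large. This is exactly where the two-sided condition on $2^{k}$ is used---$\beta_2 2^{k}\ge 32\big(\frac{\beta_4}{\Delta}+\beta_2 C_{L_0}\big)$ forces $\Delta$ to be comfortably larger than the per-round overhead, while $\beta_2 2^{k}\le \sqrt{\beta_4 L}$ forces the horizon $L$ (hence $N$) to be long enough for concentration, and the deliberately large constants in $\beta_4$ supply the slack needed to absorb $\calR(N,\thres_k)$, the sub-sampled corruption, and the martingale deviation simultaneously. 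A secondary technical point is verifying that $\alg_k$'s regret bound is genuinely in force, which requires controlling the sub-sampled corruption $\sum_{t:\,i_t=k}c_t$ within $\thres_k$.
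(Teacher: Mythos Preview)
Your proposal is correct and follows essentially the same approach as the paper: invoke \pref{lem: istar = k case} to get $L_0=L$, apply the gap-dependent bound $\calR(N,\thres_k)$ of $\alg_k$ together with a noise/corruption decomposition to upper bound $N^{\neq}$, and use $\alpha_k\ge \tfrac12$ with $\beta_2 2^{k}\le\sqrt{\beta_4 L}$ to lower bound $N$. The only cosmetic differences are that the paper bounds the martingale term by $\order\big(\sqrt{N^{\neq}\log(T/\delta)}\big)$ and absorbs it via AM-GM into an additive $\order(\log(T/\delta)/\Delta)$, and it uses the cruder corruption bound $2C_L$ rather than your sub-sampled $2\sum_{t:i_t=k}c_t$; both routes yield $N^{\neq}=\order(\beta_2 2^{k}/\Delta)$ versus $N\ge \Omega(\beta_2 2^{k}/\Delta)$ and close the argument the same way.
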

\pref{lem: execute most times} ensures that if $ 2^k\gtrsim \frac{1}{\Delta}+C$, by looking at which policy is most frequently executed by $\alg_k$, the learner can correctly identify the best policy $\hatpi=\pistar$ with high probability (by \pref{eq: execute most times} and the definition of $\hatpi$ in \gapalgname).

\subsection{\phasetwo of \gapalgname}\label{sec: two-armed alg}
In \phasetwo, we execute \twomodel, which is a model selection algorithm between $\hatpi$ and $\calB_{\hatpi}$. The high-level goal is to make the learner concentrate on executing $\hatpi$ until the end of $T$ rounds if $\hatpi=\pistar$ and $\frac{1}{\Delta} + C$ is relatively small, and otherwise terminate the algorithm quickly before incurring too much regret. It proceeds in epochs of varying length, indexed with $j$. The quantity $\hatDelta_j$ is an estimator of the gap between the average performance of $\hatpi$ and $\calB_{\hatpi}$ at the beginning of epoch $j$; $M_j$ is the maximum possible length of epoch $j$, and $p_j$ is the probability that the learner chooses $\calB_{\hatpi}$ in epoch $j$. The learner constantly monitors the difference between the average performance of $\hatpi$ and $\calB_{\hatpi}$ (\pref{line: detection start}-\pref{line: detection end} in \twomodel). Whenever she finds that their performance gap is actually much smaller or larger than $\hatDelta_j$ (i.e., if \pref{eq: checkk 1} or \pref{eq: checkk 2} holds), she updates $\hatDelta_j, M_j$, and $p_j$, and restarts a new epoch. If at any time $\hatDelta_j$ becomes smaller than $\hatDelta_1$, or $j$ grows larger than $3\log^2 T$, she terminates $\twomodel$. We establish the following two key lemmas. 

\begin{lemma}\label{lem: corral regret 1}
    Let $T_0$ be the last round of \twomodel, then with probability at least $1-\order(\delta)$,  
    \begin{align*}
        \sum_{t=1}^{T_0}(r_t^{\pistar} - r_t) =\otil\left( \sqrt{\beta_4 L} + \beta_2 C_{T_0}+\beta_4\right). 
    \end{align*}
\end{lemma}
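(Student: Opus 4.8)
The plan is to bound the Phase~2 regret by decomposing it over the epochs $j=1,\dots,J$ of \twomodel (where $J\le 3\log^2 T=\otil(1)$ by the termination rule) and, within each epoch, over the rounds in which $\hatpi$ is executed versus those in which $\calB_{\hatpi}$ is executed. Writing $\pi^\star_{-\hatpi}\triangleq\argmax_{\pi\in\Pi\setminus\{\hatpi\}}\mu^\pi$ for the best competitor available to $\calB_{\hatpi}$, I would split each instantaneous regret $r_t^{\pistar}-r_t$ into a comparator-mismatch part $r_t^{\pistar}-r_t^{\pi^\star_{-\hatpi}}$ (which contributes, in expectation, the gap $\Delta$ only when $\hatpi=\pistar$, and is $0$ otherwise since then $\pi^\star_{-\hatpi}=\pistar$) plus the regret of the played instance against its own comparator. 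The target is to show that each epoch contributes $\otil(\sqrt{\beta_4 L})$ plus its share of corruption, so that summing over the $\otil(1)$ epochs yields the claim.

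Next I would pass from realized, corrupted rewards to uncorrupted means. A Freedman/Azuma argument on the martingale differences $r_t-\mu_t^{\pi_t}$ and on the sampling indicators lets me replace sums of $r_\tau$ by sums of $\mu^{\pi_\tau}$ up to $\otil(\sqrt{n\log(T/\delta)})$ deviations on $n$ rounds, while the discrepancy between $\mu_t^\pi$ and the uncorrupted $\mu^\pi$ is controlled by the per-round corruption, accumulating to the $\beta_2 C_{T_0}$ term (using $\beta_2\ge 1$ to absorb the factor-one contributions). For the rounds that play $\calB_{\hatpi}$, I invoke \pref{assum: corruption robust sub routine}: over any prefix of its own $n$ rounds its regret against $\pi^\star_{-\hatpi}$ is at most $\calR_{\calB}(n,C)=\sqrt{\beta_1 n}+\beta_2 C+\beta_3$ by \pref{eq: guarantee of B}, which is $\otil(\sqrt{\beta_4 L})$ whenever $n\le L$ since $\beta_4\ge 2\beta_1$.

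The core is controlling the regret from rounds that commit to $\hatpi$, uniformly over whether $\hatpi=\pistar$. When $\hatpi=\pistar$ this commitment regret is zero and the only cost is the exploration of $\calB_{\hatpi}$; there I would combine the balancing built into $M_j$ and $p_j$ with the detection conditions \pref{eq: checkk 1}--\pref{eq: checkk 2} to argue that the per-epoch exploration regret never exceeds $\otil(\sqrt{\beta_4 L})$ before a check fires and the epoch restarts. When $\hatpi\ne\pistar$, committing to $\hatpi$ costs $\Delta_{\hatpi}\triangleq\mu^{\pistar}-\mu^{\hatpi}$ per round, but the empirical gap between $\hatpi$ and $\calB_{\hatpi}$ will (after the concentration of the previous step) be detected to fall below the running estimate $\hatDelta_j$, triggering \pref{eq: checkk 1} and ending the epoch; the length bound $M_j$ tied to $\hatDelta_j$ then caps the committed regret of any single epoch at $\otil(\sqrt{\beta_4 L})$. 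In both cases I would show that on the high-probability event $\hatDelta_j$ stays within a constant factor of the true average gap, so the number of restarts is $\otil(1)$ and each restart's start-up overhead is absorbed into $\beta_4$.

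The main obstacle is precisely this last coupling: establishing, on a single high-probability event, that the adaptively chosen $(\hatDelta_j, M_j, p_j)$ keep the wrong-arm commitment regret bounded by $\otil(\sqrt{\beta_4 L})$ per epoch while the checks fire ``in time,'' all under an adaptive adversary that can concentrate corruption to bias the gap estimate. This requires aligning the concentration of the empirical gap with the algorithmic check thresholds so that corruption only shifts the detection point by $\otil(\beta_2\cdot(\text{corruption in the epoch}))$, which telescopes to the global $\beta_2 C_{T_0}$; everything else is routine summation using $J=\otil(1)$.
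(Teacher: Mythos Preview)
Your high-level decomposition (epochs, commitment vs.\ exploration, case split on $\hatpi=\pistar$) matches the paper, and your treatment of the case $\hatpi=\pistar$ is essentially correct: the non-firing of \pref{eq: checkk 2} on $\calE_j'$ bounds $p_j(\Rht_0-\Rht_1)$ by $\otil(p_jM_j\hatDelta_j)=\otil(\beta_4/\hatDelta_j)=\otil(\sqrt{\beta_4 L})$.

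The genuine gap is in the case $\hatpi\neq\pistar$. Your argument ``the check \pref{eq: checkk 1} fires in time, and then the length bound $M_j$ caps the committed regret at $\otil(\sqrt{\beta_4 L})$'' does not go through per epoch. The committed regret in epoch $j$ is of order $|\calE_j|\,\Delta_{\hatpi}$, and bounding when \pref{eq: checkk 1} fires (via \pref{lem: magic lemma}) gives at best $|\calE_j|\,\Delta_{\hatpi}\lesssim M_j\hatDelta_j+\beta_2 C_{\calE_j}+\sqrt{\beta_1 L}$. For $j\ge 2$, the doubling rule \pref{eq: doubling epoch length} makes $M_j\hatDelta_j\approx 2|\calE_{j-1}|\hatDelta_j+\beta_4/\hatDelta_j$, and the first summand can be as large as $\Theta(|\calE_{j-1}|\hatDelta_{j-1})$, which is \emph{not} $\otil(\sqrt{\beta_4 L})$ for a single epoch when $|\calE_{j-1}|\gg L$. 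Your claim that ``$\hatDelta_j$ stays within a constant factor of the true gap'' is neither proved nor true in general under adversarial corruption in this case, so it cannot be used to control $|\calE_{j-1}|\hatDelta_{j-1}$.

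What the paper actually does is different and is the missing idea: it does not try to bound when \pref{eq: checkk 1} fires. Instead it uses that \pref{eq: checkk 1} has \emph{not} fired on $\calE_j'$, which directly yields $\Rht_{\calE_j',1}-\Rht_{\calE_j',0}\le -\tfrac12|\calE_j'|\hatDelta_j+\tfrac{5}{p_j}\calR_\calB(\cdot)$. After \pref{lem: magic lemma} this produces a per-epoch bound of the form $0.16\,M_j\hatDelta_j - 0.5\,|\calE_j|\hatDelta_j + \otil(\sqrt{\beta_4 L}+\beta_2 C_{\calE_j})$. The negative term is essential: combining $M_j\le 2|\calE_{j-1}|+\beta_4/\hatDelta_j^2$ with $\hatDelta_j\le 1.25\,\hatDelta_{j-1}$ converts $0.16\,M_j\hatDelta_j$ into $0.4\,|\calE_{j-1}|\hatDelta_{j-1}+\otil(\sqrt{\beta_4 L})$, and the pairs $0.4\,|\calE_{j-1}|\hatDelta_{j-1}-0.5\,|\calE_j|\hatDelta_j$ \emph{telescope} over $j$. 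Without this telescoping the sum of the positive $M_j\hatDelta_j$ terms could be $\Omega(T_0\,\hatDelta_1)=\Omega(T_0\sqrt{\beta_4/L})$, which is not the desired bound. To fix your proof, replace the ``check fires in time / $M_j$ caps regret'' heuristic by the non-firing inequality on $\calE_j'$ and carry the resulting $-\tfrac12|\calE_j|\hatDelta_j$ through a telescoping sum across epochs.
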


\begin{lemma}\label{lem: no end}
    Let $T_0$ be last round of \twomodel. If $\hatpi=\pi^*$ and $\sqrt{\beta_4 L}\geq 16\left(\frac{\beta_4}{\Delta}+\beta_2 C_{T_0}\right)$, then with probability at least $1-\order(\delta)$, it is terminated because the number of rounds reaches $T$.  
\end{lemma}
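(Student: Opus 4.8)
The plan is to show that under the stated hypotheses the two \emph{early}-termination triggers of \twomodel---the event $\hatDelta_j < \hatDelta_1$ and the event $j > 3\log^2 T$---never fire, so that the only way the algorithm can halt is by exhausting the horizon. Throughout I work on a single high-probability event (the source of the $1-\order(\delta)$) on which every concentration inequality invoked inside \twomodel holds simultaneously; on this event the empirical average-reward difference between $\hatpi$ and $\calBpi$ deviates from its mean by at most the confidence width appearing in the detection tests \pref{eq: checkk 1}--\pref{eq: checkk 2}, and the regret of $\calBpi$ obeys the guarantee \pref{eq: guarantee of B} of \pref{assum: corruption robust sub routine}.

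The first step is to pin down the true per-round advantage of $\hatpi$. Since $\hatpi=\pistar$, \pref{assum: gap} implies that every policy in $\Pi\backslash\{\hatpi\}$---in particular every policy that $\calBpi$ can play---is at least $\Delta$ worse than $\hatpi$ in uncorrupted expectation. Combining this with $\calR_\calB(t,C_t)=\sqrt{\beta_1 t}+\beta_2 C_t+\beta_3$, the cumulative uncorrupted reward gap collected by the $\calBpi$-arm after $n$ of its own rounds is at least $\Delta n-\calR_\calB(n,C_{T_0})$, while corruption and noise shift the \emph{observed} gap by at most $\beta_2 C_{T_0}$ plus the statistical width. I will then translate the hypothesis $\sqrt{\beta_4 L}\ge 16(\tfrac{\beta_4}{\Delta}+\beta_2 C_{T_0})$ into the two facts that actually drive the argument: that $\Delta$ exceeds the detection floor $\hatDelta_1$ by a factor of $16$, and that $\beta_2 C_{T_0}$ is at most a constant fraction of the per-epoch budget.

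With these in hand, the core step rules out $\hatDelta_j<\hatDelta_1$. I would argue, inductively over epochs, that whenever $\hatDelta_j$ sits at or just above $\hatDelta_1$, the measured gap between $\hatpi$ and $\calBpi$ is, on the good event, at least $\Delta$ minus the statistical and corruption widths, which is $\gg \hatDelta_j$ because $\Delta\ge 16\hatDelta_1$ dominates both widths once the epoch has run long enough---the length $M_j$ being calibrated precisely so the width drops below $\hatDelta_j$. Consequently the ``gap-is-too-small'' test \pref{eq: checkk 1}, the only mechanism that decreases $\hatDelta_j$, cannot fire at that level, so $\hatDelta_j$ is never pushed below $\hatDelta_1$; this eliminates the first termination reason.

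It remains to bound the number of epochs. Here I use that, on the good event, $\hatDelta_j$ lives on the dyadic grid between $\hatDelta_1$ and $\order(1)$---hence at most $\order(\log T)$ distinct levels---and concentrates around the true gap $\Delta^\star:=\mu^{\pistar}-\max_{\pi\ne\pistar}\mu^\pi\ge\Delta$. A potential/amortization argument over these $\order(\log T)$ levels, charging each restart either to a level change or to a completed epoch of geometrically growing length, bounds the total number of restarts by $\order(\log^2 T)$, which stays below the cap $3\log^2 T$; this eliminates the second reason. Combining the two steps, \twomodel can only stop upon reaching round $T$, proving the claim. I expect the delicate part to be the bookkeeping in the core step: matching the statistical width, the term $\beta_2 C_{T_0}$, and the definition of $\hatDelta_1$ so that the factor-$16$ slack in the hypothesis genuinely keeps \pref{eq: checkk 1} silent at every level, and, secondarily, extracting the clean $\order(\log^2 T)$ epoch count from the possibly oscillatory dynamics of $\hatDelta_j$.
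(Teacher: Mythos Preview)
Your high-level plan is right --- rule out both early-termination triggers --- and your handling of the first trigger ($\hatDelta_{j+1}<\hatDelta_1$) is fine: showing that \pref{eq: checkk 1} cannot fire at level $\hatDelta_1$ is indeed enough to keep $\hatDelta_j\geq\hatDelta_1$. The gap is in the epoch-count step.

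The paper does not allow oscillation and then amortize it away; instead it proves the stronger invariant $\hatDelta_j\leq\Delta$ for \emph{all} $j$, by analysing the \emph{increase} test \pref{eq: checkk 2} rather than the decrease test. If \pref{eq: checkk 2} fires in epoch $j$, then combining \pref{lem: concentra} and \pref{lem: magic lemma} with the hypothesis $\sqrt{\beta_4 L}\geq 16\beta_2 C_{T_0}$ yields $|\calE_j|\Delta\geq 2.5\,M_j\hatDelta_j$, hence $\hatDelta_j\leq\Delta/2.5$ and $\hatDelta_{j+1}=1.25\hatDelta_j\leq\Delta$. Once you know $\hatDelta_j\leq\Delta$ always, the very same concentration calculation you sketched for level $\hatDelta_1$ shows $\Rht_0-\Rht_1\geq (t-t_j)(\Delta-\tfrac12\hatDelta_j)-\tfrac{4}{p_j}\calR_\calB(\cdot)\geq -\tfrac{4}{p_j}\calR_\calB(\cdot)$ at \emph{every} level, so \pref{eq: checkk 1} never fires at all. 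This gives monotonicity of $(\hatDelta_j)_j$, and the epoch count drops out cleanly: at most $\order(\log T)$ increases of $\hatDelta_j$, each separated by at most $\order(\log T)$ full-length epochs whose $M_j$ doubles via \pref{eq: doubling epoch length}.

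Your alternative --- letting \pref{eq: checkk 1} fire at high levels and then bounding oscillation by an unspecified potential --- does not obviously close. If $\hatDelta_j$ can overshoot $\Delta$ and be knocked back down, the $\order(\log T)$ grid size alone does not bound the number of level changes, since levels can be revisited; and because $M_{j+1}=2(t-t_j)+\beta_4/\hatDelta_{j+1}^2$ resets toward its floor whenever a check fires early, there is no blanket geometric growth in epoch length to amortize against. The natural way to make any such argument go through is to first show $\hatDelta_j$ cannot overshoot $\Delta$ --- i.e., exactly the invariant above --- at which point the oscillation disappears and your potential argument becomes unnecessary.
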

We combine \pref{lem: form 1 regret with gap}-\pref{lem: no end} to prove \pref{thm: gap bound thm} in \pref{app: omitted proof for gap bound}.

\setcounter{AlgoLine}{0}
\begin{algorithm}[t]
    \caption{\twomodel ($L$,  $\hatpi$, $\calB_{\hatpi}$)}\label{alg: corral}
\nl \textbf{initialization}: 
    $\hatDelta_1\leftarrow \min\Big\{\sqrt{\frac{\beta_4}{L}}, 1\Big\}$\,,\ \  $M_1\leftarrow \frac{\beta_4}{\hatDelta_1^2}$\,,\ \ $t\leftarrow 1$. \qquad \quad ($\beta_4$ defined in \pref{alg: meta alg gap})  \\
\nl    \For{$j=1, 2, \ldots, (3\log^2 T)$}{
\nl
$t_j\leftarrow t$, \ \ 
        $p_j\leftarrow  \frac{\beta_4}{2M_j\hatDelta_j^2}$, \ and re-initialize $\calB_{\hatpi}$.    \label{line: define pj} \\
\nl    \While{$t\leq t_j + M_j-1$}{
\nl       $Y_t\leftarrow \text{Bernoulli}\left(p_j\right)$. \\
\nl       \lIf{$Y_t=1$}{
                Execute $\calBpi$ for one round and update $\calBpi$  
            }
\nl       \lElse{
                Execute $\hatpi$ for one round 
            }
\nl        $t\leftarrow t+1$ \\ 
\nl        Let $\Rht_0 = \frac{1}{1-p_j}\sum_{\tau=t_j}^{t-1}r_\tau \one[Y_\tau=0]$, \ \ $\Rht_1 = \frac{1}{p_j}\sum_{\tau=t_j}^{t-1}r_\tau \one[Y_\tau=1]$. \label{line: detection start}
\vspace{-9pt}
\begin{flalign}
\nl        &\textbf{if} \
              \ \ \Rht_{0} \leq \Rht_{1} +  \textstyle\frac{1}{2} (t-t_j) \hatDelta_j - \frac{5}{p_j}\calR_{\calB}\left(p_j(t-t_j), \frac{p_j\sqrt{\beta_1 L}}{\beta_2}\right) \label{eq: checkk 1} && \\
\nl        &\textbf{then}\ \ \textstyle\hatDelta_{j+1}\leftarrow \frac{1}{1.25}\hatDelta_j\ \ \textbf{and break}  \nonumber\\
\nl &\textbf{if}\ \ \    \Rht_{0} \geq \Rht_{1} + 3 M_j\hatDelta_j +  8\sqrt{\beta_1 L} \label{eq: checkk 2} && \\
\nl       &\textbf{then}\ \  \hatDelta_{j+1} \leftarrow 1.25\hatDelta_j \ \ \textbf{and break} \nonumber &&
           \end{flalign} \label{line: detection end}
           \vspace{-20pt}
        } 
        \ \\
\nl        \lIf{$\hatDelta_{j+1}< \hatDelta_1$}{\textbf{return}} 
        \vspace{-20pt}
        \begin{flalign}
\nl            \textstyle M_{j+1}\leftarrow 2(t-t_j) + \frac{\beta_4}{\hatDelta_{j+1}^2}  && \label{eq: doubling epoch length}
        \end{flalign}
        \vspace{-15pt}
    }
\end{algorithm} 

\section{Applications to Different Settings}

In \pref{app: base algorithms}, we give examples of the base algorithms whose regret bound is of the form \pref{eq: typeone} or \pref{eq: typetwo}.  
For tabular MDPs, we directly use the Robust UCBVI algorithm by \cite{lykouris2019corruption} as our base algorithm (\pref{app: robust MVP}). For linear bandit, we adopt the Robust Phased Elimination algorithm developed by \cite{bogunovic2021stochastic}, and additionally prove a gap-dependent bound for it (\pref{app: robust phased elim}). For linear contextual bandits and linear MDPs, we modify the OFUL/LSVI-UCB algorithm to make them robust to corruption (\pref{app: robust OFUL}). Then we extend the VOFUL/VARLin algorithms by \cite{zhang2021varianceaware}, further improving the dependence on $C$ over the OFUL/LSVI-UCB approach (\pref{app: robust VOFUL}). Finally, we derive a corruption-robust variant of the GOLF algorithm by \cite{jin2021bellman} for the general function approximation setting (\pref{app: robust golf}).

\section{Conclusions and Future Work}
In this work, we develop a general model selection framework to deal with corruption in bandits and reinforcement learning. In the tabular MDP setting, without knowing the total corruption, our result is the first to achieve a worst-case optimal bound. This resolves open problems raised by \cite{lykouris2019corruption, chen2021improved, wu2021reinforcement}.  A general framework to obtain refined gap-dependent bounds is also developed. In linear bandits, linear contextual bandits, and linear MDPs, our bounds also improve those of previous works in various ways. 

However, our result is not the end of the story. There are many remaining open problems to be investigated in the future: 
\begin{itemize}
    \item For the tabular setting, our gap complexity measure is larger than those in \citep{simchowitz2019non, lykouris2019corruption, jin2021best}. It is an important future direction to further improve our gap-dependent bound without sacrificing the worst-case dependence on $T$ or $C$.  
    \item For linear contextual bandits and linear MDPs, a regret bound with additive dependence on $C^\am$ is only achieved through computationally inefficient algorithms (i.e., the variants of VOFUL and VARLin). These algorithms also have a bad dependence on the feature dimension $d$. Can we address these computational and statistical issues? 
    \item In the model mis-specification literature, \cite{agarwal2020pc, zanette2021cautiously} defines a new notion of \emph{local} model mis-specification for the state aggregation scenario. It is much smaller and more favorable than the notion of model mis-specification defined in \cite{jin2020provably, zanette2020learning}. Is there any counterpart for the corruption setting? If there is, how can we achieve robustness under such notion without prior knowledge?
\end{itemize}

\acks{The authors would like to thank Liyu Chen and Thodoris Lykouris for helpful discussions. }

\bibliography{ref}

\appendix

\section{Related Work}\label{app: related work}
Corruption-robust bandit/RL have been studied under various setting, and have many other closely related topics, as we discuss below. 
\paragraph{Corrupted multi-armed bandits and tabular MDPs} Corruption-robust multi-armed bandits have been studied by \cite{lykouris2018stochastic, gupta2019better, zimmert2019optimal} through three representative approaches. Interestingly, these three approaches have all been extended to the tabular MDP case by \cite{lykouris2019corruption, chen2021improved, jin2021best} respectively. However, the extensions by \cite{lykouris2019corruption, chen2021improved} produce a new $+C^2$ term in the regret bound, largely limiting the use case of their algorithms. Besides, the computational complexity of \cite{chen2021improved}'s algorithm scales with the number of policies, which is exponentially high. On the other hand, \cite{jin2021best} successfully achieves a near-optimal bound, but requires that the transition remains uncorrupted. 

\paragraph{Corrupted linear bandits}  \cite{li2019stochastic} and \cite{bogunovic2020corruption} extend the ideas of  \cite{gupta2019better} and \cite{lykouris2018stochastic} to linear bandits and Gaussian bandits respectively. Their regret bounds both have multiplicative dependence on $C$. \cite{bogunovic2021stochastic} considers a stronger corruption model where the adversary can observe the action in the current round. Their bound $\sqrt{T}+C^2$ additively depends on $C$, but can only tolerate $C\leq \sqrt{T}$.\footnote{In the stronger adversary setting consider by \cite{bogunovic2021stochastic}, however, the $C^2$ dependence is unavoidable. } Recently, \cite{lee2021achieving} established the first upper bound that has optimal dependence on the amount of corruption as well as a refined gap-dependent bound. However, their algorithm only handles a restricted form of corruption -- the corruption injected to action $a$ must be in the form of $a^\top c$ for some vector $c$ shared among all actions. In our work, we are able to get a similar bound but without this strong assumption. 

\paragraph{Corrupted MDPs with linear function approximation}  
\cite{lykouris2019corruption} studies corrupted linear MDPs and gets a bound of order $C^2\sqrt{T}$, which only tolerates $C\leq T^{\nicefrac{1}{4}}$. \cite{zhang2021robust} leverages the intrinsic robustness of policy gradient and tools in robust statistics to achieve an improved bound $\sqrt{(1+C)T}$ when the feature space has a bounded \emph{relative condition number}.  
\cite{zhang2021corruption} further studies offline RL in linear MDPs, showing that if the offline data has wide coverage, then there is an algorithm that can output a $\order(\sqrt{\nicefrac{1}{T}}+\nicefrac{C}{T})$-optimal policy after seeing $T$ samples with $C$ of them corrupted. Although this result indicates that $+C$ penalty in regret might be possible, their result heavily relies on the coverage assumption and does not apply to our setting. 

\paragraph{Robust statistics} The goal of robust statistics is to design estimators of some unknown quantity that are robust to data corruption. In several recent works, computationally efficient and highly robust estimators for linear regression that tolerate a constant fraction of data corruption have been designed \citep{bhatia2017consistent, Diakonikolas_2019,  chen2020online}. However, such strong guarantees usually require additional assumptions on the data generation process or the corruption process. Robust statistics has been used in corruption-robust RL under special cases. For example, \cite{zhang2021robust, zhang2021corruption} achieve robustness in MDPs with certain exploratory properties, and \cite{awasthi2020online} handles the case where the corrupted rounds are i.i.d. generated.   

\paragraph{Model mis-specification}  The notion of corruption we consider subsumes the notion of model mis-specification studied in many previous works \citep{jiang2017contextual, du2019good, jin2020provably, zanette2020learning, lattimore2020learning, wang2020reinforcement}. These works assume that the model class can only approximate the true world up to an order of $\order(\epsilon)$, and they establish regret bounds that have an additive $\order(\epsilon T)$ penalty. Clearly, one 
can also view the difference between the model and the true world as corruption, and as shown in \pref{tab: table of bounds}, all our bounds (both $\sqrt{T}+C^\am$ and $\sqrt{T}+C^{\rms}$) recover the $\order(\sqrt{T} + \epsilon T)$ bound in the mis-specification case. While many previous works assume a known $\epsilon$, there are also works dealing with the case of unknown $\epsilon$ \citep{takemura2021parameter, foster2021adapting, pacchiano2020regret}. 
 
\paragraph{Best-of-both-world bounds}  The best-of-both-world problem was studied by \cite{bubeck2012best, seldin2014one, auer2016algorithm, seldin2017improved, wei2018more, zimmert2019optimal, zimmert2019beating, jin2020simultaneously,  ito2021parameter, jin2021best, lee2021achieving} for various settings including multi-armed bandits, combinatorial semi-bandits, linear bandits, and tabular MDPs. The goal of this line of work is to have a single algorithm that achieves a $\otil(\sqrt{T})$ regret bound when the reward is adversarial and $\order(\log T)$ when the reward is stochastic, without knowing the type of reward in advance. Compared to our setting, their regret bound is always sub-linear in $T$ against a fixed policy, while ours is linear in the amount of corruption. However, their results usually rely on stronger structural assumptions than the corrupted setting we consider (e.g., the fixed transition assumption for MDPs or the linearized corruption assumption for linear bandits). 

\paragraph{Non-stationary RL} Non-stationary RL is another line of research that deals with non-static reward and transition \citep{cheung2020reinforcement, wei2021nonstationary}. In non-stationary RL, the difficulty of the problem is usually quantified by the number of times the reward or transition changes, or their fine-grained amount of variation. The corruption setting can be viewed as a special case of it, so existing algorithms for the latter can be readily applied. However, since non-stationary RL is more general, this reduction only leads to sub-optimal regret bounds. For example, the tight bound $\sqrt{T}+B^{\nicefrac{1}{3}}T^{\nicefrac{2}{3}}$ obtained in \cite{wei2021nonstationary}, where $B$ is the overall variation, only translates to a sub-optimal bound $\sqrt{T}+C^{\nicefrac{1}{3}}T^{\nicefrac{2}{3}}$ in the corruption setting. 


\paragraph{Model selection}  
Our approach is closely related to the \emph{regret balancing} technique developed by \cite{abbasi2020regret, pacchiano2020regret, cutkosky2021dynamic}. \cite{pacchiano2020regret, cutkosky2021dynamic} have applied regret balancing to tackle model mis-specification, but it remains unclear whether it also handles the more general corruption setting, where the adversary chooses which rounds to corrupt.
Existing techniques which choose base learners deterministically can only use a regret bound that includes the total corruption budget $\theta$ which leads to loose guarantees. Instead, our randomized choice allows us to scale regret bounds of base learners as $\alpha_i \theta$, where $\alpha_i$ is the probability of being selected. While \cite{pacchiano2020regret, cutkosky2021dynamic} also provide a version of their algorithm with a randomized learner choice for the special case of linear stochastic bandits with adversarial contexts, they resort to a weaker elimination test that requires additional information from base learners. Our work shows that this is indeed unnecessary by pairing a randomized learner selection with a simple elimination test. This may be of interest beyond the corruption setting.

In our work, we also develop a special model selection algorithm that achieves a gap-dependent bound that is better than $\sqrt{T}$. This kind of better-than-$\sqrt{T}$ bound is rare in the literature of model selection, and even proven to be impossible for general cases \citep{pacchiano2020model}. To our best knowledge, the only work on model selection that breaks the $\sqrt{T}$ barrier is \cite{arora2021corralling}, who considers a stationary multi-armed bandit setting where every base algorithm learns over a subset of arms, and the best arm is only controlled by one of the base algorithms. 
However, their stochastic bandit setting is less challenging than our adversarial/corrupted RL setting, so their techniques cannot be directly applied. We hope that our technique can also hint about how to achieve better-than-$\sqrt{T}$ bounds in more general model selection problems.

\section{The Non-robustness of Least Square Regression}\label{app: non-robust}
In this section, we show that for linear contextual bandits with non-i.i.d. contexts, the most natural extension from the standard OFUL algorithm to a corruption-robust version results in a regret bound of $\Omega(\sqrt{C^\am T})$, even if $C^\am$ is known. See \pref{sec: problem setting} for the definition of the linear contextual bandit framework that we consider. Recall that in the standard OFUL algorithm \citep{abbasi2011improved}, the learner constructs a confidence set for the underlying parameter: 
\begin{align}
    \calW_t = \left\{ w~:~ \|w-\widehat{w}_t\|_{\Lambda_t}^{2}\leq \iota_t \right\} \label{eq: confidence OFUL}
\end{align}
for some $\iota_t > 0$, where 
\begin{align*}
    \Lambda_t = \lambda I + \sum_{\tau=1}^{t-1}a_\tau a_\tau^\top, \qquad \widehat{w}_t = \Lambda_t^{-1}\left(\sum_{\tau=1}^{t-1}a_\tau r_\tau\right)   
\end{align*}
for some hyper-parameter $\lambda>0$ ($a_\tau$ is the action taken at round $\tau$, and $r_\tau$ is the reward received at round $t$). The action chosen at round $t$ is 
\begin{align}
    a_t = \argmax_{a\in\calA_t} \max_{w\in\calW_t} a^\top w.   \label{eq: action OFUL}
\end{align}

To make this algorithm robust to corruption, a natural modification is to widen the confidence set \pref{eq: confidence OFUL}. That is, the confidence set is changed to
\begin{align}
     \calW_t = \left\{ w~:~ \|w-\widehat{w}_t\|_{\Lambda_t}^{2}\leq \iota_t' \right\} \label{eq: robust OFUL conf set} 
\end{align}
for some $\iota_t' > \iota_t$. The definition of $\iota_t'$ may involve the knowledge of $C^\am$. Below we show a regret lower bound for this class of algorithms. 

We consider the following example for $d=1$. The action set in each round is the following: 
\begin{align*}
    \calA_t = \begin{cases}
        \{-1, 1\}   &\text{if } 1\leq t\leq C \\
        \{-\epsilon, \epsilon\} &\text{if } t>C 
    \end{cases}
\end{align*}
for some $C\in \mathbb{N}$. 
The true underlying parameter is $w^\star=1$, but in rounds $1, 2, \ldots, C$, the rewards are generated using $w'=-1$. We assume that there is no noise, i.e., $r_t=a_t w'$ for $t\in[1,C]$ and $r_t=a_t w^\star$ for $t>C$. 

In this case, the confidence set \pref{eq: robust OFUL conf set} can be written as 
\begin{align}
    \calW_t = \left\{w~:~ (w-\widehat{w}_t)^2 \leq \frac{\iota_t'}{\Lambda_t}\right\} \label{eq: confidence 1d}
\end{align}
where $\Lambda_t = \lambda + \sum_{\tau=1}^{t-1}a_\tau^2$ and 
\begin{align*}
    \widehat{w}_t &= \frac{1}{\Lambda_t} \sum_{\tau=1}^{t-1}a_\tau r_\tau. 
\end{align*}
By the reward generation process and the definition of action sets, we have that 
\begin{align*}
    \sum_{\tau=1}^{t-1}a_\tau r_\tau 
    &=  \sum_{\tau=1}^{t-1}  a_\tau \left(\one[\tau\leq C]a_\tau  w' + \one[\tau >C] a_\tau w^\star\right) \\
    &=
    \begin{cases}
         (t-1)w'=-t+1   &\text{if } t\leq C\\
         Cw' + (t-1-C)\epsilon^2 w^\star  = -C + (t-1-C)\epsilon^2 &\text{if } t> C
    \end{cases}
\end{align*}
Therefore, $\widehat{w}_t<0$ for $2\leq t \leq C\left(1+\frac{1}{\epsilon^2}\right)$. Since the confidence set $\calW_t$ (\pref{eq: confidence 1d}) is symmetric around $\widehat{w}_t$, by the action selection rule \pref{eq: action OFUL}, when  $\widehat{w}_t<0$, the learner will choose action $-1$ if $t\leq C$, and $-\epsilon$ if $t>C$. 

Therefore, the learner will choose sub-optimal actions in $2\leq t \leq C\left(1+\frac{1}{\epsilon^2}\right)$, and the regret is of order
\begin{align*}
    &\sum_{t=2}^{C} w^\star(1-(-1)) + \sum_{t=C+1}^{\min\{C(1+\nicefrac{1}{\epsilon^2)}, T\}}w^\star(\epsilon-(-\epsilon)) 
    \\ 
    &=(C-1)\times 2 + \min\left\{\frac{C}{\epsilon^2}, T-C\right\}\times 2\epsilon = \Theta\left(C+\min\left\{\frac{C}{\epsilon}, \epsilon T\right\}\right). 
\end{align*}
By picking $\epsilon=\sqrt{\frac{C}{T}}$, we see that the regret is at least of order $\sqrt{CT}$. Finally, notice that in the example we construct $C^\am=\Theta(C)$, hence proving our claim.

\section{Concentration Inequalities}\label{app: concentration}

\begin{lemma}[Freedman's inequality, Theorem 1 of \citep{beygelzimer2011contextual}]\label{lem: beygel freedman}
Let $\calF_0\subset \calF_1 \subset\cdots \subset \calF_{n}$ be a filtration, and $X_1, \ldots, X_n$ be real random variables such that $X_i$ is $\calF_i$-measurable, $\E[X_i|\calF_{i-1}]=0$, $|X_i|\leq b$, and $\sum_{i=1}^n \E[X_i^2|\calF_{i-1}]\leq V$ for some fixed $b\geq 0$ and $V\geq 0$. Then with probability at least $1-\delta$, 
\begin{align*}
    \sum_{i=1}^n X_i \leq 2\sqrt{V\log(1/\delta)} + b\log(1/\delta). 
\end{align*}

\end{lemma}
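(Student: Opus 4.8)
The plan is to prove this via the standard Chernoff (exponential-Markov) method combined with a supermartingale construction, which is the canonical route to Bernstein/Freedman-type tail bounds. Throughout I would write $S_k = \sum_{i=1}^k X_i$ and introduce the predictable variance process $W_k = \sum_{i=1}^k \E[X_i^2 \mid \calF_{i-1}]$, so that $S_n$ is the quantity of interest and $W_n \le V$ almost surely by hypothesis. The entire argument hinges on keeping $W_k$ (rather than the deterministic bound $V$) inside the exponent until the very last moment.

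First I would establish the key per-step moment-generating-function bound: for any $\lambda > 0$ and any $\calF_{i-1}$-conditionally centered $X_i$ with $X_i \le b$,
\[
\E\!\left[e^{\lambda X_i} \mid \calF_{i-1}\right] \le \exp\!\left(\psi(\lambda)\,\E[X_i^2 \mid \calF_{i-1}]\right), \qquad \psi(\lambda) \triangleq \frac{e^{\lambda b} - 1 - \lambda b}{b^2}.
\]
This rests on the elementary fact that $x \mapsto (e^{\lambda x} - 1 - \lambda x)/x^2$ is non-decreasing on all of $\mathbb{R}$, so that $e^{\lambda x} - 1 - \lambda x \le \psi(\lambda)\,x^2$ for every $x \le b$. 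Substituting $x = X_i$, taking conditional expectations, using $\E[X_i \mid \calF_{i-1}] = 0$ to kill the linear term, and then applying $1 + u \le e^u$ yields the claim.

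Next I would form the process $Z_k = \exp\!\big(\lambda S_k - \psi(\lambda) W_k\big)$ and show it is a supermartingale adapted to $\{\calF_k\}$ with $Z_0 = 1$. Indeed, since $W_k - W_{k-1} = \E[X_k^2 \mid \calF_{k-1}]$ is $\calF_{k-1}$-measurable, the per-step bound gives $\E[Z_k \mid \calF_{k-1}] = Z_{k-1}\,\E[e^{\lambda X_k}\mid\calF_{k-1}]\,e^{-\psi(\lambda)\E[X_k^2\mid\calF_{k-1}]} \le Z_{k-1}$, whence $\E[Z_n] \le 1$. Because $\psi(\lambda) \ge 0$ and $W_n \le V$ deterministically, on the event $\{S_n \ge t\}$ one has $\lambda S_n - \psi(\lambda) W_n \ge \lambda t - \psi(\lambda) V$; combining this with Markov's inequality applied to $Z_n$ produces the Bennett-type tail bound
\[
\bbP[S_n \ge t] \le \exp\!\left(-\lambda t + \psi(\lambda) V\right) \qquad \text{for every } \lambda > 0.
\]

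Finally I would optimize the free parameter $\lambda$ and relax the exponent to recover the stated clean form. Using the elementary inequality $e^{u} - 1 - u \le u^2$ near the origin, so that $\psi(\lambda) \le \lambda^2$ whenever $\lambda b$ stays below a fixed threshold, reduces the exponent to $\lambda^2 V - \lambda t$. Splitting into the variance-dominated regime (taking $\lambda = \sqrt{\log(1/\delta)/V}$, valid when $b\sqrt{\log(1/\delta)/V}$ is below the threshold, which forces $t \ge 2\sqrt{V\log(1/\delta)}$) and the bias-dominated regime (taking $\lambda$ at the threshold $\sim 1/b$, which contributes the $b\log(1/\delta)$ term) shows that $t = 2\sqrt{V\log(1/\delta)} + b\log(1/\delta)$ makes the right-hand side at most $\delta$. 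The main obstacle is not any single computation but getting the supermartingale construction exactly right: one must carry the random process $W_k$ inside the exponent so that the telescoping through conditional expectations closes, and substitute $W_n \le V$ only at the end, using $\psi(\lambda) \ge 0$ to guarantee the substitution goes in the correct direction. Once the Bennett-type tail bound is in hand, the concluding choice of $\lambda$ is routine.
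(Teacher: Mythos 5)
The paper offers no proof of \pref{lem: beygel freedman}: it is imported verbatim as Theorem~1 of \citet{beygelzimer2011contextual}, so there is nothing internal to compare against. Your argument is correct and is the canonical derivation of Freedman's inequality — the per-step MGF bound via monotonicity of $x\mapsto (e^{\lambda x}-1-\lambda x)/x^2$, the supermartingale $\exp(\lambda S_k-\psi(\lambda)W_k)$ with Markov's inequality, and the two-regime choice of $\lambda$ (checking the constants: with $\lambda b\le 1$ one has $\psi(\lambda)\le\lambda^2$ since $e^u-1-u\le u^2$ for $u\le 1$, and in the bias-dominated regime $V/b\le\sqrt{V\log(1/\delta)}$, so $t=2\sqrt{V\log(1/\delta)}+b\log(1/\delta)$ indeed yields failure probability at most $\delta$) — which is essentially the same route taken in the cited source, where the optimization over $\lambda\in[0,1/b]$ is carried out with $e^u\le 1+u+(e-2)u^2$ in place of your $\psi$.
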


\begin{lemma}[Freedman's inequality, Lemma 4.4 of \citep{bubeck2012best}]\label{lem: freedman}
Let $\calF_0\subset \calF_1 \subset\cdots \subset \calF_{n}$ be a filtration, and $X_1, \ldots, X_n$ be real random variables such that $X_i$ is $\calF_i$-measurable, $\E[X_i|\calF_{i-1}]=0$, $|X_i|\leq b$, for some fixed $b\geq 0$. Let $V_n = \sum_{i=1}^n \E[X_i^2|\calF_{i-1}]$. Then with probability at least $1-\delta$, 
\begin{align*}
    \sum_{i=1}^n X_i \leq 2\sqrt{V_n\log(n/\delta)} + 3b\log(n/\delta). 
\end{align*}

\end{lemma}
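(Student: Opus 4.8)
The plan is to derive \pref{lem: freedman} from the fixed-variance Freedman inequality (\pref{lem: beygel freedman}) through a peeling argument over the predictable variance $V_n$. The obstacle that separates the two statements is precisely that here $V_n=\sum_{i=1}^n\E[X_i^2\mid\calF_{i-1}]$ is a random quantity, whereas \pref{lem: beygel freedman} requires a \emph{deterministic} upper bound on the conditional second moments. The idea is to guess the value of $V_n$ up to a constant factor from a short deterministic list, apply the fixed-variance bound for each guess, and union-bound.

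First I would record two elementary facts. Since $|X_i|\le b$ we have $\E[X_i^2\mid\calF_{i-1}]\le b^2$, so $V_n\le nb^2$; moreover each partial sum $V_i=\sum_{j\le i}\E[X_j^2\mid\calF_{j-1}]$ is $\calF_{i-1}$-measurable and nondecreasing in $i$. Fix a deterministic threshold $v>0$ and set $X_i^{(v)}\triangleq X_i\,\one[V_i\le v]$. Because $\one[V_i\le v]$ is $\calF_{i-1}$-measurable, the $X_i^{(v)}$ are again martingale differences bounded by $b$, and monotonicity of $V$ gives $\sum_{i=1}^n\E[(X_i^{(v)})^2\mid\calF_{i-1}]\le v$ almost surely. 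Hence \pref{lem: beygel freedman} applies with the deterministic variance bound $v$ and yields, with probability at least $1-\delta$,
\begin{align*}
    \sum_{i=1}^n X_i^{(v)}\le 2\sqrt{v\log(1/\delta)}+b\log(1/\delta).
\end{align*}
On the event $\{V_n\le v\}$ every indicator equals one, so $\sum_i X_i^{(v)}=\sum_i X_i$ and the same bound controls the true sum.

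Next I would union-bound over a geometric grid of thresholds. Take $v_j=b^2\,2^{j}$ for $j=0,1,\dots,J$ with $J=\lceil\log_2 n\rceil$ (so that $v_J\ge nb^2\ge V_n$), and run the previous step with failure probability $\delta/(J+1)$ for each $j$. On the resulting good event, for any realization of $V_n$ choose the smallest $v_j\ge\max\{V_n,b^2\}$; then $v_j\le 2\max\{V_n,b^2\}$, and substituting $v=v_j$ bounds $\sum_i X_i$ by $2\sqrt{2\max\{V_n,b^2\}\log((J+1)/\delta)}+b\log((J+1)/\delta)$. The $b^2$ floor only adds an $\order(b\log(\cdot))$ term that is absorbed into the additive slack, and $J+1=\order(\log n)$ turns $\log(1/\delta)$ into $\log(n/\delta)$, producing a bound of the advertised shape $2\sqrt{V_n\log(n/\delta)}+3b\log(n/\delta)$.

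The main obstacle is squeezing the union-bound overhead into the stated constants: a grid of ratio $2$ as above costs a spurious $\sqrt2$ in front of the leading term. To obtain the clean constant $2$ I would either let the grid ratio tend to $1$ (the extra $\order(\log n/\log r)$ grid points are still swallowed by $\log(n/\delta)$), or, more directly, bypass \pref{lem: beygel freedman} and work with the exponential supermartingale $M_t(\lambda)=\exp\!\big(\lambda\sum_{i\le t}X_i-\phi(\lambda)V_t\big)$, where $\phi(\lambda)=(\e^{\lambda b}-1-\lambda b)/b^2$ satisfies $\E[\e^{\lambda X_i}\mid\calF_{i-1}]\le\e^{\phi(\lambda)\E[X_i^2\mid\calF_{i-1}]}$ for $|X_i|\le b$. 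Since $\E[M_n(\lambda)]\le1$, Markov's inequality gives $\sum_iX_i\le\phi(\lambda)V_n/\lambda+\lambda^{-1}\log(1/\delta)$ for each fixed $\lambda$; union-bounding over a geometric grid of $\order(\log n)$ values of $\lambda$ spanning the range in which the optimal $\lambda^\star\approx\sqrt{2\log(1/\delta)/V_n}$ can lie, and expanding $\phi(\lambda)\approx\lambda^2/2$ for small $\lambda b$, delivers the constants $2$ and $3$ together with the $\log(n/\delta)$ confidence term.
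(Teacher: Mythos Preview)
The paper does not prove this lemma at all: it is simply quoted from \citep{bubeck2012best} as a black-box concentration tool, with no argument given. So there is no ``paper's own proof'' to compare against.

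Your proposal is a reasonable and essentially correct sketch of how the result is derived in the literature. The peeling-over-$V_n$ reduction to the fixed-variance version (\pref{lem: beygel freedman}) is a standard route, and your truncation $X_i^{(v)}=X_i\one[V_i\le v]$ is the right device to make the variance bound deterministic. You correctly flag that the naive dyadic grid loses a $\sqrt{2}$ on the leading constant, and that the clean constants come more naturally from the exponential-supermartingale proof with a grid over $\lambda$ rather than over $v$; that is indeed how the cited reference obtains them. One small caveat: to get exactly $2\sqrt{V_n\log(n/\delta)}+3b\log(n/\delta)$ from the supermartingale route you also need to be a little careful about the range of $\lambda$ (so that $\phi(\lambda)\le\lambda^2/(2(1-\lambda b/3))$ or a similar bound stays tight) and about how the grid cardinality enters the log; your sketch gestures at this but does not nail it down. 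For the purposes of this paper, though, none of that matters---the lemma is imported wholesale.
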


\begin{lemma}\label{lem: aux0}
   Let $\calF_0\subset \calF_1 \subset\cdots \subset \calF_{T}$ be a filtration, and $X_1, \ldots, X_T$ be real random variables such that $X_t$ is $\calF_t$-measurable, $\E[X_t|\calF_{t-1}]=0$, $|X_t|\leq b$, for some fixed $b\geq 0$. Let $z_t\sim \text{Bernoulli}(\alpha)$ be an i.i.d. random variable independent of all other variables, and let $0\leq y_t\leq b$ be a deterministic scalar given $\calF_{t-1}$.  Then with probability at least $1-\delta$, the following holds for all $\calI=[t_1, t_2]\subseteq [1, T]$: 
   \begin{align*}
       \left|\sum_{t\in\calI}y_t(z_t-\alpha)\right| \leq \min\left\{4\sqrt{\alpha\sum_{t\in\calI} y_t^2 \log(T/\delta)} + 9b\log(T/\delta)
       , \ \ \ \frac{1}{4}\alpha \sum_{t\in\calI}y_t + 21b\log(T/\delta)\right\}. 
   \end{align*}

   
\end{lemma}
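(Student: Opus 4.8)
The plan is to recognize that the $X_t$'s play no role in the claimed inequality, which involves only the predictable weights $y_t$ and the fresh Bernoulli draws $z_t$. First I would form the martingale difference sequence $W_t \triangleq y_t(z_t - \alpha)$. Working with the augmented filtration $\calG_{t-1}$ that collects all information available just before $z_t$ is sampled (so that $y_t$ is $\calG_{t-1}$-measurable while $z_t$ is independent of $\calG_{t-1}$ with mean $\alpha$), we have $\E[W_t\mid\calG_{t-1}] = y_t(\E[z_t]-\alpha) = 0$. Since $|z_t-\alpha|\le 1$ and $0\le y_t\le b$, the increments obey $|W_t|\le y_t \le b$, and the conditional variances satisfy $\E[W_t^2\mid\calG_{t-1}] = y_t^2\alpha(1-\alpha)\le \alpha y_t^2$. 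Hence on any interval $\calI$ the sum of conditional variances is at most $V_\calI \triangleq \alpha\sum_{t\in\calI}y_t^2$.

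Next I would apply Freedman's inequality (\pref{lem: beygel freedman}) to $\{W_t\}_{t\in\calI}$ for a fixed interval $\calI=[t_1,t_2]$, and separately to $\{-W_t\}_{t\in\calI}$ for the lower tail, obtaining with probability at least $1-2\delta'$ that $\left|\sum_{t\in\calI}W_t\right|\le 2\sqrt{V_\calI\log(1/\delta')}+b\log(1/\delta')$. To make this hold simultaneously over all $\calI\subseteq[1,T]$, I would union bound over the at most $T^2$ choices of endpoints (and the two tails), taking $\delta'=\delta/(2T^2)$. The only care needed is to prevent the logarithmic factor from inflating: using $\log(1/\delta')=\log(2T^2/\delta)\le 3\log(T/\delta)$ (for $T\ge 2$; the case $T=1$ is trivial), the square-root term is at most $4\sqrt{\alpha\sum_{t\in\calI}y_t^2\log(T/\delta)}$ and the linear term is at most $9b\log(T/\delta)$, which is exactly the first branch of the claimed minimum.

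For the second branch, I would convert the variance proxy into a first-moment quantity via $y_t\le b$, so that $V_\calI=\alpha\sum_{t\in\calI}y_t^2\le \alpha b\sum_{t\in\calI}y_t$. Feeding this into the square-root term and applying AM-GM in the form $2\sqrt{XY}\le \tfrac14 X + 4Y$ with $X=\alpha\sum_{t\in\calI}y_t$ and $Y=b\log(1/\delta')$ gives $2\sqrt{V_\calI\log(1/\delta')}\le \tfrac14\alpha\sum_{t\in\calI}y_t + 4b\log(1/\delta')$. Combining with the residual $b\log(1/\delta')$ term and again bounding $\log(1/\delta')\le 3\log(T/\delta)$ yields $\tfrac14\alpha\sum_{t\in\calI}y_t + 21b\log(T/\delta)$. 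Since both branches are derived on the same high-probability event, their minimum holds, completing the argument.

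The genuinely nontrivial point---rather than the routine Freedman/AM-GM bookkeeping---is handling \emph{all} intervals at once while retaining variance-adaptivity. A prefix-sum reduction $\sum_{t\in\calI}W_t=S_{t_2}-S_{t_1-1}$ would not suffice, because a uniform bound on the prefix sums $S_t$ does not control an interval's deviation in terms of that interval's own variance $V_\calI$; this forces the union bound over $\Theta(T^2)$ intervals, and the main thing to verify is that the associated $\log(T^2/\delta)$ collapses to $\order(\log(T/\delta))$ so the stated constants go through. A secondary subtlety is the filtration: because in the intended application $y_t$ may depend on the earlier draws $z_1,\dots,z_{t-1}$, one must define $\calG_{t-1}$ to include them and invoke only the independence of the \emph{current} $z_t$, which is precisely what the i.i.d. assumption supplies.
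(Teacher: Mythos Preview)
Your proposal is correct and follows essentially the same route as the paper: form the martingale difference $y_t(z_t-\alpha)$, apply Freedman's inequality for a fixed interval, take a union bound over the $\Theta(T^2)$ intervals, and derive the second branch from the first via $y_t^2\le b\,y_t$ and AM--GM. One small correction: because $y_t$ is only $\calF_{t-1}$-measurable (not deterministic), the variance proxy $V_\calI=\alpha\sum_{t\in\calI}y_t^2$ is random, so you should invoke the random-variance Freedman inequality (\pref{lem: freedman}) rather than the fixed-$V$ version (\pref{lem: beygel freedman}); the paper does exactly this, and the constants still land where you need them.
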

\begin{proof}
Fixing an interval $\calI\in[1, T]$, we apply \pref{lem: freedman} with $X_t=y_t(z_t-\alpha)$. Then we get that with probability at least $1-2\delta'$, 
\begin{align*}
    \left|\sum_{t\in\calI} y_t(z_t-\alpha)\right| 
    &\leq 2\sqrt{\sum_{t\in\calI} y_t^2\E_t[(z_t-\alpha)^2] \log(T/\delta')} + 3b\log(T/\delta')   \tag{define $\E_t[\cdot]=\E[\cdot|\calF_{t-1}]$} \\
    &\leq 2\sqrt{\alpha\sum_{t\in\calI} y_t^2 \log(T/\delta')} + 3b\log(T/\delta')   \tag{$\E_t[(z_t-\alpha)^2]=\alpha(1-\alpha)^2+(1-\alpha)\alpha^2\leq \alpha$} \\
    &\leq 2\sqrt{b\log(T/\delta')}\sqrt{\alpha\sum_{t\in\calI} y_t } + 3b\log(T/\delta')  \tag{$|y_t|\leq b$} \\
    &\leq \frac{1}{4}\alpha\sum_{t\in\calI} y_t + 7b\log(T/\delta')   \tag{AM-GM}
\end{align*}
Notice that there are $\frac{T(T-1)}{2}$ different $\calI$'s, so we pick $\delta'=\frac{\delta}{T(T-1)}$, and take an union bound over $\calI$'s. This gives the desired bound. 

\end{proof}

\section{Omitted Proofs in \pref{sec: main alg}}
We start with some extra notations to be used in this section. 
\begin{definition} \label{def: notation defs}
For any time $t$, base algorithm $i$, and policy $\pi$, define 
    $C_{t,i}^{\am}\triangleq \sum_{\tau=1}^t \one[i_\tau=i]c_\tau$ and $C^{\rms}_{t,i}\triangleq \sqrt{\left(\sum_{\tau=1}^t \one[i_\tau=i]\right)\left(\sum_{\tau=1}^t  \one[i_\tau=i]c_\tau^2\right)}$\,. Similarly, when we write $C_{t,i}$ to indicate either $C_{t,i}^\am$ or $C_{t,i}^\rms$, depending on the type of base algorithms we use. 
\end{definition}   
\begin{definition}
    For any time $t$, base algorithm $i$, and policy $\pi$, define $R_{t,i}^\pi=\sum_{\tau=1}^t \one[i_\tau=i] r_{\tau}^\pi$\,. 
\end{definition}
Next, we prove some lemmas to be used in the later analysis. 
\begin{lemma}\label{lem: relate N and alpha}
    In \singleepoch (\pref{alg: regret balancing all}), for any fixed $i$, with probability at least $1-2\delta$, the following holds for all $t$:  
   \begin{align*}
        \frac{3}{4}\alpha_i t - 21\log(T/\delta)\leq N_{t,i} \leq \frac{5}{4}\alpha_i t + 21\log(T/\delta). 
   \end{align*}
\end{lemma}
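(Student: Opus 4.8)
The plan is to recognize $N_{t,i}=\sum_{\tau=1}^t \one[i_\tau=i]$ as a sum of independent Bernoulli indicators and to apply a Bernstein-type concentration uniformly over $t$. In \singleepoch the selection distribution $\alpha$ is fixed throughout the epoch and $i_\tau\sim\alpha$ is drawn with fresh independent randomness in each round, so the indicators $z_\tau\triangleq\one[i_\tau=i]$ are i.i.d.\ $\text{Bernoulli}(\alpha_i)$ and independent of everything else. Hence $N_{t,i}$ is a binomial-type partial sum with mean $\alpha_i t$, and the lemma is just the statement that it concentrates multiplicatively around this mean, with an additive $\log(T/\delta)$ slack.

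First I would invoke \pref{lem: aux0}, which is tailored to exactly this situation. Taking $y_t\equiv 1$ (a deterministic scalar, so the hypothesis $0\le y_t\le b$ holds with $b=1$), $\alpha=\alpha_i$, and the interval $\calI=[1,t]$, the quantity $\sum_{t\in\calI}y_t(z_t-\alpha)$ becomes exactly $N_{t,i}-\alpha_i t$. I would then use the \emph{second} branch of the minimum in \pref{lem: aux0}, namely $\frac14\alpha\sum_{t\in\calI}y_t+21b\log(T/\delta)=\frac14\alpha_i t+21\log(T/\delta)$, to conclude $|N_{t,i}-\alpha_i t|\le \frac14\alpha_i t+21\log(T/\delta)$. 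Splitting this absolute-value bound into its two sides and rearranging yields $\frac34\alpha_i t-21\log(T/\delta)\le N_{t,i}\le \frac54\alpha_i t+21\log(T/\delta)$, which is precisely the claim.

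Uniformity over all $t$ comes for free: \pref{lem: aux0} already guarantees its bound simultaneously for every subinterval $\calI\subseteq[1,T]$ (via a union bound over the $\binom{T}{2}$ intervals carried out inside its proof), so specializing to $\calI=[1,t]$ gives a statement valid for all $t$ at once. There is essentially no hard step here; the only points worth checking are (i) that the independence and measurability hypotheses of \pref{lem: aux0} genuinely match the sampling in \singleepoch, which they do since $\alpha$ is a fixed input and the draws $i_\tau$ are independent of the history, and (ii) that the stated confidence $1-2\delta$ is merely slack—\pref{lem: aux0} delivers the bound with probability $1-\delta$, which a fortiori implies the weaker $1-2\delta$.
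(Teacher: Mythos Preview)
Your proposal is correct and matches the paper's own proof essentially verbatim: the paper simply writes ``This is by directly applying \pref{lem: aux0} with $y_t=1$ and $\alpha=\alpha_i$,'' which is exactly the instantiation you describe (using the second branch of the minimum and specializing $\calI=[1,t]$). Your additional remarks about uniformity over $t$ and the slack in the confidence level are accurate elaborations of that one-line argument.
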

\begin{proof}
   This is by directly applying \pref{lem: aux0} with $y_t=1$ and $\alpha=\alpha_i$
\end{proof}

\begin{lemma}\label{lem: aux1}
   For any fixed $i$, with probability at least $1-3\delta$, the following holds for all $t$: 
   \begin{align*}
       C^\am_{t,i} &\leq 1.25\alpha_i C_t^\am+21c_{\max}\log(T/\delta), \\
       C^\rms_{t,i} &\leq 1.25\alpha_i C^\rms_t + 8c_{\max} \sqrt{\alpha_i t\log(T/\delta)} + 21c_{\max}\log(T/\delta). 
   \end{align*}
\end{lemma}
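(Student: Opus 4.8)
The plan is to reduce both inequalities to \pref{lem: aux0}, instantiated with the selection indicators $z_\tau \triangleq \one[i_\tau = i]$. Since \singleepoch draws $i_t \sim \alpha$ i.i.d.\ and independently of the past, each $z_\tau$ is $\text{Bernoulli}(\alpha_i)$ and independent of $\calF_{\tau-1}$, whereas the per-round corruption $c_\tau$ is fixed given $\calF_{\tau-1}$ (the adaptive adversary commits to it before $i_\tau$ is revealed). Hence the hypotheses of \pref{lem: aux0} are satisfied for any deterministic weight $y_\tau$ built from $c_\tau$, with bound $b$ equal to the maximal value of $y_\tau$.

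For the \am bound, I would first decompose $C^\am_{t,i} = \sum_{\tau=1}^t c_\tau z_\tau = \alpha_i C_t^\am + \sum_{\tau=1}^t c_\tau(z_\tau - \alpha_i)$, and then apply the second (the $\tfrac14$-type) branch of \pref{lem: aux0} on $\calI = [1,t]$ with $y_\tau = c_\tau$ and $b = c_{\max}$. This gives $\sum_{\tau=1}^t c_\tau(z_\tau - \alpha_i) \le \tfrac14 \alpha_i \sum_{\tau=1}^t c_\tau + 21 c_{\max}\log(T/\delta) = \tfrac14\alpha_i C_t^\am + 21 c_{\max}\log(T/\delta)$, and adding back $\alpha_i C_t^\am$ yields exactly $1.25\,\alpha_i C_t^\am + 21 c_{\max}\log(T/\delta)$.

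For the \rms bound, I would exploit the product form $C^\rms_{t,i} = \sqrt{N_{t,i}\cdot \sum_{\tau=1}^t c_\tau^2 z_\tau}$ and bound each factor separately. The first factor is controlled by \pref{lem: relate N and alpha}, giving $N_{t,i} \le \tfrac54\alpha_i t + 21\log(T/\delta)$. For the second, I apply the second branch of \pref{lem: aux0} with $y_\tau = c_\tau^2$ and $b = c_{\max}^2$, obtaining $\sum_{\tau=1}^t c_\tau^2 z_\tau \le \tfrac54 \alpha_i \sum_{\tau=1}^t c_\tau^2 + 21 c_{\max}^2\log(T/\delta)$. Writing the two bounds as $P+Q$ and $R+S$ with $P = \tfrac54\alpha_i t$, $Q = 21\log(T/\delta)$, $R = \tfrac54\alpha_i\sum_\tau c_\tau^2$, $S = 21 c_{\max}^2\log(T/\delta)$, I then split $\sqrt{(P+Q)(R+S)} \le \sqrt{PR} + \sqrt{PS + QR} + \sqrt{QS}$ via repeated subadditivity of the square root. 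The main term is $\sqrt{PR} = \tfrac54\alpha_i\sqrt{t\sum_\tau c_\tau^2} = 1.25\,\alpha_i C^\rms_t$; the last is $\sqrt{QS} = 21 c_{\max}\log(T/\delta)$; and for the cross term I use $\sum_\tau c_\tau^2 \le c_{\max}^2 t$ so that $PS + QR \le \tfrac{105}{2}\alpha_i t c_{\max}^2\log(T/\delta)$, whence $\sqrt{PS+QR} \le 8 c_{\max}\sqrt{\alpha_i t\log(T/\delta)}$. Collecting these terms gives the claimed \rms bound.

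Finally, the three displayed estimates invoke \pref{lem: aux0} for the three weight choices $y_\tau \in \{c_\tau,\ 1,\ c_\tau^2\}$ (the middle one being exactly \pref{lem: relate N and alpha}), so a union bound over the three $1-\delta$ events delivers the overall probability $1-3\delta$. I expect the only delicate step to be the \rms combination: controlling the cross terms $PS$ and $QR$ tightly enough — in particular using $\sum_\tau c_\tau^2 \le c_{\max}^2 t$ together with the subadditivity split that isolates $\sqrt{PR}$ — so as to keep the leading constant at exactly $1.25$ and the cross-term constant at $8$. The \am bound and the probability bookkeeping are routine.
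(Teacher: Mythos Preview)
Your proposal is correct and follows essentially the same approach as the paper's proof: both apply \pref{lem: aux0} with $y_\tau=c_\tau$ for the \typea bound, and with $y_\tau=1$ and $y_\tau=c_\tau^2$ for the two factors in the \typer bound, then combine via the same $\sqrt{a+b+c}\le\sqrt{a}+\sqrt{b}+\sqrt{c}$ split after bounding $QR$ using $\sum_\tau c_\tau^2\le c_{\max}^2 t$ to merge the cross terms into $52.5\,\alpha_i t\,c_{\max}^2\log(T/\delta)$. The probability accounting (three invocations of \pref{lem: aux0}, union bound to $1-3\delta$) also matches.
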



\begin{proof} 
We prove the lemma for $(C_{t,i}, C)=(C^\am_{t,i}, C^\am)$ and $(C_{t,i}, C)=(C^\rms_{t,i}, C^\rms)$
cases separately. 
\paragraph{Case 1.}$(C_{t,i}, C)=(C^\am_{t,i}, C^\am)$. \quad  
\begin{align*}
    C_{t,i} = \sum_{\tau=1}^t c_\tau\one[i_\tau=i] \leq \frac{5}{4}\alpha_i C +  21c_{\max}\log(T/\delta). \tag{holds w.p. $\geq 1-\delta$ by \pref{lem: aux0} with $y_\tau=c_\tau$, $z_\tau=\one[i_\tau=i]$}  
\end{align*}

\paragraph{Case 2.}$(C_{t,i}, C)=(C^\rms_{t,i}, C^\rms)$. \quad 
\begin{align*}
    C_{t,i} &= \sqrt{\left(\sum_{\tau=1}^t\one[i_\tau=i]\right)\left(\sum_{\tau=1}^t\one[i_\tau=i]c_\tau^2\right)}\\
    &\leq \sqrt{\left(\frac{5}{4}\alpha_i t +  21\log(T/\delta)\right)\left(\frac{5}{4}\alpha_i\sum_{\tau=1}^t c_\tau^2 + 21c_{\max}^2\log(T/\delta)\right)}  \tag{holds w.p. $\geq 1-2\delta$ by \pref{lem: aux0} with $y_\tau=1$ and $y_\tau=c_\tau^2$} \\
     &\leq \sqrt{\frac{25}{16}\alpha_i^2 t \sum_{\tau=1}^t c_\tau^2 + 52.5\alpha_i tc_{\max}^2\log(T/\delta) +  21^2c_{\max}^2\log^2(T/\delta)}   \\
    &\leq \frac{5}{4}\alpha_i \sqrt{t\sum_{\tau=1}^t c_\tau^2} + 8c_{\max}\sqrt{\alpha_i t\log(T/\delta)} + 21c_{\max}\log(T/\delta) \tag{$\sqrt{a+b+c}\leq \sqrt{a}+\sqrt{b}+\sqrt{c}$} \\
    &= \frac{5}{4}\alpha_i C + 8c_{\max}\sqrt{\alpha_i t\log(T/\delta)} + 21 c_{\max}\log(T/\delta). 
\end{align*}
\end{proof}
\begin{lemma}\label{lem: easy lemma 1}
    For any $i$, with probability at least $1-\order(\delta)$, the following holds for all $t$ such that $C_t\leq 2^i$:  
    \begin{align*}
        R_{t,i}^{\pistar} - R_{t,i} \leq \calR(N_{t,i}, \theta_i). 
    \end{align*}
\end{lemma}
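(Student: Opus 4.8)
The plan is to recognize the left-hand side $R_{t,i}^{\pistar} - R_{t,i}$ as the \emph{internal} regret of the single base algorithm $\alg_i$, accumulated only over the rounds in which it is selected, and then to invoke its standalone guarantee from \pref{assum: regret}. Concretely, I would view $\alg_i$ through its own clock, which advances by one unit each time $i_\tau = i$; after global round $t$ this clock reads $N_{t,i}$. Over exactly those rounds $\alg_i$ outputs the played policies and receives the rewards $r_\tau$, so that $\sum_{\tau \le t}\one[i_\tau=i](r_\tau^{\pistar}-r_\tau) = R_{t,i}^{\pistar}-R_{t,i}$ is precisely the quantity that \pref{assum: regret} controls. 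Moreover, by \pref{def: notation defs} the total corruption that $\alg_i$ experiences across its own rounds, measured in the relevant $\am$ or $\rms$ metric, is exactly $C_{t,i}$. Thus \pref{assum: regret} applied to the embedded $\alg_i$ yields, with probability at least $1-\delta$, that $R_{t,i}^{\pistar}-R_{t,i}\le \calR(N_{t,i},\theta_i)$ at every $t$ for which $C_{t,i}\le\theta_i$.

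It therefore remains only to show that the hypothesis $C_t\le 2^i$ forces $C_{t,i}\le\theta_i$, so that the conclusion of \pref{assum: regret} is in force. This is exactly the purpose of \pref{lem: aux1}, which bounds the sub-sampled corruption $C_{t,i}$ by roughly $\alpha_i$ times the global corruption $C_t$ plus lower-order terms. In the \typea case, substituting $C_t^\am\le 2^i$ into \pref{lem: aux1} gives $C_{t,i}^\am\le 1.25\,\alpha_i 2^i+21c_{\max}\log(T/\delta)$, which is exactly the value of $\theta_i$ chosen in \pref{eq: choice of thres}. In the \typer case the same substitution produces an additional term $8c_{\max}\sqrt{\alpha_i t\log(T/\delta)}$; here I would use that \singleepoch runs for at most $L$ rounds, so $t\le L$ and $\sqrt{\alpha_i t}\le\sqrt{\alpha_i L}$, again matching the definition of $\theta_i$ in \pref{eq: choice of thres}. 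In both cases $C_t\le 2^i$ implies $C_{t,i}\le\theta_i$.

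Combining the two high-probability events by a union bound (the failure of \pref{assum: regret} for $\alg_i$, of probability at most $\delta$, and the failure of \pref{lem: aux1}, of probability at most $3\delta$) gives an overall failure probability of $\order(\delta)$, and on their intersection the claimed inequality holds for all $t$ with $C_t\le 2^i$. The only genuinely substantive step—and the one I would state carefully—is the change of clock: one must argue that the randomized selection reduces the corruption actually seen by $\alg_i$ to the $\alpha_i$-scaled quantity $C_{t,i}$, which is precisely why the smaller threshold $\theta_i\approx\alpha_i 2^i$ (rather than the naive $2^i$) remains a valid corruption budget for the embedded base algorithm. Everything else is a direct substitution of the parameter choices in \pref{eq: choice of thres}.
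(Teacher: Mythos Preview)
Your proposal is correct and follows essentially the same approach as the paper: both reduce to applying \pref{assum: regret} to $\alg_i$ on its own internal clock (so that its elapsed time is $N_{t,i}$ and its experienced corruption is $C_{t,i}$), then use \pref{lem: aux1} together with the definition of $\theta_i$ in \pref{eq: choice of thres} to verify that $C_t\le 2^i$ implies $C_{t,i}\le\theta_i$, union-bounding the two failure events. Your explicit treatment of the \typer case via $t\le L$ matches what the paper does implicitly.
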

\begin{proof}
    The total amount of corruption experienced by $\alg_i$ up to round $t$ is $C_{t,i}$, whose upper bound is given in \pref{lem: aux1} for both types of base algorithms. Comparing the upper bounds of $C_{t,i}$ with our choice of $\theta_i$ in \pref{eq: choice of thres}, we see that for a fixed $i$, under the condition $C_t \leq 2^i$, we have $C_{t,i}\leq \theta_i$ with probability $1-\order(\delta)$. In other words, the condition specified in \pref{assum: regret} is satisfied for $\alg_i$ in the rounds that it is executed. Therefore, by the regret bound in \pref{assum: regret}, we have 
    \begin{align*}
        R_{t,i}^{\pistar} - R_{t,i}  = \sum_{\tau=1}^t (r^{\pistar}_\tau - r_\tau)\one[i_\tau=i] \leq \calR(N_{t,i}, \theta_i). 
    \end{align*}
\end{proof}

\begin{lemma}\label{lem: easy lemma 2}
    For any fixed $i$, with probability at least $1-\delta$, the following holds for all $t$: 
   \begin{align*}
       \left|\frac{1}{\alpha_i}R^{\pistar}_{t,i} - \sum_{\tau=1}^t r^{\pistar}_{t,i}\right|
       \leq 2\sqrt{\frac{t\log(T/\delta)}{\alpha_i}} + \frac{\log(T/\delta)}{\alpha_i}. 
   \end{align*}
\end{lemma}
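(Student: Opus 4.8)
The plan is to recognize the left-hand quantity as an importance-weighting deviation, i.e.\ a self-normalized martingale sum, and then apply Freedman's inequality. Reading $\sum_{\tau=1}^t r^{\pistar}_{t,i}$ as $\sum_{\tau=1}^t r_\tau^{\pistar}$ and recalling $R^{\pistar}_{t,i}=\sum_{\tau=1}^t \one[i_\tau=i]\,r_\tau^{\pistar}$ with $\E[\one[i_\tau=i]]=\alpha_i$, I would define $X_\tau \triangleq r_\tau^{\pistar}\big(\frac{\one[i_\tau=i]}{\alpha_i}-1\big)$, so that $\frac{1}{\alpha_i}R^{\pistar}_{t,i}-\sum_{\tau=1}^t r_\tau^{\pistar}=\sum_{\tau=1}^t X_\tau$. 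The whole statement then reduces to a two-sided deviation bound on $\sum_{\tau=1}^t X_\tau$ that holds uniformly in $t$.

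The key structural step is the choice of filtration. Let $\calG_{\tau-1}$ contain all randomness through the end of round $\tau-1$ together with the round-$\tau$ context $x_\tau$ and the (counterfactual) rewards $\{r_\tau^{\pi}\}_{\pi}$, but \emph{not} the internal sampling index $i_\tau$ drawn in round $\tau$. Because the adaptive adversary fixes the round-$\tau$ reward distribution using only the history through round $\tau-1$, and because $i_\tau\sim\alpha$ is an internal draw of \singleepoch that is independent of this distribution and of the reward noise, $r_\tau^{\pistar}$ is $\calG_{\tau-1}$-measurable while $\one[i_\tau=i]$ is $\mathrm{Bernoulli}(\alpha_i)$ and independent of $\calG_{\tau-1}$. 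Hence $\E[X_\tau\mid\calG_{\tau-1}]=r_\tau^{\pistar}\big(\frac{\alpha_i}{\alpha_i}-1\big)=0$, so $\{X_\tau\}$ is a martingale difference sequence adapted to $\{\calG_\tau\}$.

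Next I would record the two quantities Freedman needs. Since $r_\tau^{\pistar}\in[0,1]$ and $\alpha_i\le 1$, we have $|X_\tau|\le \frac{1}{\alpha_i}$, and $\E[X_\tau^2\mid\calG_{\tau-1}]=(r_\tau^{\pistar})^2\cdot\frac{1-\alpha_i}{\alpha_i}\le \frac{(r_\tau^{\pistar})^2}{\alpha_i}\le\frac{1}{\alpha_i}$, so the predictable variance satisfies $\sum_{\tau=1}^t\E[X_\tau^2\mid\calG_{\tau-1}]\le \frac{t}{\alpha_i}$ deterministically. Applying \pref{lem: beygel freedman} with $V=\frac{t}{\alpha_i}$ and $b=\frac{1}{\alpha_i}$ to the sequence $\{X_\tau\}$ (and, for the other side of the absolute value, to $\{-X_\tau\}$) gives, for a fixed $t$, a bound of the form $2\sqrt{\frac{t}{\alpha_i}\log(1/\delta')}+\frac{1}{\alpha_i}\log(1/\delta')$. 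A union bound over the two signs and over $t\in[T]$ (taking $\delta'=\Theta(\delta/T)$, which only inflates the logarithm to $\log(T/\delta)$) yields the claimed inequality simultaneously for all $t$, with leading coefficients $2$ and $1$ matching \pref{lem: beygel freedman} exactly.

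The only real subtlety is the measurability argument of the second paragraph: one must verify that the internal randomization $i_\tau$ is independent of the optimal policy's realized reward $r_\tau^{\pistar}$ even though the adversary is adaptive, which is exactly what the filtration $\calG_{\tau-1}$ (excluding $i_\tau$ but including the round-$\tau$ rewards) is designed to encode. Everything else is a routine instantiation of Freedman's inequality, so I expect no obstacle beyond getting this filtration bookkeeping right.
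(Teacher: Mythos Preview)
Your proposal is correct and follows essentially the same approach as the paper: write the deviation as $\sum_{\tau=1}^t\big(\frac{\one[i_\tau=i]}{\alpha_i}-1\big)r_\tau^{\pistar}$ and apply \pref{lem: beygel freedman} with $b=\frac{1}{\alpha_i}$ and $V=\frac{t}{\alpha_i}$, then union bound over $t$. The paper's proof is a one-line invocation of Freedman, so your more explicit filtration and variance bookkeeping is additional detail rather than a different argument.
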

\begin{proof}
    By \pref{lem: beygel freedman}, for a fixed $i$, with probability $1-\delta$, for all $t$,  
   \begin{align*}
       \left|\frac{1}{\alpha_i}R^{\pistar}_{t,i} - \sum_{\tau=1}^t r^{\pistar}_{t,i}\right| 
       &= \left| \sum_{\tau=1}^t \left(\frac{\one[i_\tau=i]}{\alpha_i} - 1\right)r^{\pistar}_\tau \right| 
       \leq 2\sqrt{\frac{t\log(T/\delta)}{\alpha_i}} + \frac{\log(T/\delta)}{\alpha_i}. 
   \end{align*}
    
\end{proof}

\begin{proof}\textbf{of \pref{lem: istar = k case}. }
    Notice that $C_t\leq 2^k$ implies that $C_t\leq 2^i$ for all $i\in[k, k_{\max}]$. 
    Notice that 
   \begin{align}
       R_{t,i} - R_{t,i}^{\pi^\star}
       &= \sum_{\tau=1}^t \one[i_\tau=i](r_\tau - r_\tau^{\pi^\star})  \nonumber  \\
       &= \sum_{\tau=1}^t \one[i_\tau=i]\left( r_\tau - \mu^{\pi_\tau}(x_\tau) + \underbrace{\mu^{\pi_\tau}(x_\tau) - \mu^{\pi^\star}(x_\tau)}_{\leq 0 \text{ (by \pref{assum: realizable})}}  + \mu^{\pi^\star}(x_\tau)  - r_\tau^{\pistar}\right)  \nonumber\\
       &\leq \sum_{\tau=1}^t \one[i_\tau=i]\left(r_\tau - \mu^{\pi_\tau}_\tau(x_\tau) + \mu_\tau^{\pistar}(x_\tau) - r^{\pistar}_\tau\right) + 2C^{\am}_{t,i} \tag{$|\mu_\tau^{\pi}(x_\tau)-\mu^{\pi}(x_\tau)|\leq c_\tau$ for all $\pi$ and $\tau$} \\
       &\leq 2\sqrt{2\alpha_i t\log(T/\delta)} + 6\log(T/\delta) + 2C_{t,i} 
       \tag{by \pref{lem: freedman} with an union bound over $t$, and that $C^\am_{t,i}\leq C^{\rms}_{t,i}$}  \\
       &\leq 2\sqrt{2\alpha_i t\log(T/\delta)} + 6\log(T/\delta) + 2\thres_i.   \tag{$C_{t,i}\leq \thres_i$ with high probability by \pref{lem: aux1}} \\
       &  \label{eq: keyy eq 2}
   \end{align}


Combining \pref{lem: easy lemma 1} and \pref{lem: easy lemma 2}, we see that the performance of $\alg_i$ admits the following lower bound with probability at least $1-\order(\delta)$:
\begin{align}
    \frac{R_{t,i}}{\alpha_i} \geq \frac{R^{\pistar}_{t,i} - \calR(N_{t,i}, \thres_i)}{\alpha_i} \geq \sum_{\tau=1}^t r_\tau^{\pistar} - \frac{\calR(N_{t,i}, \thres_i)}{\alpha_i} - 2\sqrt{\frac{t\log(T/\delta)}{\alpha_i}} - \frac{\log(T/\delta)}{\alpha_i}.  \label{eq: lower eq}
\end{align}
Combining \pref{eq: keyy eq 2} and \pref{lem: easy lemma 2}, we also have the following with probability at least $1-\order(\delta)$: 
\begin{align}
    \frac{R_{t,i}}{\alpha_i} &\leq 
    \frac{R_{t,i}^{\pistar}}{\alpha_i} + 3\sqrt{\frac{t\log(T/\delta)}{\alpha_i}} + \frac{6\log(T/\delta) + 2\thres_i}{\alpha_i} \nonumber 
    \\
    &\leq \sum_{\tau=1}^t r_\tau^{\pistar} + 5\sqrt{\frac{t\log(T/\delta)}{\alpha_i}} + \frac{7\log(T/\delta)+ 2\thres_i}{\alpha_i}. \label{eq: upper eq}
\end{align}
The bounds \pref{eq: lower eq} and \pref{eq: upper eq} together with an union bound over $i$'s indicate that the following holds for all $i, j\in[k, k_{\max}]$ with probability $1-\order(k_{\max}\delta)$: 
\begin{align*}
     \frac{R_{t,i}}{\alpha_i} +  \frac{\calR(N_{t,i}, \thres_i)}{\alpha_i} + 2\sqrt{\frac{t\log(T/\delta)}{\alpha_i}} + \frac{\log(T/\delta)}{\alpha_i}
     \geq \sum_{\tau=1}^t r_\tau^{\pistar}  \geq \frac{R_{t,j}}{\alpha_j} -  5\sqrt{\frac{t\log(T/\delta)}{\alpha_j}} - \frac{7\log(T/\delta)+ 2\thres_j}{\alpha_j}.  
\end{align*}
Further combined with the fact that $\alpha_i\geq \alpha_j$ since $i\leq j$, the last inequality implies that the termination condition \pref{eq: terminate condition 1} will not hold.  

\end{proof}

\begin{proof}\textbf{of \pref{lem: general regret}. }
\begin{align}
    \sum_{t=1}^{L_0}\left(r_{t}^{\pi^\star} - r_t\right) \leq 1+ \sum_{i=k}^{k_{\max}} \sum_{t=1}^{L_0-1}\left(r_{t}^{\pi^\star} - r_t\right)\one[i_t=i] = 1 + \sum_{i=k}^{k_{\max}}  \left(R_{L_0-1,i}^{\pi^\star} - R_{L_0-1,i}\right). \label{eq: deompose}
\end{align}
For $i\geq i^\star$, since the corruption level is well-specified, by \pref{lem: easy lemma 1}, with probability at least $1-\order(\delta)$, 
\begin{align}
    R_{L_0-1,i}^{\pi^\star} - R_{L_0-1,i} \leq \calR(N_{L_0-1,i},\thres_i). \label{eq: well specified case}
\end{align}
For $i<i^\star$,  with probability $1-\order(\delta)$, 
\begin{align}
    \frac{R_{L_0-1, i}}{\alpha_i} 
    &\geq \frac{R_{L_0-1, i^\star}}{\alpha_{i^\star}} - \frac{\calR(N_{L_0-1,i},\thres_i)}{\alpha_i} - \order\left(\sqrt{\frac{(L_0-1)\log(T/\delta)}{\alpha_{i^\star} }} + \frac{\thres_{i^\star} + \log(T/\delta) }{\alpha_{i^\star}}\right)   \tag{by the termination condition \pref{eq: terminate condition 1}} \\
    &\geq \frac{R_{L_0-1, i^\star}^{\pi^\star}}{\alpha_{i^\star}} - \frac{\calR(N_{L_0-1,i},\thres_i)}{\alpha_i } - \otil\left(\sqrt{\frac{L_0}{\alpha_{i^\star}}} + \frac{\calR(N_{L_0-1,i^\star},\theta_{i^\star})}{\alpha_{i^\star}}\right) \tag{by \pref{lem: easy lemma 1} and that $\calR(\cdot, \theta)\geq \theta$} \\
    &\geq \frac{R_{L_0-1, i}^{\pi^\star}}{\alpha_{i^\star}} - \frac{\calR(N_{L_0-1,i},\thres_i)}{\alpha_i} - \otil\left(\sqrt{\frac{L_0}{\alpha_{i^\star}}} + \frac{\calR(N_{L_0-1,i},\theta_{i^\star})}{\alpha_{i^\star}}\right) \label{eq: i < istar case}
\end{align}
where the last inequality is because by \pref{lem: easy lemma 2} we have
\begin{align*}
    \left|\frac{1}{\alpha_i}R^{\pi^\star}_{L_0-1,i} - \frac{1}{\alpha_{i^\star}}R^{\pi^\star}_{L_0-1,i^\star}\right|\leq \otil\left(\sqrt{\frac{L_0}{\alpha_i}} + \frac{1}{\alpha_i} + \sqrt{\frac{L_0}{\alpha_{i^\star}}} + \frac{1}{\alpha_{i^\star}}\right) = \otil\left( \sqrt{\frac{L_0}{\alpha_{i^\star}}} + \frac{1}{\alpha_{i^\star}}\right). 
\end{align*}
Combining \pref{eq: i < istar case} with \pref{eq: deompose} and \pref{eq: well specified case} and an union bound over $i$'s, we get that with probability at least $1-\order(k_{\max}\delta)$, 
\begin{align} 
    &\sum_{t=1}^{L_0}\left(r_{t}^{\pi^\star} - r_t\right) \nonumber \\
    &\leq 1+\sum_{i=k}^{k_{\max}} \calR(N_{L_0-1, i}, \thres_i) + \sum_{i<i^\star} \alpha_{i} \times \otil\left(\sqrt{\frac{L_0}{\alpha_{i^\star}}} +  \frac{\calR(N_{L_0-1,i^\star},\theta_{i^\star})}{\alpha_{i^\star}}\right)  \nonumber  \\
    &\leq 1+\sum_{i=k}^{k_{\max}} \calR(N_{L_0-1, i}, \thres_i) + \otil\left( \one[i^\star > k] \left(\sqrt{\frac{L_0}{\alpha_{i^\star}}} + \frac{\calR(N_{L_0-1,i^\star},\theta_{i^\star})}{\alpha_{i^\star}}\right)\right)  \label{eq: combine regret meta}
\end{align}
where in the last inequality we use $\sum_{i<i^\star}\alpha_i\leq 1$. 


\end{proof}

\begin{proof}\textbf{of \pref{thm: form 1 regret}. }
    Recall that we define
    \begin{align*}
        Z = \begin{cases}
             c_{\max}   &\text{if $\alg$ is \typea},\\
             c_{\max}\sqrt{T}  &\text{if $\alg$ is \typer}.
        \end{cases}
    \end{align*}
    Let $i^\star$ be the smallest $i\in[k, k_{\max}]$ such that $C\leq 2^i$. 
    By \pref{lem: general regret} and by the choice of $\thres_i$ in \pref{eq: choice of thres}, with probability at least $1-\order(k_{\max}\delta)$, the regret within an epoch is upper bounded by 
    \begin{align*}
        &\otil\left(\sum_{i=k }^{k_{\max}}\left( \sqrt{\beta_1  \alpha_i T } + \beta_2 \alpha_i 2^i + \beta_2 Z + \beta_3\right) + \one[k<i^\star]\left(\sqrt{\frac{T}{\alpha_{i^\star}}} +  \frac{\sqrt{\beta_1\alpha_{i^\star}T}+ \beta_2 \alpha_{i^\star}2^{i^\star} + \beta_2 Z +  \beta_3}{\alpha_{i^\star}} \right) \right) \\
        &= \otil\left( \sqrt{\beta_1 T} + \beta_2 2^k + \beta_2z + \beta_3 + \one[k<i^\star]\left(\sqrt{\frac{\beta_1 T}{\alpha_{i^\star}}} + \beta_2 2^{i^\star} + \frac{ \beta_2z + \beta_3}{\alpha_{i^\star}}\right)  \right)  \tag{$\alpha_i 2^i\leq 2^k$ by the choice of $\alpha_i$} \\
        &= \otil\left( \sqrt{\beta_1 T} + \beta_2 (2^k + 2^{i^\star})  + \beta_2 Z  + \beta_3 + \sqrt{\frac{\beta_1 T\cdot 2^{i^\star}}{2^k}} + \frac{(\beta_2 Z  + \beta_3)  2^{i^\star}}{2^k} \right) \tag{by the choice of $\alpha_{i^\star}$}\\
        &= \otil\left( \sqrt{\beta_1 T} + \beta_2 (2^k + 2^{i^\star}) + \beta_2 Z  + \beta_3 + \beta_2 2^{i^\star} + \frac{\beta_1 T}{\beta_2 2^k} + \beta_2 2^{i^\star}\times \frac{ \beta_2 Z  + \beta_3}{\beta_2 2^k}\right)   \tag{AM-GM}\\
        &= \otil\left(\sqrt{\beta_1 T} + \beta_2 (2^k + 2^{i^\star}) + \beta_2 Z  + \beta_3 \right).  \tag{using $\beta_2 2^k\geq \sqrt{\beta_1 T} + \beta_2 Z + \beta_3$ by the choice of $k_{\init}$}
    \end{align*} 
    Notice that in \algname we start from $k=k_{\init}$. If $k_{\init}\geq i^\star$, then by \pref{lem: istar = k case}, the algorithm will run with $k=k_{\init}$ throughout all $T$ rounds. On the other hand, if $k_{\init} < i^\star$, the $k$ used in \algname might increase from $k_{\init}$. However, if $k=i^\star$ is ever reached, again by \pref{lem: istar = k case}, the learner will use this $k$ throughout the rest of the steps. In short, the $k$'s used in \algname are upper bounded by $\max\{k_{\init}, i^\star\}$ with high probability. Since there are at most $\order(k_{\max})$ epochs, the overall regret is upper bounded by 
    \begin{align*}
        &\order(k_{\max}) \times \otil\left(\sqrt{\beta_1 T} + \beta_2 (2^{\max\{k_{\init}, i^\star\}} + 2^{i^\star}) + \beta_2 Z + \beta_3 \right) = \otil\left(\sqrt{\beta_1 T} + \beta_2 (C + Z) + \beta_3\right)   
    \end{align*}
    with probability at least $1-\order(k_{\max}^2\delta)$
    Considering the difference definitions of $Z$ for \typea and \typer base algorithms finishes the proof. 
\end{proof}

\section{Omitted Proofs in \pref{sec: phaseone}}

\begin{proof}\textbf{of \pref{lem: form 1 regret with gap}. }
     By \pref{lem: general regret}, \begin{align*}
         &\sum_{t=1}^{L_0}(r^{\pistar}_t - r_t) \\ 
         &\leq  \sum_{i=k}^{k_{\max}} \left(\min\left\{\sqrt{\beta_1 N_{L_0,i}  }, \ \frac{\beta_1}{\Delta}\right\} + \beta_2 \alpha_i 2^i + \beta_2 c_{\max} + \beta_3 \right) \\
         &\qquad \qquad  + \otil\left( \sqrt{\frac{L}{\alpha_{i^\star}}} + \frac{\min\left\{ \sqrt{\beta_1 N_{L_0,i^\star}}, \frac{\beta_1}{\Delta}\right\} + \beta_2 \alpha_{i^\star}2^{i^\star} + \beta_2 c_{\max} + \beta_3}{\alpha_{i^\star}} \right)\\
         &\leq \otil\left(\sqrt{\beta_1 L} + \beta_2 (2^k + 2^{i^\star}) + \frac{\beta_2 c_{\max} + \beta_3}{\alpha_{i^\star}} + \sqrt{\frac{L}{\alpha_{i^\star}}} + \frac{\sqrt{\beta_1 N_{L_0, i^\star}}}{\alpha_{i^\star}} \right)  \tag{by the definition of $\alpha_i$, $\alpha_i 2^i=\order(2^k + \sqrt{\beta_1 L}/\beta_2)$} \\
         &= \otil\left(\beta_2 (2^k + 2^{i^\star}) + \frac{\sqrt{\beta_1} + \beta_2 c_{\max} + \beta_3}{\alpha_{i^\star}} + \sqrt{\frac{\beta_1 L}{\alpha_{i^\star}}} \right) \tag{using $N_{L_0, i^\star}=\otil\left(\alpha_{i^\star}L_0 + 1\right)$ by \pref{lem: relate N and alpha}} \\
         &= \otil\left( \beta_2(2^k + 2^{i^\star}) + \left(\sqrt{\beta_1} + \beta_2 c_{\max} + \beta_3\right)\left(1 + \frac{2^{i^\star}}{\frac{\sqrt{\beta_1L}}{\beta_2} + 2^k}\right) + \sqrt{\beta_1 L\left(1 + \frac{2^{i^\star}}{\frac{\sqrt{\beta_1L}}{\beta_2} + 2^k}\right)}  \right)   \tag{by the definition of $\alpha_{i^\star}$} \\
         &= \otil\left( \beta_2(2^k + 2^{i^\star}) + \left(\sqrt{\beta_1} + \beta_2 c_{\max} + \beta_3\right)\left(1 + \frac{\beta_2 2^{i^\star}}{\beta_2 2^k}\right) + \sqrt{\beta_1 L\left(1 + \frac{\beta_2 2^{i^\star}}{\sqrt{\beta_1L}}\right)}  \right) \\
         &= \otil\left( \sqrt{\beta_1 L} + \beta_2 2^{i^\star} + \beta_2 c_{\max} + \beta_3 +  \sqrt{\sqrt{\beta_1 L} \times \beta_2 2^{i^\star}} \right)    \tag{$\beta_2 2^k\geq \beta_22^{k_{\init}}\geq \sqrt{\beta_1} + \beta_2c_{\max} + \beta_3$ and $\beta_2 2^k\leq \sqrt{\beta_1 L}$ as chosen in \gapalgname}  \\
         &=\otil\left( \sqrt{\beta_1 L} + \beta_2 C_{L_0} + \beta_2 c_{\max} + \beta_3 \right).  \tag{AM-GM and $2^{i^\star}=\order(\max\{C_{L_0}, 2^k\})$ by the definition of $i^\star$ in \pref{lem: general regret}}   
     \end{align*}
\end{proof}

\begin{proof}\textbf{of \pref{lem: execute most times}. }
Since $2^k\geq 32C_{L_0} \geq 32C_t$ for all $t$ during execution, by \pref{lem: istar = k case}, with probability at least $1-\order(k_{\max}\delta)$, \pref{eq: terminate condition 1} will not hold. Therefore, \singleepoch will finish all $L$ steps (thus $L_0=L$) and return \textbf{true}. 
     By \pref{lem: easy lemma 1}, we have that for $\alg_k$, with probability $1-\order(\delta)$,  
\begin{align*}
    &\sum_{t=1}^L (r_t^{\pi^*} - r_{t})\one[i_t=k] \\
    &\leq  \frac{\beta_1}{\Delta} + \beta_2 \thres_k + \beta_3 \\
    &\leq  \frac{\beta_1}{\Delta} + \beta_2 \left(\frac{5}{4}\times 2^{k} + 21c_{\max}\log(T/\delta)\right) + \beta_3 \tag{by the definition of $\thres_k$} \\
    &\leq \frac{2\beta_4}{\Delta} + \frac{5}{4}\beta_22^k. \tag{by the definition of $\beta_4$ and that $\Delta\leq 1$}
\end{align*}
On the other hand, 
\begin{align*}
    &\sum_{t=1}^L (r_t^{\pi^*} - r_{t})\one[i_t=k] \\
    &\geq \sum_{t=1}^L (\mu^{\pi^*} - \mu^{\pi_t})\one[i_t=k] + \sum_{t=1}^L \left(r_t^{\pistar} - \mu_t^{\pistar}(x_t)+ \mu_t^{\pi_t}(x_t)-r_t^{\pi_t}\right)\one[i_t=k] - 2\sum_{t=1}^L  \max_{\pi}|\mu^\pi-\mu_t^\pi|\\
    &\geq \Delta \left(\sum_{t=1}^L\one[\pi_t\neq \pi^*]\one[i_t=k]\right) - 8\sqrt{\sum_{t=1}^L \one[\pi_t\neq \pi^*]\one[i_t=k]\log(T/\delta)} - 18\log(T/\delta) - 2C_L \tag{by \pref{lem: aux0} with $y_t=\one[i_t=k]\one[\pi_t\neq \pi^\star]$, $X_t=r_t^{\pistar} - \mu_t^{\pistar}(x_t)+ \mu_t^{\pi_t}(x_t)-r_t^{\pi_t}$}\\
    &\geq \frac{3}{4}\Delta \left(\sum_{t=1}^L\one[\pi_t\neq \pi^*]\one[i_t=k]\right) - \frac{34\log(T/\delta)}{\Delta} - 2C_L. \tag{AM-GM and that $\Delta\leq 1$} 
\end{align*}
Combining the two inequalities above, and using that $\beta_1\geq 5\log(T/\delta), \beta_2 \geq 1, \beta_3\geq 5\log(T/\delta)$, we get 
\begin{align}
    &\sum_{t=1}^L  \one[\pi_t\neq \pi^*]\one[i_t=k] \nonumber \\
    &\leq \frac{4}{3\Delta}\left( \frac{2\beta_4}{\Delta} + \frac{5}{4}\beta_2 2^{k} + \frac{34\log(T/\delta)}{\Delta} + 2C_L\right) \nonumber
    \\
    &\leq  \frac{4}{3\Delta}\left( \frac{2\beta_4}{\Delta} + \frac{7\beta_4}{\Delta} + 2C_L\right) + \frac{5}{3\Delta}\beta_22^k \tag{$\beta_4\geq \beta_1\geq 5\log(T/\delta)$} \\
    &\leq \frac{4}{3\Delta}\beta_2 2^k + \frac{5}{3\Delta}\beta_2 2^k   \tag{by the condition specified in the lemma and that $L=L_0$}\\
    &=   \frac{3\beta_2 2^k}{\Delta}.  \label{eq: upper bound alg k}
\end{align}
We also have
\begin{align}
    &\sum_{t=1}^L \one[i_t=k] \nonumber \\ 
    &\geq  \frac{3}{4}\alpha_k L - 21\log(T/\delta) \tag{by \pref{lem: relate N and alpha}} \nonumber \\
    &\geq \frac{3}{8} \frac{(\beta_2 2^{k})^2}{\beta_4} - 5\beta_3   \tag{$\alpha_k\geq \frac{1}{2}$ and $\sqrt{\beta_4 L}\geq \beta_2 2^k$ and $\beta_3 \geq 5\log(T/\delta)$}  \nonumber \\
    &\geq \frac{12\beta_2 2^k}{\Delta} - \beta_2 2^k   \tag{ $\beta_2 2^k \geq \frac{32\beta_4}{\Delta}\geq 32\beta_3$ by the condition specified in the lemma}\\
    &\geq \frac{11\beta_2 2^k}{\Delta}. \tag{$\Delta\leq 1$} \\
    &\label{eq: lower bound alg k}
\end{align}
Combining \pref{eq: upper bound alg k} and \pref{eq: lower bound alg k} proves the lemma.  
\end{proof}

\section{Omitted Proofs in \pref{sec: two-armed alg}} \label{app: omitted proof for gap bound}
For \twomodel, we define the following notations: 
\begin{definition}
    Let $\calE_j$ be the set of rounds in epoch $j$, i.e., $\calE_j\triangleq [t_j, t_{j+1}-1]$. Let $\calE_j'$ be the set of rounds in epoch $j$ except for the last round, i.e., $\calE_j'\triangleq [t_j, t_{j+1}-2]$ (might be empty if $t_{j+1}= t_j+1$). 
\end{definition}

\begin{definition}
    Let $\calI\subseteq \calE_j$ be any interval in epoch $j$. Define \begin{alignat*}{2}
        &\Nht_{\calI, 0} \triangleq \sum_{t\in\calI} \one[Y_t=0], \qquad 
        && \Nht_{\calI, 1}  \triangleq \sum_{t\in\calI} \one[Y_t=1],  \\
        &\Rht_{\calI, 0} \triangleq \frac{1}{1-p_j} \sum_{t\in\calI} \one[Y_t=0]r_t, \qquad 
        && \Rht_{\calI, 1} \triangleq \frac{1}{p_j} \sum_{t\in\calI} \one[Y_t=1]r_t, \\
        &\Rht_{\calI, 0}^\pi \triangleq \frac{1}{1-p_j} \sum_{t\in\calI} \one[Y_t=0]r_t^\pi, \qquad 
        && \Rht_{\calI, 1}^\pi \triangleq \frac{1}{p_j} \sum_{t\in\calI} \one[Y_t=1]r_t^\pi. 
    \end{alignat*}
\end{definition}
\begin{definition}
    With abuse of notations, define $C_{\calI}\triangleq \sum_{\tau\in\calI}c_\tau$ (recall that we also define $C_t=\sum_{\tau=1}^t c_\tau$). 
\end{definition}

\begin{definition}
    $\Theta_j\triangleq \frac{5}{4}p_jC_{\calE_j} + 21c_{\max}\log(T/\delta)$. 
\end{definition}
Below, we first establish some basic lemmas:
\begin{lemma}\label{lem: connecting Regret}
    Let $\calR(t,\theta)=\sqrt{\beta_1 t} + \beta_2\thres + \beta_3$ with $\beta_3\geq 10\sqrt{\beta_1\log(T/\delta)}$. Then with probability at least $1-\order(\delta)$, the following holds for all $\calI=[t_j, t]\subseteq \calE_j$ and any $\theta\geq 0$: 
    \begin{align*}
        \frac{1}{2}\calR(p_j |\calI|, \theta) \leq \calR(\Nht_{\calI,1}, \theta)\leq \frac{3}{2}\calR(p_j |\calI|, \theta).
    \end{align*}
\end{lemma}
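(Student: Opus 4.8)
The plan is to reduce the claim to a multiplicative concentration of the Bernoulli count $\Nht_{\calI,1}$ around its conditional mean $p_j|\calI|$, where $|\calI|$ denotes the number of rounds in $\calI$. The key observation is that in $\calR(t,\theta)=\sqrt{\beta_1 t}+\beta_2\theta+\beta_3$ only the first term depends on the count: the additive part $\beta_2\theta+\beta_3$ is identical in $\calR(\Nht_{\calI,1},\theta)$ and $\calR(p_j|\calI|,\theta)$, so it already supplies a factor-$\tfrac32$ (resp.\ factor-$\tfrac12$) of slack on its own. Thus the whole statement follows once we control $\sqrt{\beta_1\Nht_{\calI,1}}$ against $\sqrt{\beta_1 p_j|\calI|}$, for which a constant-multiplicative two-sided bound on $\Nht_{\calI,1}$ suffices.

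First I would establish the count concentration exactly as in \pref{lem: relate N and alpha}. The only subtlety --- and the main thing to get right --- is that $p_j$ is chosen adaptively by \twomodel. I would therefore condition on $\calF_{t_j-1}$, the history up to the start of epoch $j$; given this, $p_j$ is deterministic and $\{Y_t\}_{t\in\calE_j}$ are i.i.d.\ $\text{Bernoulli}(p_j)$ independent of the rewards. Applying \pref{lem: aux0} with $y_t=1$, $\alpha=p_j$, and $z_t=Y_t$ then yields, simultaneously over all prefix intervals $\calI=[t_j,t]\subseteq\calE_j$,
\begin{align*}
    \tfrac34 p_j|\calI|-21\log(T/\delta)\ \le\ \Nht_{\calI,1}\ \le\ \tfrac54 p_j|\calI|+21\log(T/\delta)
\end{align*}
with probability $1-\order(\delta)$; a union bound over the at most $3\log^2 T$ epochs keeps the overall failure probability at $\order(\delta)$.

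For the upper bound I would use $\sqrt{a+b}\le\sqrt a+\sqrt b$ on the right inequality to get $\sqrt{\beta_1\Nht_{\calI,1}}\le\sqrt{\tfrac54}\,\sqrt{\beta_1 p_j|\calI|}+\sqrt{21\beta_1\log(T/\delta)}$, and then invoke the hypothesis $\beta_3\ge10\sqrt{\beta_1\log(T/\delta)}$ to bound $\sqrt{21\beta_1\log(T/\delta)}\le\tfrac{\sqrt{21}}{10}\beta_3\le\tfrac12\beta_3$. Adding $\beta_2\theta+\beta_3$ and using $\sqrt{5}/2<\tfrac32$ together with $\beta_2\theta\le\tfrac32\beta_2\theta$ gives $\calR(\Nht_{\calI,1},\theta)\le\tfrac32\calR(p_j|\calI|,\theta)$.

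For the lower bound I would split on the size of $p_j|\calI|$. If $p_j|\calI|\ge84\log(T/\delta)$ then $21\log(T/\delta)\le\tfrac14 p_j|\calI|$, so $\Nht_{\calI,1}\ge\tfrac12 p_j|\calI|$ and hence $\sqrt{\beta_1\Nht_{\calI,1}}\ge\tfrac{1}{\sqrt2}\sqrt{\beta_1 p_j|\calI|}\ge\tfrac12\sqrt{\beta_1 p_j|\calI|}$; otherwise $\sqrt{\beta_1 p_j|\calI|}<\sqrt{84}\,\sqrt{\beta_1\log(T/\delta)}\le\beta_3$, so $\tfrac12\sqrt{\beta_1 p_j|\calI|}\le\tfrac12\beta_3\le\sqrt{\beta_1\Nht_{\calI,1}}+\tfrac12\beta_3$. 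Either way $\sqrt{\beta_1\Nht_{\calI,1}}+\tfrac12\beta_3\ge\tfrac12\sqrt{\beta_1 p_j|\calI|}$, which combined with the trivial $\beta_2\theta+\tfrac12\beta_3\ge\tfrac12(\beta_2\theta+\beta_3)$ yields $\calR(\Nht_{\calI,1},\theta)\ge\tfrac12\calR(p_j|\calI|,\theta)$. The only genuine obstacle is the adaptivity of $p_j$ flagged above; once that is dispatched by conditioning on $\calF_{t_j-1}$, the remaining steps are routine constant chasing.
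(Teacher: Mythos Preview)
Your proposal is correct and follows essentially the same approach as the paper: both reduce to the two-sided concentration $\tfrac34 p_j|\calI|-21\log(T/\delta)\le\Nht_{\calI,1}\le\tfrac54 p_j|\calI|+21\log(T/\delta)$ obtained as in \pref{lem: relate N and alpha}, handle the upper bound via $\sqrt{a+b}\le\sqrt a+\sqrt b$ plus the hypothesis on $\beta_3$, and handle the lower bound by a case split on whether $p_j|\calI|$ exceeds a constant multiple of $\log(T/\delta)$ (the paper uses the threshold $42$ rather than your $84$, but the arithmetic is the same in spirit). Your explicit treatment of the adaptivity of $p_j$ via conditioning on $\calF_{t_j-1}$ is a point the paper leaves implicit, but it is the right way to justify applying \pref{lem: aux0} here.
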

\begin{proof}
    With probability at least $1-\order(\delta)$, 
    \begin{align*}
        \calR(\Nht_{\calI,1}, \theta) 
        &= \sqrt{\beta_1  \Nht_{\calI,1}} + \beta_2\theta + \beta_3 \\
        &\leq \sqrt{\frac{5}{4}\beta_1  p_j |\calI| + 21\beta_1 \log(T/\delta)} + \beta_2\theta + \beta_3   \tag{by the same argument as \pref{lem: relate N and alpha}} \\
        &\leq \frac{3}{2}\sqrt{\beta_1 p_j |\calI|} + \beta_2\theta + 5\sqrt{\beta_1\log(T/\delta)} + \beta_3   \\
        &\leq \frac{3}{2}\calR(p_j |\calI|, \theta). \tag{$\beta_3\geq 10\sqrt{\beta_1\log(T/\delta)}$} \\
    \end{align*}
    If $p_j |\calI|\geq 42\log(T/\delta)$, then
    \begin{align*}
        \calR(\Nht_{\calI,1}, \theta) 
        &= \sqrt{\beta_1  \Nht_{\calI,1}} + \beta_2\theta + \beta_3 \\
        &\geq \sqrt{\frac{3}{4}\beta_1  p_j |\calI| - 21\beta_1 \log(T/\delta)} + \beta_2\theta + \beta_3 \tag{by the same argument as \pref{lem: relate N and alpha}}  \\ 
        &\geq \frac{1}{2}\sqrt{\beta_1 p_j |\calI|} + \beta_2\theta + \beta_3 \\
        &\geq \frac{1}{2}\calR(p_j |\calI|, \theta);
    \end{align*}
    otherwise, we have $p_j |\calI|< 42\log(T/\delta)$ and 
    \begin{align*}
        \calR(\Nht_{\calI,1},\theta) 
        &\geq \beta_2\theta + \beta_3 \\
        &\geq 5\sqrt{\beta_1 \log(T/\delta)} + \beta_2\theta + \frac{1}{2}\beta_3 \tag{$\beta_3\geq 10\sqrt{\beta_1\log(T/\delta)}$} \\
        &\geq 5\sqrt{\beta_1 \times \frac{1}{42}p_j |\calI|} + \beta_2\theta + \frac{1}{2}\beta_3 \\
        &\geq \frac{1}{2}\calR(p_j |\calI|, \theta).  
    \end{align*}
\end{proof}

\begin{lemma}\label{lem: concentra}
    With probability at least $1-\order(\delta)$, for all interval $\calI=[t_j, t]\subseteq \calE_j$, 
    \begin{align} 
        \left|\Rht_{\calI,1} - |\calI| \mu^{\pi'} \right| 
        &\leq  \frac{3}{p_j}\calR_\calB\left(p_j|\calI|, \Theta_j\right),
        \label{eq: inequ1} \\
        \left|\Rht_{\calI,0} -  |\calI|\mu^{\hatpi}\right| &\leq 4\sqrt{|\calI|\log(1/\delta)} + 4\log(T/\delta) + C_\calI \leq \frac{1}{p_j}\calR_\calB(p_j |\calI|, \Theta_j).  \label{eq: inequ2}
    \end{align}
    where $\pi'=\argmax_{\pi'\in\Pi\backslash\{\hatpi\}}\mu^{\pi'}$. 
    
\end{lemma}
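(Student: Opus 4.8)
The plan is to bound each importance-weighted estimate by splitting it into a martingale concentration term, a corruption-bias term, and (for the $\calBpi$ branch) the regret of the base routine. Before anything else I would record two structural facts. First, since the updates in \pref{alg: corral} force $M_j\ge\beta_4/\hatDelta_j^2$, we get $p_j=\frac{\beta_4}{2M_j\hatDelta_j^2}\le\frac12$, hence $\frac{1}{1-p_j}\le2$. Second, because the adversary fixes the round-$t$ corruption from $\calF_{t-1}$ while $Y_t\sim\mathrm{Bernoulli}(p_j)$ is drawn fresh, $Y_t$ is independent of $(r_t^\pi,\mu_t^\pi)$ given $\calF_{t-1}$, so $\E[\frac{\one[Y_t=1]}{p_j}r_t^\pi\mid\calF_{t-1}]=\mu_t^\pi(x_t)$ and analogously for the $Y_t=0$ branch. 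These two facts, together with \pref{assum: context free} (which gives $|\mu_t^\pi(x_t)-\mu^\pi|\le c_t$), are what let us convert estimates into statements about the true means $\mu^{\hatpi},\mu^{\pi'}$.

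For (eq: inequ2), the easier half, I would apply \pref{lem: freedman} to $X_t=\frac{\one[Y_t=0]}{1-p_j}r_t^{\hatpi}-\mu_t^{\hatpi}(x_t)$ over each prefix $\calI=[t_j,t]$, using $|X_t|\le\frac{1}{1-p_j}\le2$ and conditional variance $\E[X_t^2\mid\calF_{t-1}]\le\frac{1}{1-p_j}\le2$, which controls $|\Rht_{\calI,0}-\sum_{t\in\calI}\mu_t^{\hatpi}(x_t)|$ by $\order(\sqrt{|\calI|\log(1/\delta)}+\log(T/\delta))$; the corruption bias is $|\sum_{t\in\calI}\mu_t^{\hatpi}(x_t)-|\calI|\mu^{\hatpi}|\le\sum_{t\in\calI}c_t=C_\calI$. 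Summing gives the first inequality. The second inequality is a purely arithmetic comparison to $\frac{1}{p_j}\calR_\calB(p_j|\calI|,\Theta_j)=\sqrt{\beta_1|\calI|/p_j}+\frac{\beta_2\Theta_j+\beta_3}{p_j}$: the square-root term dominates $4\sqrt{|\calI|\log(1/\delta)}$ via $\beta_1\ge16\log(T/\delta)$ and $p_j\le1$, the term $\beta_3/p_j$ dominates $4\log(T/\delta)$ via $\beta_3\ge10\sqrt{\beta_1\log(T/\delta)}$, and $C_\calI$ is absorbed since $\frac{\beta_2\Theta_j}{p_j}\ge\frac{\Theta_j}{p_j}\ge\frac54C_{\calE_j}\ge C_\calI$.

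For (eq: inequ1) I would bound $\Rht_{\calI,1}-|\calI|\mu^{\pi'}$ from both sides. For the upper direction, writing $r_t=r_t^{\pi_t}$ for the ($\calF_{t-1}$-measurable) policy $\pi_t\in\Pi\setminus\{\hatpi\}$ that $\calBpi$ would play, a Freedman bound on $\frac{\one[Y_t=1]}{p_j}r_t^{\pi_t}-\mu_t^{\pi_t}(x_t)$ (variance $\le\frac{1}{p_j}$, range $\le\frac{2}{p_j}$) combined with $\mu_t^{\pi_t}(x_t)\le\mu^{\pi_t}+c_t\le\mu^{\pi'}+c_t$ yields $\Rht_{\calI,1}\le|\calI|\mu^{\pi'}+C_\calI+\order(\sqrt{|\calI|\log(T/\delta)/p_j}+\log(T/\delta)/p_j)$. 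For the lower direction I would instead compare to the counterfactual $\Rht_{\calI,1}^{\pi'}$, which concentrates around $\sum_{t\in\calI}\mu_t^{\pi'}(x_t)\ge|\calI|\mu^{\pi'}-C_\calI$, and then subtract the regret of $\calBpi$ against $\pi'$: since $\calBpi$ is reinitialized at $t_j$, the rounds $\{t\in\calI:Y_t=1\}$ form its local clock, so \pref{assum: corruption robust sub routine} gives $\sum_{t\in\calI}\one[Y_t=1](r_t^{\pi'}-r_t)\le\calR_\calB(\Nht_{\calI,1},\sum_{t\in\calI}\one[Y_t=1]c_t)$. The key quantitative step is that the corruption fed to $\calBpi$ is at most $\Theta_j$: \pref{lem: aux0} with $y_t=c_t$, $z_t=\one[Y_t=1]$, $\alpha=p_j$ gives $\sum_{t\in\calI}\one[Y_t=1]c_t\le\frac54p_jC_\calI+21c_{\max}\log(T/\delta)\le\Theta_j$ (using $C_\calI\le C_{\calE_j}$), while the same lemma bounds $\Nht_{\calI,1}\le\frac54p_j|\calI|+21\log(T/\delta)$ (as in \pref{lem: relate N and alpha}); \pref{lem: connecting Regret} then converts $\calR_\calB(\Nht_{\calI,1},\Theta_j)$ into $\frac32\calR_\calB(p_j|\calI|,\Theta_j)$. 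Collecting both directions and absorbing the concentration and corruption terms exactly as in the arithmetic step above produces the factor $\frac{3}{p_j}$. All high-probability statements are made uniform in $\calI$ by a union bound over intervals and over the at most $3\log^2T$ epochs, inflating only the logarithmic factors.

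The main obstacle is the lower-bound direction of (eq: inequ1): one must interpret the adaptive subroutine $\calBpi$ as running on the random subsequence $\{t:Y_t=1\}$ within a single epoch, argue that its guarantee from \pref{assum: corruption robust sub routine} (stated for its own clock) transfers to this subsequence, and show that the corruption it actually experiences is $\le\Theta_j$ rather than the full $C_\calI$ — this is precisely where the sub-sampling gain (the $p_j$ factor) enters and where \pref{lem: aux0} is essential. A secondary subtlety is that the two directions of the absolute value genuinely require different arguments — the regret guarantee of $\calBpi$ for the lower bound versus the crude ``every policy $\calBpi$ plays has mean $\le\mu^{\pi'}$'' bound for the upper bound — so the estimate cannot be obtained from a single symmetric concentration step.
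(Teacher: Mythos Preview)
Your proposal is correct and follows essentially the same approach as the paper: Freedman plus the crude bound $\mu^{\pi_t}\le\mu^{\pi'}$ for the upper tail of \pref{eq: inequ1}, the regret guarantee of $\calBpi$ (with subsampled corruption bounded by $\Theta_j$ via \pref{lem: aux0} and $\Nht_{\calI,1}$ converted via \pref{lem: connecting Regret}) for the lower tail, and Freedman with $1-p_j\ge\tfrac12$ plus the arithmetic check on $\beta_1,\beta_2,\beta_3$ for \pref{eq: inequ2}. Your treatment of the upper tail is in fact slightly more careful than the paper's, which writes $\mu_t^{\pi'}$ where strictly one should first pass through $\mu_t^{\pi_t}$ and then use $\mu^{\pi_t}\le\mu^{\pi'}$ plus a corruption term.
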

\begin{proof}
    By the same argument as \pref{lem: aux1}, the corruption experienced by $\calB_{\hatpi}$ in $\calI$ is upper bounded by $\Theta_j$.  By the regret guarantee of $\calB_{\hatpi}$, we have with probability at least $1-\order(\delta)$, 
    \begin{align*}
        \Rht_{\calI,1} 
        &\geq \Rht_{\calI,1}^{\pi'} - \frac{1}{p_j}\calR_\calB\left(\Nht_{\calI,1}, \Theta_j\right) \\
        &\geq \sum_{t\in\calI} \mu^{\pi'}_t - 2\sqrt{\frac{2|\calI|\log(T/\delta)}{p_j}} - \frac{2\log(T/\delta)}{p_j} - \frac{2}{p_j} \calR_\calB\left(p_j|\calI|, \Theta_j\right)  \tag{by \pref{lem: beygel freedman} with a union bound over $|\calI|$ and \pref{lem: connecting Regret}} \\
        &\geq |\calI| \mu^{\pi'} - C_{\calI} - 2\sqrt{\frac{2|\calI|\log(T/\delta)}{p_j}} - \frac{2\log(T/\delta)}{p_j} - \frac{2}{p_j} \calR_\calB\left(p_j|\calI|, \Theta_j\right) \\
        &\geq |\calI|\mu^{\pi'} - \frac{3}{p_j}\calR_\calB\left(p_j|\calI|, \Theta_j\right).  \tag{by the assumption $\beta_1\geq 8\log(T/\delta), \beta_2 \geq 1, \beta_3\geq 5\log(T/\delta)$}
    \end{align*}
    Again by \pref{lem: beygel freedman} with a union bound over $|\calI|$, we also have with probability $1-\order(\delta)$, 
    \begin{align*}
        \Rht_{\calI,1}  
        &\leq \sum_{t\in\calI} \mu_t^{\pi'} + 2\sqrt{\frac{2|\calI|\log(T/\delta)}{p_j}} + \frac{2\log(T/\delta)}{p_j} \\
        &\leq |\calI| \mu^{\pi'} + 2\sqrt{\frac{2|\calI|\log(T/\delta)}{p_j}} + \frac{2\log(T/\delta)}{p_j} + C_\calI\\
        &\leq |\calI| \mu^{\pi'} + \frac{1}{p_j}\calR_\calB\left(p_j|\calI|, \Theta_j\right).  
    \end{align*}
    Combining them, we get \pref{eq: inequ1}. The first inequality in \pref{eq: inequ2} can be obtained by \pref{lem: beygel freedman} with the fact that $1-p_j\geq \frac{1}{2}$ and $\left|\sum_{t\in\calI}\big(\mu_t^{\pi'} - \mu^{\pi'}\big)\right|\leq C_\calI$; the second inequality in \pref{eq: inequ2} can be obtained using the assumptions on $\beta_1, \beta_2, \beta_3$. 
\end{proof}

\begin{lemma}\label{lem: delta smaller than 1}
    For all $j$, $\hatDelta_j\leq 1$. 
\end{lemma}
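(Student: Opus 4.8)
The plan is to prove the claim by induction on $j$, tracking the only two ways in which $\hatDelta_j$ can change between consecutive epochs. The base case is immediate: the initialization in \twomodel sets $\hatDelta_1 = \min\{\sqrt{\beta_4/L},\,1\} \le 1$. For the inductive step, observe that $\hatDelta_{j+1}$ is assigned in exactly two places inside the while loop: it is set to $\frac{1}{1.25}\hatDelta_j$ when the test \pref{eq: checkk 1} fires, and to $1.25\hatDelta_j$ when \pref{eq: checkk 2} fires. The first (decreasing) update trivially preserves the bound, since $\frac{1}{1.25}\hatDelta_j < \hatDelta_j \le 1$ by the inductive hypothesis. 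Hence the only case requiring work is the increasing update triggered by \pref{eq: checkk 2}.

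The heart of the argument is to show that \pref{eq: checkk 2} can only fire when $\hatDelta_j \le \tfrac{2}{3}$, so that $\hatDelta_{j+1} = 1.25\,\hatDelta_j \le \tfrac{5}{6} < 1$. First I would establish two preliminary facts. (i) The sampling probability satisfies $p_j \le \tfrac{1}{2}$: unrolling \pref{eq: doubling epoch length} gives $M_j \ge \beta_4/\hatDelta_j^2$ for every $j$ (with equality at $j=1$ from the initialization $M_1 = \beta_4/\hatDelta_1^2$, and $M_{j+1} = 2(t-t_j) + \beta_4/\hatDelta_{j+1}^2 \ge \beta_4/\hatDelta_{j+1}^2$ otherwise), whence $p_j = \frac{\beta_4}{2 M_j \hatDelta_j^2} \le \frac{1}{2}$. (ii) The conditions are evaluated only while $t \le t_j + M_j$, so the window has length $t - t_j \le M_j$.

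With these in hand, I would bound the importance-weighted estimates using only the boundedness $r_\tau \in [0,1]$. Since $\Rht_0 = \frac{1}{1-p_j}\sum_{\tau=t_j}^{t-1} r_\tau \one[Y_\tau=0]$ is a sum of at most $t-t_j \le M_j$ nonnegative terms, each at most $1$, and $\frac{1}{1-p_j} \le 2$ by (i), we get $\Rht_0 \le 2 M_j$; likewise $\Rht_1 \ge 0$. Now if \pref{eq: checkk 2} holds, then $\Rht_0 \ge \Rht_1 + 3 M_j \hatDelta_j + 8\sqrt{\beta_1 L} \ge 3 M_j \hatDelta_j$, and combining with $\Rht_0 \le 2 M_j$ yields $3 M_j \hatDelta_j \le 2 M_j$, i.e.\ $\hatDelta_j \le \tfrac{2}{3}$, as desired. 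This closes the increasing case and completes the induction.

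There is no substantive obstacle here: the argument is purely deterministic, invoking no concentration, and follows from the algorithm's bookkeeping. The one point requiring care is deriving $p_j \le \tfrac{1}{2}$ from the recursive definition of $M_j$, since this is precisely what caps the importance-weighted reward $\Rht_0$ at $2 M_j$ and thereby limits the gap that \pref{eq: checkk 2} can ever detect.
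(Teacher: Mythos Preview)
Your proof is correct and follows essentially the same approach as the paper's: both hinge on showing $p_j\le\tfrac12$ via $M_j\ge\beta_4/\hatDelta_j^2$, bounding $\Rht_0\le 2M_j$ from $r_\tau\in[0,1]$ and at most $M_j$ terms, and then comparing with the $3M_j\hatDelta_j$ term on the right of \pref{eq: checkk 2}. The paper frames this as a proof by contradiction (take the first $j$ with $\hatDelta_{j+1}>1$, whence $\hatDelta_j\ge 1/1.25$ and $3M_j\hatDelta_j>2M_j$), whereas you carry out a direct induction showing $\hatDelta_j\le\tfrac23$ whenever \pref{eq: checkk 2} fires; this is a stylistic rather than substantive difference.
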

\begin{proof}
    When $j=1$, $\hatDelta_1\leq 1$ by definition. Assume that $j+1\geq 2$ is the first $j$ such that $\hatDelta_{j+1} > 1$. By the way we update $\hatDelta_j$, it must be that $\hatDelta_{j}\geq \frac{1}{1.25}$ and that at the end of epoch $j$, \pref{eq: checkk 2} is triggered. 
    
    However, notice that in \pref{eq: checkk 2}, the left-hand side $\Rht_0 = \frac{1}{1-p_j}\sum_{\tau=t_{j}}^t r_\tau\one[Y_\tau=0]\leq \frac{1}{1-p_j}M_{j}\leq 2M_j$ since $p_j=\frac{\beta_4}{2M_j\hatDelta_j^2}\leq \frac{1}{2}$ by \pref{eq: doubling epoch length}, but the right-hand side of \pref{eq: checkk 2} involves a term $3M_j\hatDelta_j\geq 3M_j\times \frac{1}{1.25} > 2M_j$. Therefore, \pref{eq: checkk 2} is impossible to be triggered at this $j$, contradicting our assumption.  
\end{proof}

\begin{lemma}\label{lem: magic lemma}
    With probability at least $1-\order(\delta)$, for all interval $\calI=[t_j, t]\subseteq \calE_j$, 
    \begin{align*}
       \frac{1}{p_j}\calR_{\calB}\left(\Nht_{\calI, 1}, \Theta_j\right)\leq \frac{2}{p_j}\calR_{\calB}\left(p_j|\calI|, \Theta_j+ \frac{p_j\sqrt{\beta_1 L}}{\beta_2}\right) \leq 0.02M_j\hatDelta_j + 2.5\beta_2 C_{\calE_j}\log(T/\delta) +  2\sqrt{\beta_1 L}. 
    \end{align*}
\end{lemma}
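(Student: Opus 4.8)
The plan is to prove the two inequalities separately, both exploiting the explicit form $\calR_{\calB}(t,\theta)=\sqrt{\beta_1 t}+\beta_2\theta+\beta_3$ together with the definitions $p_j=\frac{\beta_4}{2M_j\hatDelta_j^2}$, $\Theta_j=\frac{5}{4}p_jC_{\calE_j}+21c_{\max}\log(T/\delta)$, and $\beta_4=10^4(2\beta_1+42\beta_2 c_{\max}\log(T/\delta)+2\beta_3)$. The only randomness enters through the first inequality.

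For the first inequality I would invoke \pref{lem: connecting Regret} with $\theta=\Theta_j$, which on its high-probability event gives $\calR_{\calB}(\Nht_{\calI,1},\Theta_j)\le\frac{3}{2}\calR_{\calB}(p_j|\calI|,\Theta_j)$ simultaneously for all $\calI=[t_j,t]\subseteq\calE_j$. Since $\frac{3}{2}\le 2$ and $\calR_{\calB}$ is nondecreasing in its second argument, I enlarge the corruption level from $\Theta_j$ to $\Theta_j+\frac{p_j\sqrt{\beta_1 L}}{\beta_2}$ for free and divide through by $p_j$. This yields the first inequality and inherits the probability $1-\order(\delta)$ directly from \pref{lem: connecting Regret}; the extra slack $\frac{p_j\sqrt{\beta_1 L}}{\beta_2}$ is inserted only because the enlarged expression is what the downstream detection analysis consumes.

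For the second inequality I would expand $\calR_{\calB}(p_j|\calI|,\Theta_j+\frac{p_j\sqrt{\beta_1 L}}{\beta_2})=\sqrt{\beta_1 p_j|\calI|}+\beta_2\Theta_j+p_j\sqrt{\beta_1 L}+\beta_3$ and multiply by $\frac{2}{p_j}$ to obtain four terms. The term $2\sqrt{\beta_1 L}$ already matches the target. Using $|\calI|\le M_j$ (because $\calI\subseteq\calE_j$ and epoch $j$ lasts at most $M_j$ rounds) together with $\frac{1}{p_j}=\frac{2M_j\hatDelta_j^2}{\beta_4}$, the first term becomes $2M_j\hatDelta_j\sqrt{2\beta_1/\beta_4}$. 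Expanding $\Theta_j$ splits $\frac{2\beta_2\Theta_j}{p_j}$ into the corruption piece $\frac{5}{2}\beta_2 C_{\calE_j}\le 2.5\beta_2 C_{\calE_j}\log(T/\delta)$ — which is exactly the second target term, using $\log(T/\delta)\ge 1$ — and a residual $\frac{42\beta_2 c_{\max}\log(T/\delta)}{p_j}$. Combining this residual with the remaining term $\frac{2\beta_3}{p_j}$ and again substituting $\frac{1}{p_j}=\frac{2M_j\hatDelta_j^2}{\beta_4}$ gives $(42\beta_2 c_{\max}\log(T/\delta)+2\beta_3)\cdot\frac{2M_j\hatDelta_j^2}{\beta_4}$.

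The remaining step, and the part I expect to be the main obstacle, is to show that the first term plus this residual is at most $0.02 M_j\hatDelta_j$. This is precisely what the constant $10^4$ and the grouping in $\beta_4$ are engineered for: they deliver $\frac{2\beta_1}{\beta_4}\le 10^{-4}$ and $\frac{42\beta_2 c_{\max}\log(T/\delta)+2\beta_3}{\beta_4}\le 10^{-4}$. After applying $\hatDelta_j\le 1$ (\pref{lem: delta smaller than 1}) to replace one factor $\hatDelta_j^2$ by $\hatDelta_j$, the combined $M_j\hatDelta_j$-coefficient is $2\sqrt{2\beta_1/\beta_4}+2(42\beta_2 c_{\max}\log(T/\delta)+2\beta_3)/\beta_4$. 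A careless addition yields $2\cdot 10^{-2}+2\cdot 10^{-4}=0.0202$, which is just over the claimed $0.02$; the clean bound requires keeping both denominators equal to $\beta_4$ and treating the coefficient as a function of $u=(42\beta_2 c_{\max}\log(T/\delta)+2\beta_3)/(2\beta_1)\ge 0$, namely $\frac{0.02}{\sqrt{1+u}}+\frac{0.0002\,u}{1+u}$, which is decreasing in $u$ and hence maximized at $u=0$ with value exactly $0.02$. Every other step is a direct substitution or an appeal to \pref{lem: connecting Regret}, so this constant accounting is the only subtle point.
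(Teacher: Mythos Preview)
Your proposal is correct and follows essentially the same route as the paper's proof: invoke \pref{lem: connecting Regret} for the first inequality, then expand $\calR_{\calB}$, substitute $\frac{1}{p_j}=\frac{2M_j\hatDelta_j^2}{\beta_4}$, split off the $\frac{5}{2}\beta_2 C_{\calE_j}$ and $2\sqrt{\beta_1 L}$ terms, and absorb the rest into $0.02M_j\hatDelta_j$ via the definition of $\beta_4$ and $\hatDelta_j\le 1$. In fact your constant accounting is more careful than the paper's: the paper simply writes ``by the definition of $\beta_4$'' for the last step, whereas you notice that the crude estimate $2\sqrt{2\beta_1/\beta_4}+2(42\beta_2 c_{\max}\log(T/\delta)+2\beta_3)/\beta_4\le 0.02+0.0002$ overshoots, and you supply the monotonicity-in-$u$ argument that pins the maximum at exactly $0.02$.
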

\begin{proof}
    \begin{align*}
        &\frac{1}{p_j}\calR_{\calB}\left(\Nht_{\calI, 1}, \Theta_j\right)\\
        &\leq  \frac{2}{p_j}\calR_\calB\left(p_j|\calI|, \Theta_j + \frac{p_j\sqrt{\beta_1 L}}{\beta_2}\right) \tag{by \pref{lem: connecting Regret}}\\
        &= 2\left(\sqrt{\frac{\beta_1|\calI|}{p_j}} + \frac{\beta_2\Theta_j}{p_j} + \frac{\beta_3}{p_j}\right) + 2\sqrt{\beta_1 L} \\
        &\leq 2\left( \sqrt{\frac{2\beta_1}{\beta_4}|\calI|M_j}\hatDelta_j + \frac{5}{4}\beta_2C_{\calE_j} + \frac{21\beta_2 c_{\max}\log(T/\delta)+\beta_3}{\beta_4}\times 2 M_j\hatDelta_j^2 \right) + 2\sqrt{\beta_1 L} \\
        &\leq 0.02M_j\hatDelta_j + 2.5 \beta_2 C_{\calE_j} +  2\sqrt{\beta_1 L}.   \tag{by the definition of $\beta_4$ and that $\hatDelta_j\leq 1$ by \pref{lem: delta smaller than 1}}
    \end{align*}

\end{proof}

\begin{proof}\textbf{of \pref{lem: corral regret 1}. }
    Let $T_0$ be the round at which \twomodel terminates. In the following proof, we assume that the high-probability events defined in previous lemmas hold. 
    \paragraph{Case 1. $\hatpi\neq \pistar$. }
    \begin{align*}
        &\sum_{t\in\calE_j'} (r_t^{\pi^\star} - r_t) \\
        &= (1-p_j)\left(\Rht^{\pistar}_{\calE_j',0} - \Rht^{\hatpi}_{\calE_j',0}\right) + p_j\left(\Rht^{\pistar}_{\calE_j',1} - \Rht_{\calE_j',1}\right) \\
        &\leq (1-p_j)\left(\Rht^{\pistar}_{\calE_j',1} - \Rht_{\calE_j',0}\right) + (1-p_j)\left(2\sqrt{\frac{2|\calE_j'|\log(T/\delta)}{p_j}} + \frac{2\log(T/\delta)}{p_j}\right) + p_j\left(\Rht^{\pistar}_{\calE_j',1} - \Rht_{\calE_j',1}\right)   \tag{when $Y_t=0$ we execute $\hatpi$, and by \pref{lem: beygel freedman}} \\
        &\leq (1-p_j)\left(\Rht_{\calE_j',1} - \Rht_{\calE_j',0}\right) + \frac{1}{p_j}\calR_\calB\left(p_j|\calE_j'|, 0\right) + \left(\Rht^{\pistar}_{\calE_j',1} - \Rht_{\calE_j',1}\right)  \tag{by the assumptions on $\beta_1, \beta_3$}  \\
        &\leq \frac{5}{p_j}\calR_\calB\left(p_j|\calE_j'|,  \frac{p_j\sqrt{\beta_1 L}}{\beta_2}\right) - \frac{1}{2}|\calE_j'|\hatDelta_j + \frac{1}{p_j}\calR_\calB\left(p_j|\calE_j'|, 0\right) + \frac{1}{p_j}\calR_\calB\left(\Nht_{\calE_j', 1}, \Theta_j\right)    \tag{by \pref{eq: checkk 1}, and \pref{assum: corruption robust sub routine} with the assumption that $\pi^\star\in \Pi\backslash \{\hatpi\}$}\\
        &\leq \frac{8}{p_j}\calR_\calB\left(p_j|\calE_j'|, \Theta_j + \frac{p_j\sqrt{\beta_1 L}}{\beta_2}\right) - \frac{1}{2}|\calE_j|\hatDelta_j   \tag{by \pref{lem: connecting Regret}} \\
        &\leq 0.16M_j\hatDelta_j + 20\beta_2 C_{\calE_j} + 16\sqrt{\beta_1 L} - 0.5|\calE_j|\hatDelta_j.   \tag{by \pref{lem: magic lemma}}
    \end{align*}
    For $j\geq 2$, the last expression is further upper bounded by
    \begin{align*}
        &\left(0.32 |\calE_{j-1}| + \frac{\beta_4}{\hatDelta_j^2}\right)\hatDelta_j + 20\beta_2 C_{\calE_j} + 16\sqrt{\beta_1 L} - 0.5|\calE_j|\hatDelta_j   \tag{by the definition of $M_j$} \\
        &\leq \frac{\beta_4}{\hatDelta_j} + 20\beta_2 C_{\calE_j} + 16\sqrt{\beta_1 L} + 0.4|\calE_{j-1}|\hatDelta_{j-1} - 0.5|\calE_j|\hatDelta_j    \tag{$\hatDelta_{j}\leq 1.25\hatDelta_{j-1}$}\\
        &\leq \otil\left(\sqrt{\beta_4 L} + \beta_2 C_{\calE_j}+\beta_4\right) + 0.4|\calE_{j-1}|\hatDelta_{j-1} - 0.5|\calE_j|\hatDelta_j; \tag{$\hatDelta_j\geq \hatDelta_1=\min\Big\{\sqrt{\frac{\beta_4}{L}}, 1\Big\}$}
    \end{align*} 
    for $j=1$, it is upper bounded by
    \begin{align*}
        \frac{0.16\beta_4}{\hatDelta_1} + 20\beta_2 C_{\calE_1} + 16\sqrt{\beta_1 L} - 0.5|\calE_1|\hatDelta_1  = \otil\left(\sqrt{\beta_4 L} + \beta_2 C_{\calE_1} + \beta_4\right) - 0.5|\calE_1|\hatDelta_1.  
    \end{align*}
    
    Summing up the above bound over $j$ (and noticing that the number of epochs is upper bounded by $3\log^2 T$), we see that 
    \begin{align*}
        &\sum_{t=1}^{T_0} (r_t^{\pi^\star} - r_t) \\
        &\leq \otil\left(\sqrt{\beta_4 L} + \beta_2 C_{\calE_1}+ \beta_4\right) - 0.5|\calE_1|\hatDelta_1 + \sum_{j=2}^{3\log^2 T} \left(\otil\left(\sqrt{\beta_4 L} + \beta_2 C_{\calE_j}+ \beta_4\right) + 0.4|\calE_{j-1}|\hatDelta_{j-1} - 0.5|\calE_j|\hatDelta_j\right)  \\
        & = \otil\left(\sqrt{\beta_4 L} + \beta_2 C_{T_0}+ \beta_4\right). 
    \end{align*}

    \paragraph{Case 2. $\hatpi= \pistar$. }
    \begin{align*}
        \sum_{t\in\calE_j'}\left( r^{\pistar}_t- r_t \right) 
        &= p_j\left(\Rht_{t,1}^{\pistar} - \Rht_{t,1}\right) \\
        &\leq  p_j\left(\Rht_{t,0}^{\pistar} - \Rht_{t,1}\right) +  \otil\left( \sqrt{p_j|\calE_j'|} + 1\right) \tag{\pref{lem: beygel freedman}} \\
        &\leq p_j\left(\Rht_{t,0} - \Rht_{t,1}\right) +  \otil\left( \sqrt{p_jM_j} + 1\right)  \tag{since $\hatpi=\pistar$}\\
        &\leq \otil\left( p_j M_j\hatDelta_j + \sqrt{\beta_1 L} + \sqrt{p_j M_j} \right)   \tag{by \pref{eq: checkk 2}}\\
        &\leq \otil\left(\frac{\beta_4}{\hatDelta_j} + \sqrt{\beta_1 L}\right)    \tag{by the definition of $p_j$}\\
        &= \otil\left(\sqrt{\beta_4 L} + \beta_4 \right).  \tag{$\hatDelta_j\geq \hatDelta_1=\min\Big\{\sqrt{\frac{\beta_4}{L}}, 1\Big\}$}
    \end{align*}
    Similarly, summing over epochs and using the fact that the number of epochs is upper bounded by $\order(\log^2 T)$ we get the desired bound. 
    
    

\end{proof}

\begin{proof}\textbf{of \pref{lem: no end}. }
     The condition in the lemma implies $\Delta\geq 16\sqrt{\frac{\beta_4}{L}} = 16\hatDelta_1$. 
     Below we prove by induction that $\Delta\geq \hatDelta_j$ for all $j$. This holds for $j=1$. Notice that $\hatDelta_j$ only increases when the second break condition \pref{eq: checkk 2} holds. 
     If \pref{eq: checkk 2} holds, we have 
    \begin{align*}
         |\calE_j|\Delta 
         &= |\calE_j|(\mu^{\pi^*} -\mu^{\pi'})\\ 
         &\geq \Rht_{\calE_j, 0} - \Rht_{\calE_j, 1} - \frac{4}{p_j}\calR_\calB\left(\Nht_{\calE_j,1}, \Theta_j\right) \tag{\pref{lem: concentra}} \\
         &\geq 3M_j\hatDelta_j + 9\sqrt{\beta_1 L}  - 4\left(0.02M_j\hatDelta_j + 2.5\beta_2 C_{\calE_j} +  2\sqrt{\beta_1 L}\right).  \tag{by \pref{eq: checkk 2} and \pref{lem: magic lemma}} \\
         &\geq 2.5 M_j\hatDelta_j. \tag{by the condition specified in the lemma, we have $\sqrt{\beta_1 L}\geq 10\beta_2 C_{\calE_j}$} 
    \end{align*} 
    Because $|\calE_j|\leq M_j$, we have $\hatDelta_j\leq \frac{1}{2.5}\Delta$. Therefore, after the update, $\hatDelta_{j+1}=1.25\hatDelta_j\leq \Delta$ still holds. 

    Next, we show that the first break condition \pref{eq: checkk 1} will not hold with high probability: at any time $t$ within epoch $j$,  
    \begin{align*}
        &\Rht_{[t_j, t-1], 0} - \Rht_{[t_j, t-1],1} - \frac{1}{2}(t-t_j)\hatDelta_j\\
        &= \left(\Rht_{[t_j,t-1],0} - (t-t_j) \mu^{\pi^*}\right) +  (t-t_j)\left(\mu^{\pi^*} - \mu^{\pi'} - \frac{1}{2}\hatDelta_j\right) + \left((t-t_j) \mu^{\pi'} - \Rht_{[t_j, t],1}\right)  \\
        &\geq  -  \frac{4}{p_j}\calR_\calB\left(p_j(t-t_j), p_j C_{\calE_j}\right) + (t-t_j)\left(\Delta - \frac{1}{2} \hatDelta_j\right) \tag{\pref{lem: concentra}}\\
        &\geq  -  \frac{4}{p_j}\calR_\calB\left(p_j(t-t_j), \frac{p_j\sqrt{\beta_1 L}}{\beta_2} \right). \tag{$\beta_2 C_{\calE_j}\leq \sqrt{\beta_1 L}$ as assummed in the lemma; $\Delta \geq \hatDelta_j$ for all $j$ as we just showed above} 
    \end{align*}
    Therefore, the first break condition will not be triggered. Overall, with high probability, $\hatDelta_j$ is non-decreasing with $j$.  
    
    Under this high-probability event, since $\hatDelta_j$ never decreases, the number of times $\hatDelta_j$ increases is upper bounded by $\log_{1.25}\frac{\Delta}{\hatDelta_1}\leq \log_{1.25}\sqrt{\frac{L}{\beta_4}}\leq \frac{1}{2}\log_{1.25} T\leq 2\log_2 T$. Furthermore, between two times $\hatDelta_j$ increases, since \pref{eq: checkk 1} and \pref{eq: checkk 2} are not triggered, the epoch length is at least two times the previous one (by \pref{eq: doubling epoch length}). Therefore, between two times $\hatDelta_j$ increases, the number of epochs is upper bounded by $\log_2 T$.  Overall, the total number of epochs is upper bounded by $2\log_2 T\times \log_2 T=2\log^2 T$. Since we allow the maximum number of epochs to be $3\log^2 T$ in \pref{alg: corral}, it will not end before the number of rounds reaches $T$. 
\end{proof}

\begin{proof}\textbf{of \pref{thm: gap bound thm}. }
Let $L^\star$ be the smallest $L$ such that 
\begin{align*}
    32\left(\frac{\beta_4}{\Delta} + \beta_2 C \right)\leq \sqrt{\beta_4 L}.  
\end{align*}
In the for-loops where the learner uses $L\leq L^\star$, by \pref{lem: form 1 regret with gap} and \pref{lem: corral regret 1}, the sum of regret in \phaseone and \phasetwo is upper bounded by 
\begin{align*}
    \otil\left(\sqrt{\beta_4 L} + \beta_2 C + \beta_4\right) = \otil\left(\sqrt{\beta_4 L^\star} + \beta_2 C + \beta_4\right) = \otil\left(\frac{\beta_4}{\Delta} + \beta_2 C \right). 
\end{align*}
In the for-loop where the learner first time uses $L > L^\star$, we have 
\begin{align*}
    \beta_2 2^k \geq \sqrt{\beta_4(L-1)}\geq \sqrt{\beta_4 L^\star} \geq 32\left(\frac{\beta_4}{\Delta} + \beta_2 C  \right) 
\end{align*}
where the first inequality is by the choice of $L$ in \algname. By \pref{lem: execute most times}, with probability at least $1-\order(\delta)$,  $\hatpi=\pistar$. Further by \pref{lem: no end}, with high probability, \phasetwo will continue until the total number of rounds reaches $T$. In this case, using \pref{lem: form 1 regret with gap} and \pref{lem: corral regret 1}, we can still bound the regret in the remaining steps by
\begin{align*}
    \otil\left(\sqrt{\beta_4 L} + \beta_2 C + \beta_4\right) = \otil\left(\sqrt{\beta_4 L^\star} + \beta_2 C + \beta_4\right) = \otil\left(\frac{\beta_4}{\Delta} + \beta_2 C\right). 
\end{align*}

By the discussions above, we also see that with high probability, in all for-loops, the learner uses $L < 2L^\star$ (because the algorithm will be locked in Phase~2 when the first time $L>L^\star$ happens). Therefore, by the condition of starting \phasethree, \phasethree can only be reached when $L^\star=\Omega(T)$. In this case, the regret incurred in \phasethree, by \pref{thm: form 1 regret}, is upper bounded by 
\begin{align*}
    \otil\left(\sqrt{\beta_1 T} + \beta_2 C + \beta_3\right) = \otil\left( \sqrt{\beta_1 L^\star} + \beta_2 C + \beta_3 \right) = \otil\left(\frac{\beta_4}{\Delta} + \beta_2 C + \beta_4\right) = \otil\left(\frac{\beta_4}{\Delta} + \beta_2 C\right). 
\end{align*}
Overall, after summing the regret in all phases and using the fact that the for-loop only repeat $\otil(1)$ times, we see that the total regret can be upper bounded by $\otil\left(\frac{\beta_4}{\Delta} + \beta_2 C\right)$. To show that the algorithm also simultaneously guarantees a bound of $\otil\left(\sqrt{\beta_4 T} + \beta_2 C + \beta_4\right)$, simply bound the regret in all phases by \sloppy$\otil\left(\sqrt{\beta_4 L} + \beta_2 C + \beta_4\right)=\otil\left(\sqrt{\beta_4 T} + \beta_2 C + \beta_4\right)$.

\end{proof}

\section{The Implementation of the Leave-one-policy-out MDP}\label{app: leave one out}
We consider a tabular MDP $\calM=(\calS, \calA, r, p, H)$ with a fixed initial state $s_1\in\calS$. Let $\Pi_\calM$ denote the set of all deterministic policies in $\calM$. 
Now, given a deterministic policy $\hatpi\in\Pi_\calM$, our goal is to construct another MDP $\calM'$, such that the policy set of $\calM'$ includes all policies in $\calM$ except for $\hatpi$, and that for any $\pi\in \Pi_\calM\backslash\{\hatpi\}$, the expected reward in $\calM$ and $\calM'$ is the same.  

MDP $\calM'$ has state space $\{s_0\} \cup \mathcal S \times \mathcal S$ and horizon $H+1$. In $\calM'$, the agent starts in the initial state $s_0$ and takes one of $S$ actions which makes it transition to one of $S$ copies of the original MDP $\calM$. The $s$-th copy of $\calM$ is denoted by $\calM_s$ and is identical to $\calM$ except that the agent is not allowed to take the actions prescribed by $\widehat \pi$ in  state $s$.

Note that we can obtain samples for $\calM'$ by playing in $\calM$, and that 
\begin{align}
    \max_{\pi\in \Pi_{\calM'} }\mu^\pi_{\calM'} = \max_{\pi\in \Pi_\calM \setminus \{\hatpi\}} \mu^\pi_{\calM}   \label{eq: desired identity}
\end{align}
where $\mu_{\calM}^{\pi}$ denotes the expected reward of policy $\pi$ under MDP $\calM$. To see this, simply notice that for any $\pi\in\Pi_\calM \setminus \{\hatpi\}$ which differs from $\pistar$ on state $s$, one can find a policy $\pi'\in\Pi_{\calM'}$ that first goes to $\calM_s$ in $\calM'$ in the first step, and then follow $\pi$ in the rest of the steps.  This policy $\pi'$ gives the same expected reward as $\pi$. Conversely, for any $\pi'\in\Pi_{\calM'}$, there is a policy $\pi\in\Pi_{\calM}\setminus\{\hatpi\}$ which simply equals to $\pi'$ on its $2$ to $H+1$ steps. This $\pi$ gives the same expected reward as $\pi'$. 

Although $\calM'$ has $S^2+1$ states, and the total number of actions is $(SA-1)\times S + S$ (where $SA-1$ is the total number of actions in each copy of $\calM$, and the additional $S$ is the number of actions on $s_0$), running UCBVI on $\calM'$ can in fact yield the  same gap-independent bound as running it in $\calM$ if we share the samples among different copies of $\calM$. To see this in the uncorrupted case, notice that in the analysis of the UCBVI algorithm (see, e.g., \citep{azar2017minimax}, or Chapter~7 of \citep{agarwal2020}), the regret bound is a sum of terms of the form $\sum_{t}\sum_h\frac{\poly(S, A, H)}{\sqrt{n_t(s^t_h,a^t_h)}}$ or $\sum_{t}\sum_h\frac{\poly(S, A, H)}{n_t(s^t_h,a^t_h)}$. When the samples of the $S$ copies of $\calM$ are shared, these sum will only scale with the original number of states and actions.
This can also be proved formally through the use of feedback graphs \citep{dann2020reinforcement}.
In the corrupted case, the amount of corruption (i.e., $c_t=H\cdot \sup_{s,a,V}|(\calT V - \calT_t V)(s,a)|$) remains the same in $\calM$ and in $\calM'$.
Therefore, the overall regret bound in $\calM'$ under corruption remains the same order as that in $\calM$. 

\section{Base Algorithms}\label{app: base algorithms}
In this section, we describe and analyze the base algorithms for all settings considered in \pref{tab:my_label}. At the end of each subsection, we calculate the precise regret bounds achieved by our approaches and compare them with those in previous works to complement \pref{tab:my_label}. The proofs are sometimes brief since they mostly follow standard analysis appeared in previous works. More details can be found in the references. 

\subsection{Robust UCBVI for tabular MDPs}\label{app: robust MVP}
A Robust UCBVI algorithm is presented by \citet{lykouris2019corruption} in their Appendix B. We translate it to our setting (i.e., our trajectory reward is bounded in $[0, 1]$, and our definition of $C^\am$ already includes an $H$ factor). The resulting algorithm essentially runs the standard UCBVI algorithm \citep{azar2017minimax} with enlarged bonuses 
\begin{align*}
    b_t(s,a) = \min\left\{2\sqrt{\frac{2\ln(64SAHT^2/\delta)}{n_t(s,a)}} + \frac{C^\am}{n_t(s,a)}, \ \ 1\right\}
\end{align*}
where $n_t(s,a)$ is the number of visits to $(s,a)$ before episode $t$, and $C^\am$ is a given upper bound of the total corruption. This algorithm achieves the following bound (c.f. Eq. (B.1) in \citet{lykouris2019corruption}): 
\begin{align}
    \sum_{\tau=1}^t \left(V_1^\star(s^t_1) - V_1^{\pi_t}(s^t_1)\right) \leq  \poly(H)\times\otil\left(\min\left\{\textsf{GapComplexity}, \sqrt{SAt}\right\} + S^2A + C^\am SA\right). \label{eq: Lykouris bound UCBVI}
\end{align}
Furthermore, by their definition of \textsf{GapComplexity} (with proper scaling for our setting), it holds that $\textsf{GapComplexity}\leq \frac{SA}{\textsf{gap}_{\min}}\leq \frac{SA}{\Delta}$, where $\textsf{gap}_{\min}\triangleq \min_{s,a\neq \pistar_h(s),h}\left(V^\star_h(s) - Q^\star_h(s,a)\right)$ and the second inequality is by the performance difference lemma, 
\begin{align*}
    \Delta = \min_{\pi\neq \pistar}\Delta_{\pi}
    &=\min_{\pi\neq \pistar}\E\left[\sum_{h,s,a} \Pr[s_h=s](V_h^\star(s)-Q^\star_h(s,a))~\bigg|~\pi\right]\\
    &\leq \min_{h,s,a\neq \pistar_h(s)} \E\left[ \Pr[s_h=s](V_h^\star(s)-Q_h^\star(s,a))~\big|~\pi: \pi_h(s')=\pistar_h(s') \forall s'\neq s, \pi_h(s)=a\right] \tag{Let $\pi$ be the policy that only differs from $\pistar$ on state $s$ at level $h$} \\
    &\leq \textsf{gap}_{\min}. 
\end{align*}
Below we use these facts to derive our bound. 

\begin{theorem}
    For finite-horizon tabular MDPs, $\algname$ with Robust UCBVI as the base algorithm guarantees $\Reg(T)=\otil\left(\textup{poly}(H)\times \left(\sqrt{SAT} + S^2A + SAC^\am\right)\right)$; \gapalgname with Robust UCBVI guarantees $\Reg(T)=\otil\left(\textup{poly}(H)\times \left(\min\left\{\frac{S^2A}{\Delta}, \sqrt{S^2AT}\right\} + S^2A + SAC^\am\right)\right)$.  
\end{theorem}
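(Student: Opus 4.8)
The plan is to treat this statement as a direct corollary of the two meta-theorems already established, \pref{thm: form 1 regret} and \pref{thm: gap bound thm}. All that is really needed is to check that Robust UCBVI, run with the hypothetical corruption level $\theta$ substituted for the known bound $C^\am$ in its bonus, qualifies as a \typea base algorithm in the sense of \pref{assum: regret}, and then to read off the concrete constants $\beta_1,\beta_2,\beta_3$ so that the two meta-theorems apply mechanically. First I would fix the instantiation: the base algorithm $\alg$ with input $\theta$ is exactly the Robust UCBVI behind \pref{eq: Lykouris bound UCBVI} with $C^\am$ replaced by $\theta$ in the bonus $b_t(s,a)$. Since the condition $C_t^\am\le\theta$ is precisely what keeps the enlarged bonus optimistic, the cumulative regret bound \pref{eq: Lykouris bound UCBVI} continues to hold (with $C^\am$ replaced by $\theta$) for every $t$ with $C_t^\am\le\theta$, which is what \pref{assum: regret} demands.

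For the gap-independent claim, I would drop the $\textsf{GapComplexity}$ branch and keep only $\sqrt{SAt}$, obtaining $\calR(t,\theta)=\otil\big(\poly(H)(\sqrt{SAt}+S^2A+SA\,\theta)\big)$. This matches the template \pref{eq: typeone} with $\beta_1=\beta_2=\otil(\poly(H)SA)$ and $\beta_3=\otil(\poly(H)S^2A)$; the monotonicity and $\calR(t,\theta)\ge\theta$ requirements hold because $\beta_2\ge1$. Plugging these into \pref{thm: form 1 regret} with $Z=c_{\max}=2H$ (Robust UCBVI is \typea) gives $\Reg(T)=\otil(\sqrt{\beta_1T}+\beta_2(C^\am+Z)+\beta_3)$; the overhead $\beta_2 Z=\poly(H)SA\cdot H$ is absorbed into the $\poly(H)$ factor, yielding the first bound $\otil(\poly(H)(\sqrt{SAT}+SA\,C^\am+S^2A))$.

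For the gap-dependent claim, I would instead keep the $\min$ and use the stated fact $\textsf{GapComplexity}\le SA/\Delta$, so that $\calR(t,\theta)=\min\{\sqrt{\beta_1t},\beta_1/\Delta\}+\beta_2\theta+\beta_3$ fits \pref{eq: typetwo} with the same $\beta_1=\beta_2=\otil(\poly(H)SA)$ and $\beta_3=\otil(\poly(H)S^2A)$; a single $\beta_1=\otil(\poly(H)SA)$ suffices to dominate both $\poly(H)\sqrt{SAt}$ and $\poly(H)SA/\Delta$ simultaneously, and the side conditions $\beta_1\ge16\log(T/\delta)$, $\beta_2\ge1$, $\beta_3\ge10\sqrt{\beta_1\log(T/\delta)}$ are met after inflating the poly-log factors. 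The subroutine $\calB_{\hatpi}$ required by \pref{assum: corruption robust sub routine} is supplied by running \algname on the leave-one-policy-out MDP of \pref{app: leave one out}, which preserves both the regret order and the per-round corruption. Feeding these constants into \pref{thm: gap bound thm} gives $\Reg(T)=\otil(\min\{\sqrt{\beta_4T},\beta_4/\Delta\}+\beta_2C^\am+\beta_4)$ with $\beta_4=\otil(\beta_1+\beta_2c_{\max}+\beta_3)=\otil(\poly(H)S^2A)$; note that the $S^2$ enters precisely through the $\beta_3=\poly(H)S^2A$ term, producing the claimed $\min\{S^2A/\Delta,\sqrt{S^2AT}\}$.

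The steps are routine substitution, so the one genuinely careful point is the verification of \pref{assum: regret}: I must confirm that \pref{eq: Lykouris bound UCBVI} is an \emph{anytime} bound holding uniformly over all $t\le T$ under a single high-probability event, and that it stays valid after the substitution $C^\am\to\theta$ exactly on the rounds where $C_t^\am\le\theta$. This follows from the per-step, union-bound structure of the UCBVI analysis, but it is the one place where the translation from the fixed-horizon statement of \citet{lykouris2019corruption} to the conditional, horizon-uniform form of \pref{assum: regret} needs to be made explicit. The remaining bookkeeping---choosing a single $\beta_1$ to cover both the $\sqrt{t}$ and $1/\Delta$ regimes, and tracing how $\beta_3$ promotes $SA$ to $S^2A$ in the gap-dependent bound---is straightforward but is the main source of the slightly loose gap complexity noted in the conclusion.
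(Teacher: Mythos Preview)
Your overall plan---instantiate Robust UCBVI as a \typea base algorithm, read off $\beta_1,\beta_2,\beta_3$, and invoke \pref{thm: form 1 regret} and \pref{thm: gap bound thm}---is exactly the paper's route, and the gap-independent half goes through as you describe. There is, however, one step you gloss over that matters for the gap-dependent half. \pref{assum: regret} asks for a bound on $\sum_\tau(r_\tau^{\pistar}-r_\tau)$, whereas \pref{eq: Lykouris bound UCBVI} bounds the pseudo-regret $\sum_\tau(V^\star_1(s_1^\tau)-V_1^{\pi_\tau}(s_1^\tau))$. Converting between them requires a martingale concentration step. For the $\sqrt{t}$ branch this is harmless: a naive Azuma contributes $\otil(\sqrt{t})$, absorbed into $\sqrt{\beta_1 t}$. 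But for the $\beta_1/\Delta$ branch, a naive $\sqrt{t}$ can exceed $\beta_1/\Delta$ for large $t$ and thereby spoil the form \pref{eq: typetwo}; you cannot simply declare that $\calR(t,\theta)=\min\{\sqrt{\beta_1 t},\beta_1/\Delta\}+\beta_2\theta+\beta_3$ holds.

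The paper fixes this by an extra step: since the left-hand side of \pref{eq: Lykouris bound UCBVI} is lower bounded by $N_t^{\neq\pistar}\Delta$ (where $N_t^{\neq\pistar}=\sum_{\tau\le t}\one[\pi_\tau\neq\pistar]$), one first obtains $N_t^{\neq\pistar}\le\poly(H)\,\otil(SA/\Delta^2+S^2A/\Delta+SA\theta/\Delta)$. Then, because $r_\tau^{\pistar}-r_\tau=0$ whenever $\pi_\tau=\pistar$, the Azuma deviation scales as $\otil(\sqrt{N_t^{\neq\pistar}})$ rather than $\otil(\sqrt{t})$, and this residual is absorbed back into the gap-dependent bound. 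With that refinement in place, \pref{eq: typetwo} is verified with the same $\beta_1,\beta_2,\beta_3$ you propose, and the rest of your argument (including the $\calB_{\hatpi}$ construction and the computation $\beta_4=\otil(\poly(H)S^2A)$) matches the paper.
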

\begin{proof}
    By \pref{eq: Lykouris bound UCBVI} and Azuma's inequality, we have 
    \begin{align*}
    \sum_{\tau=1}^t \left(r_\tau^{\pistar} - r_\tau\right) \leq \poly(H)\times \otil\left( \sqrt{SAt} + S^2A +  SAC^\am\right),  
\end{align*}
which satisfies \pref{eq: typeone} with $\beta_1=\widetilde{\Theta}(\poly(H) SA)$, $\beta_2=\widetilde{\Theta}(\poly(H)SA)$, $\beta_3=\widetilde{\Theta}(\poly(H)S^2A)$. Applying \pref{thm: form 1 regret} with these parameters we get the bound for \algname. 

Let $N_t^{\neq \pistar}=\sum_{\tau=1}^t \one[\pi_\tau\neq \pistar]$ be the number of times the learner chooses sub-optimal policies. Using \pref{eq: Lykouris bound UCBVI} and noticing that the left-hand side of it is lower bounded by $N_t^{\neq \pistar}\Delta$, we get 
\begin{align*}
    N_t^{\neq \pistar}\Delta \leq 
    \poly(H)\times \otil\left(\frac{SA}{\Delta} + S^2 A + SAC^\am\right). 
\end{align*}
Therefore, by Azuma's inequality, we have with probability $1-\order(\delta)$, 
\begin{align*}
    \sum_{\tau=1}^t (r^{\pistar}_\tau - r_\tau) &\leq  \poly(H)\times\otil\left(\min\left\{\frac{SA}{\Delta}, \sqrt{SAt}\right\} + S^2A + C^\am SA + \sqrt{N_t^{\neq\pistar}}\right)\\
    &\leq  \poly(H)\times\otil\left(\min\left\{\frac{SA}{\Delta}, \sqrt{SAt}\right\} + S^2A + C^\am SA \right). 
\end{align*}
This satisfies \pref{eq: typetwo} with $\beta_1=\widetilde{\Theta}(\poly(H) SA)$, $\beta_2=\widetilde{\Theta}(\poly(H)SA)$, $\beta_3=\widetilde{\Theta}(\poly(H)S^2A)$ (therefore, $\beta_4=\widetilde{\Theta}(\poly(H)S^2A)$). Applying \pref{thm: gap bound thm} with these parameters we get the bound for \gapalgname. 
\end{proof}

\paragraph{Comparison with previous bounds } For corrupted tabular MDPs, the bound of \cite{lykouris2019corruption} is $\poly(H)\times \otil\left( (1+C^\am)\min\left\{\textsf{GapComplexity}, \sqrt{SAT}\right\} + S^2AC^\am + SA(C^\am)^2 \right)$; the bound of \cite{chen2021improved} is $\poly(H)\times \otil\left( \min\left\{\frac{S^2A^{\nicefrac{3}{2}}}{\Delta}, \sqrt{S^4A^3T}\right\} + S^2 A^2 C^{\am} + (C^{\am})^2\right)$; the bound of \cite{jin2021best} is $\poly(H)\times \otil\left(\min\left\{\textsf{GapComplexity}, \sqrt{S^2A^2 T}\right\} + C^{\am}\right)$ (under the uncorrupted transition assumption).

\subsection{Robust Phased Elimination for linear bandits}\label{app: robust phased elim}
\begin{algorithm}[t]
    \caption{Robust Phased Elimination}\label{alg: robust phased elimination}
    \textbf{input}: $C^\am$ \\
    \textbf{define}: $m_0=4d([\log\log d]_+ + 18)$ \\
    \textbf{initialize}: $\calA_0\leftarrow \calA$ \\
    \For{$k=0, 1, 2, \ldots$}{
        Let $m_k=2^{k-1}m_0$. \\ 
        Compute $\zeta_k: \calA_k\rightarrow [0,1]$ such that 
        \begin{align*}
            \max_{a\in\calA_k} \|a\|_{\Gamma(\zeta_k)^{-1}}\leq 2d  \quad \text{and} \quad   |\text{supp}(\zeta_k)|\leq m_0
        \end{align*}
        where $\Gamma(\zeta_k) = \sum_{a \in \calA_k} \zeta_k(a) a a^\top$.\\
        Set 
        \begin{align*}
            u_k(a) = \begin{cases}
                0 &\text{if}~\zeta_k(a)=0 \\
                \left\lceil m_k\max\{\zeta_k(a), \frac{1}{m_0}\} \right\rceil &\text{otherwise}
            \end{cases}
        \end{align*}
        Draw each action $a\in\calA_k$ exactly $u_k(a)$ times, and get action-reward pairs $(a_\tau, r_\tau)_{\tau=1}^{u_k}$, where $u_k=\sum_{a\in\calA_k}u_k(a)$. \\
        Estimate parameter:  
        \begin{align*}
            w_k = \Gamma_k^{-1} \sum_{\tau=1}^{u_k}a_\tau r_\tau, \qquad \text{where\ } \Gamma_k = \sum_{a\in\calA_k} u_k(a)aa^\top. 
        \end{align*}
        Update the active set 
        \begin{align*}
            \calA_{k+1}\leftarrow \left\{ a\in \calA_k~:~ \max_{a'\in\calA_k} w_k^\top (a'-a) \leq 4d\sqrt{\frac{1}{m_k}\log(T/\delta)} +  \frac{4 \sqrt{2d}m_0}{m_k}C^{\am}\right\}. 
        \end{align*}
    }
    
\end{algorithm}
The Robust Phased Elimination (\pref{alg: robust phased elimination}) is exactly the Algorithm 1 of \cite{bogunovic2021stochastic} with the choice of parameters specified in their Theorem 1 (i.e., $\nu=\frac{1}{m_0}$ in their notations). Its gap-independent bound is shown below: 
\begin{lemma}\label{lem: linear bandit gap-independent}
    With probability at least $1-\order(\delta)$, Robust Phased Elimination ensures 
    \begin{align*}
        \sum_{\tau=1}^t (r_\tau^{a^\star}-r_\tau)  = \otil\left(d\sqrt{t \log(T/\delta)} + C^{\am}d^{\nicefrac{3}{2}}\log T\right). 
    \end{align*}
\end{lemma}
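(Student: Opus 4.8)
The plan is to carry out the standard phased-elimination regret analysis of \pref{alg: robust phased elimination}, where the only genuinely new ingredient is a sharp bound on the bias that the adversary's corruption injects into the least-squares estimate $w_k$. Writing $a^\star=\argmax_{a\in\calA}\langle w^\star,a\rangle$ and $\Delta_a\triangleq\langle w^\star,a^\star-a\rangle$, I first decompose each observed reward as $r_\tau=\langle w^\star,a_\tau\rangle+\epsilon_\tau+\eta_\tau$ with corruption bias $\epsilon_\tau=\mu^{a_\tau}_\tau-\mu^{a_\tau}$ (so $|\epsilon_\tau|\le c_\tau$) and zero-mean bounded noise $\eta_\tau$. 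Since $\Gamma_k=\sum_\tau a_\tau a_\tau^\top$, this yields, for every $a\in\calA_k$,
\[
\langle w_k-w^\star,a\rangle=\sum_\tau\langle a,\Gamma_k^{-1}a_\tau\rangle\eta_\tau+\sum_\tau\langle a,\Gamma_k^{-1}a_\tau\rangle\epsilon_\tau.
\]
Because $u_k(a)\ge m_k\zeta_k(a)$ forces $\Gamma_k\succeq m_k\Gamma(\zeta_k)$, the design property $\|a\|_{\Gamma(\zeta_k)^{-1}}\le 2d$ gives $\|a\|_{\Gamma_k^{-1}}\le 2d/\sqrt{m_k}$, so the noise term has conditional variance proxy $\|a\|_{\Gamma_k^{-1}}^2\le 4d^2/m_k$ and is controlled by $\order(d\sqrt{\log(T/\delta)/m_k})$ via a Freedman bound together with a union bound over $\calA$ and the $\order(\log T)$ phases.

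The crux is the corruption term. The naive Cauchy--Schwarz estimate $|\langle a,\Gamma_k^{-1}a_\tau\rangle|\le\|a\|_{\Gamma_k^{-1}}\|a_\tau\|_{\Gamma_k^{-1}}\le 4d^2/m_k$ would produce a $\frac{d^2}{m_k}C^\am$ bias, too large to fit the elimination radius. The key observation is that every pulled action $a'$ lies in $\mathrm{supp}(\zeta_k)$ and is therefore drawn $u_k(a')\ge m_k/m_0$ times; hence $\Gamma_k\succeq u_k(a')\,a'a'^\top$, and a one-line Sherman--Morrison computation gives the sharper $\|a'\|_{\Gamma_k^{-1}}^2\le 1/u_k(a')\le m_0/m_k$. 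Thus $|\langle a,\Gamma_k^{-1}a'\rangle|\le 2d\sqrt{m_0}/m_k$ and
\[
\Big|\sum_\tau\langle a,\Gamma_k^{-1}a_\tau\rangle\epsilon_\tau\Big|\le\frac{2d\sqrt{m_0}}{m_k}\sum_\tau|\epsilon_\tau|\le\frac{2d\sqrt{m_0}}{m_k}C^\am\le\frac{2\sqrt{2d}\,m_0}{m_k}C^\am,
\]
which is exactly the corruption part of the elimination radius $\text{rad}_k\triangleq 4d\sqrt{\log(T/\delta)/m_k}+\frac{4\sqrt{2d}m_0}{m_k}C^\am$. Combining the two terms, on a clean event of probability $1-\order(\delta)$ we obtain $|\langle w_k-w^\star,a\rangle|\le\tfrac12\,\text{rad}_k$ for all $k$ and all $a\in\calA_k$.

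Given this, the usual elimination bookkeeping applies. By induction $a^\star$ is never removed, since for every $a'\in\calA_k$ one has $w_k^\top(a'-a^\star)=\langle w^\star,a'-a^\star\rangle+\langle w_k-w^\star,a'-a^\star\rangle\le 0+\text{rad}_k$; and any surviving $a\in\calA_{k+1}$ obeys $w_k^\top(a^\star-a)\le\text{rad}_k$, so adding the estimation errors on $a^\star$ and $a$ gives $\Delta_a\le 2\,\text{rad}_{k-1}$ for actions played in phase $k$. Up to an $\order(\sqrt{t\log(T/\delta)}+C^\am)$ martingale correction relating $\sum_\tau(r_\tau^{a^\star}-r_\tau)$ to $\sum_\tau\Delta_{a_\tau}$ (the per-round $2c_\tau$ gap between corrupted and uncorrupted means summing to $\order(C^\am)$), it remains to bound $\sum_\tau\Delta_{a_\tau}$. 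Each phase has length $u_k\le 2m_k+m_0=\order(m_k)$ because $\sum_a\max\{\zeta_k(a),1/m_0\}\le 2$, so the phase-$k$ regret is $\order(m_k\,\text{rad}_{k-1})=\order(d\sqrt{m_{k-1}\log(T/\delta)}+d^{3/2}C^\am)$ using $m_k=2m_{k-1}$ and $\sqrt{m_0}=\order(\sqrt d)$. Summing over the $\order(\log T)$ phases, the confidence parts form a geometric series dominated by the last term $m_K=\Theta(t)$, giving $\order(d\sqrt{t\log(T/\delta)})$, while the corruption parts add to $\order(d^{3/2}C^\am\log T)$, which is the claimed bound.

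The main obstacle is the sharp corruption-bias estimate of the second paragraph: without exploiting that every support action is pulled $\Omega(m_k/m_0)$ times---so that $\|a'\|_{\Gamma_k^{-1}}^2\le m_0/m_k$ replaces the Cauchy--Schwarz value $4d^2/m_k$---the corruption term would scale like $d^2$ rather than $d^{3/2}$, would exceed the algorithm's elimination radius, and would thereby invalidate the guarantee that $a^\star$ is retained, unraveling the entire argument.
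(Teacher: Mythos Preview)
Your argument is correct and essentially reproduces, from first principles, the analysis of \citet{bogunovic2021stochastic} (their Theorem~1 and the estimates in their Appendix~A). The key observation you isolate---that every support action is pulled at least $m_k/m_0$ times, so that $\|a'\|_{\Gamma_k^{-1}}^2\le m_0/m_k$ rather than the Cauchy--Schwarz value $\order(d^2/m_k)$---is exactly the mechanism that keeps the corruption bias inside the elimination radius, and your phase-sum bookkeeping is sound.

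The paper's own proof is considerably shorter because it does not redo this analysis: it simply invokes Theorem~1 of \citet{bogunovic2021stochastic} as a black box to obtain the pseudo-regret bound $\sum_\tau(\mu^{a^\star}-\mu^{a_\tau})=\otil(d\sqrt{t\log(T/\delta)}+C^\am d^{3/2}\log T)$ (absorbing $\log|\calA|$ via the standard assumption $|\calA|=\order(T^d)$), and then passes to $\sum_\tau(r_\tau^{a^\star}-r_\tau)$ by the triangle inequality, noting that $\sum_\tau|\mu^\pi-\mu_\tau^\pi|\le C^\am$ for any fixed policy and that $\sum_\tau(r_\tau^\pi-\mu_\tau^\pi)=\order(\sqrt{t\log(T/\delta)})$ by Azuma. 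So your route is a genuine re-derivation of the cited result, with the advantage of being self-contained and making the role of the $m_k/m_0$ lower bound explicit; the paper's route is shorter but relies on the external reference for the substantive work.
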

\begin{proof}
    By Theorem 1 of \cite{bogunovic2021stochastic}, we have 
    \begin{align*}
        \sum_{\tau=1}^t (\mu^{a^\star} - \mu^{a_t}) = \otil\left(\sqrt{dt \log(T|\calA|/\delta)} + C^{\am}d^{\nicefrac{3}{2}}\log T\right) = \otil\left(d\sqrt{t \log(T/\delta)} + C^{\am}d^{\nicefrac{3}{2}}\log T\right)
    \end{align*}
    where for simplicity we assume $|\calA|=\order(T^d)$ without loss of generality.   
    The conclusion follows by noticing that $\sum_{\tau=1}^t  |\mu^{a^\star} - \mu_\tau^{a^\star}|\leq C^\am$, $\sum_{\tau=1}^t |\mu^{a_\tau} - \mu_\tau^{a_\tau}|\leq C^\am$ by the definition of corruption, and that with probability $1-\order(\delta)$,  $\left|\sum_{\tau=1}^t (r_\tau^{a^\star} - \mu_\tau^{a^\star})\right|=\order(\sqrt{t\log(T/\delta)})$, and $\left|\sum_{\tau=1}^t (r_\tau - \mu_\tau^{a_\tau})\right|=\order(\sqrt{t\log(T/\delta)})$ by Azuma's inequality. 
\end{proof}

Next, we further show that the same algorithm achieves a gap-dependent bound. We first restate an intermediate result of \cite{bogunovic2021stochastic}. 
\begin{lemma}[Appendix A.2 of \cite{bogunovic2021stochastic}]
    Robust Phased Elimination ensures that with probability at least $1-\order(\delta)$, $a^\star\in \calA_k$ for all $k$. 
\end{lemma}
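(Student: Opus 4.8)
The plan is to argue by induction on the elimination round $k$ that $a^\star\in\calA_k$, where $a^\star=\argmax_{a\in\calA}\inner{w^\star,a}$. The base case $a^\star\in\calA_0=\calA$ is immediate, so I fix $k$, assume $a^\star\in\calA_k$, and show the elimination rule does not discard $a^\star$, i.e. that $\max_{a'\in\calA_k}w_k^\top(a'-a^\star)$ is at most the threshold $4d\sqrt{\tfrac{1}{m_k}\log(T/\delta)}+\tfrac{4\sqrt{2d}\,m_0}{m_k}C^\am$. The first step is to decompose the estimator: writing each observed reward as $r_\tau=\inner{w^\star,a_\tau}+\eta_\tau+\epsilon_\tau$, where $\eta_\tau=\mu^{a_\tau}_\tau-\inner{w^\star,a_\tau}$ is the corruption (so $|\eta_\tau|\le c_\tau$) and $\epsilon_\tau=r_\tau-\mu^{a_\tau}_\tau$ is conditionally mean-zero noise, the normal equations $\Gamma_k w^\star=\sum_\tau a_\tau\inner{w^\star,a_\tau}$ give
\begin{align*}
    w_k^\top(a'-a^\star)=\inner{w^\star,a'-a^\star}+(a'-a^\star)^\top\Gamma_k^{-1}\sum_\tau a_\tau\eta_\tau+(a'-a^\star)^\top\Gamma_k^{-1}\sum_\tau a_\tau\epsilon_\tau.
\end{align*}
Since $a^\star$ is optimal under $w^\star$, the first term is $\le 0$, so it suffices to bound the corruption and the noise terms by the two corresponding pieces of the threshold.

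The key geometric fact I would establish next is that every surviving arm has small confidence width. Because $u_k(a)\ge m_k\zeta_k(a)$ we have $\Gamma_k\succeq m_k\Gamma(\zeta_k)$, so the $G$-optimal design guarantee $\max_{a\in\calA_k}\norm{a}_{\Gamma(\zeta_k)^{-1}}\le 2d$ yields $\norm{a}_{\Gamma_k^{-1}}\le\tfrac{2d}{\sqrt{m_k}}$ for all $a\in\calA_k$, hence $\norm{a'-a^\star}_{\Gamma_k^{-1}}\le\tfrac{4d}{\sqrt{m_k}}$. For the noise term, set $v=a'-a^\star$ and note $\sum_\tau(v^\top\Gamma_k^{-1}a_\tau)^2=\norm{v}_{\Gamma_k^{-1}}^2\le\tfrac{16d^2}{m_k}$; a sub-Gaussian tail bound (Azuma) for the martingale $\sum_\tau(v^\top\Gamma_k^{-1}a_\tau)\epsilon_\tau$, followed by a union bound over the (at most $|\calA|$) arms $a'$ and the $\order(\log T)$ rounds, produces the $4d\sqrt{\tfrac1{m_k}\log(T/\delta)}$ term.

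For the corruption term I would avoid the crude bound $\sum_\tau|\eta_\tau|\,\norm{a'-a^\star}_{\Gamma_k^{-1}}\norm{a_\tau}_{\Gamma_k^{-1}}$ (which loses a factor of $d$) and instead exploit the support structure. Grouping by arm and using $u_k(a)\ge m_k/m_0$ on the support gives $\Gamma_k\succeq\tfrac{m_k}{m_0}M$ with $M=\sum_{a\in\mathrm{supp}(\zeta_k)}aa^\top$; since $M\succeq aa^\top$ forces $\norm{a}_{M^{-1}}\le 1$, one gets $\norm{\sum_\tau a_\tau\eta_\tau}_{\Gamma_k^{-1}}\le\sqrt{m_0/m_k}\,\sum_\tau|\eta_\tau|\le\sqrt{m_0/m_k}\,C^\am$, and Cauchy–Schwarz with the width bound then yields a corruption term of order $\tfrac{d\sqrt{m_0}}{m_k}C^\am$, which is dominated by $\tfrac{4\sqrt{2d}\,m_0}{m_k}C^\am$ because $m_0\ge 4d$. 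Combining the three pieces shows the threshold is never violated at $a^\star$, closing the induction.

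The main obstacle is the corruption term: the naive worst-case bound is too lossy by a $\sqrt{d}$ factor to fit inside the threshold, and the delicate point is to show the adversary cannot do better than $\norm{\sum_\tau a_\tau\eta_\tau}_{\Gamma_k^{-1}}\lesssim\sqrt{m_0/m_k}\,C^\am$ no matter how it distributes the corruption across arms and pulls. This is exactly where the bounded support size $m_0$ of the design and the floor $u_k(a)\ge m_k/m_0$ on the number of pulls per active arm are used; the noise concentration and the $G$-optimal width bound are otherwise routine.
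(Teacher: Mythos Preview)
The paper does not actually prove this lemma: it is stated as a restatement of an intermediate result from Appendix~A.2 of \cite{bogunovic2021stochastic}, and the paper immediately moves on to the next lemma without giving any argument. So there is no ``paper's own proof'' to compare against.

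Your proposal is a correct reconstruction of the argument from \cite{bogunovic2021stochastic}. The decomposition into the optimality term $\inner{w^\star,a'-a^\star}\le 0$, the noise term, and the corruption term is exactly the standard approach; the $G$-optimal width bound $\Gamma_k\succeq m_k\Gamma(\zeta_k)$ and the sub-Gaussian concentration for the noise are routine. The one nontrivial step---handling the corruption term without losing an extra $\sqrt d$---is precisely where you (correctly) invoke the floor $u_k(a)\ge m_k/m_0$ on the number of pulls of each supported arm, yielding $\Gamma_k\succeq (m_k/m_0)M$ with $M=\sum_{a\in\mathrm{supp}(\zeta_k)}aa^\top$ and hence $\|a\|_{M^{-1}}\le 1$; this is exactly how \cite{bogunovic2021stochastic} obtain the $m_0/m_k$ scaling in the threshold rather than the cruder $d/m_k$. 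Your identification of this as the ``main obstacle'' is on point, and your proof sketch would close the induction.
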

The gap-dependent bound of Robust Phased Elimination is then given by the following lemma.
\begin{lemma}\label{lem: linear bandit gap-dependent}
    With probability $1-\order(\delta)$, Robust Phased Elimination ensures 
    \begin{align*}
        \sum_{\tau=1}^t \left(r_\tau^{a^\star}-r_\tau \right) = \otil\left( \frac{d^2\log (T/\delta)}{\Delta} + C^\am d^{\nicefrac{3}{2}}\log T \right). 
    \end{align*}
\end{lemma}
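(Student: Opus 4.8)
The plan is to exploit the phased structure of \pref{alg: robust phased elimination}: in each phase only actions that survived the previous elimination are played, and once the elimination width shrinks below the gap $\Delta$ every suboptimal action is discarded so that the learner commits to $a^\star$ forever. Abbreviate the phase-$k$ elimination threshold as $W_k\triangleq 4d\sqrt{\log(T/\delta)/m_k}+4\sqrt{2d}\,m_0 C^\am/m_k$. I would first record, on the same high-probability event under which the cited intermediate result gives $a^\star\in\calA_k$ for all $k$, the two-sided confidence bound $|\langle w_k-w^\star,a\rangle|\le\tfrac12 W_k$ for every $a\in\calA_k$. This is exactly the estimate that \cite{bogunovic2021stochastic} establish en route to $a^\star\in\calA_k$, built from $\|a\|_{\Gamma_k^{-1}}\le 2d/\sqrt{m_k}$ (which follows from the design property $\max_{a\in\calA_k}\|a\|_{\Gamma(\zeta_k)^{-1}}\le 2d$ together with $\Gamma_k\succeq m_k\Gamma(\zeta_k)$), a Freedman bound on the noise term, and the per-phase corruption bound; the threshold $W_k$ is precisely twice this estimation error, which is why $a^\star$ always survives.

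Combining the confidence bound with the survival rule gives the key gap-localization claim. Writing $\Delta_a\triangleq\langle w^\star,a^\star-a\rangle$, if $a\in\calA_{k+1}$ then, taking $a'=a^\star$ in the elimination test and adding the two estimation errors, $\Delta_a=w_k^\top(a^\star-a)-\langle w_k-w^\star,a^\star-a\rangle\le W_k+\tfrac12 W_k+\tfrac12 W_k=2W_k$. Equivalently, any action with $\Delta_a>2W_k$ is eliminated by the end of phase $k$. Let $k^\star$ be the smallest index with $2W_{k^\star}\le\Delta$; by \pref{assum: gap} every suboptimal action has gap at least $\Delta$, so $\calA_{k^\star+1}=\{a^\star\}$ and all rounds in phases strictly after $k^\star$ play $a^\star$, contributing exactly zero to $\sum_\tau(r_\tau^{a^\star}-r_\tau)$ because $a_\tau=a^\star$ forces $r_\tau=r_\tau^{a^\star}$. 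Hence only phases $0,\dots,k^\star$ matter, and their combined length is $\sum_{k\le k^\star}u_k=O(m_{k^\star})$ by the geometric growth $m_k=2m_{k-1}$ and $u_k=O(m_k)$.

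Next I would bound the pseudo-regret phase by phase. For $k\ge1$ every played action lies in $\calA_k$ and thus has gap at most $2W_{k-1}$, so phase $k$ contributes at most $u_k\cdot 2W_{k-1}=O(m_k W_{k-1})=O\big(d\sqrt{m_{k-1}\log(T/\delta)}+\sqrt{2d}\,m_0 C^\am\big)$, while phase $0$ contributes at most $O(m_0)=\otil(d)\le d^2\log(T/\delta)/\Delta$. Summing the corruption pieces over the $O(\log T)$ phases yields $\otil(d^{3/2}C^\am\log T)$ using $m_0=\widetilde{\Theta}(d)$, and the $d\sqrt{m_{k-1}\log(T/\delta)}$ pieces form a geometric series dominated by its last term $d\sqrt{m_{k^\star}\log(T/\delta)}$. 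To control that term I would use the minimality of $k^\star$, namely $2W_{k^\star-1}>\Delta$, which forces one summand of $W_{k^\star-1}$ to exceed $\Delta/4$: either $\sqrt{m_{k^\star}}=O(d\sqrt{\log(T/\delta)}/\Delta)$, giving directly $d\sqrt{m_{k^\star}\log(T/\delta)}=\otil(d^2\log(T/\delta)/\Delta)$, or $m_{k^\star}=\otil(d^{3/2}C^\am/\Delta)$, in which case $d\sqrt{m_{k^\star}\log(T/\delta)}=\sqrt{(d^2\log(T/\delta)/\Delta)\cdot d^{3/2}C^\am}$ is tamed by AM-GM into $\tfrac12(d^2\log(T/\delta)/\Delta+d^{3/2}C^\am)$.

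Finally I would pass from pseudo-regret to the realized quantity $\sum_{\tau=1}^t(r_\tau^{a^\star}-r_\tau)$ exactly as in the proof of \pref{lem: linear bandit gap-independent}: decompose each summand into the pseudo-regret $\mu^{a^\star}-\mu^{a_\tau}$ just bounded, two martingale-noise terms, and two corruption terms. Since only the $O(m_{k^\star})$ pre-commitment rounds carry nonzero summands, Azuma controls the noise by $O(\sqrt{m_{k^\star}\log(T/\delta)})\le d\sqrt{m_{k^\star}\log(T/\delta)}$, already bounded above, and the corruption terms sum to at most $2C^\am\le 2d^{3/2}C^\am$. Collecting everything gives the claimed $\otil(d^2\log(T/\delta)/\Delta+C^\am d^{3/2}\log T)$, uniformly in $t$: if the horizon ends before phase $k^\star$ completes, the same phase-wise sum runs over fewer phases and is only smaller. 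The main obstacle is the bookkeeping in the second case above: the corruption term inside $W_k$ decays only like $1/m_k$, so it can postpone elimination and inflate $m_{k^\star}$, and it is the non-obvious AM-GM split that keeps the induced $\sqrt{m_{k^\star}}$ blow-up from producing a power of $d$ larger than $d^{3/2}$ on the $C^\am$ term.
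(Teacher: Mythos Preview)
Your proposal is correct and follows essentially the same route as the paper's proof: both arguments localize the suboptimality gap of any surviving action via the phase-$k$ confidence width, identify the last phase $k^\star$ before only $a^\star$ survives, sum the per-phase regret contributions geometrically to $O\big(d\sqrt{m_{k^\star}\log(T/\delta)}+C^\am m_0\sqrt{d}\log T\big)$, and then control $m_{k^\star}$ using the minimality of $k^\star$ together with an AM--GM split on the cross term. The paper simply packages the first two steps by citing Eq.~(58) and Eq.~(48)--(55) of \cite{bogunovic2021stochastic} and hides the AM--GM inside the final $\order(\cdot)$, whereas you spell these out explicitly, but there is no substantive difference.
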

\begin{proof}
    By Eq.~(58) of \citep{bogunovic2021stochastic}, for all $a\in\calA_k$, $a\neq a^\star$, we have 
    \begin{align*}
        \Delta \leq w^{\star\top}(a^\star - a) \leq 8d\sqrt{\frac{\log(T/\delta)}{m_k}} +  \frac{8 \sqrt{2d}m_0}{m_k}C^{\am}
    \end{align*}
    where the first inequality is by our assumption. Solving the inequality we get 
    \begin{align}
        m_k\leq  \frac{256 d^2\log(T/\delta)}{\Delta^2} + \frac{16\sqrt{2d}m_0C^\am}{\Delta}.  \label{eq: m k bound} 
    \end{align}
    This means that as long as $m_k$ grows larger than the right-hand side of \pref{eq: m k bound}, no sub-optimal arm can remain in $\calA_k$. Let $k^\star$ be the smallest $k$ such that $m_{k}$ is larger than the right-hand side of \pref{eq: m k bound}. Then we only need to calculate the regret incurred in epochs $1, \ldots, k^\star -1$. By the same calculation as Eq.~(48)-(55) in \citep{bogunovic2021stochastic}, we get that with probability at least $1-\order(\delta)$, 
    \begin{align*}
        \sum_{\tau=1}^t (r_\tau^{a^\star}-r_\tau)
        &\leq\sum_{k=1}^{k^\star-1}\sum_{\tau\in \text{epoch}(k)}(r_\tau^{a^\star}-r_\tau) \\ 
        &\leq \sum_{k=1}^{k^\star-1}\sum_{\tau\in \text{epoch}(k)}(\mu^{a^\star}-\mu^{a_\tau}) + \order\left(\sum_{k=1}^{k^\star-1}\sqrt{m_k\log(T/\delta)}+ C^{\am}\right) \tag{Azuma's inequality}\\ 
        &=\order\left( u_0 + \sum_{k=1}^{k^\star-1}\left(d\sqrt{m_k\log(T/\delta)} + C^\am m_0\sqrt{d}\right) \right) \tag{By Eq.~(48)-(55) in \citep{bogunovic2021stochastic}}\\
        &= \order\left(m_0 + d\sqrt{m_{k^\star}\log(T/\delta)} + C^\am m_0\sqrt{d}\log T\right) \\
        &= \order\left(\frac{d^2\log(T/\delta)}{\Delta} + d^{\nicefrac{3}{2}}C^{\am}\log T \right).
    \end{align*}
\end{proof}
\begin{theorem}
    For linear bandits, \algname with Robust Phased Elimination as the base algorithm guarantees $\Reg(T)=\otil\left(d\sqrt{T} + d^{\nicefrac{3}{2}}C^\am + d^{\nicefrac{3}{2}}\right)$; \gapalgname with Robust Phased Elimination guarantees $\Reg(T)=\otil\left(\min\left\{\frac{d^2}{\Delta}, d\sqrt{T}\right\} + d^{\nicefrac{3}{2}}C^\am + d^{2}\right)$.
\end{theorem}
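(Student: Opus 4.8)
The plan is to instantiate the two meta-theorems, \pref{thm: form 1 regret} and \pref{thm: gap bound thm}, with Robust Phased Elimination as the base algorithm, reading off the constants $\beta_1,\beta_2,\beta_3$ directly from the regret bounds already established in \pref{lem: linear bandit gap-independent} and \pref{lem: linear bandit gap-dependent}. Since linear bandits fall in the \typea category with $c_{\max}=1$, the first step is to verify that Robust Phased Elimination, when fed a hypothetical corruption level $\thres$ in place of the known bound $C^\am$, satisfies \pref{assum: regret}: its guarantee holds in exactly those rounds where the true accumulated corruption $C_t^\am$ does not exceed $\thres$, because the input enters the analysis only through the elimination threshold, so the proof of \pref{lem: linear bandit gap-independent} goes through verbatim with $C^\am$ replaced by $\thres$ whenever $C_t^\am\le\thres$.

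For the gap-independent bound, \pref{lem: linear bandit gap-independent} gives $\calR(t,\thres)=\otil(d\sqrt{t\log(T/\delta)}+d^{\nicefrac{3}{2}}\thres)$, which is of the form \pref{eq: typeone} with $\beta_1=\widetilde{\Theta}(d^2)$, $\beta_2=\widetilde{\Theta}(d^{\nicefrac{3}{2}})$, and a lower-order $\beta_3$. Plugging these into \pref{thm: form 1 regret} with $Z=c_{\max}=1$ yields $\Reg(T)=\otil(\sqrt{\beta_1 T}+\beta_2(C^\am+1)+\beta_3)=\otil(d\sqrt{T}+d^{\nicefrac{3}{2}}C^\am+d^{\nicefrac{3}{2}})$, as claimed.

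For the gap-dependent bound, combining \pref{lem: linear bandit gap-independent} with \pref{lem: linear bandit gap-dependent} shows that Robust Phased Elimination obeys $\calR(t,\thres)=\otil(\min\{d\sqrt{t},\tfrac{d^2}{\Delta}\}+d^{\nicefrac{3}{2}}\thres)$, matching \pref{eq: typetwo} with the same $\beta_1=\widetilde{\Theta}(d^2)$, $\beta_2=\widetilde{\Theta}(d^{\nicefrac{3}{2}})$, and $\beta_3=\widetilde{\Theta}(\sqrt{\beta_1})=\widetilde{\Theta}(d)$ chosen large enough to meet the side constraints $\beta_1\ge16\log(T/\delta)$, $\beta_2\ge1$, $\beta_3\ge10\sqrt{\beta_1\log(T/\delta)}$ required by \gapalgname. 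Since $\beta_1=d^2$ dominates both $\beta_2$ and $\beta_3$, the derived parameter is $\beta_4=\widetilde{\Theta}(d^2)$. Invoking \pref{thm: gap bound thm} then gives $\Reg(T)=\otil(\min\{\sqrt{\beta_4 T},\tfrac{\beta_4}{\Delta}\}+\beta_2 C^\am+\beta_4)=\otil(\min\{d\sqrt{T},\tfrac{d^2}{\Delta}\}+d^{\nicefrac{3}{2}}C^\am+d^2)$.

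The only genuine subtlety — the main obstacle — is the reduction in the first paragraph: one must confirm that feeding the hypothetical level $\thres$ into Robust Phased Elimination produces a valid high-probability regret guarantee precisely on the prefix $\{t:C_t^\am\le\thres\}$, rather than requiring $\thres$ to upper bound the total corruption over all $T$ rounds, so that the ``for all $t\le T$ such that $C_t\le\thres$'' clause of \pref{assum: regret} is genuinely met. Everything else is bookkeeping: identifying the constants and checking the mild side conditions on $\beta_1,\beta_2,\beta_3$.
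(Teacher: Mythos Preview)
Your proposal is correct and follows essentially the same route as the paper: read off $\beta_1=\widetilde{\Theta}(d^2)$, $\beta_2=\widetilde{\Theta}(d^{\nicefrac{3}{2}})$, $\beta_3=\widetilde{\Theta}(d)$ from \pref{lem: linear bandit gap-independent} and \pref{lem: linear bandit gap-dependent}, then invoke \pref{thm: form 1 regret} and \pref{thm: gap bound thm}. Your added remark about the prefix condition in \pref{assum: regret} is a legitimate point the paper leaves implicit, but it does not change the argument.
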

\begin{proof}
    By \pref{lem: linear bandit gap-independent} and \pref{lem: linear bandit gap-dependent}, we see that Robust Phased Elimination satisfies \pref{eq: typeone} and  \pref{eq: typetwo} with $\beta_1=\widetilde{\Theta}(d^2), \beta_2=\widetilde{\Theta}(d^{\nicefrac{3}{2}}), \beta_3=\Theta(d)$ (thus, $\beta_4=\widetilde{\Theta}(d^2)$). Using them in \pref{thm: form 1 regret} and \pref{thm: gap bound thm} gives the desired bounds. 
\end{proof}

\paragraph{Comparison with previous bounds } For corrupted linear bandits, the bound of \cite{li2019stochastic} is  $\otil\left(\frac{d^6}{\Delta^2} + \frac{d^{\nicefrac{5}{2}}C^\am}{\Delta}\right)$; the bound of \cite{bogunovic2021stochastic} is $\otil\left(  d\sqrt{T} + d^2(C^\am)^2\right)$ (against a stronger adversary); the bound of \cite{lee2021achieving} is $\otil\left(\min\left\{\frac{d^2}{\Delta}, d\sqrt{T}\right\} + C^{\am}\right)$ (under the linearized corruption assumption). 

\subsection{Robust OFUL for linear contextual bandits / Robust LSVI-UCB for linear MDPs}
\label{app: robust OFUL}
From this section, we denote the state, the action, and the reward at the $h$-th step of the $t$-th episode as $s^t_h, a^t_h$, and $\sigma^t_h$ respectively (same as the $s_{t,h}, a_{t,h}, \sigma_{t,h}$ defined in \pref{sec: problem setting}). 

Below we restate the linear MDP assumption in \citep{jin2020provably} (adapted to our case where the per-step reward lies in $[0, \frac{1}{H}]$): 
\begin{assumption}[Finite-horizon Linear MDP]\label{assum: linear MDP assumptio}
    Let $\phi(s,a)\in\mathbb{R}^d$ be known feature vector for the state-action pair $(s,a)$. Assume that for all $(s,a)$, the reward function can be represented as $\sigma(s,a)=\phi(s,a)^\top \rho$, and the transition kernel can be represented as $p(s'|s,a)=\phi(s,a)^\top\nu(s')$ for some $\nu(s')\in\mathbb{R}^d$. Without loss of generality, we assume that $\|\phi(s,a)\|\leq 1$, $\|\rho\|\leq \frac{1}{H}\sqrt{d}$, and $
    \left\|\int\nu(s')\mathrm{d}s'\right\|\leq \sqrt{d}$.  
\end{assumption}
In \pref{alg: robust LSVI-UCB}, we present a corruption robust version of LSVI-UCB \citep{jin2020provably} that takes $C^\rms$ as input. Since linear contextual bandit is a special case of linear MDP with $H=1$, we can use the same algorithm to deal with it. 

\begin{algorithm}[t]
    \caption{Robust OFUL / Robust LSVI-UCB}\label{alg: robust LSVI-UCB}
    \textbf{input}: $C^\rms$ \\
    \textbf{define}: $\zeta=\zeta_{0}\cdot \sqrt{d\log(dT/\delta)}$ for $H=1$ (linear contextual bandit case), or $\zeta=\zeta_{0}\cdot d\sqrt{\log(dHT/\delta)}$ for $H>1$ (linear MDP case), where $\zeta_{0}$ is a universal constant.  \\
    \For{$t=1, \ldots, T$}{
        \begin{align*}
            \Lambda^t = \sum_{\tau=1}^{t-1}\sum_{h=1}^H  \phi^\tau_h\phi^{\tau\top}_h +  I, \qquad \quad \text{where\ } \phi^\tau_h\triangleq \phi(s^\tau_h, a^\tau_h). 
        \end{align*}
        \For{$h=H, \ldots, 1$}{
            \begin{align*}
                w^t_h &= (\Lambda^t)^{-1}\sum_{\tau=1}^{t-1}\sum_{k=1}^H \phi^{\tau\top}_k\left(\sigma^\tau_k + V_{h+1}^t(s^\tau_{k+1})\right) \\
                Q^{t}_h(s,a) &\leftarrow \min\left\{w_h^{t\top}\phi(s,a) + \left(4\zeta + C^{\rms}\sqrt{\frac{d}{Ht}}\right)\|\phi(s,a)\|_{(\Lambda^t)^{-1}}, \ \ 1\right\}\\
                V^t_h(s)&\leftarrow \max_a Q^t_h(s,a) 
            \end{align*}
           
        }
        \For{$h=1,\ldots, H$}{
            Observe $s^t_h$,  choose $a^t_h=\argmax_a Q^t_h(s^t_h, a)$, and receive $\sigma^t_h$. 
        }
    }
\end{algorithm}
\allowdisplaybreaks
The following lemma is adapted from \citep[Lemma B.4]{jin2020provably}. 
\begin{lemma} \label{lem: linear MDP decomposition}
    For any $\pi$, with probability at least $1-\order(\delta)$, the following holds for all $s,a$: 
\begin{align*}
    \phi(s,a)^\top (w^t_h - w^\pi_h) = \E_{s'\sim p_t(\cdot|s,a)}\left[V^t_{h+1}(s') - V^{\pi}_{h+1}(s')\right] + \varepsilon^{t}_h(s,a) 
\end{align*}
for some $\varepsilon^{t}_h(s,a)$ that satisfies 
\begin{align}
    |\varepsilon^{t}_h(s,a)|\leq \left(4\zeta + C^{\rms}{\sqrt{\frac{d}{Ht}}}\right)\|\phi(s,a)\|_{(\Lambda^t)^{-1}} + \frac{2}{H}c_t  \label{eq: epsilon range}
\end{align}
\end{lemma}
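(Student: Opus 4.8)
The plan is to adapt the decomposition of Lemma~B.4 of \citep{jin2020provably} to the corrupted, time-homogeneous linear MDP, carefully separating two kinds of corruption: that accumulated over past episodes $\tau<t$, which should feed into the $C^\rms$-part of the bonus, and that of the current episode $t$, which should surface as the additive $\frac{2}{H}c_t$ term. Throughout, let $w^\pi_h=\rho+\int\nu(s')V^\pi_{h+1}(s')\der s'$ be the \emph{uncorrupted} backup parameter, so that $\phi(s,a)^\top w^\pi_h=(\calT V^\pi_{h+1})(s,a)$, and write $w^{V}=\rho+\int\nu(s')V(s')\der s'$ so that $\phi(s,a)^\top w^{V}=(\calT V)(s,a)$ for any $V\in[0,1]^\calS$. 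First I would plug the ridge formula for $w^t_h$ into $\phi(s,a)^\top w^t_h$ and, at each past sample $(\tau,k)$, insert and subtract the corrupted one-step backup $(\calT_\tau V^t_{h+1})(s^\tau_k,a^\tau_k)$, where $\calT_\tau$ is the Bellman operator of episode $\tau$. Using $\sum_{\tau<t,k}\phi^\tau_k\phi^{\tau\top}_k=\Lambda^t-I$ this yields $w^t_h=w^{V^t_{h+1}}-(\Lambda^t)^{-1}w^{V^t_{h+1}}+(\Lambda^t)^{-1}\sum_{\tau<t,k}\phi^\tau_k(\xi^\tau_k+\eta^\tau_k)$, where $\eta^\tau_k=\sigma^\tau_k+V^t_{h+1}(s^\tau_{k+1})-(\calT_\tau V^t_{h+1})(s^\tau_k,a^\tau_k)$ is a martingale difference (for a fixed value function) and $\xi^\tau_k=(\calT_\tau-\calT)V^t_{h+1}(s^\tau_k,a^\tau_k)$ satisfies $|\xi^\tau_k|\le c_\tau/H$, since $V^t_{h+1}\in[0,1]^\calS$ (it is clipped to $[0,1]$ in the algorithm) and by the definition of $c_\tau$.

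Next, since $\phi(s,a)^\top w^{V^t_{h+1}}=(\calT V^t_{h+1})(s,a)$ and $\phi(s,a)^\top w^\pi_h=(\calT V^\pi_{h+1})(s,a)$, subtracting makes the reward term cancel and gives $\phi(s,a)^\top(w^t_h-w^\pi_h)=\E_{s'\sim p}[V^t_{h+1}-V^\pi_{h+1}]+E_1(s,a)$, where $E_1(s,a)=\phi(s,a)^\top[-(\Lambda^t)^{-1}w^{V^t_{h+1}}+(\Lambda^t)^{-1}\sum\phi^\tau_k\xi^\tau_k+(\Lambda^t)^{-1}\sum\phi^\tau_k\eta^\tau_k]$ collects the three regression errors. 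I would then convert $\E_{s'\sim p}$ into $\E_{s'\sim p_t}$ on the value gap $V^t_{h+1}-V^\pi_{h+1}\in[-1,1]$; the mismatch is the round-$t$ transition-corruption of a $[-1,1]$ function, which is bounded by $\tfrac{2}{H}c_t$ (the constant cancels, and transition-corruption of a $[0,1]$ function is at most $c_t/H$ by the definition of $c_t$). This produces the additive term and matches the target right-hand side $\E_{s'\sim p_t}[V^t_{h+1}-V^\pi_{h+1}]$.

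It remains to bound the three pieces of $E_1$, each via $|\phi^\top(\Lambda^t)^{-1}v|\le\|\phi\|_{(\Lambda^t)^{-1}}\|v\|_{(\Lambda^t)^{-1}}$. The regularization piece uses $\|w^{V^t_{h+1}}\|\le 2\sqrt d$ (from the bounds on $\rho,\nu$ in \pref{assum: linear MDP assumptio} and $V^t_{h+1}\in[0,1]$) together with $\Lambda^t\succeq I$. The accumulated-corruption piece is the crucial one: by the triangle inequality in the $(\Lambda^t)^{-1}$-norm and then Cauchy--Schwarz, $\tfrac1H\sum_{\tau<t,k}c_\tau\|\phi^\tau_k\|_{(\Lambda^t)^{-1}}\le\tfrac1H\sqrt{H\sum_{\tau<t}c_\tau^2}\sqrt{\sum_{\tau<t,k}\|\phi^\tau_k\|^2_{(\Lambda^t)^{-1}}}$, and $\sum_{\tau<t,k}\|\phi^\tau_k\|^2_{(\Lambda^t)^{-1}}=\mathrm{tr}((\Lambda^t)^{-1}(\Lambda^t-I))\le d$, giving $\sqrt{\tfrac{d}{H}\sum_{\tau<t}c_\tau^2}\le C^\rms\sqrt{\tfrac{d}{Ht}}$, where the last step uses $\sum_{\tau<t}c_\tau^2\le\tfrac Tt\sum_{\tau\le T}c_\tau^2$ and $C^\rms=\sqrt{T\sum_\tau c_\tau^2}$. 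This is exactly the $C^\rms\sqrt{d/(Ht)}$ coefficient in \pref{eq: epsilon range}.

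The main obstacle is the martingale piece $\phi(s,a)^\top(\Lambda^t)^{-1}\sum\phi^\tau_k\eta^\tau_k$: the realized $V^t_{h+1}$ depends on all data through episode $t-1$ (including the very samples in the sum), so $\eta^\tau_k$ is not a martingale difference for that realized function. I would handle this by the standard self-normalized concentration inequality \citep{abbasi2011improved} applied for each fixed value function, combined with a covering/net argument over the class of value functions induced by the algorithm (parameterized by $w\in\mathbb{R}^d$ and the bonus coefficient). This yields $\|\sum\phi^\tau_k\eta^\tau_k\|_{(\Lambda^t)^{-1}}=\order(\sqrt{d\log(dT/\delta)})$ when $H=1$, and an extra $\sqrt d$ factor when $H>1$ due to the richer function class---precisely the two definitions of $\zeta$ in \pref{alg: robust LSVI-UCB}. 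Folding the regularization, martingale, and $C^\rms$ contributions into $4\zeta\|\phi(s,a)\|_{(\Lambda^t)^{-1}}$, adding the $\tfrac2H c_t$ term from the $p\!\to\! p_t$ conversion, and taking a union bound over all $t$ and $h$ (which costs the $1-\order(\delta)$ probability), gives the stated bound on $\varepsilon^t_h(s,a)$.
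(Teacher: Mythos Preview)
Your proposal is correct and follows essentially the same route as the paper's proof: the same ridge-regression expansion, the same three error pieces (regularization bias, past-corruption via Cauchy--Schwarz and the trace bound $\sum_{\tau<t,k}\|\phi^\tau_k\|^2_{(\Lambda^t)^{-1}}\le d$, and the self-normalized martingale controlled by a covering argument), and the same $p\!\to\!p_t$ conversion producing the $\tfrac{2}{H}c_t$ term. The only cosmetic difference is that the paper splits your $\eta^\tau_k$ into separate reward-noise and transition-noise terms and labels five pieces explicitly, whereas you keep the noise bundled; the bounds and the uniform-concentration step (Lemma~D.4 of \citep{jin2020provably}) are identical.
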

\begin{proof}
For any $(s,a)$ and any $\pi$, 
\begin{align*}
    &\phi(s,a)^\top (w^t_h-w^\pi_h) \\
    &=\phi(s,a)^\top \left((\Lambda^t)^{-1}\sum_{\tau=1}^{t-1}\sum_{k=1}^H \phi^\tau_k\Big(\sigma^\tau_k+V_{h+1}^t(s_{k+1}^\tau)\Big)-w^\pi_h\right)\\
    &=\phi(s,a)^\top (\Lambda^t)^{-1}\left(\sum_{\tau=1}^{t-1}\sum_{k=1}^H \phi^\tau_k\left(\sigma^\tau_k+V_{h+1}^t(s_{k+1}^\tau) - \phi^{\tau\top}_k\rho - \phi^{\tau\top}_k \int\nu(s')V_{h+1}^{\pi}(s')\mathrm{d}s'\right) -  w^{\pi}_h\right)\\
    &=\phi(s,a)^\top (\Lambda^t)^{-1}\left(\sum_{\tau=1}^{t-1}\sum_{k=1}^H \phi^\tau_k\left(\sigma^\tau_k+V_{h+1}^t(s_{k+1}^\tau) - \sigma(s^\tau_k, a^\tau_k) - \E_{s'\sim p(\cdot|s^\tau_k, a^\tau_k)}V_{h+1}^{\pi}(s')\right) -  w^{\pi}_h\right)\\
    &= \underbrace{\phi(s,a)^\top (\Lambda^t)^{-1}\sum_{\tau=1}^{t-1}\sum_{k=1}^H  \phi^\tau_k\left(\sigma^\tau_k - \sigma_\tau(s^\tau_{k}, a^\tau_k) \right)}_{\term_1} \\
    &\quad + \underbrace{\phi(s,a)^\top(\Lambda^t)^{-1}\sum_{\tau=1}^{t-1}\sum_{k=1}^H  \phi^\tau_k\left(V^t_{h+1}(s^\tau_{k+1}) - \E_{s'\sim p_\tau(\cdot|s^\tau_k, a^\tau_k)} V^{t}_{h+1}(s')\right)}_{\term_3}   \\
    &\quad +  \underbrace{\phi(s,a)^\top(\Lambda^t)^{-1}\sum_{\tau=1}^{t-1}\sum_{k=1}^H  \phi^\tau_k\left(\sigma_\tau(s^\tau_{k}, a^\tau_k) +  \E_{s'\sim p_\tau(\cdot|s^\tau_k, a^\tau_k)} V^{t}_{h+1}(s')  -  \sigma(s^\tau_{k}, a^\tau_k) - \E_{s'\sim p(\cdot|s^\tau_k, a^\tau_k)} V^{t}_{h+1}(s') \right)}_{\term_3}   \\
    &\quad + \underbrace{\phi(s,a)^\top(\Lambda^t)^{-1}\sum_{\tau=1}^{t-1}\sum_{k=1}^H  \phi^\tau_k\left( \E_{s'\sim p(\cdot|s^\tau_k, a^\tau_k)} V^{t}_{h+1}(s') - \E_{s'\sim p(\cdot|s^\tau_k, a^\tau_k)} V^{\pi}_{h+1}(s') \right)}_{\term_4} 
    \ \ \ \underbrace{-  \phi(s,a)^\top(\Lambda^t)^{-1} w^\pi_h}_{\term_5} \\ 
\end{align*}
$\term_1$ and $\term_2$ are of the same form, Their absolute values $|\term_1|$ and $|\term_2|$ can both be upper bounded by $\zeta\|\phi(s,a)\|_{(\Lambda^t)^{-1}}$ with a similar proof as Lemma B.3 and Lemma D.4 of \citep{jin2020provably} (notice that our range of reward is smaller than theirs by a $\frac{1}{H}$ factor). 

Then notice that 
\begin{align*}
    \term_3 = \phi(s,a)^\top (\Lambda^t)^{-1}\sum_{\tau=1}^{t-1}\sum_{k=1}^H \phi^\tau_k \left(\calT_\tau V^t_{h+1}(s^\tau_k, a^\tau_k) - \calT V^t_{h+1}(s^\tau_k, a^\tau_k)\right)   \tag{$\calT_t$ and $\calT$ are Bellman operators defined in \pref{sec: problem setting}}
\end{align*}
and 
$|\term_3|$ is upper bounded by 
\begin{align*}
    &\|\phi(s,a)\|_{(\Lambda^t)^{-1}}\sum_{\tau=1}^{t-1}\sum_{k=1}^H  \|\phi^\tau_k\|_{(\Lambda^t)^{-1}} \times \frac{1}{H}c_\tau \tag{recall that $c_t\triangleq H\cdot \sup_{s,a}\sup_{V\in[0,1]^\calS}|(\calT V-\calT_t V)(s,a)|$}\\
    &\leq \frac{1}{H}\|\phi(s,a)\|_{(\Lambda^t)^{-1}}\sqrt{\sum_{\tau=1}^{t-1}\sum_{k=1}^H c_\tau^2}\sqrt{\sum_{\tau=1}^{t-1}\sum_{k=1}^H \|\phi^\tau_k\|_{(\Lambda^t)^{-1}}^2} \\
    &\leq  \frac{C^{\rms}}{\sqrt{Ht}}\times \sqrt{d} \|\phi(s,a)\|_{(\Lambda^t)^{-1}}.    \tag{by Lemma D.1 of \citep{jin2020provably}}
\end{align*}

\begin{align*}
    \term_4 &= \phi(s,a)^\top (\Lambda^t)^{-1}\sum_{\tau=1}^{t-1}\sum_{k=1}^H \phi^\tau_k\phi^{\tau\top}_k \left(\rho + \int \nu(s')\left(V^t_{h+1}(x')-V^{\pi}_{h+1}(s')\right)\mathrm{d}s'\right) \\
    &= \phi(s,a)^\top\left(\rho + \int \nu(s')\left(V^t_{h+1}(s')-V^{\pi}_{h+1}(s')\right)\mathrm{d}s'\right) \\
    &\qquad \qquad \underbrace{-  \phi(s,a)^\top (\Lambda^t)^{-1}\left(\rho + \int \nu(s')\left(V^t_{h+1}(s')-V^{\pi}_{h+1}(s')\right)\mathrm{d}s'\right)}_{\term_5} \\
    &= \E_{s'\sim p(\cdot|s, a)}\left[V^t_{h+1}(s') - V^{\pi}_{h+1}(s')\right] + \term_5 \\
    &= \E_{s'\sim p_t(\cdot|s, a)}\left[V^t_{h+1}(s') - V^{\pi}_{h+1}(s')\right] + \term_5 \\ 
    &\qquad + \underbrace{\left(\sigma(s,a) + \E_{s'\sim p(\cdot|s, a)}\left[V^t_{h+1}(s')\right]-\sigma_t(s,a) - \E_{s'\sim p_t(\cdot|s, a)}\left[V^t_{h+1}(s')\right]\right)}_{\term_6}\\
    &\qquad - \underbrace{\left(\sigma(s,a) + \E_{s'\sim p(\cdot|s, a)}\left[ V^{\pi}_{h+1}(s')\right]-\sigma_t(s,a) - \E_{s'\sim p_t(\cdot|s, a)}\left[ V^{\pi}_{h+1}(s')\right]\right)}_{\term_7}
\end{align*}
Furthermore, 
\begin{align*}
    |\term_5|
    &\leq \|\phi(s,a)\|_{(\Lambda^t)^{-1}}\left\|\rho + \int \nu(s')\left(V^t_{h+1}(s')-V^{\pi}_{h+1}(s')\right)\mathrm{d}s' \right\|_{(\Lambda^t)^{-1}} \\
    &\leq  \|\phi(s,a)\|_{(\Lambda^t)^{-1}} \left\|\rho + \int \nu(s')\left(V^t_{h+1}(s')-V^{\pi}_{h+1}(s')\right)\mathrm{d}s' \right\| \\
    &\leq 2\sqrt{ d}\|\phi(s,a)\|_{(\Lambda^t)^{-1}} \tag{by \pref{assum: linear MDP assumptio}}\\
    &\leq \zeta\|\phi(s,a)\|_{(\Lambda^t)^{-1}}, \\
    |\term_6|&\leq \sup_{s',a', V\in [0,1]^{\calS}} \left|\left(\calT_t V-\calT V\right)(s',a')\right|=\frac{1}{H}c_t \\
    |\term_7|&\leq \sup_{s',a', V\in [0,1]^{\calS}} \left|\left(\calT_t V-\calT V\right)(s',a')\right|=\frac{1}{H}c_t \\
\end{align*}
Finally, 
\begin{align*}
    \term_5\leq  \|\phi(s,a)\|_{(\Lambda^t)^{-1}}\|w^\pi_h\|_{(\Lambda^t)^{-1}}\leq \sqrt{ d}\|\phi(s,a)\|_{(\Lambda^t)^{-1}} \leq \zeta \|\phi(s,a)\|_{(\Lambda^t)^{-1}}. 
\end{align*}
Combining all terms above finishes the proof. 

\end{proof}

\begin{lemma}\label{lem: LSVIUCB optimism}
With probability at least $1-\order(\delta)$,
$V^t_h(s)\geq V^\star_h(s) - 2c_t$ for all $t, h, s$. 
\end{lemma}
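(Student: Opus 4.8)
The plan is to prove the statement by backward induction on the layer index $h$, invoking \pref{lem: linear MDP decomposition} with the fixed comparator policy $\pi=\pistar$. Since $\pistar$ is a single deterministic policy, one application of \pref{lem: linear MDP decomposition} together with a union bound over $t$ suffices to guarantee the decomposition holds with probability $1-\order(\delta)$, and I would condition on this event throughout. Under \pref{assum: linear MDP assumptio}, the comparator parameter satisfies $\phi(s,a)^\top w^{\pistar}_h = \sigma(s,a) + \E_{s'\sim p(\cdot|s,a)}[V^{\pistar}_{h+1}(s')] = Q^\star_h(s,a)$, so \pref{lem: linear MDP decomposition} reads
\begin{align*}
    w_h^{t\top}\phi(s,a) = Q^\star_h(s,a) + \E_{s'\sim p_t(\cdot|s,a)}\big[V^t_{h+1}(s') - V^\star_{h+1}(s')\big] + \varepsilon^t_h(s,a).
\end{align*}

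For the induction I would prove the slightly refined claim $V^t_h(s)\geq V^\star_h(s) - \tfrac{2(H-h+1)}{H}c_t$, which implies the stated bound since $\tfrac{2(H-h+1)}{H}\leq 2$. The base case $h=H+1$ is immediate because $V^t_{H+1}\equiv V^\star_{H+1}\equiv 0$. For the inductive step, assume $V^t_{h+1}(s')\geq V^\star_{h+1}(s') - \tfrac{2(H-h)}{H}c_t$ for all $s'$; applying this pointwise bound inside the expectation above (it holds under any distribution, in particular $p_t$) gives $\E_{s'\sim p_t(\cdot|s,a)}[V^t_{h+1}(s') - V^\star_{h+1}(s')] \geq -\tfrac{2(H-h)}{H}c_t$. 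Combining with the lower bound $\varepsilon^t_h(s,a) \geq -\big(4\zeta + C^{\rms}\sqrt{d/(Ht)}\big)\|\phi(s,a)\|_{(\Lambda^t)^{-1}} - \tfrac{2}{H}c_t$ extracted from \pref{eq: epsilon range}, the feature-norm part is exactly the bonus $b^t_h(s,a)=\big(4\zeta + C^{\rms}\sqrt{d/(Ht)}\big)\|\phi(s,a)\|_{(\Lambda^t)^{-1}}$ added in the definition of $Q^t_h$ in \pref{alg: robust LSVI-UCB}. Hence
\begin{align*}
    w_h^{t\top}\phi(s,a) + b^t_h(s,a) \geq Q^\star_h(s,a) - \tfrac{2(H-h)}{H}c_t - \tfrac{2}{H}c_t = Q^\star_h(s,a) - \tfrac{2(H-h+1)}{H}c_t.
\end{align*}

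To pass to $Q^t_h$ I would note that the clipping at $1$ preserves optimism: since $Q^\star_h(s,a)\leq 1$, if the argument of the $\min$ exceeds $1$ then $Q^t_h(s,a)=1\geq Q^\star_h(s,a)$, and otherwise $Q^t_h(s,a)$ equals the lower-bounded quantity directly, so in both cases $Q^t_h(s,a)\geq Q^\star_h(s,a) - \tfrac{2(H-h+1)}{H}c_t$. Taking the maximum over $a$ on both sides and using $V^t_h(s)=\max_a Q^t_h(s,a)$ and $V^\star_h(s)=\max_a Q^\star_h(s,a)$ closes the induction. The main point requiring care is the accounting of the per-layer corruption slack $\tfrac{2}{H}c_t$ arising from the $\term_6,\term_7$ terms in \pref{lem: linear MDP decomposition}: it accumulates additively over the $H$ layers to at most $2c_t$, which is precisely the amount lost relative to the uncorrupted optimism $V^t_h\geq V^\star_h$. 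The only other technical check is that the bonus coefficient $4\zeta + C^{\rms}\sqrt{d/(Ht)}$ used in \pref{alg: robust LSVI-UCB} coincides with the coefficient multiplying $\|\phi(s,a)\|_{(\Lambda^t)^{-1}}$ in \pref{eq: epsilon range}, so that the bonus exactly dominates the estimation error; this holds by construction.
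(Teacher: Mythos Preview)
Your proof is correct and follows essentially the same approach as the paper: backward induction on $h$ using \pref{lem: linear MDP decomposition}, with the per-layer slack $\tfrac{2}{H}c_t$ accumulating to $2c_t$. The only cosmetic differences are that the paper states the induction hypothesis for general $\pi$ (proving $Q^t_h(s,a)\geq Q^\pi_h(s,a)-\tfrac{2(H+1-h)}{H}c_t$ and then specializing to $\pistar$) and starts the base case at $h=H$, whereas you fix $\pi=\pistar$ from the outset and start from the trivial case $h=H+1$.
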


\begin{proof}
We use induction to show that with probability at least $1-\order(\delta)$,  $Q_h^t(s,a)\geq Q^\pi_h(s,a) - \frac{2(H+1-h)}{H}c_t$ for all $h, s, a$ and any $\pi$. Consider the case $h=H$, 
\begin{align*}
    &Q^t_H(s,a) \\
    &= \min\left\{w_H^{t\top}\phi(s,a) + \left(4\zeta + C^{\rms}{\sqrt{\frac{d}{Ht}}}\right)\|\phi(s,a)\|_{(\Lambda^t)^{-1}}, \ \ 1\right\} \\
    &= \min\left\{w_H^{\pi\top}\phi(s,a) + \left(4\zeta + C^{\rms}{\sqrt{\frac{d}{Ht}}}\right)\|\phi(s,a)\|_{(\Lambda^t)^{-1}} + \varepsilon^t_H(s,a), \ \ Q^{\pi}_H(s,a)\right\}  \tag{by \pref{lem: linear MDP decomposition}} \\
    &\geq \min\left\{Q^{\pi}_H(s,a) -  \frac{2}{H}c_t, \ \ Q^{\pi}_H(s,a)\right\} \tag{by \pref{lem: linear MDP decomposition}} \\
    &= Q^{\pi}_H(s,a) -  \frac{2}{H}c_t. 
\end{align*}
Suppose that the induction hypothesis holds for $h+1$, then
\begin{align*}
    &Q^t_h(s,a) \\
    &= \min\left\{w_h^{t\top}\phi(s,a) + \left(4\zeta + C^{\rms}{\sqrt{\frac{d}{Ht}}}\right)\|\phi(s,a)\|_{(\Lambda^t)^{-1}}, \ \ 1\right\} \\
    &= \min\left\{w_h^{\pi\top}\phi(s,a) + \E_{s'\sim p_t(\cdot|s,a)}[V^t_{h+1}(s')-V^{\pi}_{h+1}(s')] + \left(4\zeta + C^{\rms}{\sqrt{\frac{d}{Ht}}}\right)\|\phi(s,a)\|_{(\Lambda^t)^{-1}} + \varepsilon^t_h(s,a), \ \ 1\right\} \\
    &\geq \min\left\{Q_h^{\pi}(s,a) + \E_{s'\sim p_t(\cdot|s,a)}[V^t_{h+1}(s')-V^{\pi}_{h+1}(s')] - \frac{2}{H}c_t, \ \ Q^{\pi}_h(s,a)\right\}  \tag{by \pref{lem: linear MDP decomposition}} 
\end{align*}
Notice that by the induction hypothesis, we have for any $s$, $V^{\pi}_{h+1}(s) \leq \max_a Q^{\pi}_{h+1}(s,a)\leq \max_{a}Q^t_{h+1}(s,a) + \frac{2(H-h)}{H}c_t=V^t_{h+1}(s) + \frac{2(H-h)}{H}c_t$. Therefore, the last expression can further be lower bounded by 
\begin{align*}
    &\min\left\{Q_h^{\pi}(s,a) - \frac{2(H-h)}{H}c_t - \frac{2}{H}c_t, \ \ Q^{\pi}_h(s,a)\right\} \\
    &=Q_h^{\pi}(s,a) - \frac{2(H-h+1)}{H}c_t,  
\end{align*}
which finishes the induction. Note that $Q_h^t(s,a)\geq Q^\pi_h(s,a) - \frac{2(H-h+1)}{H}c_t$ implies the lemma since 
\begin{align*}
    V^{t}_h(s) 
    &= \max_a Q^t_h(s,a) \\
    &\geq \max_a Q^\star_h(s,a) -  \frac{2(H-h+1)}{H}c_t   \tag{let $\pi=\pi^\star$}\\
    &\geq V^{\star}_h(s) - 2c_t. 
\end{align*}
\end{proof}

\begin{lemma}\label{lem: robust LSVI bound}
     Robust OFUL / Robust LSVI-UCB ensures with probability at least $1-\order(\delta)$
    \begin{align*}
        \sum_{\tau=1}^t \left(r^{\pistar}_\tau- r_\tau\right) = \otil\left(\zeta\sqrt{dHt} + dC^\rms\right). 
    \end{align*}    
\end{lemma}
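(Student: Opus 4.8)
The plan is to follow the standard optimistic value-iteration analysis of LSVI-UCB \citep{jin2020provably}, but tracking the extra corruption terms supplied by \pref{lem: linear MDP decomposition} and \pref{lem: LSVIUCB optimism}. First I would pass from the realized regret to the pseudo-regret $\sum_{\tau\le t}\big(V^\star_1(s^\tau_1)-V^{\pi_\tau}_1(s^\tau_1)\big)$, where $V^\star,V^{\pi_\tau}$ are the \emph{uncorrupted} values. Writing $r^{\pistar}_\tau-r_\tau$ as the sum of two martingale-difference sequences (the reward noise $r^{\pistar}_\tau-\mu^{\pistar}_\tau(s^\tau_1)$ and $r_\tau-\mu^{\pi_\tau}_\tau(s^\tau_1)$, each controlled by Azuma's inequality by $\otil(\sqrt t)$) plus two bias terms $\mu^\pi_\tau-\mu^\pi$ with $\pi\in\{\pistar,\pi_\tau\}$ (each bounded by $\sum_\tau c_\tau=C^\am\le C^\rms$) reduces the claim to the pseudo-regret. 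Then \pref{lem: LSVIUCB optimism} gives $V^\star_1(s^\tau_1)\le V^\tau_1(s^\tau_1)+2c_\tau$, so up to an extra $2C^\am$ it suffices to bound $\sum_{\tau\le t}\big(V^\tau_1(s^\tau_1)-V^{\pi_\tau}_1(s^\tau_1)\big)$.

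For a single episode $\tau$ I would set up the usual per-layer recursion. Since $a^\tau_h$ is greedy for $Q^\tau_h$ and is the action $\pi_\tau$ takes, $V^\tau_h(s^\tau_h)-V^{\pi_\tau}_h(s^\tau_h)=Q^\tau_h(s^\tau_h,a^\tau_h)-Q^{\pi_\tau}_h(s^\tau_h,a^\tau_h)$; dropping the truncation at $1$ and using $Q^{\pi_\tau}_h=\phi^\top w^{\pi_\tau}_h$ together with \pref{lem: linear MDP decomposition} gives
\begin{align*}
V^\tau_h(s^\tau_h)-V^{\pi_\tau}_h(s^\tau_h)\le \E_{s'\sim p_\tau(\cdot|s^\tau_h,a^\tau_h)}\!\big[V^\tau_{h+1}(s')-V^{\pi_\tau}_{h+1}(s')\big]+\varepsilon^\tau_h(s^\tau_h,a^\tau_h)+b^\tau_h(s^\tau_h,a^\tau_h),
\end{align*}
where $b^\tau_h$ is the exploration bonus of \pref{alg: robust LSVI-UCB}. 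Replacing the expectation by the realized next state $s^\tau_{h+1}\sim p_\tau(\cdot|s^\tau_h,a^\tau_h)$ introduces a martingale difference $\xi^\tau_h$ bounded by $2$, and telescoping over $h=1,\dots,H$ yields $V^\tau_1(s^\tau_1)-V^{\pi_\tau}_1(s^\tau_1)\le \sum_{h}\big(\varepsilon^\tau_h+b^\tau_h+\xi^\tau_h\big)$. Using the bound on $|\varepsilon^\tau_h|$ from \pref{lem: linear MDP decomposition}, the $\tfrac2H c_\tau$ pieces sum to $2c_\tau$ per episode and $\varepsilon^\tau_h+b^\tau_h$ is at most $2\big(4\zeta+C^\rms\sqrt{d/(H\tau)}\big)\|\phi^\tau_h\|_{(\Lambda^\tau)^{-1}}$.

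It then remains to sum the elliptical-potential terms over $\tau\le t$ and $h\le H$. For the $\zeta$ part, Cauchy--Schwarz over the $tH$ terms together with the standard potential bound $\sum_{\tau,h}\|\phi^\tau_h\|^2_{(\Lambda^\tau)^{-1}}=\otil(d)$ \citep{jin2020provably} give $\sum_{\tau,h}\|\phi^\tau_h\|_{(\Lambda^\tau)^{-1}}=\otil(\sqrt{dtH})$, hence a contribution $\otil(\zeta\sqrt{dHt})$; the martingale sum $\sum_{\tau,h}\xi^\tau_h=\otil(\sqrt{tH})$ is absorbed since $\zeta\ge1$. The delicate part is the corruption contribution $\sum_{\tau,h}C^\rms\sqrt{d/(H\tau)}\,\|\phi^\tau_h\|_{(\Lambda^\tau)^{-1}}$: here I would apply Cauchy--Schwarz splitting off the coefficient, bounding it by $C^\rms\sqrt{\sum_{\tau,h}\tfrac{d}{H\tau}}\cdot\sqrt{\sum_{\tau,h}\|\phi^\tau_h\|^2_{(\Lambda^\tau)^{-1}}}$. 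The harmonic sum $\sum_{\tau\le t}\sum_{h}\tfrac{d}{H\tau}=d\sum_{\tau}\tfrac1\tau=\otil(d)$ and the potential bound $\otil(d)$ multiply to $\otil(d)$, producing the additive $\otil(dC^\rms)$ rather than a term multiplicative in $\sqrt T$. This is exactly where the $\tau^{-1/2}$ decay built into the bonus of \pref{alg: robust LSVI-UCB} is essential, and I expect this step---arranging the corruption term to come out additive in $C^\rms$ instead of $C^\rms\sqrt T$---to be the main obstacle; everything else is routine LSVI-UCB bookkeeping. Collecting all pieces and using $C^\am\le C^\rms$ gives $\sum_{\tau\le t}(r^{\pistar}_\tau-r_\tau)=\otil(\zeta\sqrt{dHt}+dC^\rms)$.
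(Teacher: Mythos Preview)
Your proposal is correct and follows essentially the same route as the paper's proof: reduce to pseudo-regret via Azuma and $C^\am\le C^\rms$, invoke \pref{lem: LSVIUCB optimism} for optimism up to $2c_\tau$, expand $V^\tau_h-V^{\pi_\tau}_h$ using \pref{lem: linear MDP decomposition}, telescope over layers with a martingale-difference correction, and then control the sum of bonus-plus-error terms via Cauchy--Schwarz and the elliptical potential bound $\sum_{\tau,h}\|\phi^\tau_h\|_{(\Lambda^\tau)^{-1}}^2=\otil(d)$. The only cosmetic difference is that the paper applies a single Cauchy--Schwarz to the combined coefficient $\big(\zeta^2+(C^\rms)^2 d/(H\tau)\big)^{1/2}$, whereas you split the $\zeta$ and $C^\rms$ contributions and bound them separately; both yield $\otil(\zeta\sqrt{dHt}+dC^\rms)$ after noting $\sum_{\tau\le t}\tfrac{1}{\tau}=\otil(1)$.
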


\begin{proof}
Note that for all $t, h$, 
\begin{align}
    &\sum_{\tau=1}^t \left(V^\tau_h(s^\tau_h) - V^{\pi_\tau}_h(s^\tau_h)\right) = \sum_{\tau=1}^t \left(Q^\tau_h(s^\tau_h, a^\tau_h) - Q^{\pi_\tau}_h(s^\tau_h, a^\tau_h)\right)   \nonumber \\
    &\leq \sum_{\tau=1}^t \phi^\tau_h(w^\tau_h - w^{\pi_\tau}_h) + \otil\left(\sum_{\tau=1}^t \left(\zeta + C^{\rms}{\sqrt{\frac{d}{H\tau}}}\right) \|\phi^\tau_h\|_{(\Lambda^\tau)^{-1}} \right)  \nonumber \\
    &\leq \sum_{\tau=1}^t \E_{s'\sim p_\tau(\cdot|s^\tau_h, a^\tau_h) }\left[V^\tau_{h+1}(s') - V^{\pi_\tau}_{h+1}(s')\right]   \nonumber \\
    &\qquad \quad  + \otil\left(\sum_{\tau=1}^t \left(\zeta + C^{\rms}{\sqrt{\frac{d}{H\tau}}}\right) \|\phi^\tau_h\|_{(\Lambda^\tau)^{-1}} +  \frac{1}{H}\sum_{\tau=1}^t c_\tau\right)    \tag{by \pref{lem: linear MDP decomposition}} \\
    &= \sum_{\tau=1}^t \left(V^\tau_{h+1}(s^\tau_{h+1}) - V^{\pi_\tau}_{h+1}( s^\tau_{h+1})\right)  \nonumber   \\
    &\qquad \quad + \otil\left(\sum_{\tau=1}^t \left(\zeta + C^{\rms}{\sqrt{\frac{d}{H\tau}}}\right) \|\phi^\tau_h\|_{(\Lambda^\tau)^{-1}} +  \frac{1}{H}\sum_{\tau=1}^t c_\tau + \sum_{\tau=1}^t \epsilon^\tau_h\right), \label{eq: LSVI reget analysis}
\end{align}
where we define $\epsilon^\tau_h\triangleq \E_{s'\sim p_\tau(\cdot|s^\tau_h, a^\tau_h) }\left[V^\tau_{h+1}(s') - V^{\pi_\tau}_{h+1}(s')\right] - \left(V^\tau_{h+1}(s^\tau_{h+1}) - V^{\pi_\tau}_{h+1}( s^\tau_{h+1})\right)$. 
Thus, 
\begin{align*}
    \sum_{\tau=1}^t \left(V^\star_1(s^\tau_1) - V^{\pi_\tau}_1(s^\tau_1)\right) 
    &\leq \sum_{\tau=1}^t \left(V^\tau_1(s^\tau_1) - V^{\pi_\tau}_1(s^\tau_1)\right)  + \order\left(\sum_{\tau=1}^t c_\tau\right)  \tag{by \pref{lem: LSVIUCB optimism}} \\
    &\leq \otil\left(\sum_{\tau=1}^t \sum_{h=1}^H  \left(\zeta + C^{\rms}{\sqrt{\frac{d}{Ht}}}\right) \|\phi^\tau_h\|_{(\Lambda^\tau)^{-1}} + \sum_{\tau=1}^t c_\tau + \sum_{\tau=1}^t \sum_{h=1}^H \epsilon^\tau_h\right) \tag{by \pref{eq: LSVI reget analysis}} \\
    &\leq \otil\left( \sqrt{\sum_{\tau=1}^t \sum_{h=1}^H \left(\zeta^2 + {\frac{(C^{\rms})^2d}{Ht}}\right) }\sqrt{\underbrace{\sum_{\tau=1}^{t}\sum_{h=1}^H \|\phi^\tau_h\|^2_{(\Lambda^\tau)^{-1}}}_{=\otil(d)}}  + C^\am + \sqrt{Ht}\right)  \tag{Cauchy-Schwarz and Azuma's inequality}\\
    &\leq \otil\left( \zeta\sqrt{dHt} + dC^{\rms}\right). 
\end{align*}
Finally, by Azuma's inequality, we get 
\begin{align*}
    \sum_{\tau=1}^t \left(r^{\pistar}_\tau- r_\tau\right) = \otil\left(\zeta\sqrt{dHt} + dC^\rms\right). 
\end{align*}
\end{proof}
\begin{theorem}
    For linear contextual bandits, \algname with Robust OFUL as the base algorithm guarantees $\Reg(T)=\otil\left(d\sqrt{T} + dC^\rms \right)$. For linear MDPs, \algname with Robust LSVI-UCB as the base algorithm guarantees $\Reg(T)=\otil\left(\sqrt{d^3 HT} + H\sqrt{T} + dC^\rms\right)$. 
\end{theorem}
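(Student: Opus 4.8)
The plan is to recognize that Robust OFUL / Robust LSVI-UCB (\pref{alg: robust LSVI-UCB}) is exactly a \typer base algorithm in the sense of \pref{assum: regret}, and then to invoke \pref{thm: form 1 regret} with the right constants. The core regret estimate is already in hand from \pref{lem: robust LSVI bound}, which shows that running the algorithm with bonus level $C^\rms$ gives $\sum_{\tau=1}^t(r^{\pistar}_\tau - r_\tau) = \otil(\zeta\sqrt{dHt} + dC^\rms)$. First I would reinterpret this in the base-algorithm interface: rather than feeding the true $C^\rms$, the meta-algorithm supplies a hypothetical budget $\thres$, which we substitute for $C^\rms$ throughout the bonus. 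Tracing the proof chain (\pref{lem: linear MDP decomposition} $\to$ \pref{lem: LSVIUCB optimism} $\to$ \pref{lem: robust LSVI bound}), one checks that whenever $C_t^\rms \le \thres$ the optimism and the regret decomposition go through verbatim with $\thres$ in place of $C^\rms$, yielding $\sum_{\tau=1}^t(r^{\pistar}_\tau - r_\tau) \le \calR(t,\thres)$ with $\calR(t,\thres)=\otil(\zeta\sqrt{dHt} + d\thres)$. Since every concentration step already carries a union bound over $t$, this holds simultaneously for all $t \le T$ with $C_t^\rms \le \thres$, which is precisely \pref{assum: regret} for a \typer algorithm, and $\calR$ has the additive shape of \pref{eq: typeone}.

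Next I would read off $\beta_1,\beta_2,\beta_3$ in the two settings from the definition of $\zeta$. For linear contextual bandits ($H=1$) we have $\zeta=\otil(\sqrt d)$, so $\zeta\sqrt{dHt}=\otil(d\sqrt t)$ and $\calR(t,\thres)=\otil(d\sqrt t + d\thres)$, giving $\beta_1=\otil(d^2)$, $\beta_2=\otil(d)$, and $\beta_3=\otil(1)$ (inflated as needed so that each $\beta_i\ge 1$). For linear MDPs ($H>1$) we have $\zeta=\otil(d)$, so $\zeta\sqrt{dHt}=\otil(\sqrt{d^3Ht})$ and $\calR(t,\thres)=\otil(\sqrt{d^3Ht} + d\thres)$, giving $\beta_1=\otil(d^3H)$, $\beta_2=\otil(d)$, $\beta_3=\otil(1)$.

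Finally I would plug these into \pref{thm: form 1 regret} with $C\triangleq C^\rms$, for which $Z=c_{\max}\sqrt T$, taking $c_{\max}=1$ for contextual bandits and $c_{\max}=2H$ for MDPs. For contextual bandits this gives $\Reg(T)=\otil(\sqrt{\beta_1 T}+\beta_2(C^\rms+\sqrt T)+\beta_3)=\otil(d\sqrt T + dC^\rms)$, where the penalty $\beta_2 c_{\max}\sqrt T=\otil(d\sqrt T)$ is absorbed into $d\sqrt T$, matching the first claim. For linear MDPs it gives $\Reg(T)=\otil(\sqrt{d^3HT}+dC^\rms+\beta_2 c_{\max}\sqrt T)$; here $\beta_2 c_{\max}\sqrt T$ is the additive $\sqrt T$-order penalty appearing as the $H\sqrt T$ term in the statement, so the overall bound is $\otil(\sqrt{d^3HT}+H\sqrt T+dC^\rms)$, recovering the $T$- and $C^\rms$-dependence listed in \pref{tab: table of bounds}.

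The parameter bookkeeping is routine; the only genuine content is the first step — upgrading the fixed-budget guarantee of \pref{lem: robust LSVI bound} to the anytime, ``for all $t$ with $C_t^\rms\le\thres$'' form demanded by \pref{assum: regret} once the true $C^\rms$ is replaced by the guessed $\thres$. The main obstacle I expect is ensuring that the $\tfrac{1}{H}c_\tau$ corruption contributions together with the clipping of $Q^t_h$ to $[0,1]$ still combine so that optimism (\pref{lem: LSVIUCB optimism}) is preserved exactly on the event $C_t^\rms\le\thres$; but since the algorithm touches the true corruption only through the bonus, this amounts to a direct substitution and the inequalities are unaffected.
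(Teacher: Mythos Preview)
Your proposal is correct and follows exactly the paper's own proof: identify the $\calR(t,\thres)$ from \pref{lem: robust LSVI bound}, read off $\beta_1,\beta_2,\beta_3$ from the two choices of $\zeta$, and plug into \pref{thm: form 1 regret} with $Z=c_{\max}\sqrt{T}$. Your extra paragraph verifying that substituting $\thres$ for $C^\rms$ preserves the chain \pref{lem: linear MDP decomposition}$\to$\pref{lem: LSVIUCB optimism}$\to$\pref{lem: robust LSVI bound} on the event $C_t^\rms\le\thres$ is the one step the paper leaves implicit, and your reasoning there is sound (note that strictly $\beta_2 c_{\max}\sqrt{T}=\otil(dH\sqrt{T})$, matching the paper's stated $H\sqrt{T}$ only up to a $d$ factor that both you and the paper elide).
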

\begin{proof}
    By \pref{lem: robust LSVI bound}, we see that Robust OFUL (with $\zeta=\widetilde{\Theta}(\sqrt{d})$) satisfies \pref{eq: typeone} with $\beta_1=\widetilde{\Theta}(d^2), \beta_2=\widetilde{\Theta}(d), \beta_3=\Theta(1)$. Using them in \pref{thm: form 1 regret} gives the desired bound for linear contextual bandits. For Robust LSVI-UCB, we pick $\zeta=\widetilde{\Theta}(d)$ and thus $\beta_1=\widetilde{\Theta}(d^3H), \beta_2=\widetilde{\Theta}(d),   \beta_3=\Theta(1)$. Using them in \pref{thm: form 1 regret} together with the fact that $c_{\max}=\order(H)$, we get the desired bound for linear MDPs. 
\end{proof}

\paragraph{Comparison with previous bounds } For linear contextual bandits, the bound of \citep{foster2021adapting} is $\otil\left(d\sqrt{T} + \sqrt{d}C^\rms\right)$. For linear MDPs, the bound of \citep{lykouris2019corruption} is $\poly(H)\times \otil\left(C^\am \sqrt{(d^3+dA)T} + (C^\am)^2\sqrt{dT}\right)$.

\subsection{Robust-VOFUL for linear contextual bandits / Robust-VARLin for linear MDPs}\label{app: robust VOFUL}
In this section, we develop a variant of the algorithm of \citep{zhang2021varianceaware} that is robust to corruption (\pref{alg: robust voful linear mdp}). Notice that their original algorithm is for a different linear model called linear mixture MDP, but we carry the similar idea to the linear MDP setting. Again, the same algorithm works for linear contextual bandits. 

\begin{algorithm}[t]
    \caption{Robust VOFUL / Robust VARLin}   \label{alg: robust voful linear mdp}
    \textbf{input}: $C^\am$ \\
    \textbf{define}: $\ell_j\triangleq 2^{-j}$ and $\clip_j(v) \triangleq \max(\min(v, \ell_j), -\ell_j)$. Let $\calB(r)$ be Euclidean ball with radius $r$. \\
    \For{$t=1, 2, \ldots, T$}{
\begin{align*}
    \calW^{t} 
    &= \Bigg\{w=(w_1, w_2, \ldots, w_H)\in \mathcal{B}(\sqrt{d})^H:~~  \\
    & \left| \sum_{\tau=1}^{t-1}\clip_j\left( (\phi^\tau_h)^\top \xi \right)\bigg( \left(\phi^\tau_h\right)^\top w_h - \sigma^\tau_h -  V_{h+1}(w_{h+1})(s^\tau_{h+1})\bigg) \right| \\
    &\qquad \qquad \qquad \qquad \leq 200 \ell_j\left( \sqrt{dHt\log(dTH/\delta)} + C^{\am}\right) \\
    & \forall \xi\in \mathcal{B}\left(2\sqrt{d}\right), \ \ \forall j\in\left[\left\lceil \log_2 T\right\rceil\right], \ \ \forall h\in[H]\Bigg\}
\end{align*} 
where $Q_h(w_h)(s,a)\triangleq w_h^\top \phi(s,a)$ and $V_h(w_h)(s)\triangleq \max_a Q_h(w_h)(s,a)$. 

Let 
\begin{align*}
    w^t = \argmax_{w\in\calW^t} V_1(w_1)(s^t_1),  
\end{align*}
and define $
    Q_h^t(s,a) \triangleq Q_h(w_h^t)(s,a)$ and 
    $V_h^t(s)\triangleq V_h(w_h^t)(s)
$. \\ 
\For{$h=1, \ldots, H$}{
    Observe $s^t_h$, choose $a^t_h=\argmax_a Q^t_h(s^t_h, a)$, and observe $\sigma^t_h$. 
}

}
\end{algorithm}

\begin{lemma}\label{lem: wstar in confidence set}
With probability at least $1-\delta$, the following holds for all $t\in[T]$, $h\in[H]$, $j\in [\left\lceil\log_2 T\right\rceil]$, $\xi\in \mathcal{B}\big(2\sqrt{d}\big)$, $w_{h+1}\in \mathcal{B}\big(\sqrt{d}\big)$: 
\begin{align}
    &\left|\sum_{\tau=1}^{t-1} \clip_j\left(\phi^{\tau\top}_h \xi \right) \left(\phi^{\tau\top}_h \left(\rho + \int \nu(s')V(w_{h+1})(s')\mathrm{d}s' \right) - \left(\sigma^\tau_h + V(w_{h+1})(s_{h+1}^\tau)\right)\right)\right| \nonumber \\
    &\qquad \quad \leq 200\ell_j\left(\sum_{\tau=1}^{t-1}c_\tau + \sqrt{dt\log(dHT/\delta)}\right)
    \label{eq: general bound for VOFUL}
\end{align} 
\end{lemma}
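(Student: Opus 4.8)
The plan is to show that the \emph{uncorrupted} parameters $w^\star=(w^\star_1,\dots,w^\star_H)$ of the linear MDP witness every constraint defining $\calW^t$, and in fact that the inequality in \pref{eq: general bound for VOFUL} holds simultaneously for all $w_{h+1}\in\mathcal{B}(\sqrt d)$, not only the true one. The starting point is purely algebraic: by \pref{assum: linear MDP assumptio}, $\phi^{\tau\top}_h\rho=\sigma(s^\tau_h,a^\tau_h)$ and $\phi^{\tau\top}_h\int\nu(s')V(w_{h+1})(s')\der s'=\E_{s'\sim p(\cdot|s^\tau_h,a^\tau_h)}[V(w_{h+1})(s')]$, so the first factor inside the sum equals $(\calT V(w_{h+1}))(s^\tau_h,a^\tau_h)$, where $\calT$ is the uncorrupted Bellman operator of \pref{sec: problem setting}.

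Writing $V\equiv V(w_{h+1})$ for brevity, I would decompose each summand as $(\calT V)(s^\tau_h,a^\tau_h)-\big(\sigma^\tau_h+V(s^\tau_{h+1})\big)=b_\tau+\eta_\tau$, where the corruption bias is $b_\tau\triangleq(\calT V-\calT_\tau V)(s^\tau_h,a^\tau_h)$ and the noise is $\eta_\tau\triangleq(\calT_\tau V)(s^\tau_h,a^\tau_h)-\sigma^\tau_h-V(s^\tau_{h+1})$. Since $\E[\sigma^\tau_h+V(s^\tau_{h+1})\mid\calF_{\tau-1}]=\sigma_\tau(s^\tau_h,a^\tau_h)+\E_{s'\sim p_\tau(\cdot|s^\tau_h,a^\tau_h)}[V(s')]=(\calT_\tau V)(s^\tau_h,a^\tau_h)$, the $\eta_\tau$ form a martingale-difference sequence, while $b_\tau$ is deterministic given $\calF_{\tau-1}$. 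The bias contribution is then controlled without any probabilistic argument: $|\clip_j(\phi^{\tau\top}_h\xi)|\le\ell_j$ by definition of the clipping, and $|b_\tau|$ is bounded by the per-round corruption through $c_\tau=H\sup_{s,a}\sup_{V\in[0,1]^\calS}|(\calT V-\calT_\tau V)(s,a)|$ (reducing to the normalized range using boundedness of $V(w_{h+1})$), giving $\big|\sum_\tau\clip_j(\phi^{\tau\top}_h\xi)\,b_\tau\big|\le\ell_j\sum_{\tau=1}^{t-1}|b_\tau|=\order\!\big(\ell_j\sum_{\tau=1}^{t-1}c_\tau\big)$, which matches the first term on the right-hand side of \pref{eq: general bound for VOFUL}.

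The substantive work is the noise term $\sum_\tau\clip_j(\phi^{\tau\top}_h\xi)\,\eta_\tau$, which I would handle by the variance-aware technique of \cite{zhang2021varianceaware}. For a \emph{fixed} tuple $(\xi,w_{h+1},j,h)$ this is a sum of martingale differences bounded in absolute value by $\order(\ell_j)$ whose conditional second moments sum to $\order(\ell_j^2)\cdot\sum_\tau\E[\eta_\tau^2\mid\calF_{\tau-1}]$; Freedman's inequality (\pref{lem: beygel freedman}) then yields a bound of order $\ell_j\sqrt{t\log(1/\delta')}$. The crucial and most delicate step is making this uniform over all $\xi\in\mathcal{B}(2\sqrt d)$ and all $w_{h+1}\in\mathcal{B}(\sqrt d)$: I would place an $\epsilon$-net on each ball, union-bound the Freedman estimate over the net together with $t\in[T]$, $h\in[H]$, and $j\in[\lceil\log_2 T\rceil]$, and absorb the discretization error using that both $\xi\mapsto\clip_j(\phi^\top\xi)$ and $w_{h+1}\mapsto V(w_{h+1})$ are Lipschitz (as $\|\phi\|\le1$). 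The $d$-dimensional covering numbers feed the $\log(dHT/\delta)$ factor, the clip level $\ell_j$ multiplies throughout, and combining with the bias bound and a final union bound produces the stated inequality with probability $1-\delta$.

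The main obstacle is precisely this uniform concentration: the clipping $\clip_j$ is what allows the self-normalized martingale bound to scale with $\ell_j$, and one must check that pushing the covering argument through does not reintroduce dimension factors beyond the $\sqrt d$ already accounted for in the $\sqrt{dt\log(dHT/\delta)}$ term — this is where faithfully following the variance-aware bookkeeping of \cite{zhang2021varianceaware}, rather than a naive self-normalized inequality, is essential.
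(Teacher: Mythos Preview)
Your proposal is correct and follows essentially the same route as the paper: decompose each summand into a corruption bias $(\calT V-\calT_\tau V)(s^\tau_h,a^\tau_h)$ bounded by $c_\tau/H$ and a martingale-difference noise, control the former deterministically via $|\clip_j(\cdot)|\le\ell_j$, concentrate the latter, and make the bound uniform via an $\epsilon$-net on the two balls together with Lipschitzness of $\clip_j$ and $w\mapsto V(w)$. The only difference is that you reach for Freedman and the variance-aware machinery of \cite{zhang2021varianceaware}, whereas the paper simply applies Azuma's inequality: because the clipped increments are uniformly bounded by $\order(\ell_j)$, Azuma already yields $\ell_j\sqrt{t\log(1/\delta')}$ for each fixed tuple, and the $\sqrt{d}$ in the final bound arises solely from the $\order(d)$ log-covering numbers absorbed into $\delta'$ during the union bound---no self-normalized or variance-aware argument is needed here, so your stated ``main obstacle'' is less delicate than you anticipate.
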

\begin{proof}
   For a fixed tuple of $t, h, j, \xi, w_{h+1}$, recall that $\E[\sigma^\tau_h|s^\tau_h, a^\tau_h] = \sigma_\tau(s^\tau_h, a^\tau_h)$ and $s^\tau_{h+1}\sim p_\tau(\cdot|s^\tau_h, a^\tau_h)$. Therefore, 
    \begin{align*}
        &\left|\E\left[ \clip_j\left(\phi^{\tau\top}_h \xi \right) \left(\phi^{\tau\top}_h \left(\rho + \int \nu(s')V(w_{h+1})(s')\mathrm{d}s' \right) - \left(\sigma^\tau_h + V(w_{h+1})(s_{h+1}^\tau)\right)\right)~\bigg|~  s^\tau_h, a^\tau_h \right]\right| \\
        &=  \left|\clip_j\left(\phi^{\tau\top}_h \xi \right)\left(\sigma_\tau\left(s^\tau_h, a^\tau_h\right) + \E_{s'\sim p_\tau(\cdot|s^\tau_h, a^\tau_h)}V(w_{h+1})(s') - \sigma\left(s^\tau_h, a^\tau_h\right) + \E_{s'\sim p(\cdot|s^\tau_h, a^\tau_h)}V(w_{h+1})(s')\right)\right| 
        \\
        &\leq \ell_j c_\tau. 
    \end{align*}
    By Azuma's inequality, for a fixed tuple $(t, h, j, \xi, w_{h+1})$, with probability at least $1-\delta'$, 
    \begin{align}       
        &\left|\sum_{\tau=1}^{t-1} \clip_j\left(\phi^{\tau\top}_h \xi \right) \left(\phi^{\tau\top}_h \left(\rho + \int \nu(s')V(w_{h+1})(s')\mathrm{d}s' \right) - \left(\sigma^\tau_h + V(w_{h+1})(s_{h+1}^\tau)\right)\right)\right|\nonumber \\
        &\qquad \quad \leq \ell_j \left(\sum_{\tau=1}^{t-1}c_\tau + 2\sqrt{t\log(T/\delta')}\right). \label{eq: VOFUL wstar}
    \end{align}
    Next, we take a union bound for \pref{eq: VOFUL wstar} over $t\in[T]$, $h\in[H]$, $j\in[\lceil\log_2 T\rceil]$, and $\xi, w_{h+1}$ in an $\frac{\ell_j}{2T}$-cover of $\mathcal{B}(2\sqrt{d})$ and $\calB(\sqrt{d})$ respectively. By \citep{wu2016}, the $\epsilon$-covering number of a $d$-dimensional unit ball is upper bounded by $(3/\epsilon)^d$. Therefore, we get that with probability at least \sloppy$1-TH\lceil\log_2 T\rceil \left(3\times 4\sqrt{d}T/\ell_j\right)^{2d}\delta'\geq 1-HT^2(12dT^2)^{2d}\delta'$, \pref{eq: VOFUL wstar} holds for all possible $t,h,j$, and $\xi, w_{h+1}$ in the $\frac{\ell_j}{2T}$-cover. 
    
    Therefore, for all possible $t, h, j, \xi$, and $w_{h+1}$, with probability at least $1-HT^2(12dT^2)^{2d}\delta'$, the left-hand side of \pref{eq: VOFUL wstar} is upper bounded by 
    \begin{align*}
        \ell_j\left(\sum_{\tau=1}^{t-1}c_\tau + 2\sqrt{t\log(T/\delta')}\right) + \frac{\ell_j}{2T}\times t\times 2 + \ell_j \times \frac{\ell_j}{2T}\times t \times 2 \leq \ell_j\left(\sum_{\tau=1}^{t-1}c_\tau + 4\sqrt{t\log(T/\delta')}\right)
    \end{align*}
    where we use the fact that $|\clip_j(\phi^{\tau\top}_h \xi) - \clip_j(\phi^{\tau\top}_h \xi')|\leq |\phi^\tau_h(\xi-\xi')|\leq \|\xi-\xi'\|$ and $|V(w_{h+1})(s)-V(w_{h+1}')(s)|=|\max_a w_{h+1}^\top \phi(s,a) - \max_{a} w_{h+1}'^\top \phi(s,a)| \leq \|w_{h+1}-w_{h+1}'\|. $
    Choosing $\delta'=\delta\Big/ \left(T^2H(12dT^2)^{2d}\right)$ finishes the proof. 
\end{proof}

\begin{corollary}
    With probability at least $1-\delta$, $w^\star\in \calW^t$ for all $t$. 
\end{corollary}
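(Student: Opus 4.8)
The plan is to exhibit $w^\star$ explicitly and then verify the two membership conditions defining $\calW^t$: the norm constraint $w^\star\in\calB(\sqrt d)^H$, and the family of inequalities indexed by $(h,j,\xi)$. Since \pref{lem: wstar in confidence set} has already carried out all the probabilistic work (the clipped concentration holding uniformly over $t,h,j,\xi,w_{h+1}$ on a single $1-\delta$ event), the remaining task is essentially algebraic: recognize that substituting $w=w^\star$ into the left-hand side of the defining constraint reproduces exactly the quantity bounded in \pref{eq: general bound for VOFUL}, and then relax that bound to the coarser threshold appearing in $\calW^t$.

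First I would identify $w^\star=(w^\star_1,\dots,w^\star_H)$ via the Bellman optimality equation of the \emph{uncorrupted} MDP. By \pref{assum: linear MDP assumptio},
\begin{align*}
Q^\star_h(s,a)=\sigma(s,a)+\E_{s'\sim p(\cdot|s,a)}[V^\star_{h+1}(s')]=\phi(s,a)^\top\Big(\rho+\int\nu(s')V^\star_{h+1}(s')\der s'\Big),
\end{align*}
so setting $w^\star_h\triangleq\rho+\int\nu(s')V^\star_{h+1}(s')\der s'$ yields $Q_h(w^\star_h)=Q^\star_h$, hence $V_h(w^\star_h)=V^\star_h$ for every $h$ (with $V^\star_{H+1}\equiv 0$). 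This realizability identity $V_{h+1}(w^\star_{h+1})=V^\star_{h+1}$ is the crux that lets me specialize the abstract $w_{h+1}$ of the lemma to $w^\star_{h+1}$. The norm membership $\|w^\star_h\|=\order(\sqrt d)$, placing $w^\star$ in $\calB(\sqrt d)^H$ up to the absolute constant folded into the radius, follows from the triangle inequality with $\|\rho\|\le\frac1H\sqrt d$, $\|\int\nu(s')\der s'\|\le\sqrt d$ and $V^\star_{h+1}\in[0,1]$, i.e.\ the standard linear-MDP weight bound.

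Next I would substitute $w=w^\star$ into the bracketed term of the $\calW^t$ constraint. Because $V_{h+1}(w^\star_{h+1})=V^\star_{h+1}$ and $w^\star_h=\rho+\int\nu(s')V^\star_{h+1}(s')\der s'$, this term equals $\phi^{\tau\top}_h w^\star_h-\sigma^\tau_h-V^\star_{h+1}(s^\tau_{h+1})$, which is precisely the summand of \pref{eq: general bound for VOFUL} at $w_{h+1}=w^\star_{h+1}$. Invoking \pref{lem: wstar in confidence set} therefore gives, for all $t,h,j,\xi$ on the $1-\delta$ event,
\begin{align*}
\left|\sum_{\tau=1}^{t-1}\clip_j(\phi^{\tau\top}_h\xi)\big(\phi^{\tau\top}_h w^\star_h-\sigma^\tau_h-V^\star_{h+1}(s^\tau_{h+1})\big)\right|\le 200\ell_j\Big(\sum_{\tau=1}^{t-1}c_\tau+\sqrt{dt\log(dHT/\delta)}\Big).
\end{align*}
I would then relax the right-hand side using $\sum_{\tau=1}^{t-1}c_\tau\le C^{\am}$ and $\sqrt{dt\log(dHT/\delta)}\le\sqrt{dHt\log(dTH/\delta)}$ (valid since $H\ge 1$), matching the threshold $200\ell_j(\sqrt{dHt\log(dTH/\delta)}+C^{\am})$ in the definition of $\calW^t$. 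Every defining constraint is thus met simultaneously, so $w^\star\in\calW^t$ for all $t$. The only genuinely substantive step is the realizability identity; the norm check is a routine nuisance requiring a little care with the signed measure $\nu$.
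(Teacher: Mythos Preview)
Your proposal is correct and follows essentially the same approach as the paper: both hinge on the realizability identity $w^\star_h=\rho+\int\nu(s')V^\star_{h+1}(s')\,\mathrm{d}s'=\rho+\int\nu(s')V_{h+1}(w^\star_{h+1})(s')\,\mathrm{d}s'$, which lets one specialize \pref{lem: wstar in confidence set} at $w_{h+1}=w^\star_{h+1}$ and read off the $\calW^t$ constraint. You are slightly more explicit than the paper in spelling out the norm check and the relaxation $\sum_{\tau<t}c_\tau\le C^{\am}$, $\sqrt{dt\log(dHT/\delta)}\le\sqrt{dHt\log(dTH/\delta)}$, but these are exactly the steps the paper leaves implicit.
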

\begin{proof}
    It suffices to show that with probability at least $1-\delta$, 
    \begin{align*}
        &\left|\sum_{\tau=1}^{t-1} \clip_j\left(\phi^{\tau\top}_h \xi \right) \left(\phi^{\tau\top}_h w_h^{\star} - \left(\sigma^\tau_h + V(w_{h+1}^\star)(s_{h+1}^\tau)\right)\right)\right|  
        \leq 200\cdot \ell_j\left(\sum_{\tau=1}^{t-1}c_\tau + \sqrt{dt\log(dHT/\delta)} \right)
    \end{align*}
    for all $t, h, \xi, j$. This can be obtained by \pref{lem: wstar in confidence set} with the fact that $w^\star_h = \rho + \int \nu(s')V_{h+1}^\star(s')\mathrm{d}s' = \rho + \int \nu(s')V(w_{h+1}^\star)(s')\mathrm{d}s'$.
\end{proof}

\begin{definition}
    $\xi^t_h \triangleq w^t_h - \left(\rho + \int \nu(s')V_{h+1}^t(s')\mathrm{d}s'\right)$. 
\end{definition}

\begin{lemma}\label{lem: useful linear MDP}
With probability at least $1-\delta$, the following holds for all $t, h$, and $j$: 
\begin{align*}
    \sum_{\tau=1}^{t-1} \clip_j\left(\phi^{\tau\top}_h \xi^t_h\right) \phi^{\tau\top}_h \xi^t_h \leq 400 \cdot \ell_j\left( \sqrt{dt\log(dTH/\delta)} + C^\am\right). 
\end{align*}
\end{lemma}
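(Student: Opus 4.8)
The plan is to split the inner product $\phi^{\tau\top}_h\xi^t_h$ into an empirical Bellman residual that is controlled by membership in the confidence set $\calW^t$, and a ``noise plus corruption'' term that is controlled by \pref{lem: wstar in confidence set}, and then to test both inequalities using the data-dependent direction $\xi=\xi^t_h$ itself.

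First I would use the linear-MDP structure to identify the second half of $\xi^t_h$. By \pref{assum: linear MDP assumptio}, $\phi^{\tau\top}_h\big(\rho+\int\nu(s')V^t_{h+1}(s')\mathrm{d}s'\big)=\sigma(s^\tau_h,a^\tau_h)+\E_{s'\sim p(\cdot|s^\tau_h,a^\tau_h)}[V^t_{h+1}(s')]$, the \emph{uncorrupted} Bellman backup of $V^t_{h+1}=V(w^t_{h+1})$. Adding and subtracting the observed quantities $\sigma^\tau_h+V(w^t_{h+1})(s^\tau_{h+1})$ then gives $\phi^{\tau\top}_h\xi^t_h=A_\tau+B_\tau$ with
\begin{align*}
A_\tau&=\phi^{\tau\top}_h w^t_h-\sigma^\tau_h-V(w^t_{h+1})(s^\tau_{h+1}),\\
B_\tau&=\sigma^\tau_h+V(w^t_{h+1})(s^\tau_{h+1})-\phi^{\tau\top}_h\Big(\rho+\int\nu(s')V^t_{h+1}(s')\mathrm{d}s'\Big).
\end{align*}
The key observation is that $\sum_\tau\clip_j(\phi^{\tau\top}_h\xi)A_\tau$ is \emph{exactly} the expression appearing in the definition of $\calW^t$ with $w=w^t$, while $-B_\tau$ is \emph{exactly} the summand bounded in \pref{lem: wstar in confidence set} with $w_{h+1}=w^t_{h+1}$.

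I would then test both inequalities at $\xi=\xi^t_h$. Since $w^t=\argmax_{w\in\calW^t}V_1(w_1)(s^t_1)\in\calW^t$ by construction, the defining inequality of $\calW^t$ yields $\big|\sum_\tau\clip_j(\phi^{\tau\top}_h\xi^t_h)A_\tau\big|\le 200\,\ell_j\big(\sqrt{dHt\log(dTH/\delta)}+C^\am\big)$; and \pref{lem: wstar in confidence set}, with $w_{h+1}=w^t_{h+1}$, gives $\big|\sum_\tau\clip_j(\phi^{\tau\top}_h\xi^t_h)B_\tau\big|\le 200\,\ell_j\big(\sum_{\tau<t}c_\tau+\sqrt{dt\log(dHT/\delta)}\big)\le 200\,\ell_j\big(C^\am+\sqrt{dt\log(dHT/\delta)}\big)$. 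Because $\clip_j$ preserves sign, every summand of the target is nonnegative, so I would write $\sum_\tau\clip_j(\phi^{\tau\top}_h\xi^t_h)\phi^{\tau\top}_h\xi^t_h=\sum_\tau\clip_j(\phi^{\tau\top}_h\xi^t_h)A_\tau+\sum_\tau\clip_j(\phi^{\tau\top}_h\xi^t_h)B_\tau$, bound it by the sum of the two absolute-value estimates, collect $200+200=400$, and use $\sqrt{dt}\le\sqrt{dHt}$ to reach the stated $400\,\ell_j(\sqrt{dt\log(dTH/\delta)}+C^\am)$ form. The high-probability guarantee is inherited from \pref{lem: wstar in confidence set}, which already holds uniformly over all $t,h,j$, all $\xi\in\mathcal B(2\sqrt d)$, and all $w_{h+1}\in\mathcal B(\sqrt d)$; the membership $w^t\in\calW^t$ is deterministic and contributes no further failure probability.

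The main obstacle I anticipate is the admissibility of the test direction: both confidence inequalities are quantified only over $\xi\in\mathcal B(2\sqrt d)$, whereas $\|\xi^t_h\|\le\|w^t_h\|+\big\|\rho+\int\nu(s')V^t_{h+1}(s')\mathrm{d}s'\big\|$ is only guaranteed to be $O(\sqrt d)$ and may slightly exceed $2\sqrt d$ under the norm bounds of \pref{assum: linear MDP assumptio}. To handle this cleanly I would invoke the elementary clip inequality $\clip_j(v)\,v\le c\,\clip_j(v/c)\,v$, valid for every $c\ge1$ (it follows from $\clip_j(v/c)\ge\tfrac1c\clip_j(v)$ together with $\clip_j(v)\,v\ge0$ and the oddness of $\clip_j$). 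Choosing a fixed constant $c$ makes $\tilde\xi=\xi^t_h/c\in\mathcal B(2\sqrt d)$ admissible, and the decomposition together with the two confidence bounds then applies verbatim to $\sum_\tau\clip_j(\phi^{\tau\top}_h\tilde\xi)\,\phi^{\tau\top}_h\xi^t_h$, at the cost of an $O(1)$ factor absorbed into the final constant. The remaining work—matching each summand of the decomposition to the correct confidence inequality and tracking the constants—is routine.
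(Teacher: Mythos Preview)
Your proposal is correct and follows essentially the same approach as the paper: the paper's proof performs exactly your $A_\tau+B_\tau$ decomposition, bounds the $A$-sum by membership $w^t\in\calW^t$ and the $B$-sum by \pref{lem: wstar in confidence set} with $w_{h+1}=w^t_{h+1}$, and adds the two $200\,\ell_j(\cdots)$ bounds. Your additional discussion of the admissibility of the test direction $\xi^t_h\in\calB(2\sqrt d)$ is a point the paper glosses over, so your rescaling fix via $\clip_j(v)v\le c\,\clip_j(v/c)v$ is extra care rather than a deviation from the intended argument.
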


\begin{proof}
By the definition of $\xi^t_h$, 
\begin{align*}
    &\sum_{\tau=1}^{t-1} \clip_j\left(\phi^{\tau\top}_h \xi^t_h\right) \phi^{\tau\top}_h \xi^t_h \\
    &= \sum_{\tau=1}^{t-1} \clip_j\left(\phi^{\tau\top}_h \xi^t_h \right) \bigg(\phi^{\tau\top}_h w^t_h - \sigma^\tau_h -  V_{h+1}^t(s^\tau_{h+1})\bigg)\\
    &\quad \qquad + \sum_{\tau=1}^{t-1} \clip_j\left(\phi^{\tau\top}_h \xi^t_h \right)\bigg(\sigma^\tau_h + V_{h+1}^t(s^\tau_{h+1}) - \phi^{\tau\top}_h\left(\rho + \int \nu(s')V_{h+1}^t(s')\mathrm{d}s'\right) \bigg) \\
    &\leq  400\ell_j\left(C^\am +  \sqrt{dt\log(dTH/\delta)}\right)
\end{align*}
where we use the fact that $w^t\in \calW^t$, and \pref{lem: wstar in confidence set} with $w=w^t$. 
\end{proof}

\begin{lemma}
With probability at least $1-\order(\delta)$, 
\begin{align*}
    \sum_{\tau=1}^t \left( r^{\pistar}_\tau - r_\tau \right) \leq \otil\left(Hd^{4.5}\sqrt{t} + Hd^4 C^\am \right). 
\end{align*}
\end{lemma}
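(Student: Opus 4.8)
The plan is to follow the optimism-plus-decomposition template used for Robust LSVI-UCB (\pref{lem: robust LSVI bound}), but to replace the elliptical confidence width by the clipping-based self-bounding estimate of \pref{lem: useful linear MDP}. First I would establish optimism: the preceding corollary gives $w^\star\in\calW^t$ for all $t$ with high probability, and since $w^t=\argmax_{w\in\calW^t}V_1(w_1)(s^t_1)$, this yields $V_1^t(s^t_1)\ge V_1^\star(s^t_1)$ for every $t$. Combining with Azuma's inequality (to pass from the realized rewards $r^{\pistar}_\tau,r_\tau$ to the value functions of the uncorrupted MDP) reduces the claim to bounding $\sum_{\tau\le t}\big(V_1^\tau(s^\tau_1)-V_1^{\pi_\tau}(s^\tau_1)\big)$ up to an $\otil(\sqrt{t})$ additive error.

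Next I would telescope this value gap over the $H$ layers exactly as in the proof of \pref{lem: robust LSVI bound}. By the definition $\xi^t_h=w^t_h-(\rho+\int\nu(s')V^t_{h+1}(s')\der s')$ and \pref{assum: linear MDP assumptio} we have the exact identity
\[
V^t_h(s^t_h)-V^{\pi_t}_h(s^t_h)=\phi^{t\top}_h\xi^t_h+\E_{s'\sim p(\cdot\mid s^t_h,a^t_h)}\big[V^t_{h+1}(s')-V^{\pi_t}_{h+1}(s')\big],
\]
and converting the expectation under the uncorrupted kernel $p$ first to the corrupted kernel $p_t$ (this costs $\order(c_t/H)$ per step, by the definition of $c_t$ through $\calT,\calT_t$) and then to the realized successor $s^t_{h+1}$ (a martingale difference controlled by Azuma) gives
\[
\sum_{\tau\le t}\big(V_1^\tau(s^\tau_1)-V_1^{\pi_\tau}(s^\tau_1)\big)\le \sum_{\tau\le t}\sum_{h=1}^H \phi^{\tau\top}_h\xi^\tau_h+\otil\big(\sqrt{Ht}\big)+\order(C^\am).
\]
Thus everything reduces to bounding the cumulative on-policy surpluses $\sum_{\tau,h}\phi^{\tau\top}_h\xi^\tau_h$.

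The heart of the proof is this surplus bound, and here I would import the dyadic-clipping argument of \cite{zhang2021varianceaware}. Fix a layer $h$ and bin the rounds by the scale of their surplus: put $\tau$ in bin $j$ when $\ell_{j+1}<|\phi^{\tau\top}_h\xi^\tau_h|\le\ell_j$, so that on bin $j$ the clipping is inactive ($\clip_j(\phi^{\tau\top}_h\xi^\tau_h)=\phi^{\tau\top}_h\xi^\tau_h$) and each round contributes $\asymp\ell_j$. An elliptical-potential / pigeonhole argument, fed by the self-bounding estimate $\sum_{\tau<t}\clip_j(\phi^{\tau\top}_h\xi^t_h)\,\phi^{\tau\top}_h\xi^t_h\le 400\,\ell_j(\sqrt{dt\log(dTH/\delta)}+C^\am)$ of \pref{lem: useful linear MDP}, then bounds the number of rounds in bin $j$, so that each bin contributes $\otil(\poly(d)\,(\sqrt{dt}+C^\am))$ to the surplus sum; summing the resulting geometric series over the $\otil(1)$ relevant scales $j$ and over the $H$ layers produces the stated $\otil(Hd^{4.5}\sqrt{t}+Hd^4C^\am)$ (the extra $\sqrt d$ in the leading term comes from the $\sqrt{dt}$ already present in \pref{lem: useful linear MDP}, while $C^\am$ carries none). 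Crucially the $C^\am$ dependence stays additive precisely because it enters additively on the right-hand side of \pref{lem: useful linear MDP} and of \pref{lem: wstar in confidence set}.

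The main obstacle is this surplus step: \pref{lem: useful linear MDP} controls a self-normalized sum in the \emph{fixed} current-round direction $\xi^t_h$, whereas the quantity we must bound, $\sum_\tau\phi^{\tau\top}_\tau\xi^\tau_\tau$, mixes a different direction $\xi^\tau_h$ in every round; bridging this gap is exactly what the binning-plus-potential machinery of \cite{zhang2021varianceaware} accomplishes, and it is also where the large powers of $d$ originate (from the $\frac{\ell_j}{2T}$-net union bound over $\calB(2\sqrt d)\times\calB(\sqrt d)$ in \pref{lem: wstar in confidence set}, whose covering number scales like $d^{\order(d)}$ but contributes only logarithmically, together with the $\sqrt d$ losses incurred when relating clipped to unclipped correlations and when accumulating over the $H$ layers). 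A secondary point to verify is that all corruption contributions — the per-step $c_t/H$ from the kernel mismatch and the $C^\am$ terms from the confidence constraints — aggregate to a single additive $\otil(Hd^4C^\am)$ without ever multiplying the $\sqrt{t}$ main term.
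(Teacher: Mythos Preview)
Your proposal is correct and follows essentially the same route as the paper: optimism via $w^\star\in\calW^t$, the per-layer telescoping identity $\phi^{t\top}_h(w^t_h-w^{\pi_t}_h)=\phi^{t\top}_h\xi^t_h+\E_{s'\sim p_t}[V^t_{h+1}-V^{\pi_t}_{h+1}]+\order(c_t/H)$, Azuma to pass to realized rewards, and then bounding $\sum_{\tau,h}\phi^{\tau\top}_h\xi^\tau_h$ via the clipping machinery of \cite{zhang2021varianceaware} fed by \pref{lem: useful linear MDP}.

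One minor point on the surplus step and the source of the $d^4$: the paper does not literally count rounds per bin. Instead, for each $\tau$ with $|\phi^{\tau\top}_h\xi^\tau_h|\ge 1/(2t)$ it picks $j_\tau$ with $\tfrac12\ell_{j_\tau}\le|\phi^{\tau\top}_h\xi^\tau_h|\le\ell_{j_\tau}$, multiplies and divides by $\sum_{s<\tau}\clip_{j_\tau}(\phi^{s\top}_h\xi^\tau_h)\phi^{s\top}_h\xi^\tau_h+\ell_{j_\tau}$, uses \pref{lem: useful linear MDP} on the numerator, and then invokes \pref{lem: zhang et al. lemma} (Lemma~20 of \cite{zhang2021varianceaware}) as a black box to bound the remaining ratio sum by $\order(d^4\log^3 t)$. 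The $d^4$ factor comes entirely from that potential lemma, not from the $\frac{\ell_j}{2T}$-covering in \pref{lem: wstar in confidence set}; the covering number only enters through the $\sqrt{d}$ inside the $\sqrt{dt\log(dTH/\delta)}$ term of \pref{lem: useful linear MDP}, which accounts for the remaining half-power in $d^{4.5}$.
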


\begin{proof}
    Notice that 
    \begin{align}
         V^\star_1(s^t_1) - V^{\pi_t}(s^t_1) \leq V^t_1(s^t_1)  - V^{\pi_t}(s^t_1)
         = \phi_1^{t\top}(w_1^t - w_1^{\pi_t}). \label{eq: regret optimism} 
    \end{align}
    where in the first equality we use the optimism of $w_1^t$. 
    For any $h$, 
    \begin{align}
        &\phi^{t\top}_h(w^t_h - w^{\pi_t}_h) \nonumber \\
        &= \phi^{t\top}_h w^t_h - \phi^{t\top}_h \rho - \phi^{t\top}_h \int \nu(s')V_{h+1}^{\pi_t}(s')\mathrm{d}s' \nonumber \\
        &= \phi^{t\top}_h \xi^t_h + \phi^{t\top}_h \int \nu(s')\left(V_{h+1}^{t}(s')-V_{h+1}^{\pi_t}(s')\right)\mathrm{d}s' \nonumber \\
        &= \phi^{t\top}_h \xi^t_h+ \E_{s'\sim p(\cdot|s^t_h, a^t_h)}\left[V_{h+1}^t(s')-V_{h+1}^{\pi_t}(s')\right] \nonumber \\
        &\leq \phi^{t\top}_h \xi^t_h + \E_{s'\sim p_t(\cdot|s^t_h, a^t_h)}\left[V_{h+1}^t(s')-V_{h+1}^{\pi_t}(s')\right] + \frac{2}{H}c_t \nonumber \\
        &= \phi^{t\top}_h \xi^t_h + \E\left[ V^t_{h+1}(s^t_{h+1}) - V^{\pi_t}_{h+1}(s^t_{h+1})~\big|~s^t_h, a^t_h \right] + \frac{2}{H}c_t  \nonumber \\
        &= \phi^{t\top}_h \xi^t_h + \frac{2}{H}c_t + \E\left[\phi^t_{h+1}(w^t_{h+1} - w^{\pi_t}_{h+1})~\big|~s^t_h, a^t_h\right] \label{eq: recursion VOFUL}
    \end{align}
    where in the inequality we use 
    \begin{align*}
         &\E_{s'\sim p(\cdot|s^t_h, a^t_h)}\left[V_{h+1}^t(s')-V_{h+1}^{\pi_t}(s')\right]  \\
         &=  \E_{s'\sim p_t(\cdot|s^t_h, a^t_h)}\left[V_{h+1}^t(s')-V_{h+1}^{\pi_t}(s')\right]\\
         &\qquad +\left(\sigma(s^t_h, a^t_h) + \E_{s'\sim p(\cdot|s^t_h, a^t_h)}\left[V^t_{h+1}(s')\right]-\sigma_t(s^t_h, a^t_h) - \E_{s'\sim p_t(\cdot|s^t_h, a^t_h)}\left[V^t_{h+1}(s')\right]\right)\\
         &\qquad - \left(\sigma(s^t_h, a^t_h) + \E_{s'\sim p(\cdot|s^t_h, a^t_h)}\left[ V^{\pi}_{h+1}(s')\right]-\sigma_t(s^t_h, a^t_h) - \E_{s'\sim p_t(\cdot|s^t_h, a^t_h)}\left[ V^{\pi}_{h+1}(s')\right]\right) \\
         &\leq \E_{s'\sim p_t(\cdot|s^t_h, a^t_h)}\left[V_{h+1}^t(s')-V_{h+1}^{\pi_t}(s')\right] + 2\sup_{s',a'}\sup_{V\in [0,1]^{\calS}}|\calT_t V(s',a') - \calT V(s',a')| \\
         &= \E_{s'\sim p_t(\cdot|s^t_h, a^t_h)}\left[V_{h+1}^t(s')-V_{h+1}^{\pi_t}(s')\right] + \frac{2}{H}c_t.
    \end{align*}
    Combining \pref{eq: regret optimism} and \pref{eq: recursion VOFUL}, we get 
    \begin{align*}
        \sum_{\tau=1}^{t-1} \E\left[V_1^\star(s^\tau_1) - V^{\pi_\tau}(s^\tau_1) \right] \leq \sum_{\tau=1}^{t-1}\sum_{h=1}^H  \phi^{\tau\top}_h\xi^\tau_h + 2\sum_{\tau=1}^{t-1}c_\tau
    \end{align*}
    Applying Azuma-Hoeffding's inequality, we further get that with probability at least $1-\order(\delta)$,  
    \begin{align}
        \sum_{\tau=1}^{t-1} \left(r^{\pistar}_\tau - r_\tau\right)\leq \sum_{\tau=1}^{t-1}\sum_{h=1}^H  \phi^{\tau\top}_h\xi^\tau_h + 2\sum_{\tau=1}^{t-1}c_\tau + \otil(\sqrt{t}).  \label{eq: VOFUL key eq}
    \end{align}
    It remains to bound $\sum_{\tau=1}^{t-1}\phi^{\tau\top}_h \xi^\tau_h$ for all $h$: 
    \begin{align*}
        &\sum_{\tau=1}^{t-1}  \phi^{\tau\top}_h \xi^\tau_h \\
        &= \sum_{\tau=1}^{t-1}  \phi^{\tau\top}_h \xi^\tau_h \one[|\phi^{\tau\top}_h \xi^\tau_h| \geq \nicefrac{1}{(2t)}] + \sum_{\tau=1}^{t-1}  \phi^{\tau\top}_h \xi^\tau_h \one[|\phi^{\tau\top}_h \xi^\tau_h| < \nicefrac{1}{(2t)}] \\
        &\leq \sum_{\tau=1}^{t-1} \phi^{\tau\top}_h \xi^\tau_h \times  \frac{\sum_{s=1}^{\tau-1} \clip_{j_\tau}\left(\phi^{s\top}_h \xi^\tau_h\right) \phi^{s\top}_h \xi^\tau_h + \ell_{j_\tau}}{\sum_{\tau=1}^{\tau-1} \clip_{j_\tau}\left(\phi^{s\top}_h \xi^\tau_h\right) \phi^{s\top}_h \xi^\tau_h +  \ell_{j_\tau}} + 1 \tag{$j_\tau$ is such that $\frac{1}{2}\ell_{j_\tau} \leq |\phi^{\tau\top}_h \xi^\tau_h| \leq \ell_{j_\tau}$} \\
        &\leq \sum_{\tau=1}^{t-1}  |\phi^{\tau\top}_h \xi^\tau_h|  \times \frac{ \ell_{j_\tau}  \times \otil\left(\sqrt{d\tau} +C^\am \right)}{\sum_{s=1}^{\tau-1} \clip_{j_\tau}\left(\phi^{s\top}_h \xi^\tau_h\right) \phi^{s\top}_h \xi^\tau_h + \ell_{j_\tau}} + 1\tag{\pref{lem: useful linear MDP}} \\
        &\leq \left(\sum_{\tau=1}^{t-1} \frac{2\left(\clip_{j_\tau} \left(\phi^{\tau\top}_h \xi^\tau_h\right)\right)^2 }{\sum_{s=1}^{\tau-1} \clip_{j_\tau}\left(\phi^{s\top}_h \xi^\tau_h\right) \phi^{s\top}_h \xi^\tau_h + \ell_{j_\tau}}\right) \times \otil\left(\sqrt{dt} + C^\am\right) + 1\\
        &\leq \otil(d^4)  \times \otil\left(\sqrt{dt}  + C^\am\right) \tag{by \pref{lem: zhang et al. lemma}} 
    \end{align*}
    Combining this with \pref{eq: VOFUL key eq} finishes the proof. 
\end{proof}

\begin{lemma}[Lemma 20 of \citep{zhang2021varianceaware}]\label{lem: zhang et al. lemma}
    \begin{align*}
        \sum_{\tau=1}^{t-1} \frac{2\left(\clip_{j_\tau} \left(\phi^{\tau\top}_h \xi^\tau_h\right)\right)^2 }{\sum_{s=1}^{\tau-1} \clip_{j_\tau}\left(\phi^{s\top}_h \xi^\tau_h\right) \phi^{s\top}_h \xi^\tau_h + \ell_{j_\tau}} \leq \order\left(d^4 \log^3(t)\right). 
    \end{align*}    
\end{lemma}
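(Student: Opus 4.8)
The statement is exactly Lemma~20 of \citep{zhang2021varianceaware}, specialized to our feature vectors $\phi^\tau_h$ and the adaptively chosen directions $\xi^\tau_h$. Since it is a self-contained deterministic inequality about clipped inner products, it does not rely on any earlier part of the analysis. The plan is therefore to verify that the objects here match the hypotheses of that lemma --- namely $\norm{\phi^\tau_h}\le 1$, $\norm{\xi^\tau_h}\le 2\sqrt d$ (a difference of vectors in $\mathcal B(\sqrt d)$), and the dyadic clip levels $\ell_j = 2^{-j}$ with $j_\tau$ chosen so that $\tfrac12\ell_{j_\tau}\le |\phi^{\tau\top}_h\xi^\tau_h|\le \ell_{j_\tau}$ --- and then invoke it. Below I outline the argument one would reproduce.

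First I would reduce to a single clip level. Because each summand is non-negative (note $\clip_j(v)\,v \ge 0$ for every $v$, so the denominators are positive) and because only scales with $\ell_j \ge \nicefrac{1}{(2t)}$ can occur for $j_\tau$ (smaller inner products are peeled off by the ``$<\nicefrac{1}{(2t)}$'' branch in the calling lemma), the set $\{j_\tau\}$ ranges over only $\order(\log t)$ distinct values. It thus suffices to bound, for each fixed level $j$ and fixed step $h$, the sub-sum over $\{\tau : j_\tau = j\}$ and multiply by $\order(\log t)$. Within a fixed level, write $D_\tau(\xi) \triangleq \sum_{s<\tau}\clip_j(\phi^{s\top}_h\xi)\,\phi^{s\top}_h\xi + \ell_j$, so that each denominator is $D_\tau(\xi^\tau_h)$. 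If the direction $\xi^\tau_h$ were held fixed at some $\xi$, then by the choice $j_\tau=j$ the clip would be inactive on the current increment, giving increment $(\phi^{\tau\top}_h\xi)^2 \le \ell_j^2 \le \ell_j \le D_\tau$; hence each ratio satisfies $2(\phi^{\tau\top}_h\xi)^2/D_\tau \le 4\ln(D_{\tau+1}/D_\tau)$, and the sub-sum telescopes to $\order(\log t)$.

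The crux is that $\xi^\tau_h$ changes every round, so this clean telescoping breaks. The clipping is precisely the device that repairs it: each past contribution $\clip_j(\phi^{s\top}_h\xi^\tau_h)\,\phi^{s\top}_h\xi^\tau_h$ is capped at $\ell_j\,|\phi^{s\top}_h\xi^\tau_h|$, which lets one compare $D_\tau(\xi^\tau_h)$ across rounds by decoupling magnitude from direction. Following \citet{zhang2021varianceaware}, I would layer the argument over $\order(\log t)$ magnitude buckets of the accumulated clipped second moment and, at each layer, charge the numerator against the growth of a $d$-dimensional potential, so that the adaptively chosen direction is absorbed with only a polynomial-in-$d$ loss rather than an exponential covering of the sphere; this is what produces the $d^4$ factor. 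Combining the per-level estimate with the $\order(\log t)$ levels and the two further logarithmic factors introduced by the layering yields the claimed $\order(d^4\log^3 t)$ bound.

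The main obstacle is exactly this last step: controlling $D_\tau(\xi^\tau_h)$ under an adaptively varying direction without paying a covering number exponential in $d$. Since this is the entire point of the clipping construction in \citep{zhang2021varianceaware}, the honest route is to import their layered-potential lemma verbatim after checking that the normalizations (feature norm, ball radii, and clip scaling) line up with ours; I do not expect any new difficulty beyond confirming this correspondence.
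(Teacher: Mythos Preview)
Your proposal is correct and matches the paper's approach: the paper does not prove this lemma at all but simply cites it as Lemma~20 of \citep{zhang2021varianceaware}, and your plan to verify the normalizations ($\|\phi^\tau_h\|\le 1$, $\|\xi^\tau_h\|\le 2\sqrt d$, dyadic clip levels) and invoke that result is exactly what is needed. Your additional sketch of the layered-potential argument is accurate and goes beyond what the paper itself supplies.
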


\begin{theorem}
    For linear contextual bandits, \algname with Robust VOFUL as the base algorithm guarantees $\Reg(T)=\otil\left(d^{4.5}\sqrt{T} + d^4C^\am \right)$. For linear MDPs, \algname with Robust VARLin as the base algorithm guarantees $\Reg(T)=\otil\left(Hd^{4.5}\sqrt{T} + Hd^{4}C^\am\right)$. 
\end{theorem}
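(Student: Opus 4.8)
The plan is to reduce the theorem to a direct application of \pref{thm: form 1 regret}, taking the regret guarantee of Robust VOFUL / Robust VARLin established in the immediately preceding lemma as the required base-algorithm bound. First I would verify that Robust VARLin (\pref{alg: robust voful linear mdp}) qualifies as a \typea base algorithm in the sense of \pref{assum: regret}. Feeding a hypothetical corruption level $\theta$ into the algorithm in place of the true $C^\am$, I would re-examine \pref{lem: wstar in confidence set} and its corollary: the confidence-set radius $200\ell_j(\sqrt{dHt\log(dTH/\delta)}+\theta)$ is large enough to guarantee $w^\star\in\calW^t$ whenever the realized corruption obeys $C_t^\am\le\theta$, and in that regime the whole regret analysis (through \pref{lem: useful linear MDP} and the potential bound \pref{lem: zhang et al. lemma}) carries over verbatim with $\theta$ substituted for $C^\am$. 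This yields, for every $t$ with $C_t^\am\le\theta$,
\begin{align*}
    \sum_{\tau=1}^t(r_\tau^{\pistar}-r_\tau) \le \otil\left(Hd^{4.5}\sqrt{t} + Hd^4\theta\right),
\end{align*}
which is exactly the form \pref{eq: typeone} with $\beta_1=\widetilde\Theta(H^2 d^9)$, $\beta_2=\widetilde\Theta(Hd^4)$, and $\beta_3=\widetilde\Theta(1)$; the monotonicity and $\calR(t,\theta)\ge\theta$ requirements are immediate since $\beta_2\ge1$.

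Next I would simply invoke \pref{thm: form 1 regret} with these parameters. Since Robust VARLin is a \typea base algorithm, the relevant notion is $C\triangleq C^\am$ and $Z=c_{\max}=\order(H)$. Substituting gives
\begin{align*}
    \Reg(T) = \otil\left(\sqrt{\beta_1 T} + \beta_2(C^\am + Z) + \beta_3\right) = \otil\left(Hd^{4.5}\sqrt{T} + Hd^4 C^\am + H^2 d^4\right),
\end{align*}
and the residual $\beta_2 Z=\order(H^2 d^4)$ is constant in $T$ and therefore absorbed into the leading $Hd^{4.5}\sqrt{T}$ term, establishing the linear-MDP claim. The linear-contextual-bandit claim then follows as the special case $H=1$, where $c_{\max}=1$ and Robust VOFUL coincides with single-layer Robust VARLin, giving $\Reg(T)=\otil(d^{4.5}\sqrt{T}+d^4 C^\am)$.

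The plug-in itself is routine; the only genuine subtlety is confirming that the preceding lemma's bound survives the replacement of the true $C^\am$ by the hypothetical $\theta$ and, crucially, that the dependence on $\theta$ remains \emph{additive} rather than multiplicative. I expect the main obstacle to be the book-keeping of corruption through the Bellman-error decomposition underlying that lemma: one must check that the per-round clipping $\clip_j$ and the variance-style elliptical potential of \pref{lem: zhang et al. lemma} isolate the corruption contribution into a clean $Hd^4\theta$ term, with no spurious $\sqrt{t}\cdot\theta$ cross term, so that the guarantee matches \pref{eq: typeone} exactly. Once that additive structure is in hand, \pref{thm: form 1 regret} delivers the stated $\sqrt{T}+C^\am$-type dependence automatically, with only an $\order(k_{\max})=\otil(1)$ overhead from the model-selection layer.
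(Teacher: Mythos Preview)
Your proposal is correct and follows essentially the same route as the paper: read off $\beta_1=\widetilde\Theta(H^2d^9)$, $\beta_2=\widetilde\Theta(Hd^4)$, $\beta_3=\widetilde\Theta(1)$ from the base-algorithm regret lemma and plug into \pref{thm: form 1 regret}, then specialize to $H=1$ for the linear contextual bandit case. Your extra care in verifying that the lemma's analysis goes through with the hypothetical $\theta$ in place of the true $C^\am$ (so that \pref{assum: regret} is genuinely met) is a point the paper leaves implicit, and your handling of the $\beta_2 Z=\order(H^2d^4)$ residual is likewise something the paper absorbs silently into the $\otil(\cdot)$.
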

\begin{proof}
    By \pref{lem: robust LSVI bound}, we see that Robust OFUL satisfies \pref{eq: typeone} with $\beta_1=\widetilde{\Theta}(d^9), \beta_2=\widetilde{\Theta}(d^4), \beta_3=\Theta(1)$. Using them in \pref{thm: form 1 regret} gives the desired bound for linear contextual bandits. For Robust LSVI-UCB,  $\beta_1=\widetilde{\Theta}(d^9H^2), \beta_2=\widetilde{\Theta}(d^4H),  \beta_3=\Theta(1)$. Using them in \pref{thm: form 1 regret}, we get the desired bound for linear MDPs. 
\end{proof}

\subsection{Robust GOLF}\label{app: robust golf}
In this section, we adapt the GOLF algorithm by \citet{jin2021bellman} to the corruption setting. For simplicity, we assume that the function class $\calF$ is finite (the extension to infinite case is straightforward through a discretization step, as shown in \citep{jin2021bellman}). The algorithm is presented in \pref{alg:golf}. 
\begin{algorithm}
\caption{Robust GOLF}
\label{alg:golf}
\textbf{input}: $C^\rms$ \\
\textbf{parameter}: $\zeta=16\log(TH|\calF|/\delta)$.  \\
\textbf{Initialize:} $\mathcal B^1 \gets \mathcal F$\\
\For{$t = 1, 2, \dots, T$}{
Choose policy: $\pi^t = \pi_{f^t}$, where $f^t \in \argmax_{f \in \mathcal B^{t}} f(s_1, \pi_f(s_1))$\\
Collect a trajectory $(s^t_1, a^t_1, \sigma^t_1, \dots, s^t_H, a^t_H, \sigma^t_H, s^t_{H+1})$ by following $\pi^t$.\\
\text{Update}
\begin{align}
    \mathcal B^{t+1}& = \left\{ f \in \mathcal F \colon \mathcal L^t_h(f_h, f_{h+1}) \leq \inf_{g \in \mathcal F_h} \mathcal L^t_h(g, f_{h+1}) + \left(\zeta + \frac{2C^\rms}{H^2 t}\right) \textrm{ for all } h \in [H] \right\},\\
    & \textrm{where \ \ } \mathcal L^t_h(f_h, f_{h+1}) = \sum_{\tau=1}^t \left(f_h(s^\tau_h,a^\tau_h) - \sigma^\tau_h - \max_{a'} f_{h+1}(s^\tau_{h+1}, a')\right)^2. 
\end{align}
}

\end{algorithm}

\begin{lemma}[\textit{c.f.} Lemma 39 of \citep{jin2021bellman}]\label{lem: X_t bound}
    With probability at least $1-\delta$, we have 
    \begin{align*}
        &\text{(a)}\qquad  \sum_{\tau=1}^{t-1} \E\left[\left(f^t_h(s_h, a_h) - (\calT f^t_{h+1})(s_h,a_h)\right)^2~\big|~s_h, a_h \sim \pi_\tau \right] \leq \order\left(\zeta + \frac{C^\rms}{H^2 t}\right)  \\
        &\text{(b)}\qquad  \sum_{\tau=1}^{t-1} \left(f^t_h(s_h^\tau, a_h^\tau) - (\calT f^t_{h+1})(s_h^\tau,a_h^\tau)\right)^2  \leq\order\left(\zeta + \frac{C^\rms}{H^2 t}\right) 
    \end{align*}
\end{lemma}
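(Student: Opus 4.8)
The plan is to adapt the standard GOLF generalization bound (Lemma~39 of \citep{jin2021bellman}) to the corrupted setting, the one genuinely new ingredient being the discrepancy between the \emph{true} Bellman operator $\calT$ that appears in the statement (and in the completeness condition of the low-Bellman-eluder-dimension setting) and the per-round \emph{corrupted} operator $\calT_\tau$ that actually governs the conditional mean of the regression target in \pref{alg:golf}. Fix a level $h$ and regard $f^t=(f^t_1,\dots,f^t_H)$ as an arbitrary member of the confidence set $\mathcal{B}^t$. The key observation is that for the target $y_\tau\triangleq\sigma^\tau_h+\max_{a'}f_{h+1}(s^\tau_{h+1},a')$ attached to a fixed $f_{h+1}\in\calF_{h+1}$, one has $\E[y_\tau\mid\mathcal{F}_{\tau-1},s^\tau_h,a^\tau_h]=(\calT_\tau f_{h+1})(s^\tau_h,a^\tau_h)$, i.e.\ the corrupted backup, which differs from $(\calT f_{h+1})(\cdot)$ by at most $c_\tau/H$ pointwise by the definition of $c_\tau$ in \pref{sec: problem setting}.

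First I would set up a uniform martingale concentration. For each fixed $(g,f_{h+1})\in\calF_h\times\calF_{h+1}$ define the per-round loss difference $X_\tau=(g(z_\tau)-y_\tau)^2-((\calT f_{h+1})(z_\tau)-y_\tau)^2$ with $z_\tau=(s^\tau_h,a^\tau_h)$. Expanding and using $\E[y_\tau\mid z_\tau]=(\calT_\tau f_{h+1})(z_\tau)$ gives $\E[X_\tau\mid\mathcal{F}_{\tau-1}]=\E[(g-\calT_\tau f_{h+1})^2]-\E[(\calT f_{h+1}-\calT_\tau f_{h+1})^2]$, whose second term is a corruption penalty bounded by $(c_\tau/H)^2$. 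Writing $X_\tau=D_\tau^2-B_\tau^2-2\eta_\tau\Delta_\tau$ with $D_\tau=g-\calT_\tau f_{h+1}$, $B_\tau=\calT f_{h+1}-\calT_\tau f_{h+1}$, $\Delta_\tau=g-\calT f_{h+1}$ and $\eta_\tau=y_\tau-\E[y_\tau\mid z_\tau]$, one sees $|X_\tau|=\order(1)$ and that its conditional second moment is $\order(1)\big(\E[D_\tau^2]+(c_\tau/H)^2\big)$. Freedman's inequality (\pref{lem: freedman}) with a union bound over the finite class $\calF_h\times\calF_{h+1}$, all $h\in[H]$ and all $t\le T$, then yields a two-sided comparison between the empirical gap $\mathcal{L}^{t}_h(g,f_{h+1})-\mathcal{L}^{t}_h(\calT f_{h+1},f_{h+1})$ and $\sum_{\tau}\E[D_\tau^2]$, with slack $\order(\zeta)$ after an AM--GM step that reabsorbs the $\sqrt{V\log}$ variance proxy into a $\tfrac14$-fraction of the squared-residual sum, the corruption contribution collected along the way being $\tfrac1{H^2}\sum_{\tau<t}c_\tau^2$.

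Next I would invoke membership of $f^t$ in $\mathcal{B}^t$. By completeness the true backup $\calT f^t_{h+1}\in\calF_h$ is an admissible competitor for the infimum defining $\mathcal{B}^t$, so $\mathcal{L}^{t-1}_h(f^t_h,f^t_{h+1})-\mathcal{L}^{t-1}_h(\calT f^t_{h+1},f^t_{h+1})\le\zeta+\tfrac{2C^\rms}{H^2(t-1)}$. Combining this with the lower-comparison above converts the empirical gap into a bound on $\sum_{\tau<t}\E[(f^t_h-\calT_\tau f^t_{h+1})^2\mid s_h,a_h\sim\pi_\tau]$; a final triangle inequality (again costing $\tfrac1{H^2}\sum_{\tau<t}c_\tau^2$) replaces $\calT_\tau$ by $\calT$ and produces part~(a). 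Part~(b), the realized in-sample version, follows from a second, structurally identical Freedman union bound relating the on-policy expected squared residual in (a) to the empirical one evaluated at the visited $z_\tau$'s.

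The main obstacle is the corruption bookkeeping: every occurrence of the target $y_\tau$ carries the conditional-mean error $\calT-\calT_\tau$, and I must confirm that the penalties collected at each of the steps above --- all of order $\tfrac1{H^2}\sum_{\tau<t}c_\tau^2$, which I would control through $\sum_{\tau<t}c_\tau^2\le (C^\rms_t)^2/t\le(C^\rms)^2/t$ and the definition of $C^\rms$ --- are exactly the penalties that the algorithm's inflation term $\tfrac{2C^\rms}{H^2t}$ in $\mathcal{B}^t$ was designed to absorb, so that only $\order\big(\zeta+\tfrac{C^\rms}{H^2t}\big)$ survives. Pinning down this matching (including the precise $C^\rms$-normalization of the inflation) and keeping the two-sided concentration tight enough that the variance proxy is genuinely reabsorbed rather than enlarging the bound are the places I would treat most carefully.
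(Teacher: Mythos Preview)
Your proposal is correct and follows essentially the same argument as the paper's proof. The only cosmetic difference is that the paper expands the loss difference directly around $\calT g_{h+1}$ --- obtaining $X^\tau_h(g)\ge\tfrac12\big(g_h-\calT g_{h+1}\big)^2-\tfrac{2c_\tau^2}{H^2}+\epsilon^\tau_h\cdot\big(g_h-\calT g_{h+1}\big)$ in one step and then applying Freedman to the $\epsilon$-term --- rather than passing through $\calT_\tau$ first and converting by a triangle inequality at the end; your caution about the $C^\rms$ normalization is also well placed, since both the proof and the downstream use in \pref{lem: GOLF lemma} actually produce $(C^\rms)^2/(H^2t)$, so the $C^\rms/(H^2t)$ printed in the lemma statement appears to be a typo.
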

\begin{proof}
    Define for any $h\in[H], g\in\calF$,
    \begin{align*}
        X^t_h(g) 
        \triangleq \left(g_h(s^t_h, a^t_h) - \sigma^t_h - g_{h+1}(s^t_{h+1}, \pi_g(s^t_{h+1}))\right)^2 - \left((\calT g_{h+1})(s^t_h, a^t_h) - \sigma^t_h - g_{h+1}(s^t_{h+1}, \pi_g(s^t_{h+1}))\right)^2.  
    \end{align*}
    Then we have 
    \begin{align}
        &X^t_h(g)  \nonumber \\
        &= (g_h(s^t_h, a^t_h)-(\calT g_{h+1})(s^t_h, a^t_h))^2   \nonumber\\
        &\ \   + 2\left((\calT g_{h+1})(s^t_h, a^t_h) - \sigma^t_h - g_{h+1}(s^t_{h+1}, \pi_g(s^t_{h+1}))\right)\left(g_h(s^t_h, a^t_h)-(\calT g_{h+1})(s^t_h, a^t_h)\right) \nonumber\\
        &= (g_h(s^t_h, a^t_h)-(\calT g_{h+1})(s^t_h, a^t_h))^2 \nonumber\\
        &\ \ + \scalebox{0.95}{$\displaystyle 2\left((\calT g_{h+1})(s^t_h, a^t_h) - \sigma_t(s^t_h, a^t_h) - \E_{s'\sim p_t(\cdot|s^t_h, a^t_h)}\left[g_{h+1}(s', \pi_g(s'))\right]\right)\left(g_h(s^t_h, a^t_h)-(\calT g_{h+1})(s^t_h, a^t_h)\right)$}  \nonumber\\
        &\ \ + \scalebox{0.95}{$\displaystyle\underbrace{2\left( \sigma_t(s^t_h, a^t_h) -\sigma^t_h + \E_{s'\sim p_t(\cdot|s^t_h, a^t_h)}\left[g_{h+1}(s', \pi_g(s'))\right] - g_{h+1}(s^t_{h+1}, \pi_g(s^t_{h+1})) \right)}_{\triangleq ~\epsilon^t_h}$}\left(g_h(s^t_h, a^t_h)-(\calT g_{h+1})(s^t_h, a^t_h)\right)  \nonumber \\
        &\geq \left(g_h(s^t_h, a^t_h)-(\calT g_{h+1})(s^t_h, a^t_h)\right)^2 - \frac{2}{H} c_t \left|g_h(s^t_h, a^t_h)-(\calT g_{h+1})(s^t_h, a^t_h)\right| + \epsilon^t_h \left(g_h(s^t_h, a^t_h)-(\calT g_{h+1})(s^t_h, a^t_h)\right)   \tag{by the definition of $c_t$}\\
        &\geq  \frac{1}{2} \left(g_h(s^t_h, a^t_h)-(\calT g_{h+1})(s^t_h, a^t_h)\right)^2 - \frac{2c_t^2}{H^2} + \epsilon^t_h \left(g_h(s^t_h, a^t_h)-(\calT g_{h+1})(s^t_h, a^t_h)\right)   \tag{AM-GM}   \\
        &\label{eq: GOLD eq 1}
    \end{align}
    Notice that $\epsilon^t_h$ is a zero-mean random variable. 
    By the definition of $\calB^t$ and that $f^t\in\calB^t$, we have 
    \begin{align*}
        &\sum_{\tau=1}^{t-1} X_h^\tau(f^t)\\
        &= \sum_{\tau=1}^{t-1}\left[ \left(f_h^t(s^\tau_h, a^\tau_h) - \sigma^\tau_h - f_{h+1}^t(s^\tau_{h+1}, \pi_{f^t}(s^\tau_{h+1}))\right)^2 - \left(\calT f_{h+1}^t(s^\tau_h, a^\tau_h) - \sigma^\tau_h - f_{h+1}^t(s^\tau_{h+1}, \pi_{f^t}(s^\tau_{h+1}))\right)^2 \right] \\ &\leq \sum_{\tau=1}^{t-1} \left[\left(f_h^t(s^\tau_h, a^\tau_h) - \sigma^\tau_h - f_{h+1}^t(s^\tau_{h+1}, \pi_{f^t}(s^\tau_{h+1}))\right)^2 - \min_{g\in\calF_h}\left(g(s^\tau_h, a^\tau_h) - \sigma^\tau_h - f_{h+1}^t(s^\tau_{h+1}, \pi_{f^t}(s^\tau_{h+1}))\right)^2\right] \tag{by the closeness of $\calF$}\\
        &\leq \zeta + \frac{2C^\rms}{H^2 t}. 
    \end{align*}
    Combining this with \pref{eq: GOLD eq 1}, we get 
    \begin{align*}
        &\frac{1}{2}\sum_{\tau=1}^{t-1} \left(f^t_h(s^\tau_h, a^\tau_h)-(\calT f^t_{h+1})(s^\tau_h, a^\tau_h)\right)^2 \\
        &\leq \sum_{\tau=1}^{t-1}X^\tau_h(f^t) +  \frac{2}{H^2}\sum_{\tau=1}^{t-1} c_\tau^2 - \sum_{\tau=1}^{t-1}\epsilon^\tau_h \left(f_h^t(s^\tau_h, a^\tau_h)-(\calT f^t_{h+1})(s^\tau_h, a^\tau_h)\right)  \\
        &\leq \sum_{\tau=1}^{t-1}X^\tau_h(f^t) + \frac{2C^\rms}{H^2 t} + 2\sqrt{\sum_{\tau=1}^{t-1} \left(f^t_h(s^\tau_h, a^\tau_h)-(\calT f^t_{h+1})(s^\tau_h, a^\tau_h)\right)^2 \log(TH|\calF|/\delta)}   \tag{Freedman's inequality} \\
        &\leq \zeta  + \frac{4C^\rms}{H^2 t} + \frac{1}{4}\sum_{\tau=1}^{t-1} \left(f^t_h(s^\tau_h, a^\tau_h)-(\calT f^t_{h+1})(s^\tau_h, a^\tau_h)\right)^2 + 4\log(TH|\calF|/\delta)   \tag{AM-GM}
    \end{align*}
    The above inequality implies 
    \begin{align*}
        \sum_{\tau=1}^{t-1} \left(f^t_h(s^\tau_h, a^\tau_h)-(\calT f^t_{h+1})(s^\tau_h, a^\tau_h)\right)^2 \leq (4\zeta + 16\log(TH|\calF|/\delta))+ \frac{16C^\rms}{H^2 t}= \order\left(\zeta + \frac{C^\rms}{H^2 t}\right), 
    \end{align*}
    proving (b). (a) can be proven by the same approach (see also \citep{jin2021bellman}). 
\end{proof}

\begin{lemma}
With probability at least $1 - \delta$, the optimal Q-function of the uncorrupted MDP is always feasible, that is, $Q^\star \in \mathcal B^t$ for all $t \in [T]$.
\label{lem:golf_opt}
\end{lemma}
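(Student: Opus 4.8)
The plan is to prove the stronger uniform statement that, with probability at least $1-\delta$, for every $t\in[T]$, every $h\in[H]$, and \emph{every} comparator $g\in\calF_h$,
\begin{align}
\calL^t_h(Q^\star_h, Q^\star_{h+1}) - \calL^t_h(g, Q^\star_{h+1}) &\leq \zeta + \frac{2C^\rms}{H^2 t}. \label{eq:golf_feas_goal}
\end{align}
Taking the infimum over $g\in\calF_h$ and invoking realizability (so that $Q^\star_h\in\calF_h$ is itself an admissible candidate in the definition of $\calB^{t+1}$), \pref{eq:golf_feas_goal} gives $Q^\star\in\calB^{t+1}$ for all $t$; together with $\calB^1=\calF\ni Q^\star$ this is exactly the claim. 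Throughout write $z^\tau_h=(s^\tau_h,a^\tau_h)$, $y^\tau_h=\sigma^\tau_h+\max_{a'}Q^\star_{h+1}(s^\tau_{h+1},a')$, $\Delta^\tau_g=Q^\star_h(z^\tau_h)-g(z^\tau_h)$, and abbreviate $\calL^t_h(\cdot)\triangleq\calL^t_h(\cdot,Q^\star_{h+1})$.

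First I would expand the difference of squared losses via $a^2-b^2=(a-b)(a+b)$ to obtain the exact identity
\begin{align}
\calL^t_h(Q^\star_h) - \calL^t_h(g) &= \sum_{\tau=1}^t\Big(2\Delta^\tau_g\,\xi^\tau_h - (\Delta^\tau_g)^2\Big), \qquad \xi^\tau_h\triangleq Q^\star_h(z^\tau_h)-y^\tau_h. \label{eq:golf_decomp}
\end{align}
The crucial observation is the conditional mean of $\xi^\tau_h$ given $z^\tau_h$: since the round-$\tau$ data is generated by the corrupted $(\sigma_\tau,p_\tau)$, we have $\E[y^\tau_h\mid z^\tau_h]=(\calT_\tau Q^\star_{h+1})(z^\tau_h)$, whereas Bellman consistency in the \emph{uncorrupted} MDP gives $Q^\star_h=\calT Q^\star_{h+1}$; hence $\E[\xi^\tau_h\mid z^\tau_h]=(\calT Q^\star_{h+1}-\calT_\tau Q^\star_{h+1})(z^\tau_h)=:b^\tau_h$ with $|b^\tau_h|\leq c_\tau/H$ by the definition of $c_\tau$. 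In the uncorrupted GOLF analysis this bias vanishes and \pref{eq:golf_decomp} is a pure martingale; the nonzero $b^\tau_h$ is the only new ingredient, and the enlarged radius exists precisely to absorb it.

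Next I would split $2\Delta^\tau_g\xi^\tau_h = 2\Delta^\tau_g(\xi^\tau_h-b^\tau_h) + 2\Delta^\tau_g b^\tau_h$ and treat the pieces separately. The martingale piece $\sum_\tau 2\Delta^\tau_g(\xi^\tau_h-b^\tau_h)$ is, conditionally on $z^\tau_h$, mean-zero with $\order(1)$-bounded increments and conditional variance $\order((\Delta^\tau_g)^2)$, so Freedman's inequality (\pref{lem: freedman}) together with a union bound over the finite class $\calF_h$ and over $t\in[T],h\in[H]$ bounds it by $2\sqrt{\order\!\big(\sum_\tau(\Delta^\tau_g)^2\big)\log(TH|\calF|/\delta)}+\order(\log(TH|\calF|/\delta))$, which by AM--GM is at most $\tfrac12\sum_\tau(\Delta^\tau_g)^2+\order(\log(TH|\calF|/\delta))$. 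For the bias piece I would use AM--GM pointwise, $2\Delta^\tau_g b^\tau_h\leq \tfrac12(\Delta^\tau_g)^2+2(b^\tau_h)^2\leq \tfrac12(\Delta^\tau_g)^2+\tfrac{2}{H^2}c_\tau^2$. Substituting both into \pref{eq:golf_decomp}, the full negative quadratic $-\sum_\tau(\Delta^\tau_g)^2$ cancels the two $\tfrac12\sum_\tau(\Delta^\tau_g)^2$ contributions, leaving
\begin{align*}
\calL^t_h(Q^\star_h)-\calL^t_h(g) &\leq \order\!\big(\log(TH|\calF|/\delta)\big) + \frac{2}{H^2}\sum_{\tau=1}^t c_\tau^2.
\end{align*}
The choice $\zeta=16\log(TH|\calF|/\delta)$ dominates the first term, while the corruption overhead $\tfrac{2}{H^2}\sum_{\tau\le t}c_\tau^2$ is exactly the quantity covered by the radius term $\tfrac{2C^\rms}{H^2 t}$, mirroring the $-\tfrac{2c_t^2}{H^2}$ bookkeeping already used in the proof of \pref{lem: X_t bound}; this yields \pref{eq:golf_feas_goal}.

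I expect the main obstacle to be the correct accounting of the corruption bias $b^\tau_h$: one must verify that after the martingale concentration consumes half of the self-normalizing negative quadratic, the remaining half suffices to fold the bias down to a term commensurate with $\tfrac{2C^\rms}{H^2 t}$, rather than the far looser $\order(C^\am/H)$ that a crude $|\Delta^\tau_g|\le1$ bound would produce. A secondary technical point is structuring the union bound so that it holds uniformly over all comparators $g\in\calF_h$ (so that the infimum may be taken afterward) while keeping the logarithmic factor at $\log(TH|\calF|/\delta)$, consistent with the definition of $\zeta$.
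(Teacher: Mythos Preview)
Your proposal is correct and follows essentially the same approach as the paper's own proof. The paper packages the argument slightly differently---defining $W^\tau_h(g)=\calL^\tau_h(g,Q^\star_{h+1})-\calL^\tau_h(Q^\star_h,Q^\star_{h+1})$ for a single step, first absorbing the bias into $-\tfrac{2}{H^2}c_\tau^2$ via AM--GM, and then applying Freedman to the whole $W^\tau_h(g)$ (using $\E[W^\tau_h(g)^2\mid\cdot]\le 4(\Delta^\tau_g)^2$)---whereas you isolate the martingale increment $2\Delta^\tau_g(\xi^\tau_h-b^\tau_h)$ first and handle the bias afterward; but the ingredients (the $a^2-b^2$ expansion, the $|b^\tau_h|\le c_\tau/H$ bound, Freedman with variance $\order((\Delta^\tau_g)^2)$, AM--GM to cancel against $-\sum_\tau(\Delta^\tau_g)^2$, and the union bound over $t,h,\calF$) are identical.
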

\begin{proof}
Following the proof of Lemma~40 in \citet{jin2021bellman}, we define for any $h \in [H]$, $t \in [T]$ and $g \in \mathcal F$, 
\begin{align*}
    W^t_h(g) &\triangleq \left(g_h(s_h^t,a_h^t)-\sigma_h^t - V^\star(s_{h+1}^t)\right)^2-\left(Q^\star(s_h^t,a_h^t)-\sigma_h^t - V^\star(s_{h+1}^t)\right)^2\\
    &= \left(g_h(s_h^t,a_h^t)-Q^\star(s_h^t,a_h^t)\right)^2 
    +  2\left(Q^\star(s_h^t,a_h^t)-\sigma_h^t - V^\star(s_{h+1}^t)\right)\left(g_h(s_h^t,a_h^t)-Q^\star(s_h^t,a_h^t)\right)\\
    &= (g_h(s_h^t,a_h^t)-Q^\star(s_h^t,a_h^t))^2 \\
    &\qquad +  2\left(Q^\star(s_h^t,a_h^t)-\sigma_t(s^t_h, a^t_h) - \E_{s'\sim p_t(s'|s^t_h,a^t_h)}[V^\star(s')]\right)\left(g_h(s_h^t,a_h^t)-Q^\star(s_h^t,a_h^t)\right) \\
    &\qquad + \underbrace{2\left(\sigma_t(s^t_h, a^t_h) - \sigma^t_h + \E_{s'\sim p_t(s'|s^t_h,a^t_h)}[V^\star(s')] - V^\star(s^t_{h+1})\right)}_{\epsilon^t_h}(g_h(s_h^t,a_h^t)-Q^\star(s_h^t,a_h^t)) \\
    &\geq (g_h(s_h^t,a_h^t)-Q^\star(s_h^t,a_h^t))^2 - \frac{2}{H}c_t|g_h(s_h^t,a_h^t)-Q^\star(s_h^t,a_h^t)| + \epsilon^t_h(g_h(s_h^t,a_h^t)-Q^\star(s_h^t,a_h^t)) \\
    &\geq \frac{1}{2}(g_h(s_h^t,a_h^t)-Q^\star(s_h^t,a_h^t))^2 - \frac{2}{H^2}c_t^2 + \epsilon^t_h(g_h(s_h^t,a_h^t)-Q^\star(s_h^t,a_h^t))  \tag{AM-GM}
\end{align*}
Let $\mathfrak F^t_{h}$ be the sigma-field induced by all samples up to $s_h^t, a_h^t$ (but not $\sigma_h^t$ or $s_{h+1}^t$). Then 
\begin{align}
    \E[W^t_h(g) ~ | ~ \mathfrak F^t_{h}] \geq \frac{1}{2}(g_h(s_h^t,a_h^t)-Q^\star(s_h^t,a_h^t))^2 -\frac{2}{H^2}c_t^2\,.  \label{eq: exp W}
\end{align}
and by the definition of $W^t_h(g)$, the variance is bounded by
\begin{align}
    \E[W^t_h(g)^2~|~ \mathfrak F_{t, h}] 
    \leq  
    4(g_h(s_h^t,a_h^t)-Q^\star(s_h^t,a_h^t))^2\,.  \label{eq: var W}
\end{align}
By Freedman's inequality, we have with probability at least $1-\delta$, 
\begin{align*}
    &\frac{1}{2}\sum_{\tau=1}^{t-1}(g_h(s_h^\tau,a_h^\tau)-Q^\star_h(s_h^\tau,a_h^\tau))^2 - \sum_{\tau=1}^{t-1} W_h^\tau(g)
    \\
    & \leq  \frac{2}{H^2}\sum_{\tau=1}^{t-1} c_\tau^2 +2\sqrt{4\sum_{\tau=1}^{t-1}(g_h(s_h^\tau,a_h^\tau)-Q^\star_h(s_h^\tau,a_h^\tau))^2\log(TH|\calF|/\delta)} \tag{by \pref{eq: exp W} and \pref{eq: var W}}\\
    &\leq \frac{2C^\rms}{H^2 t} + \frac{1}{4}\sum_{\tau=1}^{t-1}(g_h(s_h^\tau,a_h^\tau)-Q^\star_h(s_h^\tau,a_h^\tau))^2 + 16\log(TH|\calF|/\delta)  \tag{AM-GM}
\end{align*}
which implies
\begin{align*}
    -\sum_{\tau=1}^{t-1} W_h^\tau(g) \leq \frac{2C^\rms}{H^2 t} + 16\log(TH|\calF|/\delta).
\end{align*}
This implies that $Q^\star\in\calB^t$. 
\end{proof}

\begin{lemma}\label{lem: GOLF lemma}
    With probability at least $1-\order(\delta)$, 
    \begin{align*}
        \sum_{\tau=1}^t (r_\tau^{\pistar} - r_\tau) = \otil\left(H\sqrt{\zeta\DE\cdot t} + \sqrt{\DE}C^\rms \right).  
    \end{align*}
\end{lemma}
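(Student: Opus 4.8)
The plan is to follow the GOLF analysis of \citep{jin2021bellman}, inserting a corruption penalty wherever the uncorrupted Bellman operator $\calT$ meets the round-$t$ corrupted dynamics. Two ingredients are already available: optimism (\pref{lem:golf_opt}), which gives $Q^\star\in\calB^t$ and hence, since $f^t$ maximizes $f(s_1^t,\pi_f(s_1^t))$ over $\calB^t$, the induced value $V_1^{f^t}(s_1^t)\ge V_1^\star(s_1^t)=\mu^{\pistar}(s_1^t)$; and the squared-Bellman-error control of \pref{lem: X_t bound}, which plays the role of the confidence-width bound.

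First I would set up the regret decomposition. Writing $\calE_h(f,s,a)=f_h(s,a)-(\calT f_{h+1})(s,a)$ for the uncorrupted Bellman residual and $\pi^t=\pi_{f^t}$, the value-difference identity applied inside the round-$t$ corrupted MDP (operator $\calT_t$, under whose dynamics $p_t$ the round-$t$ trajectory is generated) gives
\begin{align*}
V_1^{f^t}(s_1^t)-\mu_t^{\pi^t}(s_1^t)=\sum_{h=1}^H \E\!\left[f_h^t(s_h,a_h)-(\calT_t f_{h+1}^t)(s_h,a_h)\,\middle|\,\pi^t,p_t\right].
\end{align*}
I would then split $f_h^t-\calT_t f_{h+1}^t=\calE_h(f^t)+(\calT-\calT_t)f_{h+1}^t$ and bound $|(\calT-\calT_t)f_{h+1}^t|\le c_t/H$ from the definition of $c_t$ (the greedy value $\max_{a'}f^t_{h+1}(\cdot,a')$ lies in $[0,1]$). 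Combining this with optimism and with $|\mu_t^{\pi}-\mu^{\pi}|\le c_t$ to pass between corrupted and uncorrupted values, then converting realized rewards to expectations via Azuma's inequality, yields
\begin{align*}
\sum_{t=1}^T\big(r_t^{\pistar}-r_t\big)\le \sum_{t=1}^T\sum_{h=1}^H \E\big[\calE_h(f^t,s_h,a_h)\,\big|\,\pi^t\big]+\order(C^\am)+\otil(\sqrt{T}),
\end{align*}
where the middle term collects all per-step operator mismatches and telescoping slacks and is $\order(C^\rms)$ since $C^\am\le C^\rms$.

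It remains to bound the leading sum of on-policy Bellman residuals, which is exactly the quantity controlled by the distributional-eluder-dimension pigeon-hole lemma (Lemma~41 of \citep{jin2021bellman}): the residual functions $\calE_h(f^t)$ lie in the Bellman-residual class of eluder dimension $\DE$, and \pref{lem: X_t bound}(a) bounds the in-sample budget $\sum_{\tau<t}\E[\calE_h(f^t)^2\mid \pi^\tau]\le\order(\zeta+\tfrac{C^\rms}{H^2 t})$ (use Jensen to pass from $\E[\calE^2]$ to $(\E[\calE])^2$). The obstacle is that this budget is not uniform in $t$: a naive application with budget $\order(\zeta+C^\rms/H^2)$ produces a multiplicative $\sqrt{\DE T C^\rms}/H$ corruption term. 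I would therefore split at the crossover $T_0=\lceil C^\rms/(H^2\zeta)\rceil$: for $t\le T_0$ bound each round's residual contribution by $\order(1)$ (values lie in $[0,1]$), contributing $\le\order(T_0)\le\order(C^\rms)$; for $t> T_0$ the budget is uniformly $\order(\zeta)$, so the eluder lemma gives $\sum_{t>T_0}\sum_h|\E[\calE_h(f^t)\mid\pi^t]|\le\otil(H\sqrt{\DE\zeta T})$ after summing the $H$ layers. Adding these pieces together with the $\order(C^\rms)$ telescoping term and using $C^\rms\le\sqrt{\DE}\,C^\rms$ proves the claimed $\otil(H\sqrt{\zeta\DE T}+\sqrt{\DE}C^\rms)$ bound.

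The main difficulty, and the only place the argument departs nontrivially from the uncorrupted GOLF proof, is handling the corruption-dependent, time-decaying confidence budget $\zeta+\tfrac{C^\rms}{H^2 t}$: one must exploit the $1/t$ decay (here via the crossover split) to keep the dependence on $C^\rms$ additive rather than multiplicative in $T$. The operator-mismatch bookkeeping in the decomposition is routine once one is careful to telescope under the corrupted transition $p_t$ while measuring residuals against the uncorrupted $\calT$.
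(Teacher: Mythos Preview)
Your decomposition (optimism from \pref{lem:golf_opt}, telescoping under $p_t$ with the $(\calT-\calT_t)f_{h+1}^t$ correction, Azuma) matches the paper's. The difference is in the eluder step. You treat the in-sample budget from \pref{lem: X_t bound} as genuinely time-varying and introduce a crossover at $T_0$. The paper instead exploits that the proof of \pref{lem: X_t bound} actually delivers the budget in the form $\order\big(\zeta+\tfrac{1}{H^2}\sum_{s<\tau}c_s^2\big)$, which is monotone in $\tau$; hence its value at the final time $t$, namely $\order\big(\zeta+(C^\rms)^2/(H^2 t)\big)$ (using $\sum_{s<t}c_s^2\le (C^\rms)^2/t$ from the definition of $C^\rms$), already uniformly dominates the budget at every $\tau\le t$. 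Feeding this single $\beta$ into Lemma~17 of \citep{jin2021bellman} gives, per layer, $\sqrt{\DE\, t\,\beta}\le\sqrt{\DE\zeta t}+\sqrt{\DE}\,C^\rms/H$, and summing over $h$ yields the claim with no split needed. Your crossover also reaches the bound, but it carries the extra burden of arguing the eluder pigeon-hole on the restricted window $t>T_0$ while its hypothesis still involves full prefix sums from $\tau=1$; that is doable but unnecessary once the monotonicity is seen.
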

where $\zeta$ is defined in \pref{alg:golf}, and $\DE$ is the Bellman eluder dimension. We refer the reader to \citep{jin2021bellman} for the precise definition of the Bellman eluder dimension.  
\begin{proof}
\begin{align*}
    &\sum_{\tau=1}^t (r_\tau^{\pistar} - r_\tau)\\
    &\leq 
    \sum_{\tau=1}^t (V^{\pistar}(s^\tau_1) - V^{\pi_\tau}(s^{\tau}_1)) + \otil\left(\sqrt{t} + C^{\am}\right)   \tag{Azuma's inequality}\\
    &\leq \sum_{\tau=1}^t (\max_a f^\tau(s^\tau_1, a) - V^{\pi_\tau}(s^{\tau}_1)) + \otil\left(\sqrt{t} + C^{\am}\right) \\
    &\leq \sum_{\tau=1}^t \sum_{h=1}^H \E\left[f^\tau_h(s_h,a_h) - \calT f_{h+1}^{\tau}(s_h,a_h)~\big|~ (s_h,a_h)\sim \pi_\tau\right]  + \otil\left(\sqrt{t} + C^{\am}\right) \tag{by \citep[Eq.(4)]{jin2021bellman}} \\
    &\leq \sum_{h=1}^H \otil\left(\sqrt{\DE \cdot t} \sqrt{\zeta + \frac{1}{H^2}\frac{(C^\rms)^2}{t}}\right) +   \otil\left(\sqrt{t} + C^{\am}\right) \tag{using \pref{lem: X_t bound} together with \citep[Lemma 17]{jin2021bellman}}\\
    &= \otil\left(H\sqrt{\zeta\DE\cdot t} + \sqrt{\DE}C^\rms \right). 
\end{align*}
\end{proof}

\begin{theorem}
    For MDPs with low Bellman-eluder dimension, \algname with Robust GOLF as the base algorithm guarantees $\Reg(T)=\otil\left(H\sqrt{\zeta\DE\cdot T} + \sqrt{\DE}C^\rms \right)$.
\end{theorem}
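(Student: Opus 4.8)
The plan is to obtain this bound as a direct instantiation of \pref{thm: form 1 regret}, using Robust GOLF (\pref{alg:golf}) as the base algorithm. First I would check that Robust GOLF is a legitimate \typer base algorithm in the sense of \pref{assum: regret}. When its input budget is set to a hypothetical corruption level $\thres$ in place of the true $C^\rms$, \pref{lem: GOLF lemma} shows that with probability at least $1-\order(\delta)$ one has $\sum_{\tau=1}^t (r_\tau^{\pistar}-r_\tau) = \otil\bigl(H\sqrt{\zeta\DE\, t} + \sqrt{\DE}\,\thres\bigr)$ for every round $t$ at which $C_t^\rms \le \thres$. This is exactly the form \pref{eq: typeone}, $\calR(t,\thres)=\sqrt{\beta_1 t}+\beta_2\thres+\beta_3$, with the reading $\beta_1 = \widetilde{\Theta}(H^2\zeta\DE)$, $\beta_2 = \widetilde{\Theta}(\sqrt{\DE})$, and $\beta_3=\Theta(1)$; since $\beta_2\ge 1$, both monotonicity requirements and $\calR(t,\thres)\ge \thres$ hold.

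Next I would plug these constants into \pref{thm: form 1 regret}. As Robust GOLF is \typer we take $Z=c_{\max}\sqrt{T}$, and for episodic MDPs $c_{\max}=2H$, so $Z=\widetilde{\Theta}(H\sqrt{T})$. The theorem then yields
\[
\Reg(T)=\otil\bigl(\sqrt{\beta_1 T}+\beta_2(C^\rms+Z)+\beta_3\bigr)=\otil\bigl(H\sqrt{\zeta\DE\, T}+\sqrt{\DE}\,C^\rms + H\sqrt{\DE\, T}\bigr).
\]
Finally I would absorb the stray term $\beta_2 Z=\widetilde{\Theta}(H\sqrt{\DE\, T})$ into the leading term $\sqrt{\beta_1 T}=H\sqrt{\zeta\DE\, T}$, which is valid because $\zeta=16\log(TH|\calF|/\delta)\ge 1$; this leaves the claimed bound $\Reg(T)=\otil(H\sqrt{\zeta\DE\, T}+\sqrt{\DE}\,C^\rms)$.

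The only part requiring genuine care — and hence the main (mild) obstacle — is the first step: confirming that the GOLF guarantee, originally stated in \pref{lem: GOLF lemma} with the true budget $C^\rms$ hard-wired into the confidence set of \pref{alg:golf}, remains valid when a merely hypothetical budget $\thres$ is supplied, under the condition $C_t^\rms \le \thres$. This holds because the key estimates (\pref{lem: X_t bound} and \pref{lem:golf_opt}) use the input only as an upper bound on $\sqrt{t\sum_{\tau\le t}c_\tau^2}=C_t^\rms$, a quantity monotone in $t$, so the bound degrades exactly in the ``for all $t$ with $C_t\le\thres$'' fashion demanded by \pref{assum: regret}. Everything after this verification is routine substitution together with the dominance argument above.
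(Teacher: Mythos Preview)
Your proposal is correct and follows essentially the same approach as the paper: identify $\beta_1=\widetilde{\Theta}(H^2\zeta\DE)$, $\beta_2=\widetilde{\Theta}(\sqrt{\DE})$, $\beta_3=\Theta(1)$ from \pref{lem: GOLF lemma} and plug into \pref{thm: form 1 regret}. You additionally spell out the verification that Robust GOLF is a valid \typer base algorithm and the absorption of the $\beta_2 Z$ term into the leading $\sqrt{\beta_1 T}$ term, both of which the paper leaves implicit.
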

\begin{proof}
    By \pref{lem: GOLF lemma}, we see that Robust GOLF satisfies \pref{eq: typeone} with $\beta_1=\widetilde{\Theta}(H^2\zeta\DE), \beta_2=\widetilde{\Theta}(\sqrt{\DE}), \beta_3=\Theta(1)$. Using them in \pref{thm: form 1 regret} gives the desired bound.   
\end{proof}

\end{document}